\definecolor{aqua}{rgb}{0.0, 1.0, 1.0}
\definecolor{aquamarine}{rgb}{0.5, 1.0, 0.83}
\definecolor{alizarin}{rgb}{0.82, 0.1, 0.26}
\definecolor{carmine}{rgb}{0.59, 0.0, 0.09}
\definecolor{tred}{rgb}{0.8, 0.0, 0.13}
\definecolor{blue(pigment)}{rgb}{0.2, 0.2, 0.6}
\definecolor{darkgreen}{rgb}{0.0, 0.39, 0.0}
\definecolor{Crimson}{HTML}{DC143C}
\definecolor{Maroon}{HTML}{800000}
\definecolor{MidnightBlue}{HTML}{191970}
\definecolor{Peru}{HTML}{CD853F}
\definecolor{Teal}{HTML}{008080}
\definecolor{FloralWhite}{HTML}{FFFAF0}
\definecolor{LightYellow}{HTML}{FFFFE0}
\definecolor{Burgundy}{HTML}{9C001A}
\newtheorem{theorem}{Theorem}
\newtheorem{corollary}[theorem]{Corollary}
\newtheorem{lemma}[theorem]{Lemma}
\newtheorem{proposition}[theorem]{Proposition}
\theoremstyle{definition}
\newtheorem{definition}{Definition}
\newtheorem{assumption}{Assumption}
\newcommand{\N}{\mathbf N}
\newcommand{\R}{\mathbf R}
\newcommand{\E}{\mathbf E}
\renewcommand{\Pr}{\mathbf P}
\newcommand{\Var}{\mathbf{V}}
\def\sp#1{{\mathrm{sp}\parens{#1}}}
\newcommand{\set}[1]{\left\{#1\right\}}
\newcommand{\abs}[1]{\left|#1\right|}
\newcommand{\norm}[1]{\left\|#1\right\|}
\newcommand{\parens}[1]{\left(#1\right)}
\newcommand{\brackets}[1]{\left[#1\right]}
\newcommand{\indicator}[1]{\mathbf{1}\parens{#1}}
\def\toto{\leftrightarrow}
\DeclareMathOperator{\Reg}{{\normalfont Reg}}
\DeclareMathOperator{\Fix}{{\normalfont Fix}}
\DeclareMathOperator{\KL}{{\normalfont\rm KL}}
\DeclareMathOperator*{\oh}{{\normalfont\rm o}}
\DeclareMathOperator*{\OH}{{\normalfont\rm O}}
\def\genUpperShortcuts#1#2{
    \foreach \x in {A,...,Z}{%
        \expandafter\xdef \csname\x#1\endcsname{\noexpand\ensuremath{\noexpand#2{\x}}}
    }
}
\def\genLowerShortcuts#1#2{
    \foreach \x in {a,...,z}{%
        \expandafter\xdef \csname\x#1\endcsname{\noexpand\ensuremath{\noexpand#2{\x}}}
    }
}
\title{Achieving Tractable Minimax Optimal Regret in Average Reward MDPs}
\author{%
  Victor Boone \\
  \texttt{victor.boone@univ-grenoble-alpes.fr} \\
  Univ.~Grenoble Alpes, Inria, CNRS, Grenoble INP, LIG, 38000 Grenoble, France
  \And
  Zihan Zhang \\
  \texttt{zz5478@princeton.edu} \\
  Princeton University
}
\newcommand{\ALGORITHM}{\texttt{PMEVI-DT}\xspace}
\def\sumnl{\sum\nolimits}
\def\STEP#1{(\textbf{STEP~#1})}
\begin{document}
\doparttoc 
\faketableofcontents 

\maketitle

\begin{abstract}
In recent years, significant attention has been directed towards learning average-reward Markov Decision Processes (MDPs).
However, existing algorithms either suffer from sub-optimal regret guarantees or computational inefficiencies.
In this paper, we present the first \emph{tractable} algorithm  with minimax optimal regret of $\widetilde{\OH}\left(\!\!\sqrt{\sp{h^*} S A T}\right)$,\footnote{$\widetilde{\OH}(\cdot)$ hides logarithmic factors of $(S,A,T)$.} where $\sp{h^*}$ is the span of the optimal bias function $h^*$, $S\times A$  is the size of the state-action space and $T$ the number of learning steps. 
Remarkably, our algorithm does not require prior information on $\sp{h^*}$. 

Our algorithm relies on a novel subroutine, \textbf{P}rojected \textbf{M}itigated \textbf{E}xtended \textbf{V}alue \textbf{I}teration (\texttt{PMEVI}), to compute bias-constrained optimal policies efficiently. 
This subroutine can be applied to various previous algorithms 
to improve regret bounds. 


    

\end{abstract}

\section{Introduction}\label{sec:intro}
Reinforcement learning (RL) \cite{burnetas_optimal_1997,sutton2018reinforcement} has become a popular approach for solving complex sequential decision-making tasks and has recently achieved notable advancements in diverse fields of application.
The RL problem is generally formulated as a Markov Decision Process (MDP) \cite{puterman_markov_1994}, where the agent interacts with an unknown environment to maximize its accumulative rewards. 

In this paper, we consider the problem of learning average-reward MDPs, where the central task is to balance between \emph{exploration} (i.e., learning the unknown environment) and \emph{exploitation} (i.e., planning optimally according to current knowledge) along the infinite-horizon learning process. 
One way to measure the performance of the learner is the regret, that compares the gathered rewards of the learner, unaware of the exact structure of its environment, to the expected performance of an omniscient agent that knows the environment in advance.
The seminal work of \cite{auer_near-optimal_2009} provides a \emph{minimax} regret lower bound $\Omega\parens{\!\!\sqrt{DSAT}}$, where $D$ is the diameter (the maximal distance between two different states), $S$ the number of states, $A$ the number of actions and $T$ the learning horizon.
They also provide an algorithm achieving regret $\widetilde{\OH}\parens{\!\!\sqrt{D^2S^2 A T}}$.
Ever since \cite{auer_near-optimal_2009}, many works have been devoted to close the gap between the regret lower and upper bounds in the average reward setting \cite{auer_near-optimal_2009,bartlett_regal_2009,filippi_optimism_2010,talebi_variance-aware_2018,fruit_efcient_2018,fruit_improved_2020,bourel_tightening_2020,zhang_regret_2019,ouyang_learning_2017,agrawal_optimistic_2023,abbasi2019exploration,wei_model-free_2020} and more.
Subsequent works \cite{fruit_efcient_2018,zhang_regret_2019} refined the minimax regret lower bound to $\Omega\parens{\!\!\sqrt{\sp{h^*}SAT}}$ where $\sp{h^*}$ is the span of the bias function, which is the maximal gap of the long-term accumulative rewards starting from two different states. 
The difference is significant, since $\sp{h^*} \le D$ and the gap between the two can be arbitrarly large.
However, no existing work achieves the following three requirements simultaneously:
\begin{itemize}[itemsep=-.15em]
    \item[(1)] The method achieves minimax optimal regret guarantees $\widetilde{\OH}\parens{\!\!\sqrt{\sp{h^*} S A T}}$;
    \item[(2)] The proposed method is tractable;
    \item[(3)] No prior knowledge on the model is required.
\end{itemize}
Most algorithms simply fail to achieve minimax optimal regret, and the only method achieving it \cite{zhang_regret_2019} is intractable because it relies an oracle to solve difficult optimization problems along the learning process.
\ifx\FALSE
Following the framework of  \emph{optimism-in-the-face-of-uncertainty} (OFU), a series of papers \cite{auer_near-optimal_2009,bartlett_regal_2009, maillard2011finite,fruit_efcient_2018,zhang_regret_2019} have established sub-linear regret bounds, which depend on either the diameter $D$, which describes the maximal distance between two different states, or the bias span $\sp{h^*}$, which is the maximal gap of the long-term accumulative reward starting from two different states. 
Given an MDP with $S$ states, $A$ actions, diameter $D$ and span $\sp{h^*}$ and total number of steps $T$, \cite{auer_near-optimal_2009} presented a minimax regret lower bound of $\Omega\left(\!\!\sqrt{DSAT}\right)$,\footnote{This also implies the lower bound $\Omega\left( \sqrt{\sp{h^*}SAT}\right)$ since $D\geq \sp{h^*}$.} alongside a regret upper bound of $\widetilde{\OH}\parens{\!\!\sqrt{D^2S^2AT}}$. Although the diameter $D$ is always greater than the bias span $\sp{h^*}$, it could potentially be much larger and there exist weakly-communicating MDPs with a fixed span but arbitrarily large diameter. As a result, regret bounds that depend on $\sp{h^*}$ are significantly superior to those that depend on $D$. Considerable effort has been devoted to discovering  span-dependent regret bounds. 
This direction was initiated by \cite{bartlett_regal_2009} with a regret bound of $\widetilde{\OH}\parens{\!\!\sqrt{(\sp{h^*})^2S^2AT}}$. Subsequently, 
\cite{zhang_regret_2019} achieved a nearly optimal regret bound of $\widetilde{\OH}(\!\!\sqrt{\sp{h^*}SAT})$. However, both methods are impractical due to exponential computational costs or mere intractability.
The work of \cite{bartlett_regal_2009} was made computationally efficient by \cite{fruit_efcient_2018}, albeit the regret bound remains sub-optimal with $\widetilde{\OH}\parens{\sp{h^*}S\!\!\sqrt{AT}}$. 
It's crucial to highlight that all three of these algorithms do not work unless $\sp{h^*}$ is known in advance, which may restrict their practicality across various applications.
\fi
Naturally, we raise the question of whether these three requirements can be met all at once:
\begin{center}
\textbf{\emph{Is there a tractable algorithm with $\widetilde{\OH}\parens{\!\!\sqrt{\sp{h^*}SAT}}$ minimax regret without prior knowledge?}}
\end{center}

\ifx\FALSE
One major technical challenge is finding the optimal policy under bias constraints efficiently. Following the framework to estimate the bias difference \cite{zhang_regret_2019}, the exponential computational cost is due to the following optimization problem: 
\begin{equation}
\label{equation:maximize-gain}
    \max_{\pi} g(\pi) 
    \quad \mathrm{s.t.} \quad
    \sp{h(\pi)}\in \mathcal{H}
    ,
\end{equation}
where $g(\pi)$ is the averaged-reward of $\pi$, $h(\pi)$ is the bias vector of $\pi$, and $\mathcal{H}$ is a set of bias constraints with form $h_{s}-h_{s'}\leq f_{s,s'}, \forall (s,s')$.  In the case $f_{s,s'}= c$ for all $(s,s')$ pairs and some fixed $c>0$,  \cite{fruit_efcient_2018}  discovered an efficient solution through truncated value iteration. Unfortunately, this approach isn't easily adaptable to general  $\{f_{s,s'}\}_{(s,s')\in \mathcal{S}\times \mathcal{S}}$ since the truncation operation might violate the bias constraints.
\fi



\paragraph{Contributions.} 
In this paper, we answer the above question affirmatively, by proposing a polynomial time algorithm with regret guarantees $\widetilde{\OH}\parens{\!\!\sqrt{\sp{h^*} S A T}}$ for average-reward MDPs.
Our method can further incorporate almost arbitrary prior bias information $\Hc_* \subseteq \R^\Sc$ to improve its regret.

\begin{theorem}[Informal]
\label{thm:main}
For any $c>0$,    provided that the confidence region used by \texttt{\upshape PMEVI-DT} satisfy mild regularity conditions (see Assumption~\ref{assumption:model-confidence-region}-\ref{assumption:evi-convergence}), if $T \ge c^5$, then for every weakly communicating model with bias span less than $c$ and with bias vector within $\Hc_*$, \texttt{\upshape PMEVI-DT}$(\Hc_*, T)$ achieves regret:
    \begin{equation*}
        \OH\parens{
            \!\!
            \sqrt{
                c S A T 
                \log\parens{\tfrac {SAT}\delta}
            }
        }
        + \OH\parens{
            c S^{\frac 52} A^{\frac 32} T^{\frac 9{20}} \log^2\parens{\tfrac{SAT}\delta}
        }
    \end{equation*}
    in expectation and with high probability.
    Moreover, if \texttt{\upshape PMEVI-DT} runs with the same confidence regions that \texttt{\upshape UCRL2} \cite{auer_near-optimal_2009}, then it enjoys a time complexity $\OH(D S^3 A T)$.
\end{theorem}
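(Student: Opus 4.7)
The plan is to follow a standard UCRL-style epoch-based regret decomposition, with two major twists: (i) at the start of each epoch, instead of running plain extended value iteration, we run \texttt{PMEVI} with the prior bias set $\Hc_*$ to obtain an optimistic policy whose bias has controlled span; and (ii) we wrap the whole scheme in a doubling trick over a guessed span parameter $\widehat c$, which is doubled each time regret-based evidence suggests the current guess is too small. First, I would invoke Assumption~\ref{assumption:model-confidence-region} to argue that with probability at least $1-\delta$ the true MDP $(P,r)$ lies in every confidence region simultaneously. On this good event, since $h^* \in \Hc_*$ and the true model is feasible, the \texttt{PMEVI} subroutine (whose convergence is guaranteed by Assumption~\ref{assumption:evi-convergence}) returns a tuple $(\tilde g,\tilde h,\tilde\pi,\tilde P)$ with $\tilde g \ge g^*$ and $\tilde h$ projected into $\Hc_*$, so that $\sp{\tilde h}\le c$.

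Next, in each epoch $k$, I would decompose the per-step regret using the optimistic Bellman equation:
\begin{equation*}
    \tilde g - r(s_t,a_t) = \parens{\tilde P(\cdot\mid s_t,a_t)-P(\cdot\mid s_t,a_t)}\tilde h + \parens{P(\cdot\mid s_t,a_t)\tilde h - \tilde h(s_t)} + \varepsilon_t,
\end{equation*}
where $\varepsilon_t$ gathers the Bellman residual (controlled by Assumption~\ref{assumption:evi-convergence}) and reward estimation error. The second term telescopes modulo a martingale of increments bounded by $\sp{\tilde h}\le c$, contributing $\OH(c\sqrt{T\log(1/\delta)})$ via Azuma. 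The first term is the crux: using Bernstein concentration on $(\tilde P - P)\tilde h$ combined with the variance-aware analysis of \cite{zhang_regret_2019}, together with the bound $\sp{\tilde h}\le c$, yields $\widetilde\OH(\sqrt{c S A T})$ rather than the naive $\widetilde\OH(\sqrt{c^2 S A T})$. Summing over epochs and using the standard $\OH(SA\log T)$ bound on the number of epochs from the doubling of visit counts closes the main term.

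To remove prior knowledge of $\sp{h^*}$, I would implement \texttt{DT} as a geometric doubling over a guess $\widehat c$ initialized at $1$. Run PMEVI with input $\Hc_* \cap \{h : \sp{h}\le \widehat c\}$; if the empirical accumulated regret during a phase exceeds the prediction for the current guess (plus a safe slack), double $\widehat c$ and restart the planning step. Since at most $\log_2 c$ guesses precede the correct scale, this adds at most an additive $\OH(\sqrt{cSAT}\log c)$, absorbed into the $\widetilde\OH$. Balancing the failure probability against the error budget of the doubling leads to the secondary term $\OH(cS^{5/2}A^{3/2}T^{9/20}\log^2(SAT/\delta))$; the mild condition $T\ge c^5$ is precisely what is needed to make this negligible with respect to the main term. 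For the running time, the per-call cost of \texttt{PMEVI} with \texttt{UCRL2} confidence regions is $\OH(DS^2)$ per EVI iteration (maximization over an $\ell_1$-ball can be done by sorting), and EVI converges in $\OH(DS)$ iterations when the aperiodicity trick is used, giving $\OH(DS^3 A)$ per epoch change and $\OH(DS^3 AT)$ overall since epoch changes occur at most $\OH(SA\log T)$ times and steps in between cost $\OH(1)$ each.

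The hard part, and the real innovation beyond \cite{fruit_efcient_2018}, will be step (i): ensuring that \texttt{PMEVI} simultaneously maintains optimism $\tilde g \ge g^*$ and confines $\tilde h$ to the arbitrary convex set $\Hc_*$, since the naive Jensen-type truncation of \cite{fruit_efcient_2018} is only valid for uniform span constraints and easily breaks optimism under a general $\Hc_*$. This is where the \emph{mitigation} and \emph{projection} operators of \texttt{PMEVI} are essential: mitigation cushions each EVI update so that optimism survives the subsequent projection onto $\Hc_*$, and the contraction structure of the projected iteration must be argued carefully to invoke Assumption~\ref{assumption:evi-convergence}. Once that fixed-point analysis is in place, the rest of the proof reduces to plugging the bias span bound $\sp{\tilde h}\le c$ into the Bernstein-based regret decomposition above.
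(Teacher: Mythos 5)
Your high-level architecture (optimistic episodes, span-controlled bias, Bernstein variance-awareness) matches the paper in spirit, but there are three genuine gaps in how you make the key steps work, and your span-handling strategy is not the one the paper uses.

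\textbf{(1) Span cap: fixed $T^{1/5}$, not a doubling guess.} You propose geometrically doubling a guess $\widehat c$ and restarting planning when ``observed regret exceeds the prediction.'' This is not what \texttt{PMEVI-DT} does and is not obviously implementable: the learner cannot observe its own regret (it does not know $g^*$), and if the current $\widehat c < \sp{h^*}$ then the projected optimism may fail entirely (one cannot trust the optimistic $\mathfrak{g}_k$ as an upper estimate to trigger the doubling). The paper avoids this by hard-coding a span constraint $\sp{\mathfrak{h}}\le c_0$ with $c_0 := T^{1/5}$ into the bias region; the hypothesis $T\ge c^5$ is exactly $c_0\ge c = \sp{h^*}$, so $h^*\in\Hc_t$ always. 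The secondary term $\OH(cS^{5/2}A^{3/2}T^{9/20}\log^2)$ then comes from the $c_0$ prefactors on the lower-order $\sqrt{B(T)}$ and $T^{1/4}$ pieces in the self-bounding inequality $B(T)\le \beta + \alpha\sqrt{B(T)}$ (with $\alpha\propto S^2 A c_0 \log^2 T$), not from ``balancing failure probability against the error budget of the doubling.''

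\textbf{(2) The commute-time bias estimator is missing.} Your proposal treats the bias confidence set as essentially $\Hc_*\cap\{\sp{h}\le\widehat c\}$. That alone does not reach the minimax rate: with only a span ball one is back to a \texttt{SCAL}-style bound with an extra $\!\!\sqrt{S}$. The paper's central technical device (Definition~\ref{equation:bias-difference-estimator} and Lemma~\ref{lemma:bias-differences-error}) is a data-driven estimate $c_t(s,s')$ of $h^*(s)-h^*(s')$ built from alternating visit (commute) times, with error controlled by the algorithm's own optimistic regret $B_0(t)$; the resulting pairwise constraints feed both the projection $\Gamma_t$ and the mitigation $\beta_t$ (Algorithm~5). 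You do not propose anything equivalent, so the bias set never tightens and the $\sqrt{c}$-dependence is not achieved.

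\textbf{(3) The decomposition must separate the data-dependent bias.} You write the dynamics error as $(\tilde P - P)\tilde h$ and invoke Bernstein. But $\tilde h$ is $\Fc_{t_k}$-measurable and correlated with the transitions, so a Bernstein bound for a fixed test vector does not apply directly (union-bounding over vectors costs an extra $\sqrt{S}$, which is precisely what one is trying to avoid). The paper splits
\begin{equation*}
(\tilde p_k - p_k)\mathfrak{h}_k = (\hat p_k - p_k)h^* \;+\; (\tilde p_k - \hat p_k)\mathfrak{h}_k \;+\; (\hat p_k - p_k)(\mathfrak{h}_k - h^*),
\end{equation*}
and handles each differently: the first with Bennett at a \emph{fixed} vector $h^*$ (Lemma~\ref{lemma:empirical-bias-error}), the second through the mitigation term $\beta_t$ built to dominate $(\hat p - p)u$ uniformly over $u\in\Hc_t$ (Lemma~\ref{lemma:optimism-overshoot}), and the third using the pairwise error bounds from the commute-time estimator (Lemma~\ref{lemma:second-order-error}). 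Your sketch collapses these into one term and thus never confronts the correlation, which is where the main difficulty actually sits.

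A minor point on complexity: the paper's $\OH(DS^3AT)$ comes from bounding the \emph{total} number of \texttt{PMEVI} iterations over the run via Lemma~\ref{lemma:evi-iterations-bound} (with contraction parameter $\gamma=\Omega(\sqrt{S\log T/T})$ coming from the Weissman radius), not from an ``$\OH(DS)$ iterations per epoch'' argument; your per-epoch accounting does not reproduce the stated bound.
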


The geometry of the prior bias region $\Hc_*$ is discussed later (see \cref{assumption:bias-region}).
It can be taken trivial with $\Hc_* = \R^\Sc$ to obtain a completely prior-less algorithm.
To the best of our knowledge, this is the first tractable algorithm with minimax optimal regret bounds (up to logarithmic factors). The algorithm does not necessitate any prior knowledge of $\sp{h^*}$, thus circumventing the potentially high cost associated with learning $\sp{h^*}$.
On the technical side, a key novelty of our method is the subroutine named \texttt{PMEVI} (see \hyperlink{algorithm:PMEVI}{Algorithm~2}) that improves and can replace \texttt{EVI} \cite{auer_near-optimal_2009} in any algorithm that relies on it \cite{auer_near-optimal_2009,fruit_efcient_2018,filippi_optimism_2010,fruit_improved_2020,bourel_tightening_2020} to boost its performance and achieve minimax optimal regret.



\paragraph{Related works.}

Here is a short overview of the learning theory of average reward MDPs.
For communicating MDPs, the notable work of \cite{auer_near-optimal_2009} proposes the famous $\texttt{UCRL2}$ algorithm, a mature version of their prior \texttt{UCRL}  \cite{auer_logarithmic_2006}, achieving a regret bound of $\widetilde{\OH}(DS\!\!\sqrt{AT})$.
This paper pioneered the use \emph{optimistic} methods to learn MDPs efficiently.
A line of papers \cite{filippi_optimism_2010,fruit_improved_2020,bourel_tightening_2020} developed this direction by tightening the confidence region that \texttt{UCRL2} rely on, and sharpened its analysis through the use of local properties of MDPs, such as local diameters and local bias variances, but none of these works went beyond regret guarantees of order $S\!\!\sqrt{DAT}$ and suffer from an extra $\!\!\sqrt{S}$.
A parallel direction was initiated by \cite{bartlett_regal_2009}, that design \texttt{REGAL} to attain $\sp{h^*}$-dependent regret bounds (instead of $D$) while extending the regret bounds to weakly-communicating MDPs.
The computational intractability of \texttt{REGAL} is addressed by \cite{fruit_efcient_2018} with \texttt{SCAL}, while \cite{zhang_regret_2019} further enhance the regret analysis by evaluating the bias differences with \texttt{EBF}, eventually reaching optimal minimax regret but loosing tractability.

Another successful design approach is Bayesian-flavored sampling, derived from Thompson Sampling \cite{thompson_likelihood_1933}, that usually replaces optimism.
The regret guarantees of these algorithms usually stick to the Bayesian setting however \cite{ouyang_learning_2017,theocharous2017posterior}, although \cite{agrawal_optimistic_2023} also enjoys $\widetilde{\OH}(S\!\!\sqrt{DAT})$ high probability regret by coupling posterior sampling and optimism.
Another line of research focuses on the study of ergodic MDPs, where all policies mix uniformly according to a mixing time. 
To name a few, the model-free algorithm \texttt{Politex} \cite{abbasi2019exploration} attains a regret of $\widetilde{\OH}((t_{\mathrm{mix}})^3t_{\mathrm{hit}}\!\!\sqrt{SA}T^{\frac{3}{4}})$. By leveraging an optimistic mirror descent algorithm, \cite{wei_model-free_2020}  achieve an enhanced regret of $\widetilde{\OH}(\!\!\sqrt{(t_{\mathrm{mix}})^2t_{\mathrm{hit}}AT})$ .

We refer the readers to \cref{table:comparison} for a (non-exhaustive) list of existing algorithms.

\begin{table}[t]
    \centering
    \caption{
    \label{table:comparison}
        Comparison of related works on RL algorithms for average-reward MDP, where $S\times A$ is the size of state-action space, $T$ is the total number of steps, $D$ ($D_s$) is the (local) diameter, $\sp{h^*}\leq D$ is the span of the bias vector, $t_{\mathrm{mix}}$ is the worst-case mixing time, $t_{\mathrm{hit}}$ is the hitting time (i.e., the expected time cost to visit some certain state under any policy).
    }
    \resizebox{\linewidth}{!}{
    \def\arraystretch{1.33}
    \hspace{-.66em}
    \begin{tabular}{ cccc } 
    \hline
     \textbf{Algorithm} & \textbf{Regret in $\widetilde{\OH}(-)$} & \textbf{Tractable} &  \textbf{Comment/Requirements}\\
     \hline
     \texttt{REGAL} \cite{bartlett_regal_2009} & $\sp{h^*}S\!\!\sqrt{AT}$   &  $\times$ & knowledge of $\sp{h^*}$ \\
     \texttt{UCRL2} \cite{auer_near-optimal_2009} & $DS\!\!\sqrt{AT}$ & $\checkmark$   & -  \\
     \texttt{PSRL} \cite{agrawal_optimistic_2023} & $DS\!\!\sqrt{AT}$ & $\checkmark$ & Bayesian regret \\
     \texttt{SCAL} \cite{fruit_efcient_2018} &  $\sp{h^*}S\!\!\sqrt{AT}$&  $\checkmark$ & knowledge of $\sp{h^*}$\\
     \texttt{UCRL2B} \cite{fruit_improved_2020} & $S\!\!\sqrt{DAT}$ & $\checkmark$ & extra $\sqrt{\log(T)}$ in upper-bound \\
     \texttt{UCRL3} \cite{bourel_tightening_2020} &  $D + \!\!\sqrt{T\sum_{s,a}D_s^2 L_{s,a}} $ & $\checkmark$  & $L_{s,a}: = \sum_{s'}\!\!\sqrt{p(s'|s,a)(1-p(s'|s,a))}$\\
     \texttt{KL-UCRL} \cite{filippi_optimism_2010,talebi_variance-aware_2018} & $S\!\!\sqrt{DAT}$ & $\checkmark$ & - \\
     \texttt{EBF} \cite{zhang_regret_2019} & $\sqrt{\sp{h}^*SAT}$ & $\times$ & optimal, knowledge of $\sp{h^*}$   \\
    \texttt{Optimistic-Q} \cite{wei_model-free_2020} & $\sp{h^*}(SA)^{\frac{1}{3}}T^{\frac{2}{3}}$ & $\checkmark$ & model-free \\
    \texttt{UCB-AVG} \cite{zhang_sharper_2023} & $S^5A^2\sp{h^*}\!\!\sqrt{T}$ & $\checkmark$ & model-free, knowledge of $\sp{h^*}$ \\
    \texttt{MDP-OOMD} \cite{wei_model-free_2020} & $\sqrt{(t_{\mathrm{mix}})^2t_{\mathrm{hit}}AT}$ & $\checkmark$ & ergodic \\
    \texttt{Politex} \cite{abbasi2019exploration} & $(t_{\mathrm{mix}})^3t_{\mathrm{hit}}\!\!\sqrt{SA}T^{\frac{3}{4}}$  &$\checkmark$ & model-free, ergodic \\
    \texttt{PMEVI-DT} (\textbf{this work}) & $\sqrt{\sp{h^*}SAT}$ & $\checkmark$ &  -\\
    \hline
    \textbf{Lower bound} & $\Omega\parens{\!\!\sqrt{\sp{h^*}SAT}}$ & - &  -   
    \\ \hline
    \end{tabular}}
\end{table}

\section{Preliminaries}
We fix a finite state-action space structure $\Xc := \bigcup_{s \in \Sc} \set{s} \times \Ac(s)$, and denote $\Mc$ the collection of all MDPs with state-action space $\Xc$ and rewards supported in $[0,1]$.

\paragraph{Infinite-horizon MDP.}
An element $M \in \Mc$ is a tuple $(\Sc, \Ac, p, r)$ where $p$ is the transition kernel and $r$ the reward function.
The random state-action pair played by the agent at time $t$ is denoted $X_t \equiv (S_t, A_t)$, and the achieved reward is $R_t$. 
A policy is a \emph{deterministic} rule $\pi : \Sc \to \Ac$ and we write $\Pi$ the space of policies.
Coupled with a $M \in \Mc$, a policy properly defines the distribution of $(X_t)$ whose associated probability probability and expectation operators are denoted $\Pr^\pi_s, \E^\pi_s$, where $s \in \Sc$ is the initial state.
Under $M$, a fixed policy has a reward function $r^\pi(s) := r(s, \pi(s))$, a transition matrix $P^\pi$, a gain $g^\pi(s) := \lim \frac{1}{T} \E^\pi_s[R_0 + \ldots + R_{T-1}]$ and a bias $h^\pi := \lim \sum_{t=0}^{T-1} (R_t - g(S_t))$, that all together satisfy the Poisson equation $h^\pi + g^\pi = r^\pi + P^\pi h^\pi$, see \cite{puterman_markov_1994}.
The \emph{Bellman operator} of the MDP is:
\begin{equation}
    L u (s) 
    :=
    \max_{a \in \Ac(s)} \set{ r(s, a) + p(s, a) u }
\end{equation}

\paragraph{Weakly-communicating MDPs.} 
$M$ is  weakly-communicating \cite{puterman_markov_1994,bartlett_regal_2009} if the state space can be divided into two sets: (1) the transient set, consisting in
states that are transient under all policies; (2) the non-transient set, where every state is reachable starting from any other non-transient. 
In this case, $L$ has a span-fixpoint $h^*$ (see \cite{puterman_markov_1994}), i.e., there exists $h^* \in \R^\Sc$ such that 
$Lh^* - h^* \in \R e$
where $e$ is the vector full of ones.
We write $h^* \in \Fix(L)$. 
Then $g^* := Lh^* - h^*$ is the optimal gain function and every policy $\pi$ satisfies $r^\pi + P^\pi h^* \le g^* + h^*$.
We accordingly define the \emph{Bellman gaps}:
\begin{equation}
    \Delta^* (s, a) 
    :=
    h^*(s) + g^*(s) - r(s, a) - p(s, a) h^*
    \ge 0
    .
\end{equation}
Another important concept is the \emph{diameter}, that describes the maximal distance from one state to another state.
It is given by $D := \sup_{s \ne s'} \inf_\pi \E_s^\pi [ \inf\set{t \ge 1 : S_t = s'} ].$
An MDP is said \emph{communicating} if its diameter $D$ is finite.

\paragraph{Reinforcement learning.}
The learner is only aware that $M \in \Mc$ but doesn't have a clue about what $M$ further looks like. 
From the past observations and the current state $S_t$, the agent picks an available action $\Ac(S_t)$, receives a reward $R_t$ and observe the new state $S_{t+1}$.
The \emph{regret} of the agent is:
\begin{equation}
    \Reg(T)
    :=
    T g^* - \sum_{t=0}^{T-1} R_t
    .
\end{equation}
Its expected value satisfies $\E[\Reg(T)] = \E[\sum_{t=0}^{T-1} \Delta^*(X_t)] + \E[h^*(S_0) - h^*(S_T)]$ and the quantity $\sum_{t=0}^{T-1} \Delta^*(X_t)$ will be referred to as the \emph{pseudo-regret}.
This paper focuses on \emph{minimax regret guarantees}.
Specifically, for $c \ge 1$, denote $\Mc_c := \set{M \in \Mc : \exists h^* \in \Fix(L(M)), \sp{h^*} \le c}$ the set of weakly-communicating MDPs that admit a bias function with span at most $c$, where the \emph{span} of a vector $u \in \mathbb{R}^\Sc$ is $\sp{u} := \max(u) - \min(u)$.
Following \cite{auer_near-optimal_2009}, every algorithm $\Abf$, for all $c > 0$, we have 
\begin{equation}
    \max_{M \in \Mc_c} 
    \E^{M, \Abf} [\Reg(T)]
    = 
    \Omega\parens{\!\sqrt{c S A T}}
    .
\end{equation}
The goal of this work is to reach this lower bound with a tractable algorithm.

\section{Algorithm \ALGORITHM}
\label{section:ptevi}

The method designed in this work can be applied to \emph{any} algorithm relying on extended Bellman operators to compute the deployed policies \cite{auer_near-optimal_2009,filippi_optimism_2010,fruit_efcient_2018,bourel_tightening_2020} and beyond \cite{tewari_optimistic_2007}.
We start by reviewing the principles behind these algorithms.
These algorithms follow the \emph{optimism-in-face-of-certainty} (OFU) principle, meaning that they deploy policies achieving the highest possible gain that is plausible under their current information.
This is done by building a confidence region $\Mc_t \subseteq \Mc$ for the hidden model $M$, then searching for a policy $\pi$ solving the optimization problem:
\begin{equation}
\label{equation:optimistic-gain}
    g^*(\Mc_t)
    :=
    \sup \set{ g^\pi(\Mc_t): \pi \in \Pi, \sp{g^\pi(\Mc_t)} = 0}
    \text{~with~}
    g^\pi(\Mc_t) :=
    \sup \set{
        g\parens{\pi, \widetilde{M}}
        :
        \widetilde{M} \in \Mc_t
    }
    .
\end{equation}
The design of the confidence region $\Mc_t$ varies from a work to another. 
Provided that $\Mc_t$ has been designed, these OFU-algorithms work as follows: 
At the start of episode $k$, the optimization problem \eqref{equation:optimistic-gain} is solved, and its solution $\pi_k$ is played until the end of episode.
The duration of episodes can be managed in various ways, although the most popular is arguably the \emph{doubling trick} (DT), that essentially waits until a state-action pair is about to double the visit count it had at the beginning of the current episode (see \hyperlink{algorithm:PMEVI-dt}{Algorithm~1}). In the rest of this section, we use $\hat{p}_t(s,a)$ (and $\hat{r}_t(s,a) $) to denote the empirical transition (and reward) of the latest doubling update before the $t$-th step, and further denote $\hat{M}_t := (\hat{r}_t, \hat{p}_t)$.

\paragraph{Extended Bellman operators and \texttt{EVI}.}
To solve \eqref{equation:optimistic-gain} efficiently, the celebrated \cite{auer_near-optimal_2009} introduced the \emph{extended value iteration} algorithm (\texttt{EVI}).
Assume that $\Mc_t$ is a $(s,a)$-rectangular confidence region, meaning that $\Mc_t \equiv \prod_{s,a} (\Rc_t(s,a) \times \Pc_t(s,a))$ where $\Rc_t(s,a)$ and $\Pc_t(s,a)$ are respectively the confidence region for $r(s,a)$ and $p(s,a)$ after $t$ learning steps.
\texttt{EVI} is the algorithm computing the sequence defined by:
\begin{equation}
\label{equation:evi}
    v_{i+1}(s) \equiv 
    \Lc_t v_i (s)
    :=
    \max_{a \in \Ac(s)} \max_{\tilde{r}(s,a) \in \Rc_t(s,a)} \max_{\tilde{p}(s,a) \in \Pc_t(s,a)}
    \parens{
        \tilde{r}(s,a) + \tilde{p}(s,a) \cdot v_i
    }
\end{equation}
until $\sp{v_{i+1} - v_i} < \epsilon$ where $\epsilon > 0$ is the numerical precision.
When the process stops, it is known that any policy $\pi$ such that $\pi(s)$ achieves $\Lc_t v_i$ in \eqref{equation:evi} satisfies $g^\pi(\Mc_t) \ge g^*(\Mc) - \epsilon$, hence is nearly optimistically optimal.
This process gets its name from the observation that $\Lc_t$ is the Bellman operator of $\Mc_t$ seen as a MDP, hence \texttt{EVI} is just the Value Iteration algorithm \cite{puterman_markov_1994} ran in $\Mc_t$.
A choice of action from $s \in \Sc$ in $\Mc_t$ consists in (1) a choice of action $a \in \Ac(s)$, (2) a choice of reward $\tilde{r}(s,a) \in \Rc_t(s,a)$ and (3) a choice of transition $\tilde{p}(s,a) \in \Pc_t(s,a)$; It is an \emph{extended} version of $\Ac(s)$.

\paragraph{Towards Projected Mitigated \texttt{EVI}.}
Obviously, the regret of an OFU-algorithm is directly related to the quality of the confidence region $\Mc_t$.
That is why most previous works tried to approach the regret lower bound $\!\!\sqrt{D S A T}$ of \cite{auer_near-optimal_2009} by refining $\Mc_t$.
The older works of \cite{auer_near-optimal_2009,bartlett_regal_2009,filippi_optimism_2010} have been improved with a variance aware analysis \cite{talebi_variance-aware_2018,fruit_efcient_2018,fruit_improved_2020,bourel_tightening_2020} that essentially make use of tightened kernel confidence regions $\Pc_t$.
While all these algorithms successively reduce the gap between the regret upper and lower bounds, they fail to achieve optimal regret $\!\!\sqrt{D S A T}$.
Meanwhile, the \texttt{EVI} algorithm of \cite{zhang_regret_2019} achieves the lower bound but (1) the algorithm is intractable because it relies on an oracle to retrieve optimistically optimal policies and (2) needs prior information on the bias function.
Nonetheless, the method of \cite{zhang_regret_2019} strongly suggests that inferring bias information from the available data is key to achieve minimax optimal regret.

Rather surprisingly and in opposition to this previous line of work, our work suggests that the choice of the confidence region $\Mc_t$ has little importance.
Instead, our algorithm takes an arbitrary (well-behaved) confidence region in, infer bias information similarly to \texttt{EBF} \cite{zhang_regret_2019} and makes use of it to heavily refine the extended Bellman operator \eqref{equation:evi} associated to the input confidence region.
Our algorithm can further take arbitrary prior information (possibly none) on the bias vector to tighten its bias confidence region.
The pseudo-code given in \hyperlink{algorithm:PMEVI}{Algorithm~1} is the high level structure our algorithm \texttt{PMEVI-DT}.
In \cref{section:PMEVI}, we explain how \eqref{equation:evi} is refined using bias information and in \cref{section:bias-region}, we explain how bias information is obtained.

\begin{figure}[ht]
\begin{minipage}[t]{.49\linewidth}
    \rule{\linewidth}{2pt}
    \hypertarget{algorithm:PMEVI-dt}{\textbf{Algorithm 1:}} \texttt{PMEVI-DT}$(\Hc_*, T, t \mapsto \Mc_t)$

    \vspace{-0.5em}
    \rule{\linewidth}{1pt}
    
    \textbf{Parameters:} Bias prior $\Hc_*$, horizon $T$, a system of confidence region $t \mapsto \Mc_t$
    
    \begin{algorithmic}[1]\label{alg:PMEVI}
        \FOR{$k=1, 2, \ldots$}
            \STATE Set $t_k \gets t$, update confidence region $\Mc_t$;
            \STATE $\Hc'_t \gets \texttt{BiasEstimation}(\Fc_t, \Mc_t, \delta)$:
            \STATE $\Hc_t \gets \Hc_* \cap \{u : \sp{u} \le T^{1/5}\} \cap \Hc_t'$;
            \STATE $\Gamma_t \gets \texttt{BiasProjection}(\Hc_t, -)$; 
            \STATE $\beta_t \gets \texttt{VarianceApprox}(\Hc'_t, \Fc_t)$; 
            \STATE $\mathfrak{h}_k \gets \texttt{PMEVI}(\Mc_t, \beta_t, \Gamma_t,\!\!\sqrt{ \log(t)/t})$ ;
            \STATE $\mathfrak{g}_k \gets \mathfrak{L}_t \mathfrak{h}_k - \mathfrak{h}_k$ ;
            \STATE Update policy $\pi_k \gets \texttt{Greedy}(\Mc_t, \mathfrak{h}_k, \beta_t)$;
            \REPEAT
                \STATE{Play $A_t \gets \pi_k(S_t)$, observe $R_t, S_{t+1}$;}
                \STATE{Increment $t \gets t+1$};
            \UNTIL (DT) $N_t(S_t, \pi_k(S_t)) \ge 1 \vee 2 N_{t_k}(X_t)$.
        \ENDFOR
    \end{algorithmic}
    \vspace{-.5em}
    \rule{\linewidth}{1pt}
\end{minipage}\hfill
\begin{minipage}[t]{.49\linewidth}
    \rule{\linewidth}{2pt}
    \hypertarget{algorithm:PMEVI}{\textbf{Algorithm 2:}} \texttt{PMEVI}$(\Mc, \beta, \Gamma, \epsilon)$

    \vspace{-0.5em}
    \rule{\linewidth}{1pt}
    
    \textbf{Parameters:} region $\Mc$, mitigation $\beta$, projection $\Gamma$, precision $\epsilon$, initial vector $v_0$ (optional)
    
    \begin{algorithmic}[1]
        \STATE{\textbf{if} $v_0$ not initialized \textbf{then} set $v_0 \gets 0$};
        \STATE $n \gets 0$
        \STATE{$\Lc \gets $ extended operator associated to $\Mc$};
        \REPEAT
            \STATE{$v_{n+\frac 12} \gets \Lc^\beta v_n$};
            \STATE{$v_{n+1} \gets \Gamma v_{n + \frac 12}$};
            \STATE{$n \gets n+1$};
        \UNTIL{$\sp{v_n - v_{n-1}} < \epsilon$}
        \RETURN $v_n$.
    \end{algorithmic}
    \vspace{-.5em}
    \rule{\linewidth}{1pt}
\end{minipage}
\end{figure}

\subsection{Projected mitigated extended value iteration (\texttt{PMEVI})}
\label{section:PMEVI}

Assume that an external mechanism provides a confidence region $\Hc_t$ for the bias function $h^*$.
Provided that $\Mc_t$ is correct ($M \in \Mc_t$) and that $\Hc_t$ is correct ($h^* \in \Hc_t$), we want to find a pair of policy-model $(\pi, \tilde{M})$ that maximize the gain and such that $h(\pi, \tilde{M}) \in \Hc_t$.
This is done with an improved version of \eqref{equation:evi}  combining two ideas.

\begin{enumerate}
    \item \textbf{Projection ({\normalfont\cref{section:bias-region}}).}
    Whenever it is correct, the bias confidence region $\Hc_t$ informs the learner that the search of an optimistic model can be constrained to those with bias within $\Hc_t$.
    This is done by projecting $\Lc_t^\beta$ (see \emph{mitigation}) using an operator $\Gamma_t : \R^\Sc \to \Hc_t$, that
    has to satisfy a few non-trivial regularity conditions that are specified in \cref{proposition:PMEVI}.
    
    \item \textbf{Mitigation ({\normalfont\cref{section:beta}}).}
    When one is aware that $h^* \in \Hc_t$, the \emph{dynamical bias update} $\tilde{p} (s,a) u_i$ in \eqref{equation:evi} can be controlled better, by trying to restrict \eqref{equation:evi} to some $\tilde{p}(s,a)$ such that $\tilde{p}(s,a) u_i \le \hat{p}_t(s,a) u_i + (p(s,a) - \hat{p}_t (s,a)) u_i$ with the knowledge that $u_i \in \Hc_t$.
    
    For a fixed $u \in \R^\Sc$, the empirical Bernstein inequality (\cref{lemma:empirical-bernstein}) provides a variance bound of the form $(\hat{p}_t(s,a) - p(s,a))u \le \beta_t (s, a, u)$. 
    By computing $\beta_t (s,a) := \max_{u \in \Hc_t} \beta_t (s,a, u)$, the search makes sure that $(\hat{p}_t(s,a) - p(s,a)) h^* \le \beta_t (s,a)$ even though $h^*$ is unknown.
    For $\beta \in \R_+^\Xc$, we introduce the \emph{$\beta$-mitigated} extended Bellman operator:
    \begin{equation}
        \Lc_t^\beta u (s)
        :=
        \max_{a \in \Ac(s)} \sup_{\tilde{r}(s,a) \in \Rc_t(s,a)} \sup_{\tilde{p}(s,a) \in \Pc_t(s,a)} \Big\{
            \tilde{r}(s,a) + \min \set{
                \tilde{p}(s,a) u_i,
                \hat{p}_t(s,a) u_i + \beta_t(s,a)
            }
        \Big\}
    \end{equation}
\end{enumerate}

The proposition below shows how well-behaved the composition $\Lfk_t := \Gamma_t \circ \Lc_t^\beta$ is.
Its proof requires to build a complete analysis of projected mitigated Bellman operators.
This is deferred to the appendix.

\begin{proposition}
\label{proposition:PMEVI}
    Fix $\beta \in \R_+^\Xc$ and assume that there exists a projection operator $\Gamma_t : \R^\Xc \to \Hc_t$ which is 
    (\textbf{O1}) monotone: $u \le v \Rightarrow \Gamma u \le \Gamma v$; 
    (\textbf{O2}) non span-expansive: $\sp{\Gamma u - \Gamma v} \le \sp{u - v}$; 
    (\textbf{O3}) linear: $\Gamma(u + \lambda e) = \Gamma u + \lambda e$ and 
    (\textbf{O4}) $\Gamma u \le u$.
    Then, the \emph{projected mitigated extended Bellman operator} $\Lfk_t := \Gamma_t \circ \Lc_t^\beta$ has the following properties:
    \begin{itemize}[itemsep=-.25em]
        \item[(1)] 
            There exists a unique $\mathfrak{g}_t \in \R e$ such that $\exists \mathfrak{h}_t \in \Hc_t, \Lfk_t \mathfrak{h}_t = \mathfrak{h}_t + \mathfrak{g}_t$;
        \item[(2)] 
            If $M \in \Mc_t$, $h^* \in \Hc_t$ and $(\hat{p}_t(s,a) - p(s,a))h^* \le \beta_t(s,a)$, then $\mathfrak{g}_t \ge g^*(M)$;
        \item[(3)]
            If $\Mc_t$ is convex, then for all $u \in \R^\Sc$, the policy $\pi =: \texttt{\upshape Greedy}(\Mc_t, u, \beta_t)$ picking the actions achieving $\Lc_t^\beta u$ satisfies $\Lfk_t u = \tilde{r}^\pi + \tilde{P}^\pi u$ for $\tilde{r}^\pi (s) \le \sup \Rc_t (s, \pi(s))$ and $\tilde{P}^\pi(s) \in \Pc_t (s, \pi(s))$;
        \item[(4)]
            For all $u \in \R^\Sc$ and $n \ge 0$, $\sp{\Lfk_t^{n+1} u - \Lfk_t^n u} \le \sp{\Lc_t^{n+1} u - \Lc_t^n u}$.
    \end{itemize}
\end{proposition}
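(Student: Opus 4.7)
The plan is to prove the four properties in order, separating the contributions of the mitigated operator $\Lc_t^\beta$ and of the projection $\Gamma_t$. The starting point is to record three structural properties of $\Lc_t^\beta$ that mirror those assumed of $\Gamma_t$: monotonicity (the suprema over $\tilde r,\tilde p$ and the min with $\hat p_t u + \beta_t$ all preserve the ordering of $u$), translation-covariance $\Lc_t^\beta(u + ce) = \Lc_t^\beta u + ce$ (which holds because $\tilde p e = \hat p_t e = 1$ for any stochastic transition), and non-span-expansiveness (the classical property of extended Bellman operators backed by stochastic kernels). Combined with (O1)-(O3) these properties transfer to $\Lfk_t$, while (O4) together with the pointwise inequality $\Lc_t^\beta \le \Lc_t$ gives $\Lfk_t \le \Lc_t$ pointwise.

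For (1), once $\Lfk_t$ is monotone, translation-covariant, and non-span-expansive, the existence of a span-fixpoint is classical: descending to the quotient $\R^\Sc / \R e$ equipped with the span seminorm, $\Lfk_t$ becomes a non-expansive self-map of a finite-dimensional compact convex domain, and a standard fixed-point argument (e.g.~Brouwer, or the orbit-boundedness argument used for Bellman operators in \cite{puterman_markov_1994}) supplies $\mathfrak h_t \in \Hc_t$ with $\Lfk_t \mathfrak h_t = \mathfrak h_t + \mathfrak g_t e$; uniqueness of $\mathfrak g_t$ then follows by combining non-span-expansiveness with translation-covariance. For (2), I would run a monotone induction. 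Since $M \in \Mc_t$ the choice $\tilde r = r$, $\tilde p = p$ is feasible in the definition of $\Lc_t^\beta h^*$, and the hypothesis $(\hat p_t - p) h^* \le \beta_t$ is exactly what forces $\min\{p h^*, \hat p_t h^* + \beta_t\} = p h^*$, so $\Lc_t^\beta h^* \ge L h^* = h^* + g^* e$. Idempotency of $\Gamma_t$ on $\Hc_t$ gives $\Gamma_t h^* = h^*$, and (O1)+(O3) then yield $\Lfk_t h^* \ge h^* + g^* e$. Iterating via the monotonicity and translation-covariance of both $\Lc_t^\beta$ and $\Gamma_t$, one obtains $\Lfk_t^n h^* \ge h^* + n g^* e$; dividing by $n$ and identifying the Cesaro limit of the left-hand side with $\mathfrak g_t e$ (from (1)) delivers $\mathfrak g_t \ge g^*$.

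For (3), convexity of $\Mc_t$ guarantees that the suprema in $\Lc_t^\beta u$ are attained. The subtle case is when the maximizing $\tilde p$ satisfies $\tilde p u > \hat p_t u + \beta_t(s,a)$, so the min truncates the inner value to $\hat p_t u + \beta_t$; using $\hat p_t \in \Pc_t(s,a)$ and convexity of $\Pc_t(s,a)$, an intermediate-value argument along the segment $[\hat p_t,\tilde p]$ produces $\tilde p' \in \Pc_t(s,\pi(s))$ with $\tilde p' u = \hat p_t u + \beta_t(s,\pi(s))$, so the truncated Bellman step matches a genuine stochastic policy step. The reward bound $\tilde r^\pi(s) \le \sup \Rc_t(s,\pi(s))$ is immediate from $\tilde r^\pi(s) \in \Rc_t(s,\pi(s))$, and the composition with $\Gamma_t$ is then handled via idempotency on the iterate range.

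The main obstacle will be (4). The inequality compares iterates of two different operators from the same starting point, so a naive induction is blocked by the fact that $\Lfk_t^n u$ and $\Lc_t^n u$ diverge quickly. My plan is a two-stage argument: first establish the single-step bound $\sp{\Lc_t^\beta v - \Lc_t^\beta u} \le \sp{\Lc_t v - \Lc_t u}$ by a state-by-state case analysis of which branch of the min is active at the maximizer (truncation can only narrow the range of Bellman differences), then descend through (O2) to $\Lfk_t$. Propagating this one-step bound to iterates should leverage $\Lfk_t \le \Lc_t$ pointwise together with monotonicity and translation-covariance of both operators to couple the orbits $(\Lfk_t^n u)$ and $(\Lc_t^n u)$; the chaining is the delicate part and will likely rely on maintaining an invariant of the form $\Lc_t^n u - \Lfk_t^n u \in \R_+ e$ up to a controlled residue, so that the one-step span bound can be carried through each iteration.
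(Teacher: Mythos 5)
Your overall decomposition (properties of $\Lc_t^\beta$ separately from $\Gamma_t$, then compose) matches the paper's structure, and (1) is treated essentially as the paper does (Brouwer on the span quotient of a compact convex slice, plus a growth argument for uniqueness). Three specific points need repair.

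\paragraph{On (2), the sign.} You write that the hypothesis $(\hat{p}_t-p)h^*\le\beta_t$ ``is exactly what forces'' $\min\{p h^*,\hat p_t h^*+\beta_t\}=p h^*$. It does not: that hypothesis says $\hat p_t h^*\le p h^*+\beta_t$, whereas the equality you need requires the opposite direction $p h^*\le \hat p_t h^*+\beta_t$, i.e.\ $(p-\hat p_t)h^*\le\beta_t$. Your reduction therefore uses a bound the stated hypothesis does not supply. (The $\beta_t$ built by the algorithm is a two-sided Bernstein radius, so the needed direction does hold in context, but your proof should invoke it, not the flipped one.) The rest of your argument for (2) — $h^*\in\Hc_t$ so $\Gamma_t h^*=h^*$, then $\Lfk_t h^*\ge h^*+g^*e$ and iterate — is the paper's route and is fine once the sign is corrected.

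\paragraph{On (3), the role of $\Gamma_t$.} Your appeal to ``idempotency on the iterate range'' is not the right lever: $\tilde r_\pi+\tilde P_\pi u$ need not lie in $\Hc_t$, so $\Gamma_t$ is not the identity there. What the paper actually uses is (\textbf{O4}): since $\Gamma_t w\le w$, one can set $r'_\pi:=\Gamma_t(\tilde r_\pi+\tilde P_\pi u)-\tilde P_\pi u$ and observe $r'_\pi\le\tilde r_\pi\le\sup\Rc_t(\cdot,\pi(\cdot))$, while $\Lfk_t u=r'_\pi+\tilde P_\pi u$. This is a strict weakening of the reward, not idempotency. Your intermediate-value argument along $[\hat p_t,\tilde p]$ for the kernel part is exactly the paper's.

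\paragraph{On (4), the proposed one-step lemma is false.} You plan to prove $\sp{\Lc_t^\beta v-\Lc_t^\beta u}\le\sp{\Lc_t v-\Lc_t u}$. This fails. Take $\Sc=\{1,2\}$, one action per state, all rewards $0$, $\Pc(1)=\{(\alpha,1-\alpha):\alpha\in[0,1]\}$, $\Pc(2)=\{(1/2,1/2)\}$, $\hat p(1)=(1,0)\in\Pc(1)$, $\hat p(2)=(1/2,1/2)$, $\beta\equiv0$, $\Gamma=\mathrm{id}$ (i.e.\ $\Hc=\R^\Sc$). With $u=(0,1)$ and $v=(1,0)$ one computes $\Lc u=\Lc v=(1,1/2)$, so $\sp{\Lc u-\Lc v}=0$; but $\Lc^\beta u=(0,1/2)$ while $\Lc^\beta v=(1,1/2)$, so $\sp{\Lc^\beta u-\Lc^\beta v}=1$. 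Thus the mitigated operator is \emph{not} dominated by $\Lc_t$ in the two-argument span sense, only in the usual sense $\sp{\Lc^\beta u-\Lc^\beta v}\le\sp{u-v}$. Beyond that, even if the one-step inequality held, your chaining would produce $\sp{\Lfk_t^{n+1}u-\Lfk_t^n u}\le\sp{\Lc_t(\Lfk_t^n u)-\Lc_t(\Lfk_t^{n-1}u)}$, which is not $\sp{\Lc_t^{n+1}u-\Lc_t^n u}$ — the two orbits separate, and pointwise domination $\Lfk_t\le\Lc_t$ does not control spans. The invariant you sketch ($\Lc_t^n u-\Lfk_t^n u\in\R_+e$ up to residue) is not established and is not obviously true. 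This part of the proposal, as written, does not go through.

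In short: (1) is fine; (2) needs the opposite sign of the dynamical-error bound; (3) needs (\textbf{O4}) rather than idempotency; (4) rests on a false lemma and an unproven coupling, and needs a genuinely different argument (e.g.\ a direct comparison of the one-step spans $\sp{\Lfk_t u-u}$ vs $\sp{\Lc_t u-u}$ followed by the usual non-expansive telescoping, taking care about the base case).
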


The property (1) guarantees that $\Lfk_t$ has a fix-point while (2) states that this fix-point corresponds to an optimistic gain $\mathfrak{g}_t$ if the model and the bias confidence region are correct and the mitigation isn't too aggressive. 
Combined with (3), the Poisson equation of a policy corresponds to this fix-point, i.e., $\tilde{r}^\pi + \tilde{P}^\pi \mathfrak{h}_t = \mathfrak{h}_t + \mathfrak{g}_t$, so that $\mathfrak{g}_t$ is the gain and $\mathfrak{h}_t \in \Hc_t$ is a legal bias for $\pi$ under the model $(\tilde{r}^\pi, \tilde{P}^\pi)$.
Lastly, the property (4) guarantees that the iterates $\Lfk_t^n u$ converge to a fix-point of $\Lfk$ at least as quickly as $\Lc_t^n u$ goes to a fix-point of $\Lc_t$; the convergence of $\Lc_t^n u$ is already guaranteed by existing studies and is discussed in the appendix.

Provided that the bias confidence region is constructed, \cref{proposition:PMEVI} foreshadows how powerful is the construction: The algorithm \texttt{PMEVI}, obtained by iterating $\mathfrak{L}_t$ instead of $\Lc_t$ in \texttt{EVI}, can replace the well-known \texttt{EVI} within any algorithm of the literature that relies on it (\texttt{UCRL2} \cite{auer_near-optimal_2009}, \texttt{UCRL2B} \cite{fruit_improved_2020} or \texttt{KL-UCRL} \cite{filippi_optimism_2010}) for an immediate improvement of its theoretical guarantees.

\subsection{Building the bias confidence region and its projection operator}
\label{section:bias-region}

The bias confidence region used by \texttt{PMEVI-DT} is obtained as a collection of constraints of the form:
\begin{equation}
\label{equation:bias-region-constraints}
    \forall s \ne s', \quad
    \mathfrak{h}(s) - \mathfrak{h}(s') - c(s, s')
    \le d(s, s')
    .
\end{equation}
Such constraints include 
(1)~prior bias constraints (if any) of the form of $\mathfrak{h}(s) - \mathfrak{h}(s') \le c_*(s,s')$;
(2)~span constraints of the form $\mathfrak{h}(s) - \mathfrak{h}(s') \le c_0 := T^{1/5}$ spawning the span semi-ball $\{u: \sp{u} \le T^{1/5}\}$; and 
(3)~pair-wise constraints obtained by estimating bias differences in the style of \cite{zhang_regret_2019, zhang_sharper_2023} that we further improve.
We start by defining a bias difference estimator.

\begin{definition}[Bias difference estimator]
    Given a pair of states $s \ne s'$, their sequence of \emph{commute times} $(\tau_i^{s \toto s'})_{i \ge 0}$ is defined by
    $
        \tau_{2i}^{s \toto s'}
        :=
        \inf \{t > \tau_{2i-1}^{s \toto s'} : S_t = s\}
        \text{~and~}
        \tau_{2i+1}^{s \toto s'}
        :=
        \inf \{t > \tau_{2i}^{s \toto s'} : S_t = s'\}
    $
    with the convention that $\tau_{-1}^{s \toto s'} = - \infty$.
    The number of commutations up to time $t$ is $N_t (s \toto s') := \inf \{i : \tau_i^{s \toto s'} \le t\}$, and $\hat{g}(t) := \frac 1t \sum_{i=0}^{t-1} R_i$ is the empirical gain.
    The \emph{bias difference estimator} at time $T$ is any quantity $c_T (s, s') \in \R$ such that:
    \begin{equation}
    \label{equation:bias-difference-estimator}
        N_t (s \toto s') c_T (s, s')
        =
        \sumnl_{t=0}^{N_T(s \toto s') -1}
        (-1)^i \sumnl_{t = \tau_i^{s \toto s'}}^{\tau_{i+1}^{s \toto s'}-1}
        (\hat{g}(T) - R_t)
        .
    \end{equation}
\end{definition}

\begin{lemma}
\label{lemma:bias-differences-error}
    With probability $1 - 2\delta$, for all $T' \le T$ and all $\tilde{g} \ge g^*$, we have:
    \begin{equation}
        N_{T'} (s \toto s') \abs{
            h^*(s) - h^*(s') - c_{T'} (s, s')
        }
        \le
        3 \sp{h^*} +
        (1 + \sp{h^*}) \sqrt{8 T \log(\tfrac 2\delta)}
        + 2 \sumnl_{t=0}^{T'-1} (\tilde{g} - R_t)
        .
    \end{equation}      
\end{lemma}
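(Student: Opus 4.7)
The plan is to reduce the estimator error to the Poisson--Bellman identity for the unknown MDP. Using $r(s,a)+p(s,a)h^* = h^*(s)+g^*(s)-\Delta^*(s,a)$ and introducing the reward- and transition-martingale increments $M_t^r := r(S_t,A_t)-R_t$ and $M_t^p := h^*(S_{t+1})-p(S_t,A_t)h^*$, one obtains the single-step identity
\begin{equation*}
  h^*(S_t)-h^*(S_{t+1}) \;=\; R_t - g^*(S_t) + \Delta^*(X_t) + \bigl(M_t^r - M_t^p\bigr),
\end{equation*}
a martingale-corrected Poisson telescope whose increments have magnitude at most $1+\sp{h^*}$. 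Summed over one commute segment $[\tau_i^{s\toto s'},\tau_{i+1}^{s\toto s'})$ this yields $h^*(S_{\tau_i})-h^*(S_{\tau_{i+1}})=\pm(h^*(s)-h^*(s'))$, and the $(-1)^i$ factor in the definition of $c_{T'}$ is precisely what aligns these boundary contributions so that every segment contributes the same multiple of the bias difference.

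Summing over $i=0,\ldots,N_{T'}-1$ and subtracting the estimator identity $N_{T'}c_{T'}(s,s')=\sum_i(-1)^i\sum_t(\hat g(T')-R_t)$, the $R_t$ terms merge and yield (up to sign convention)
\begin{equation*}
  N_{T'}\bigl[h^*(s)-h^*(s')-c_{T'}(s,s')\bigr]
  \;=\; \textstyle\sum_i(-1)^i\!\sum_t(\hat g(T')-g^*(S_t))
  + \sum_i(-1)^i\!\sum_t \Delta^*(X_t)
  - \sum_i(-1)^i\!\sum_t(M_t^r-M_t^p).
\end{equation*}
The three terms are then controlled in turn. The Bellman-gap term is nonnegative, so the alternating sign is discarded and $\sum_{t=0}^{T'-1}\Delta^*(X_t)$ is bounded by reapplying the Poisson telescope on $[0,T')$: rearranging the single-step identity gives $\sum_{t=0}^{T'-1}\Delta^*(X_t) \le \sp{h^*} + \sum_{t=0}^{T'-1}(\tilde g-R_t) + \bigl|\sum_{t=0}^{T'-1}(M_t^r-M_t^p)\bigr|$, using $\tilde g\ge g^*\ge g^*(S_t)$. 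For the gain-approximation term, the weakly-communicating assumption makes $g^*$ constant on all states, so the scalar $\hat g(T')-g^*$ factors out of the sum; since $T'(\hat g(T')-g^*)=\sum_{t=0}^{T'-1}(R_t-g^*)$, the same Poisson telescope again bounds the absolute value by $\sp{h^*} + \sum_{t=0}^{T'-1}(\tilde g-R_t) + \bigl|\sum_{t=0}^{T'-1}(M_t^r-M_t^p)\bigr|$. Finally, the signed sum in the last term is itself a martingale with increments $\le 1+\sp{h^*}$; Azuma--Hoeffding, strengthened by a maximal inequality to make the control uniform over $T'\le T$ and combined by a union bound with the Poisson-telescope martingale invoked above, yields that with probability at least $1-2\delta$ both martingale tails are at most $(1+\sp{h^*})\sqrt{2T\log(2/\delta)}$.

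Pooling everything: three copies of $\sp{h^*}$ (two from the Poisson telescopes, one from the boundary of the possibly unfinished last commute segment), two copies of the pseudo-regret proxy $\sum_{t=0}^{T'-1}(\tilde g-R_t)$, and the two martingale tails summing to $(1+\sp{h^*})\sqrt{8T\log(2/\delta)}$, exactly match the claimed bound. The main obstacle is the gain-approximation term: because the alternating signs $(-1)^i$ provide no useful cancellation for the deterministic scalar $\hat g(T')-g^*$, martingale concentration does not apply directly, and one must reinvoke the Poisson identity a second time to translate the gain-estimation error back into the same pseudo-regret proxy that already bounds the Bellman-gap term, which is exactly what forces the constant $2$ in front of $\sum_{t=0}^{T'-1}(\tilde g-R_t)$ in the final bound.
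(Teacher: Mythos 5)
Your proposal takes essentially the same route as the paper: decompose the estimator error over commute segments via the Poisson--Bellman identity, isolate a signed commute-martingale, the nonnegative Bellman-gap sum, and the gain-estimation error, and convert the latter two back into the regret proxy $\sum_{t<T'}(\tilde g - R_t)$ by reapplying the telescope. That is precisely what the paper does, except the paper routes through an intermediate estimator $c'_{T'}$ (which replaces $\hat g(T')$ by $g^*$) and bounds $|\alpha_{T'}|\le|\alpha'_{T'}|+|\Reg(T')|$, whereas you subtract the estimator identity from the telescoped Poisson identity directly; the two decompositions are algebraically equivalent.

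There is, however, a bookkeeping gap in your treatment of the gain-estimation term. You claim $T'|\hat g(T')-g^*|=|\Reg(T')|\le \sp{h^*}+\sum_{t<T'}(\tilde g-R_t)+\bigl|\sum_{t<T'}(M_t^r-M_t^p)\bigr|$. This is not valid as stated: the two one-sided bounds you actually have are $\Reg(T')\le\sum_t(\tilde g-R_t)$ and $-\Reg(T')\le|\text{mart}|+\sp{h^*}$, and $\max(a,b)\le a+b$ can fail when $\sum_t(\tilde g-R_t)<0$, which is possible since $\sum R_t$ can overshoot $T'g^*$ by a martingale fluctuation. The paper avoids this by first bounding $|\Reg(T')|\le\sum_t\Delta^*(X_t)+|\text{mart}|+\sp{h^*}$ (using $\sum\Delta^*\ge 0$ for the lower tail) and only then substituting $\sum\Delta^*\le\sum_t(\tilde g-R_t)+|\text{mart}|+\sp{h^*}$, which doubles the martingale and $\sp{h^*}$ contribution from this term. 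Consequently, your attribution of the third $\sp{h^*}$ to ``the boundary of the possibly unfinished last commute segment'' is incorrect — the segments $[\tau_i,\tau_{i+1})$ with $i<N_{T'}(s\toto s')$ are all complete by definition — and the true source is the double pass through the telescope just described. Likewise, the constant $\sqrt{8T\log(2/\delta)}$ arises from four uses of the Azuma bound $(1+\sp{h^*})\sqrt{T\log(2/\delta)/2}$ (three copies of the full-horizon martingale plus one of the signed commute martingale, over two union-bound events), not from ``two tails of $(1+\sp{h^*})\sqrt{2T\log(2/\delta)}$.'' Your final tally matches the paper's only because the overcount from the nonexistent unfinished segment and the undercount from the missing second telescope pass cancel; as written the intermediate steps do not hold.
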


\cref{lemma:bias-differences-error} says that the quality of the estimator $c_T (s, s')$ is directly linked to the number of observed commutes between $s$ and $s'$ as well as the regret.
The idea is that if the algorithm makes many commutes between $s$ and $s'$ and if its regret is small, then the algorithm mostly takes optimal paths from $s$ to $s'$.
The bound provided by \cref{lemma:bias-differences-error} is not accessible to the learner however, because $\sp{h^*}$ is unknown in general.
To overcome this issue, $\sp{h^*}$ is upper-bounded by $c_0 := T^{1/5}$.
Overall, this leads to the design of the algorithm estimating the bias confidence region as specified in \hyperlink{algorithm:bias-estimation}{Algorithm 3}.

\begin{figure}[ht]
\begin{minipage}[t]{.49\linewidth}
    \rule{\linewidth}{2pt}
    \hypertarget{algorithm:bias-estimation}{\textbf{Algorithm 3:}} \texttt{BiasEstimation}$(\Fc_t, \Mc_t, \delta)$

    \vspace{-0.5em}
    \rule{\linewidth}{1pt}
    
    \textbf{Parameters:} History $\Fc_t$, model region $\Mc_t$, confidence $\delta > 0$
    \begin{algorithmic}[1]
        \STATE Estimate bias differences $c_t$ via \eqref{equation:bias-difference-estimator};
        \STATE Estimate optimistic gain $\tilde{g} \gets \min_{k < K(t)} \mathfrak{g}_k$;
        \STATE Inner regret estimation $B_0 \gets t \tilde{g} - \sum_{i=0}^{t-1} R_i$;
        \STATE $\ell \gets \sqrt{8 T \log\parens{\tfrac 2\delta}}$, $c_0 \gets T^{\frac 15}$;
        \STATE Estimate the bias difference errors as:
        
        \medskip
        \resizebox{\linewidth}{!}{
        $
            \displaystyle
            d_t (s,s') \equiv \text{error}(c_t, s, s')
            :=
            \frac{3 c_0 + (1 + c_0) (1 + \ell) + 2 B_0}{N_t (s\toto s')}
        $}
        \RETURN $(c_t, \text{error}(c_t, -, -))$, \eqref{equation:bias-region-constraints} defines $\Hc'_t$.
    \end{algorithmic}

    \vspace{-0.5em}
    \rule{\linewidth}{1pt}
\end{minipage} \hfill
\begin{minipage}[t]{.49\linewidth}
    \rule{\linewidth}{2pt}
    \hypertarget{algorithm:bias-projection}{\textbf{Algorithm 4:}} \texttt{BiasProjection}$(\Hc_t, u)$

    \vspace{-0.5em}
    \rule{\linewidth}{1pt}
    
    \textbf{Parameters:} $\Hc_t$ a collection of linear constraints \eqref{equation:bias-region-constraints}, $u \in \R^\Sc$ to project
    \begin{algorithmic}[1]
        \STATE $v \gets 0^\Sc$;
        \FOR{$s \in \Sc$}
            \STATE Using linear programming, compute:
            \STATE $v(s) \gets \sup \set{w(s) : w \le u \text{~and~} w \in \Hc_t}$;
        \ENDFOR
        \RETURN $v$.
    \end{algorithmic}

    \vspace{-0.5em}
    \rule{\linewidth}{1pt}
\end{minipage}
\end{figure}


Coupled with prior information and span constraints, the obtained bias confidence region $\Hc_t$ is a polyhedron of the same kind as the one encountered in \cite{zhang_sharper_2023} generated by constraints of the form \eqref{equation:bias-region-constraints}, and similarly to their Proposition 3, one can project onto $\Hc_t$ in polynomial time with \hyperlink{algorithm:bias-projection}{Algorithm 4}.
Moreover, the resulting projection operator satisfies the prerequisites (\textbf{O1-4}) of \cref{proposition:PMEVI}, making sure that \texttt{PMEVI} (\hyperlink{algorithm:PMEVI}{Algorithm~2}) is well-behaved.
This is proved in the appendix \cref{section:projection-operation}.

\begin{lemma}
\label{lemma:projection-algorithm}
    Assume that $\Hc$ is a set of $\mathfrak{h} \in \R^\Sc$ satisfying a system of equations of the form of \eqref{equation:bias-region-constraints}.
    If $\Hc$ is non empty, then the operator $\Gamma u := \text{\texttt{\upshape BiasProjection}}(\Hc, u)$ (see \hyperlink{algorithm:bias-projection}{Algorithm 4}) is a projection on $\Hc$ and satisfies the properties (\textbf{O1-4}) defined in \cref{proposition:PMEVI}.
\end{lemma}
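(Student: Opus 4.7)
The plan is to view $\Hc$ as the intersection of finitely many ``difference half-spaces'' of the form $\{w : w(s) - w(s') \le C(s,s')\}$ with $C(s,s') := c(s,s') + d(s,s')$, so that the inner problem solved at line~4 of \hyperlink{algorithm:bias-projection}{Algorithm 4} is a linear program over a convex polyhedron. First I would check well-posedness: since $\Hc$ is invariant under translations along $e$ (difference constraints are shift-invariant), any $w_0 \in \Hc$ can be shifted downward by a sufficiently large constant to yield $w_0 - Me \le u$, so $\{w \in \Hc : w \le u\}$ is non-empty; the sup is bounded above by $u(s)$, so the LP attains a finite value $v(s)$.

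The heart of the argument is to show that the coordinate-wise vector $v$ returned by the algorithm itself belongs to $\Hc$, i.e.\ $v(s) - v(s') \le C(s,s')$ for all pairs. I would take a maximizing sequence $w_n^{s} \in \Hc$ with $w_n^s \le u$ and $w_n^s(s) \to v(s)$; applying the $(s,s')$-constraint to $w_n^s$ gives $w_n^s(s) \le w_n^s(s') + C(s,s')$, and since $w_n^s$ is also feasible for the $s'$-LP one has $w_n^s(s') \le v(s')$. Passing to the limit yields $v(s) \le v(s') + C(s,s')$. Thus $v \in \Hc$ and $v \le u$, which makes $v$ the pointwise largest element of $\{w \in \Hc : w \le u\}$; in particular $\Gamma$ is idempotent (any $u \in \Hc$ is its own LP optimum), so it is a genuine projection.

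Next I would read off the four properties. Property (\textbf{O4}) is immediate since $v(s)$ is a supremum of reals bounded above by $u(s)$. Property (\textbf{O1}) follows because $u \le u'$ implies $\{w \in \Hc : w \le u\} \subseteq \{w \in \Hc : w \le u'\}$, so the LP value only increases. Property (\textbf{O3}) is a direct translation of shift-invariance: the bijection $w \mapsto w + \lambda e$ maps $\{w \in \Hc : w \le u\}$ onto $\{w \in \Hc : w \le u + \lambda e\}$ and shifts the $s$-coordinate by $\lambda$. Finally (\textbf{O2}) is a formal consequence of (\textbf{O1}) and (\textbf{O3}): setting $b := \min(u - u')$ and $a := \max(u - u')$, one has $u' + be \le u \le u' + ae$, so monotonicity and shift-commutation give $\Gamma u' + be \le \Gamma u \le \Gamma u' + ae$, yielding $\sp{\Gamma u - \Gamma u'} \le a - b = \sp{u - u'}$.

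The only step that is not a one-liner is the lattice-closure argument in the second paragraph, where one must verify that solving one LP per coordinate still lands inside $\Hc$; everything else is book-keeping on LP monotonicity and shift invariance of difference constraints.
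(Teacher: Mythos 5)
Your proof is correct and follows the same structural skeleton as the paper's: compute the coordinate-wise supremum, verify the resulting vector lies in $\Hc$, and then read off (\textbf{O1--O4}). There are two places where your route is a genuine simplification. First, the paper factors the argument through an abstract ``join'' lemma (difference-constraint polyhedra are closed under pointwise $\sup$) together with a compactness/subsequence argument to establish that the downward closure $\set{v \le u : v \in \Hc}$ has a maximum; you instead verify $v \in \Hc$ directly by passing to the limit in the single inequality $w_n^s(s) \le w_n^s(s') + C(s,s') \le v(s') + C(s,s')$, which avoids compactness entirely and incidentally handles the case where $\Hc$ has unbounded span without any extra care. Second, you derive (\textbf{O2}) (span non-expansiveness) as a formal corollary of (\textbf{O1}) and (\textbf{O3}) via the sandwich $u' + \min(u-u')\,e \le u \le u' + \max(u-u')\,e$; the paper instead proves (\textbf{O2}) directly with a slightly more involved translation argument. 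Both proofs are sound, but your (\textbf{O2}) derivation is cleaner and arguably the more transparent way to record the fact that any monotone, shift-linear operator is automatically non-span-expansive.
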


\subsection{Mitigation using finer bias dynamical error}
\label{section:beta}

The fact that $h^* \in \Hc_t$ with high probability is used in \texttt{PMEVI-DT} to restrict the search of \texttt{EVI} by reducing the dynamical bias error.
This reduction is based on a empirical Bernstein inequality (see \cref{lemma:empirical-bernstein}) applied to $(\hat{p}(s, a) - p(s,a)) u$.
Here, it gives that with probability $1 - \delta$, we have:
\begin{equation}
\label{equation:empirical-bernstein}
    \parens{\hat{p}_t (s,a) - p(s,a)} u
    \le 
    \sqrt{
        \frac{2 \Var(\hat{p}_t(s,a), u) \log\parens{\tfrac{3T}\delta}}{\max\set{1, N_t (s,a)}}
    }
    +
    \frac{3 \sp{u} \log\parens{\tfrac{3T}\delta}}{\max\set{1, N_t(s,a)}}
    =:
    \beta_t (s,a, u)
\end{equation}
where $\Var(\hat{p}_t(s,a), u)$ is the variance of $u$ under the probability vector $\hat{p}_t(s,a)$.
More specifically, if $q$ is a probability on $\mathcal{S}$ and $q \in \R^\Sc$, we set $\Var(q, u) := \sum_s q(s) (u(s) - q \cdot u)^2$.
In \eqref{equation:empirical-bernstein}, $u \in \R^\Sc$, $(s,a) \in \Xc$ and $T \ge 1$ are fixed.
Once is tempted to use \eqref{equation:empirical-bernstein} directly to mitigate the extended Bellman operator, but the resulting operator is ill-behaved because it loses monotony.
This issue is avoided by changing $\beta_t (s,a, u)$ to $\max_{u \in \Hc_t} \beta_t (s,a, u)$ in \eqref{equation:bias-difference-estimator}.
We obtain a variance maximization problem, which is a \emph{convex maximization problem} with linear constraints.
Even in very simple settings, such optimization problems are NP-hard \cite{pardalos_checking_1988} hence computing $\max_{u \in \Hc_t} \beta_t (s,a,u)$ is not reasonable in general.
Thankfully, this value can be upper-bounded by a tractable quantity that is enough to guarantee regret efficiency.
The mitigation $\beta_t$ used by \texttt{PMEVI-DT} is provided with \hyperlink{algorithm:variance-approximation}{Algorithm 5}.

\begin{figure}[ht]
    \centering
    \begin{minipage}[t]{.66\linewidth}
        \rule{\linewidth}{2pt}
        \hypertarget{algorithm:variance-approximation}{\textbf{Algorithm 5:}} \texttt{VarianceApproximation}$(\Hc'_t, \Fc_t)$
    
        \vspace{-0.5em}
        \rule{\linewidth}{1pt}
        
        \textbf{Parameters:} Bias region $\Hc'_t$, history $\Fc_t$
        \begin{algorithmic}[1]
            \STATE Extract constraints $(c, \text{error}(c, -, -)) \gets \Hc'_t$;
            \STATE Set $c_0 \gets T^{\frac 15}$;
            \STATE Pick a reference point $h_0 \gets \text{\texttt{BiasProjection}}(\Hc_t, c(-, s_0))$;
            \FOR{$(s, a) \in \Xc$}
                \STATE $\rho \gets \log\parens{\tfrac{S A T}\delta}/\max\set{1, N_t(s,a)}$;
                \STATE $\text{var}(s,a) \gets \Var(\hat{p}_t(s,a), h_0) + 8 c_0 \sum_{s' \in \Sc} \hat{p}_t(s'|s,a) c(s', s)$;
                \STATE $\beta_t(s, a) \gets \sqrt{2 \text{var}(s,a) \rho} + 3 c_0 \rho$ or $+\infty$ if $N_t(s,a) = 0$;
            \ENDFOR
            \RETURN $\beta_t$.
        \end{algorithmic}
    
        \vspace{-0.5em}
        \rule{\linewidth}{1pt}
    \end{minipage}
\end{figure}

\section{Regret guarantees}

\cref{theorem:main} below shows that \ALGORITHM{} has minimax optimal regret under regularity assumptions on the used confidence region $\Mc_t$.
\cref{assumption:model-confidence-region} asserts that the confidence region holds uniformly with high probability.
\cref{assumption:sub-weissman} asserts that the reward confidence region is sub-Weissman (see \cref{lemma:uniform-weissman}) and \cref{assumption:evi-convergence} assumes that the model confidence region makes sure that \texttt{EVI} \eqref{equation:evi} converges in the first place.
\cref{assumption:bias-region} asserts that the prior bias region is correct.

\begin{assumption}
\label{assumption:model-confidence-region}
    With probability $1 - \delta$, we have $M \in \bigcap_{k=1}^{K(T)} \Mc_{t_k}$.
\end{assumption}

\begin{assumption}
\label{assumption:sub-weissman}
    There exists a constant $C > 0$ such that for all $(s, a) \in \Sc$, for all $t \le T$, we have:
    \begin{equation*}
        \Rc_t (s,a) 
        \subseteq 
        \set{
            \tilde{r}(s,a) \in \Rc(s,a):
            N_t (s,a) \norm{\hat{r}_t(s,a) - \tilde{r}(s,a)}_1^2
            \le 
            C \log\parens{\tfrac{2 S A (1+N_t(s,a))}\delta}
        }
        .
    \end{equation*}
\end{assumption}

\begin{assumption}
\label{assumption:evi-convergence}
    For $t \ge 0$, $\Mc_t$ is a $(s,a)$-rectangular convex region and $\Lc_t^n u$ converges a fix-point.
\end{assumption}

\begin{assumption}
\label{assumption:bias-region}
    The prior bias region $\Hc_*$ contains $h^*(M)$ and is generated by constraints of the form:
    \begin{equation*}
        \forall s \ne s', 
        \quad 
        \mathfrak{h}(s) - \mathfrak{h}(s') \le c_*(s,s')
    \end{equation*}
    with $c_*(s,s') \in [-\infty, \infty]$ (possibly infinite).
\end{assumption}

Refer to \cref{section:model-confidence-region} for the feasibility of \cref{assumption:model-confidence-region}, \cref{section:sub-weissman} for \cref{assumption:sub-weissman}, and \cref{section:evi-convergence} for \cref{assumption:evi-convergence}.

\begin{theorem}[Main result]
    \label{theorem:main}
    Let $c > 0$.
    Assume that \texttt{\upshape PMEVI-DT} runs with a confidence region system $t \mapsto \Mc_t$ that guarantees Assumptions~\ref{assumption:model-confidence-region}-\ref{assumption:evi-convergence}.
    If $T \ge c^5$, then for every weakly communicating model with $\sp{h^*} \le c$ and such that \cref{assumption:bias-region} is satisfied ($h^* \in \Hc_*$), \texttt{\upshape PMEVI-DT} achieves regret:
    \begin{equation*}
        \OH\parens{
            \!\!
            \sqrt{
                c S A T 
                \log\parens{\tfrac {SAT}\delta}
            }
        }
        + \OH\parens{
            c S^{\frac 52} A^{\frac 32} T^{\frac 9{20}} \log^2\parens{\tfrac{SAT}\delta}
        }
    \end{equation*}
    with probability $1 - 26\delta$, and in expectation if $\delta < \!\!\sqrt{1/T}$.
    Moreover, if \texttt{\upshape PMEVI-DT} runs with the same confidence regions that \texttt{\upshape UCRL2} \cite{auer_near-optimal_2009}, then it enjoys a time complexity $\OH(D S^3 A T)$.
\end{theorem}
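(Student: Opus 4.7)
The plan is to combine episode-wise optimism from \cref{proposition:PMEVI} with a Poisson-style telescoping, then bound the resulting deviation sums via a variance argument that closes a bootstrap loop because the bias confidence region itself depends on the regret.

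First, I would isolate a good event $\mathcal E$ on which (a) $M\in\Mc_{t_k}$ for every episode (\cref{assumption:model-confidence-region}); (b) $h^*\in\Hc_{t_k}$, which follows from \cref{lemma:bias-differences-error}, \cref{assumption:bias-region}, and the fact that $\sp{h^*}\le c\le T^{1/5}=c_0$ using $T\ge c^5$, so $h^*$ lies inside the span cap; (c) the empirical Bernstein bound \eqref{equation:empirical-bernstein} holds uniformly; and (d) $(\hat p_{t_k}-p)h^*\le \beta_{t_k}$, by construction of $\beta_{t_k}$ as an upper bound over $u\in\Hc_{t_k}$. A union bound gives $\Pr(\mathcal E)\ge 1-O(\delta)$, and \cref{proposition:PMEVI}(2) then yields optimism $\mathfrak g_k\ge g^*$ on $\mathcal E$. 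For each episode $k$, \cref{proposition:PMEVI}(3) furnishes $(\tilde r_k^{\pi_k},\tilde P_k^{\pi_k})$ satisfying $\tilde r_k^{\pi_k}+\tilde P_k^{\pi_k}\mathfrak h_k = \mathfrak h_k+\mathfrak g_k e$, and telescoping over the episode gives
\begin{equation*}
\ell_k\mathfrak g_k-\sumnl_{t\in\mathrm{ep}_k}R_t = \sumnl_t(\tilde r_k(X_t)-R_t)+\sumnl_t(\tilde P_k(X_t)-\hat p_{t_k}(X_t))\mathfrak h_k+\sumnl_t(\hat p_{t_k}(X_t)-p(X_t))\mathfrak h_k+M_k+\partial_k,
\end{equation*}
where $M_k$ is a bounded martingale and $|\partial_k|\le\sp{\mathfrak h_k}\le c_0$. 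The reward deviation is $\widetilde{\OH}(\!\sqrt{SAT})$ via \cref{assumption:sub-weissman}; Azuma on the $c_0$-bounded bias controls $\sum_k M_k=\widetilde{\OH}(c_0\!\sqrt T)$; boundary terms sum to $\widetilde{\OH}(c_0 SA)$. The mitigation inside \texttt{PMEVI} forces $(\tilde P_k-\hat p_{t_k})\mathfrak h_k\le \beta_{t_k}$ pointwise, and item (d) controls $(\hat p_{t_k}-p)h^*$ after swapping $\mathfrak h_k\leftrightarrow h^*$ (both in $\Hc_{t_k}$) at a cost bounded by the pair-wise errors $d_t$.

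The crux is the variance sum $\sum_{k,t}\beta_{t_k}(X_t)$. Writing $\beta_t=\!\sqrt{2\,\mathrm{var}(s,a)\rho_t}+3c_0\rho_t$, Cauchy--Schwarz reduces it to $\!\sqrt{(\sum_t\mathrm{var}(X_t))(\sum_t\rho_t)}$ plus a $\widetilde{\OH}(c_0 SA)$ residue. Because $h_0,h^*\in\Hc_{t_k}$ share the same pair-wise envelope from Algorithm~3, one has $\Var(\hat p_t,h_0)\le\Var(\hat p_t,h^*)+O(c_0\,d_t)$, and Bernstein variance concentration together with the law of total variance applied to $\pi_k$ inside $M$ yield $\E\sumnl_t \Var(p(X_t),h^*)\lesssim \sp{h^*}\,\E[\Reg(T)]+\sp{h^*}^2 T$. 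With $\sp{h^*}\le c$, $\sum_t\rho_t\lesssim SA\log^2 T$, and an AM--GM step to absorb the self-referential $\E[\Reg]$ dependence, this delivers the leading $\widetilde{\OH}(\!\sqrt{cSAT\log(SAT/\delta)})$ term.

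The main obstacle is the self-reference: $d_t$ depends on $B_0\approx \Reg(t)$. I would therefore run the analysis in two passes---first using only the span cap $c_0=T^{1/5}$ (ignoring the data-driven bias constraints), giving a crude $\Reg(T)=\widetilde{\OH}(\!\sqrt{c_0 SAT})=\widetilde{\OH}(T^{7/10})$; second, plugging this back into $d_t$ to tighten the bias region so that the dominant variance contribution uses $c$ rather than $c_0$. The residue of the additive $8c_0\sum_{s'}\hat p_t\, c(s',s)$ correction in Algorithm~5, once combined with $\sum_{s,a}N_T(s,a)^{-1}\lesssim SA\log T$ via Cauchy--Schwarz, produces the sub-$\!\sqrt T$ exponent $9/20=1/5+1/4$, with the $S^{5/2}A^{3/2}$ polynomial factor arising from the episode count $K(T)=\widetilde{\OH}(SA)$ and the state-action variance structure. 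The expectation bound follows from the $1-O(\delta)$ high-probability bound by integration when $\delta<1/\!\sqrt T$, and the complexity claim is immediate from \cref{proposition:PMEVI}(4): \texttt{PMEVI} converges in at most as many iterations as \texttt{EVI}, so the total cost inherits the $O(DS^3AT)$ complexity of \texttt{UCRL2}-style extended value iteration.
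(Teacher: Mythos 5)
Your high-level architecture matches the paper: a good event combining model/bias/mitigation correctness, optimism from \cref{proposition:PMEVI}(2), a Poisson-equation decomposition of the episodic optimistic regret into optimism overshoot, empirical-to-true kernel error, a bias martingale, and boundary terms, and finally a self-referential inequality in $B(T)$ closed by an AM--GM-type step. The two-pass bootstrap idea (first the crude $c_0$ span cap, then the data-driven constraints) is essentially how the paper gets from $T^{7/10}$ intermediate quantities down to the claimed main term.

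There is, however, a concrete gap in how you handle the navigation martingale $M_k := \sum_{t\in\mathrm{ep}_k} (p(X_t)-e_{S_{t+1}})\mathfrak h_k$. You claim Azuma on the $c_0$-bounded differences gives $\sum_k M_k = \widetilde{\OH}(c_0\sqrt T)$; but $c_0\sqrt T = T^{1/5+1/2} = T^{7/10}$, which would dominate and break the theorem, and even the refined Azuma bound $\sp{h^*}\sqrt T = c\sqrt T$ is a factor $\sqrt c$ too large. The $\sqrt{\sp{h^*}T}$ dependence of the main term cannot come from a span-based concentration inequality: you need Freedman/Bennett, i.e.~\cref{lemma:additive-freedman}, applied after splitting $\mathfrak h_k = h^* + (\mathfrak h_k - h^*)$, so that the $h^*$-part is controlled through $\sum_t \Var(p(X_t),h^*)$ (\cref{lemma:variance-sum}: this sum is $\OH\parens{\sp{h^*}\sp{r}T + \sp{h^*}\sum_t\Delta^*(X_t)}$, not $\sp{h^*}^2 T$ as you wrote --- that exponent of $\sp{h^*}$ matters, since $\sp{h^*}^2T$ would again only yield $\sp{h^*}\sqrt{SAT}$) and the $(\mathfrak h_k-h^*)$-part is controlled through the bias-error envelope $d_t$ and hence through $B(T)$, exactly as in \cref{lemma:navigation-error}. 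Your variance bookkeeping correctly covers the $\beta$-mitigated terms but silently drops the variance-aware treatment on the one term where Azuma would visibly fail; with that fix the rest of your sketch is faithful to the paper's proof.

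As a side remark, the complexity claim is not ``immediate'' from \cref{proposition:PMEVI}(4) alone: that item only says the span decrement of $\mathfrak L_t^n u$ is at most that of $\Lc_t^n u$, so you still need the explicit iteration bound of \cref{lemma:evi-iterations-bound} (requiring the Weissman-type region to have the $\gamma$-mixing form) plus the fact that each \texttt{BiasProjection} call is a polynomial-size LP, as in \cref{proposition:PMEVI-complexity}.
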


To have a completely prior-less algorithm, pick $\Hc_* = \R^\Sc$.
The proof of \cref{theorem:main} is too long to fit within these pages, so the complete proof is deferred to appendix.
We will focus here on the main ideas.

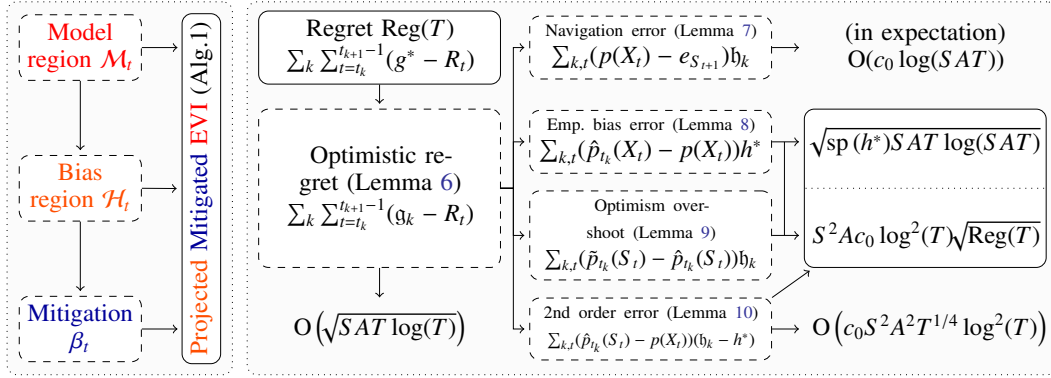
\begin{figure}[ht]
    \resizebox{\linewidth}{!}{\begin{tikzpicture}[shorten >=2pt]
        \tikzstyle{lemma}=[text width=3.2cm, align=center, draw, rounded corners, fill=white]
        \tikzstyle{error}=[text width=3.2cm, align=center]

        \node[draw, rounded corners, fill=gray!3, minimum width=3.2cm, minimum height=5.3cm, dotted] at (0.56, 2) {};
        \node[draw, rounded corners, fill=gray!3, minimum width=11.5cm, minimum height=5.3cm, dotted] at (8.12, 2) {};

        \node[lemma, text width=1.5cm, dashed] at (0, 4) (Mt) {\color{red}Model region $\Mc_t$};
        \node[lemma, text width=1.5cm, dashed] at (0, 2) (Ht) {\color{red!70!yellow}Bias region $\Hc_t$};
        \node[lemma, text width=1.5cm, dashed] at (0, 0) (Bt) {\color{blue!60!black}Mitigation $\beta_t$};
        \node[lemma, rotate=90, text width=4.74cm, anchor=south west] (PMEVI) at (2, -.5) {\textcolor{red!70!yellow}{Projected} \textcolor{blue!60!black}{Mitigated} \textcolor{red}{EVI} ({Alg.1})};

        \draw[->] (Mt) to (Ht);
        \draw[->] (Ht) to (Bt);
        \draw[->] (Mt.east) to (1.45, 4);
        \draw[->] (Ht.east) to (PMEVI);
        \draw[->] (Bt.east) to (1.45, 0);

        \node[lemma, minimum height=.9cm] (Reg) at (4.25, 4) {Regret $\Reg(T)$ \\ $\sum_{k} \sum_{t=t_k}^{t_{k+1}-1} (g^* - R_t)$};
        \node[lemma, dashed, minimum height=2.25cm] (OptReg) at (4.25, 2) {Optimistic regret (\cref{lemma:reward-optimism})\\ $\sum_{k} \sum_{t=t_k}^{t_{k+1}-1} (\mathfrak{g}_k - R_t)$};
        \node[error] (Rew) at (4.25, 0) {$\OH\parens{\!\!\sqrt{SAT\log(T)}}$};
        
        \node[lemma, dashed] (Nav) at (8.1, 4) {{\scriptsize Navigation error (\cref{lemma:navigation-error})} $\sum_{k,t} (p(X_t) - e_{S_{t+1}}) \mathfrak{h}_k$};
        \node[lemma, dashed] (Emp) at (8.1, 2.66) {{\scriptsize Emp.~bias error (\cref{lemma:empirical-bias-error})} $\sum_{k,t} (\hat{p}_{t_k}(X_t) - p(X_t)) h^*$};
        \node[lemma, dashed] (Opt) at (8.1, 1.33) {{\scriptsize Optimism overshoot (\cref{lemma:optimism-overshoot})} \small $\sum_{k,t} (\tilde{p}_{t_k} (S_t) - \hat{p}_{t_k}(S_t))\mathfrak{h}_k$};
        \node[lemma, dashed] (2nd) at (8.1, 0) {{\scriptsize 2nd order error (\cref{lemma:second-order-error})} \scriptsize $\sum_{k,t} (\hat{p}_{t_k}(S_t) - p(X_t)) (\mathfrak{h}_k - h^*)$};
        
        \node[error] (EqEp) at (12, 4) {(in expectation) $\OH(c_0 \log(SAT))$};
        \node[lemma, minimum height=2.25cm] (EqBx) at (12, 2) {};
        \node[error] (EqSp) at (12, 2.66) {$\!\!\sqrt{\sp{h^*} S A T \log(SAT)}$};
        \node[error] (EqRg) at (12, 1.33) {$S^2 A c_0 \log^2(T) \!\!\sqrt{\Reg(T)}$};
        \draw[dotted] (10.3, 2) to (13.8, 2);
        \node[error] (Eq2d) at (12, 0) {$\OH\parens{c_0 S^2 A^2 T^{1/4} \log^2(T)}$};

        \draw[->] (Reg) to (OptReg);
        \draw[->] (OptReg) to (Rew);

        \draw[->] (OptReg.east) to (6.15, 2) to (6.15, 4) to (Nav.west);
        \draw[->] (OptReg.east) to (6.15, 2) to (6.15, 2.66) to (Emp.west);
        \draw[->] (OptReg.east) to (6.15, 2) to (6.15, 1.33) to (Opt.west);
        \draw[->] (OptReg.east) to (6.15, 2) to (6.15, 0) to (2nd.west);
        
        \draw (Nav) edge[->] (EqEp);
        \draw (Emp) edge[->] (EqSp);
        \draw (Opt) edge[->] (EqRg);
        \draw (10, 1.33) to (10, 2.73);
        \draw (2nd) edge[->] (Eq2d);
        \draw (2nd.north east) edge[->] (EqBx);
    \end{tikzpicture}}
    \caption{
        \label{figure:overview}
        An overview of \texttt{PMEVI-DT} and its regret analysis.
        In the above, $\mathfrak{g}_k$ and $\mathfrak{h}_k$ are the optimistic gain and bias functions produced by \texttt{PMEVI} (see Algorithm~2) at episode $k$, and $\hat{p}_{t_k}$ and $\tilde{p}_{t_k}$ are respectively the empirical and optimistic kernel models at episode $k$.
    }
\end{figure}

We start by introducing notations.
At episode $k$, the played policy is denoted $\pi_k$.
As a greedy response to $\mathfrak{h}_k$, by \cref{proposition:PMEVI} (3), there exists $\tilde{r}_k(s) \le \sup \Rc_{t_k}(s, \pi_k(s))$ and $\tilde{P}_k(s) \in \Pc_{t_k}(s, \pi(x))$ such that $\mathfrak{h}_k + \mathfrak{g}_k = \tilde{r}_k + \tilde{P}_k\mathfrak{h}_k$.
The reward-kernel pair $\tilde{M}_k = (\tilde{r}_k, \tilde{P}_k)$ is referred to as the \emph{optimistic model} of $\pi_k$.
We write $P_k := P_{\pi_k}(M)$ the true kernel and $\hat{P}_k := P_{\pi_k} (\hat{M}_{t_k})$ the empirical kernel.
Likewise, we define the reward functions $r_k$ and $\hat{r}_k$. 
The optimistic gain and bias satisfy $\mathfrak{g}_k = g(\pi_k, \widetilde{M}_k)$ and $\mathfrak{h}_k = h(\pi_k, \widetilde{M}_k)$. 
We further denote $c_0 = T^{\frac 15}$.

The regret is first decomposed episodically with $\Reg(T) = \sum_k \sum_{t=t_k}^{t_{k+1}-1} (g^* - R_t)$.
The first step goes back to the analysis of \texttt{UCRL2} \cite{auer_near-optimal_2009}, and consists in upper-bounding the regret over episode $k$ with optimistic quantities that are exclusive to that episode.

\begin{lemma}[Reward optimism]
\label{lemma:reward-optimism}
    With probabililty $1 - 6\delta$, we have:
    \begin{equation}
        \Reg(T)
        \le
        \sumnl_k \sumnl_{t=t_k}^{t_{k+1}-1} (\mathfrak{g}_k - R_t)
        \le
        \sumnl_k \sumnl_{t=t_k}^{t_{k+1}-1} (\mathfrak{g}_k - \tilde{r}_k(X_t))
        + \OH\parens{\sqrt{S A T \log\parens{\tfrac T\delta}}}
        .
    \end{equation}
\end{lemma}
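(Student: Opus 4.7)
The plan is to split the lemma into its two inequalities and establish each on a separate high-probability event, tallying events to hit the $6\delta$ budget.

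For the first inequality $\Reg(T) \le \sum_k \sum_{t=t_k}^{t_{k+1}-1}(\mathfrak{g}_k - R_t)$, it suffices to verify the optimism bound $\mathfrak{g}_k \ge g^*$ at every episode, since then $T g^* = \sum_k (t_{k+1}-t_k) g^* \le \sum_k (t_{k+1}-t_k)\mathfrak{g}_k$. Optimism will come from \cref{proposition:PMEVI}(2), provided the three hypotheses (a)~$M \in \Mc_{t_k}$, (b)~$h^* \in \Hc_{t_k}$, and (c)~$(\hat{p}_{t_k}(s,a) - p(s,a)) h^* \le \beta_{t_k}(s,a)$ for every $(s,a) \in \Xc$ all hold uniformly in $k$. \cref{assumption:model-confidence-region} takes care of (a) with probability $1-\delta$. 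For (b), I would combine \cref{assumption:bias-region} ($h^* \in \Hc_*$), the span envelope $\sp{h^*} \le c \le T^{1/5}$ (using the theorem hypothesis $T \ge c^5$), and \cref{lemma:bias-differences-error} applied over all pairs $(s,s')$, which with probability $1-2\delta$ certifies that every constraint of the form \eqref{equation:bias-region-constraints} emitted by \texttt{BiasEstimation} is consistent with $h^*$; intersecting with $\Hc_*$ and the span semi-ball yields $h^* \in \Hc_{t_k}$ at every episode. For (c), I would apply the empirical Bernstein inequality \eqref{equation:empirical-bernstein} with $u = h^*$ and argue that the variance proxy computed by \hyperlink{algorithm:variance-approximation}{Algorithm~5} upper bounds $\Var(\hat{p}_{t_k}(s,a), h^*)$ on the event where (b) holds — costing another $\delta$ via a union bound over $(s,a)$ and $t \le T$.

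For the second inequality, the goal reduces to bounding $\sum_k \sum_{t=t_k}^{t_{k+1}-1} (\tilde{r}_k(X_t) - R_t)$, which I would decompose as
\begin{equation*}
    \sumnl_k \sumnl_t (\tilde{r}_k(X_t) - \hat{r}_{t_k}(X_t))
    + \sumnl_k \sumnl_t (\hat{r}_{t_k}(X_t) - r(X_t))
    + \sumnl_k \sumnl_t (r(X_t) - R_t).
\end{equation*}
The last sum is a bounded martingale difference sequence ($|r(X_t)-R_t|\le 1$), so Azuma--Hoeffding gives $\OH(\!\sqrt{T\log(1/\delta)})$ with probability $1-\delta$. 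For the first two, \cref{assumption:sub-weissman} together with the inclusions $\tilde{r}_k(X_t) \le \sup \Rc_{t_k}(X_t)$ (from \cref{proposition:PMEVI}(3)) and $\hat{r}_{t_k}(X_t) \in \Rc_{t_k}(X_t)$ yield pointwise bounds of order $\sqrt{\log(SAT/\delta)/\max\{1,N_{t_k}(X_t)\}}$ on an event of probability $1-\delta$ for the reward confidence. Summing over $t$, the doubling trick ensures $N_{t_k}(X_t)$ stays within a constant factor of $N_t(X_t)$, so the pigeonhole identity $\sum_{t=0}^{T-1} 1/\sqrt{\max\{1,N_t(X_t)\}} = \OH(\!\sqrt{SAT})$ delivers the advertised $\OH(\!\sqrt{SAT \log(SAT/\delta)})$ bound.

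The probability budget is $\delta$ (\cref{assumption:model-confidence-region}) $+\,2\delta$ (\cref{lemma:bias-differences-error}) $+\,\delta$ (Bernstein against $h^*$) $+\,\delta$ (reward concentration under \cref{assumption:sub-weissman}) $+\,\delta$ (Azuma on the reward martingale) $=6\delta$, as announced. The main obstacle will be hypothesis (c) of \cref{proposition:PMEVI}(2): showing that the tractable surrogate $\beta_{t_k}$ produced by \hyperlink{algorithm:variance-approximation}{Algorithm~5} genuinely upper bounds $(\hat{p}_{t_k}(s,a)-p(s,a))h^*$ for the unknown $h^*$. This reduces to controlling $\Var(\hat{p}_{t_k}(s,a), h^*)$ by the computable $\Var(\hat{p}_{t_k}(s,a), h_0) + 8c_0 \sum_{s'} \hat{p}_{t_k}(s'|s,a) c_{t_k}(s',s)$, using that the reference point $h_0 = \Gamma_{t_k} c_{t_k}(-,s_0)$ and the bias-difference constraints locate $h^*$ up to a span-$c_0$ slack — delicate because the underlying variance maximization is NP-hard and is here bypassed by a monotone projection trick.
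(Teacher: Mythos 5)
Your plan follows essentially the same route as the paper: the first inequality is obtained from $\mathfrak{g}_k \ge g^*$ via \cref{proposition:PMEVI}(2), and the second by decomposing $\sum_{k,t}(\tilde{r}_k(X_t)-R_t)$ into a martingale term (Azuma) plus confidence-width terms (sub-Weissman \cref{assumption:sub-weissman} plus the doubling-trick pigeonhole), with the same $6\delta$ allocation. Your hypotheses (a)--(c) are exactly what the paper packages into \cref{lemma:bias-confidence-region}, which you are in effect re-deriving inline.

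The gap is in how you certify (b) and (c), and it is not cosmetic. \cref{lemma:bias-differences-error} only controls the bias-difference error \emph{for $\tilde{g} \ge g^*$}, yet the $\tilde{g}$ that \texttt{BiasEstimation} plugs into the inner-regret estimate $B_0$ is $\min_{\ell < K(t)} \mathfrak{g}_\ell$ --- precisely the optimistic gains whose correctness you are in the middle of establishing. The same circularity infects (c): the variance proxy $\beta_{t_k}$ from \hyperlink{algorithm:variance-approximation}{Algorithm~5}, and the reference point $h_0 = \texttt{BiasProjection}(\Hc_{t_k}, c_{t_k}(-,s_0))$, both depend on the error radii $\text{error}(c_{t_k},-,-)$, which depend on $B_0$, which depends on $\tilde{g}$. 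As written, your event-by-event union bound assumes what it sets out to prove. The paper resolves this by an induction on the episode index inside \cref{lemma:bias-confidence-region}: at $k=0$ all commute counts are zero and $\beta_0 = +\infty$, so $\Hc_0$ reduces to $\Hc_* \cap \set{\sp{u}\le c_0}$ and optimism $\mathfrak{g}_0 \ge g^*$ holds for free; having $\mathfrak{g}_\ell \ge g^*$ for all $\ell < k$ then certifies $\tilde{g} \ge g^*$ at time $t_k$, which licenses the application of \cref{lemma:bias-differences-error} and the variance surrogate of \cref{lemma:variance-approximation} at step $k$, closing the loop. You should either invoke \cref{lemma:bias-confidence-region} directly (as the paper does) or make this induction explicit; without it, the high-probability claim does not stand.
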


We introduce the two optimistic regrets $B(T) := \sum_k \sum_{t=t_k}^{t_{k+1}-1} (\mathfrak{g}_k - R_t)$ and $\tilde{B}(T) := \sum_k \sum_{t=t_k}^{t_{k+1}-1} (\mathfrak{g}_k - \tilde{r}_k(X_t))$.
Rewriting the summand $\mathfrak{g}_k - \tilde{r}_k(X_t)$ using the Poisson equation $\mathfrak{h}_k + \mathfrak{g}_k = \tilde{r}_k + \tilde{P}_k \mathfrak{h}_k$, we get:
\begin{equation*}
    \tilde{B}(T) 
    =
    \sumnl_k \sumnl_{t=t_k}^{t_{k+1}-1} \parens{\tilde{p}_k(S_t) - e_{S_t}} \mathfrak{h}_k
    .
\end{equation*}
The analysis proceed by decomposing the above expression of $\tilde{B}(T)$ in the style of \cite{zhang_regret_2019}.
We write $\sum_{t=t_k}^{t_{k+1}-1} (\tilde{p}_k(S_t) - e_{S_t}) \mathfrak{h}_k$ as:
\begin{equation*}
    \sumnl_{t=t_k}^{t_{k+1}-1} \parens{
        \underbrace{\parens{p_k(S_t) - e_{S_t}} \mathfrak{h}_k}_{\text{navigation error } (1k)}
        +
        \underbrace{\parens{\hat{p}_k(S_t) - p_k(S_t)} h^*}_{\text{empirical bias error } (2k)}
        +
        \underbrace{\parens{\tilde{p}_k(S_t) - \hat{p}_k(S_t)} \mathfrak{h}_k}_{\text{optimistic overshoot } (3k)}
        +
        \underbrace{\parens{\hat{p}_k(S_t) - p_k(S_t)} (\mathfrak{h}_k - h^*)}_{\text{second order error } (4k)}
    }
\end{equation*}
Each error term is bounded separately.
Below, we denote $\Var(q, u) := \sum_s q(s)(u(s) - q \cdot u)^2$.

\begin{lemma}[Navigation error]
\label{lemma:navigation-error}
    With probability $1 - 7 \delta$, the navigation error is bounded by:
    \begin{equation*}
        \displaystyle
        \sumnl_{k} \sumnl_{t=t_k}^{t_{k+1}-1} 
        (p_k(S_t)-e_{S_t})\mathfrak{h}_k  
        \le 
        \!\!\sqrt{
            2 \sumnl_{t=0}^{T-1} \Var(p(X_t), h^*) \log\parens{\tfrac{T}\delta}
        }
        + 2 S A^{\frac 12} \!\!\sqrt{3B(T)} \log\parens{\tfrac T\delta} 
        + \widetilde{\OH}\parens{
            T^{\frac 7{20}} 
        }
        .
    \end{equation*}
\end{lemma}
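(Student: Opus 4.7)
The plan is to decompose the navigation error into a martingale difference plus a telescoping remainder via the identity
\[
(p_k(S_t) - e_{S_t}) \mathfrak{h}_k
= \bigl(p(X_t)\mathfrak{h}_k - \mathfrak{h}_k(S_{t+1})\bigr) + \bigl(\mathfrak{h}_k(S_{t+1}) - \mathfrak{h}_k(S_t)\bigr).
\]
Summed over $t \in \{t_k, \dots, t_{k+1}-1\}$, the second term telescopes to $\mathfrak{h}_k(S_{t_{k+1}}) - \mathfrak{h}_k(S_{t_k})$, itself bounded by $\sp{\mathfrak{h}_k} \le c_0 = T^{1/5}$; since the doubling trick caps the number of episodes at $K(T) = \widetilde{\OH}(SA)$, the total telescoping contribution is $\widetilde{\OH}(c_0 SA) = \widetilde{\OH}(S A T^{1/5})$, absorbed in the announced $\widetilde{\OH}(T^{7/20})$ slack. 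The first term is a martingale difference sequence adapted to $(\Fc_t)$ with conditional variance $\Var(p(X_t), \mathfrak{h}_k)$ and increments bounded by $\sp{\mathfrak{h}_k} \le c_0$, so Freedman's inequality gives, with probability $1 - \delta$,
\[
\sumnl_{k, t} \bigl(p(X_t)\mathfrak{h}_k - \mathfrak{h}_k(S_{t+1})\bigr)
\le \!\!\sqrt{2 V \log\parens{\tfrac{T}\delta}} + \tfrac{1}{3} c_0 \log\parens{\tfrac{T}\delta},
\]
where $V := \sumnl_t \Var(p(X_t), \mathfrak{h}_k)$.

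To surface the target variance, I apply $\Var(q, u+v) \le 2\Var(q, u) + 2\Var(q, v)$ with $u = h^*$ and $v = \mathfrak{h}_k - h^*$, giving $V \le 2 V^* + 2 V'$ with $V^* := \sumnl_t \Var(p(X_t), h^*)$ and $V' := \sumnl_t \Var(p(X_t), \mathfrak{h}_k - h^*)$. Subadditivity of $\!\!\sqrt{\cdot}$ separates the two pieces, the $V^*$-piece contributing precisely the leading $\!\!\sqrt{2 V^* \log(T/\delta)}$ term of the announced bound, with the extra $1-\delta$ events from Freedman and the further $1-2\delta$ event below accounting (together with the other concentration events used in earlier reductions) for the $1-7\delta$ budget.

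It remains to bound $V'$, which is the main technical difficulty. On the event that $h^* \in \Hc_{t_k}$ for every $k$ (probability $1 - 2\delta$ by \cref{lemma:bias-differences-error} combined with \cref{assumption:bias-region} and the construction of \hyperlink{algorithm:bias-estimation}{Algorithm~3}), both $\mathfrak{h}_k$ and $h^*$ belong to $\Hc_{t_k}$, so the constraints \eqref{equation:bias-region-constraints} bound the pointwise deviation $(\mathfrak{h}_k - h^*)(s) - (\mathfrak{h}_k - h^*)(s')$ by the pair-wise errors $d_{t_k}(s, s')$ at a controlled reference state. From the explicit expression in \hyperlink{algorithm:bias-estimation}{Algorithm~3},
\[
d_{t_k}(s, s') = \OH\!\left(\frac{c_0 \!\!\sqrt{T \log(1/\delta)} + B_0}{N_{t_k}(s \toto s')}\right), \qquad B_0 \le B(T),
\]
so converting this pointwise bound into a second-moment estimate along the realized trajectory and applying the standard visit-count inequality $\sumnl_t 1/N_{t_k}(X_t) = \widetilde{\OH}(SA)$ yields $V' \le \widetilde{\OH}(S^2 A \log(T/\delta) \cdot B(T))$, whose square root matches the claimed $2 S A^{1/2} \!\!\sqrt{3 B(T)} \log(T/\delta)$ contribution up to constants. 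The principal obstacle is exactly this conversion: commute counts $N_{t_k}(s \toto s')$ can be far smaller than single-state visit counts, so a careful path-decomposition argument is required to charge each commute deficit to ordinary visit counts while preserving the dependence on $B(T)$ rather than a cruder regret proxy; this is where the factors $S$ and $\!\!\sqrt{A}$ ultimately enter.
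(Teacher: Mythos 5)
Your martingale decomposition differs from the paper's and introduces a constant loss on the leading term. You apply Freedman once to $(p(X_t)\mathfrak{h}_k - \mathfrak{h}_k(S_{t+1}))$ with conditional variance $\Var(p(X_t), \mathfrak{h}_k)$, then split the variance with $\Var(q,u+v) \le 2\Var(q,u) + 2\Var(q,v)$. This gives $\sqrt{2V\log} \le \sqrt{4V^*\log + 4V'\log} \le 2\sqrt{V^*\log(T/\delta)} + \dots$, which is a $\sqrt{2}$ factor larger than the claimed $\sqrt{2V^*\log(T/\delta)}$. The paper instead splits the \emph{martingale} before invoking Freedman — writing $(p_k(S_t)-e_{S_{t+1}})\mathfrak{h}_k = (p_k(S_t)-e_{S_{t+1}})h^* + (p_k(S_t)-e_{S_{t+1}})(\mathfrak{h}_k - h^*)$ — and applies Freedman to each piece separately, so the leading piece gets exactly $\sqrt{2V^*\log(T/\delta)}$ with the extra $\sqrt{2}$ absorbed into the lower-order piece.

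The sketch of the $V'$ bound is where the real gap lies. You write that you will "convert the pointwise bound into a second-moment estimate along the realized trajectory," but this hides the step that carries the argument: \cref{lemma:upper-conditional-expectation} is what allows the paper to replace $\sum_t \Var(p(X_t), \mathfrak{h}_k - h^*) = \sum_t \E[(\cdot)^2 \mid \Fc_t]$ by $3\sum_t (\mathfrak{h}_k(S_{t+1}) - h^*(S_{t+1}) - \mathfrak{h}_k(S_t) + h^*(S_t))^2$ plus a $\widetilde\OH(c_0^2)$ remainder. Without this, the bias-error constraint $d_{t_k}(s', S_t)$ would have to be controlled simultaneously for every $s'$ in $\supp p(\cdot|X_t)$ (weighted by $p$), including states $s'$ whose commute count with $S_t$ is tiny; the conversion to the realized pair $(S_t, S_{t+1})$ is exactly what lets the argument use only pairs for which there is data.

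You also misdiagnose the commute-count step. You claim the difficulty is that "commute counts $N_{t_k}(s \toto s')$ can be far smaller than single-state visit counts" and that a "careful path-decomposition argument" is needed to charge commute deficits to visits. The paper uses no such argument — it uses the opposite and elementary observation $N_{t_k}(S_{t+1} \toto S_t) \ge N_{t_k}(S_t, A_t, S_{t+1})$, since every observed transition $(S_t, A_t, S_{t+1})$ advances the $S_t \leftrightarrow S_{t+1}$ alternation counter. That inequality replaces $1/N_{t_k}(S_{t+1}\toto S_t)$ by $1/N_{t_k}(S_t, A_t, S_{t+1})$, and then the usual pigeonhole over state-action-next-state triples gives the factor $S^2 A \log T$. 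In short: the reason the realized-trajectory conversion is essential is precisely to land on pairs $(S_t, S_{t+1})$ for which the commute count is automatically large, not to then fight against small commute counts. Your proposed "path decomposition" step would be solving a problem that \cref{lemma:upper-conditional-expectation} has already removed.
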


\begin{lemma}[Empirical bias error]
\label{lemma:empirical-bias-error}
    With probability $1 - \delta$, the empirical bias error is bounded by:
    \begin{equation*}
        \sumnl_k \sumnl_{t=t_k}^{t_{k+1}-1} 
        \parens{\hat{p}_k(S_t) - p_k(S_t)} h^*
        \le 
        4 \sqrt{
            S A \sumnl_{t=0}^{T-1} \Var(p(X_t), h^*) \log\parens{\tfrac{S A T}\delta}
        }
        + 
        \OH\parens{\log^{2}(T)}
        .
    \end{equation*}
\end{lemma}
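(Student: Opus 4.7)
The plan is to decompose the empirical bias error by state-action pair into averages of martingale differences, apply a Freedman/Bernstein concentration uniformly over $(s,a)\in\Xc$, exploit the doubling stopping rule to telescope the per-episode weights, and conclude by Cauchy--Schwarz across $\Xc$. First, since $\hat p_k(S_t) = \hat p_{t_k}(X_t)$ and $p_k(S_t) = p(X_t)$ under $\pi_k$, I rearrange
\begin{equation*}
E(T) := \sumnl_k \sumnl_{t=t_k}^{t_{k+1}-1} \parens{\hat p_k(S_t) - p_k(S_t)}h^* = \sumnl_{(s,a)} \sumnl_{k} n_k(s,a)\,\parens{\hat p_{t_k}(s,a) - p(s,a)}h^*,
\end{equation*}
with $n_k(s,a)$ the visit count of $(s,a)$ during episode $k$. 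Letting $\tau_1<\tau_2<\ldots$ enumerate the visits to $(s,a)$, the increments $\xi_i := h^*(S_{\tau_i+1}) - p(s,a) h^*$ form a bounded martingale difference sequence with $|\xi_i|\le \sp{h^*}$ and conditional variance $\sigma^2(s,a) := \Var(p(s,a), h^*)$. Setting $N_k := N_{t_k}(s,a)$ and $S_j := \xi_1 + \ldots + \xi_j$, the identity $(\hat p_{t_k}(s,a) - p(s,a)) h^* = S_{N_k}/N_k$ reduces the per-pair contribution to $\sumnl_k (n_k(s,a)/N_k)\,S_{N_k}$.

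Next, a time-uniform Freedman/empirical-Bernstein bound, combined with a union bound over $\Xc$ and a peeling over dyadic scales of $j$, delivers with probability $1-\delta$
\begin{equation*}
|S_j| \le \sqrt{2 j\,\sigma^2(s,a) L} + \sp{h^*} L, \qquad L = \OH\parens{\log\parens{\tfrac{SAT}{\delta}}},
\end{equation*}
simultaneously for all $(s,a)$ and all $j \le N_T(s,a)$. Under the doubling rule, $n_k(s,a) \le \max(1, N_k)$; a standard telescoping along the geometrically growing $N_k$'s gives $\sumnl_k n_k(s,a)/\sqrt{\max(1, N_k)} \le 2\sqrt{2}\,\sqrt{N_T(s,a)}$ and $\sumnl_k n_k(s,a)/\max(1, N_k) = \OH(\log T)$. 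Plugging these in and exchanging sums yields
\begin{equation*}
\abs{\sumnl_k n_k(s,a)\parens{\hat p_{t_k}(s,a) - p(s,a)}h^*} \le 4\sqrt{\sigma^2(s,a)\, N_T(s,a)\, L} + \OH\parens{\sp{h^*} L \log T}.
\end{equation*}

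Summing over $(s,a)\in\Xc$ and applying Cauchy--Schwarz,
\begin{equation*}
\sumnl_{(s,a)} \sqrt{N_T(s,a)\,\sigma^2(s,a)} \le \sqrt{SA \sumnl_{(s,a)} N_T(s,a)\,\sigma^2(s,a)} = \sqrt{SA \sumnl_{t=0}^{T-1} \Var(p(X_t),h^*)},
\end{equation*}
which reproduces the advertised leading term $4\sqrt{SA \sumnl_t \Var(p(X_t), h^*)\, L}$; the aggregated residuals amount to $\OH(SA\, \sp{h^*}\, L\, \log T) = \OH(\log^2 T)$ once $S$, $A$, and $\sp{h^*}\le c$ are treated as constants absorbed in $\OH$.

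The hardest part will be the time-uniform concentration at the \emph{random} indices $N_k$: either I deploy a Freedman inequality uniformly over $j$ via dyadic peeling $j\in[2^\ell, 2^{\ell+1})$, or I rely on the paper's empirical-Bernstein applied to $\Var(\hat p_{t_k}, h^*)$ (as in Algorithm~5) and convert back to $\sigma^2(s,a)$ through an auxiliary concentration of the empirical variance $|\Var(\hat p_{t_k}, h^*) - \sigma^2(s,a)| = \OH(\sp{h^*}^2 \sqrt{L/N_k})$, whose deviation is absorbed in the $\OH(\log^2 T)$ remainder. The remaining steps---the per-pair reorganization, the doubling telescoping, and the final Cauchy--Schwarz---are essentially bookkeeping.
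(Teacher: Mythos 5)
Your proposal is correct and follows essentially the same route as the paper: apply a Freedman/Bennett-type concentration to $(\hat p_{t_k}(s,a)-p(s,a))h^*$ (which works directly with the true variance $\Var(p(s,a),h^*)$ since $h^*$ is a fixed vector — no empirical-variance detour is needed), union-bound over $(s,a)$ and visit counts, invoke the doubling rule to control $N_{t_k}(X_t)$ by $N_t(X_t)$, and finish with Cauchy--Schwarz over the state-action space. The only cosmetic differences are that the paper sums the per-step bound over $t$ and converts to $\sum_{n\le N_T(s,a)} n^{-1/2}$, while you regroup per $(s,a)$ and telescope $\sum_k n_k(s,a)/\sqrt{N_k}$, and the paper uses a crude union bound over $N(s,a)\le T$ where you propose dyadic peeling; both yield the same $\log(SAT/\delta)$ factor and the same leading term.
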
 

\begin{lemma}[Optimism overshoot]
\label{lemma:optimism-overshoot}
    With probability $1 - 6\delta$, the optimism overshoot is bounded by:
    \begin{equation*}
        \sumnl_k \sumnl_{t=t_k}^{t_{k+1}-1} 
        \parens{\tilde{p}_k(S_t) - \hat{p}_k(S_t)} \mathfrak{h}_k
        \le 
        \begin{Bmatrix}
            4 \sqrt{
                2 S A \sumnl_{t=0}^{T-1} \Var(p(X_t), h^*) \log\parens{\tfrac{S A T}\delta}
            }
            \\
            + 8 (1 + c_0) S^{\frac 32} A \log^{\frac 32}\parens{\tfrac{S A T}\delta} \sqrt{B(T)}
            + \widetilde{\OH}\parens{T^{\frac 14}}
        \end{Bmatrix}
        .
    \end{equation*}
\end{lemma}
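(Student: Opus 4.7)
The plan is to reduce the whole statement to a clean bound on $\sum_{k,t}\beta_{t_k}(X_t)$, by invoking the very purpose of the $\min$ in the definition of $\Lc_{t_k}^\beta$. By \cref{proposition:PMEVI}(3) and the mitigation structure, the optimistic transition $\tilde{P}_k$ attached to the greedy policy $\pi_k$ satisfies the pointwise bound $(\tilde p_k(S_t) - \hat p_{t_k}(S_t,\pi_k(S_t)))\mathfrak{h}_k \le \beta_{t_k}(X_t)$ (if the unconstrained sup would exceed $\hat p_{t_k}(S_t,\pi_k(S_t))\mathfrak{h}_k + \beta_{t_k}(X_t)$, the $\min$ caps it, and convexity of $\Pc_{t_k}$ lets us realise this cap by an interpolation between $\hat p_{t_k}$ and the sup-achieving transition). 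Summing over episodes,
\begin{equation*}
    \sumnl_k \sumnl_{t=t_k}^{t_{k+1}-1} (\tilde p_k(S_t) - \hat p_k(S_t)) \mathfrak h_k
    \le
    \sumnl_{k,t} \beta_{t_k}(X_t),
\end{equation*}
and per \hyperlink{algorithm:variance-approximation}{Algorithm 5}, $\beta_{t_k}(X_t) = \sqrt{2\,\mathrm{var}(X_t)\,\rho_k(X_t)} + 3 c_0\, \rho_k(X_t)$ with $\rho_k(X_t) := \log(SAT/\delta)/\max\{1, N_{t_k}(X_t)\}$.

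The linear piece $\sum_{k,t} 3 c_0 \rho_k(X_t)$ is $\OH\parens{c_0 S A \log^2(SAT/\delta)} = \widetilde{\OH}(T^{\frac15})$ by the standard doubling-trick identity $\sum_{k,t} 1/\max\{1, N_{t_k}(X_t)\} \le \OH(S A \log T)$, and is absorbed in the $\widetilde{\OH}(T^{\frac14})$ remainder. For the square-root piece, Cauchy--Schwarz gives
\begin{equation*}
    \sumnl_{k,t} \sqrt{2\,\mathrm{var}(X_t)\,\rho_k(X_t)}
    \le
    \sqrt{2 \sumnl_{k,t} \mathrm{var}(X_t)} \cdot \sqrt{\sumnl_{k,t} \rho_k(X_t)},
\end{equation*}
and the second factor is $\OH\parens{\!\!\sqrt{S A \log^2(SAT/\delta)}}$. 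It therefore remains to bound $\sum_{k,t}\mathrm{var}(X_t) = \sum_{k,t}\brackets{\Var(\hat p_{t_k}(X_t), h_0) + 8 c_0 \sumnl_{s'}\hat p_{t_k}(s'|X_t)\, c(s', S_t)}$, which I split into two parts.

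For the variance summand, shift-invariance of $\Var$ lets me replace $h_0$ by $h^*$ modulo an additive constant; since both $h_0, h^* \in \Hc_{t_k}$ and $\Hc_{t_k}$ is intersected with the semi-ball $\braces{u : \sp{u} \le c_0}$, the residual pointwise gap $\norm{h_0 - h^* - \alpha e}_\infty$ is $\OH(c_0)$. Composing with empirical-Bernstein concentration of $\hat p_{t_k}(X_t)$ toward $p(X_t)$ gives $\sum_{k,t}\Var(\hat p_{t_k}(X_t), h_0) \le 2 \sum_{t=0}^{T-1} \Var(p(X_t), h^*) + \widetilde{\OH}(1)$, whose square-root multiplied by $\sqrt{S A \log^2(SAT/\delta)}$ delivers the leading term $4\!\sqrt{2 S A \sumnl_t \Var(p(X_t), h^*) \log(SAT/\delta)}$. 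For the correction summand, I invoke \cref{lemma:bias-differences-error}: on its $1-2\delta$ event, $\abs{c_{t_k}(s', s) - (h^*(s') - h^*(s))} \le (3 c_0 + (1+c_0) \ell + 2 B_0)/N_{t_k}(s' \toto s)$ with $B_0 \le B(T)$ and $\ell = \!\!\sqrt{8 T \log(2/\delta)}$. Using $\abs{h^*(s')-h^*(s)} \le c_0$ and bookkeeping the $1/N_{t_k}(s'\toto s)$ factors against state-action visit counts, the correction contributes $\OH\parens{c_0^2 S^2 A \log^2(SAT/\delta)} \cdot B(T)$ after squaring, which, after the square-root/Cauchy--Schwarz step, is exactly the $8(1+c_0) S^{\frac32} A \log^{\frac32}(SAT/\delta)\!\sqrt{B(T)}$ summand. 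The $6\delta$ failure budget is shared between \cref{lemma:bias-differences-error} ($2\delta$), two empirical-Bernstein concentrations for $\hat p_{t_k}$ and the reward, and two events tied to the span cap $\sp{h^*} \le c_0$ and the model-confidence region.

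The main obstacle is the bookkeeping in the correction summand: turning the commutation-based error $1/N_{t_k}(s'\toto s)$ (a count over state-pair commutes, not over $(s,a)$-visits) into a bound absorbable by the visit counts $N_{t_k}(X_t)$ that govern $\rho_k$, while isolating a clean $\sqrt{B(T)}$ rather than a stronger power of $B(T)$ or $T$. This requires a careful pairing argument showing that every visited transition $(s,s')$ admits sufficiently many commutations to tame the $1/N_{t_k}(\cdot\toto\cdot)$ factors, and that the residual routes through $B_0 \le B(T)$ via the inner-regret estimate $\text{error}(c_t, -, -)$ rather than accruing as a standalone $T^{\frac12}$ term.
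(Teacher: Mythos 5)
Your reduction to $\sum_{k,t}\beta_{t_k}(X_t)$ is correct, and the Cauchy--Schwarz split $\sum\sqrt{2\,\mathrm{var}\,\rho}\le\sqrt{2\sum\mathrm{var}}\sqrt{\sum\rho}$ is a legitimate alternative to the paper's per-step $\sqrt{a+b}\le\sqrt a+\sqrt b$ plus doubling-trick pigeonhole. But the step
\begin{equation*}
    \sum_{k,t}\Var(\hat p_{t_k}(X_t), h_0) \le 2\sum_{t=0}^{T-1}\Var(p(X_t),h^*) + \widetilde{\OH}(1)
\end{equation*}
is false, and it is not a cosmetic slip: it silently deletes the two error mechanisms that the lemma's conclusion is built from. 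By \cref{lemma:variance-approximation} applied to $u=h_0,v=h^*\in\Hc_{t_k}$, the deviation $\Var(\hat p_{t_k},h_0)-\Var(\hat p_{t_k},h^*)$ is bounded by $8c_0\sum_{s'}\hat p_{t_k}(s')\,\text{error}(c_{t_k},s',S_t)$, i.e.\ it is of \emph{the same order as the correction summand you are carving off}, and hence contributes a second copy of the $\sqrt{B(T)}$ term rather than $\widetilde{\OH}(1)$. Separately, $\Var(\hat p_{t_k},h^*)-\Var(p,h^*)$ scales as $\sp{h^*}^2\norm{\hat p_{t_k}(X_t)-p(X_t)}_1$, whose sum is $\widetilde{\OH}(\sp{h^*}^2 S\!\!\sqrt{AT})$, not $\widetilde{\OH}(1)$; after the Cauchy--Schwarz multiplication by $\sqrt{SA\log^2}$ this is exactly where the $\widetilde{\OH}(T^{1/4})$ remainder comes from. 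Your proposal asserts the remainder without deriving it, and if the $\widetilde{\OH}(1)$ claim were taken at face value you would conclude a spuriously sharper bound. The paper avoids this by splitting inside the square root at the per-step level: $\sqrt{\Var(\hat p_k,h_0')}\le\sqrt{|\Var(\hat p_k,h_0')-\Var(\hat p_k,h^*)|}+\sqrt{|\Var(\hat p_k,h^*)-\Var(p,h^*)|}+\sqrt{\Var(p,h^*)}$, then routes the first difference through \cref{lemma:variance-approximation} (merging with $A_2$) and the second through Weissman's inequality.

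On the point you flag as ``the main obstacle'': the bookkeeping of $1/N_{t_k}(s\toto s')$ against state-action visit counts is simpler than a pairing argument. One uses $N_{t_k}(s,a,s')\le N_{t_k}(s\toto s')$ (each observed transition $s\to s'$ opens or closes a commute), which turns $\hat p_{t_k}(s'|s,a)\,d_{t_k}(s',s) = N_{t_k}(s,a,s')\,[\cdots]/(N_{t_k}(s,a)N_{t_k}(s\toto s'))$ into $[\cdots]/N_{t_k}(s,a)$ before summing; the remaining sum $\sum_{k,t}1/N_{t_k}(X_t)$ is handled by the doubling trick. The pieces you correctly identify ($B_0\le B(T)$ via the chain through optimism and Azuma) are the real content there.
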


\begin{lemma}[Second order error]
\label{lemma:second-order-error}
    With probability $1 - 6\delta$, the second order error is bounded by:
    \begin{equation*}
        \sumnl_k \sumnl_{t=t_k}^{t_{k+1}-1} 
        \parens{\hat{p}_k(S_t) - p_k(S_t)} (\mathfrak{h}_k - h^*)
        \le 
        16 S^2 A (1 + c_0) \log^{\frac 12} \parens{\tfrac{S^2 A T}{\delta}} \sqrt{2 B(T)}
        + \widetilde{\OH}\parens{T^{\frac 14}}
        .
    \end{equation*}
\end{lemma}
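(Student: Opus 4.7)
The cornerstone is to show that, on a high probability event (via \cref{lemma:bias-differences-error} combined with \cref{assumption:bias-region}), both $h^*$ and $\mathfrak{h}_k$ lie inside the bias confidence region $\Hc_{t_k}$ at every episode $k$. For $\mathfrak{h}_k$ this is immediate from the \texttt{PMEVI} construction: $\mathfrak{h}_k$ is a fix-point of $\mathfrak{L}_{t_k} = \Gamma_{t_k} \circ \mathcal{L}_{t_k}^{\beta}$, hence $\mathfrak{h}_k \in \mathrm{range}(\Gamma_{t_k}) \subseteq \Hc_{t_k}$. For $h^*$, \cref{lemma:bias-differences-error} supplies the pair-wise constraints defining $\Hc'_{t_k}$, the inequality $\sp{h^*} \le c \le c_0 = T^{1/5}$ covers the span semi-ball, and \cref{assumption:bias-region} gives $h^* \in \Hc_*$. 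Writing $u_k := \mathfrak{h}_k - h^*$ and pairing the two sets of constraints yields the pointwise inequality $|u_k(s) - u_k(s')| \le 2 \min(c_0, d_{t_k}(s,s'), d_{t_k}(s',s))$ whenever commutes have been observed, with the fallback $|u_k(s) - u_k(s')| \le 2 c_0$ coming from the span semi-ball otherwise.

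Next I would center each summand at a reference state $s_0$ (the one already used by \hyperlink{algorithm:variance-approximation}{Algorithm 5}). Because $\hat{p}_{t_k}(X_t)$ and $p(X_t)$ are probability vectors, their difference sums to zero, so
\begin{equation*}
    (\hat{p}_{t_k}(X_t) - p(X_t))\, u_k
    =
    \sumnl_{s'} (\hat{p}_{t_k}(X_t, s') - p(X_t, s'))\,[u_k(s') - u_k(s_0)].
\end{equation*}
Applying absolute values, the coordinate bound just derived, and a per-component empirical Bernstein / Weissman-type bound (granted by \cref{assumption:sub-weissman} and \cref{assumption:model-confidence-region}), one obtains $|(\hat{p}_{t_k}(X_t) - p(X_t))\, u_k| \le \sumnl_{s'} \widetilde{\OH}(\!\!\sqrt{\log(SAT/\delta)/N_{t_k}(X_t)}) \cdot 2 \min(c_0, d_{t_k}(s_0, s'))$.

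Summing over $t \in [t_k, t_{k+1})$ and regrouping by $(s,a,s')$ produces a triple sum of $\sumnl_k n_k(s,a)$ against $\widetilde{\OH}(1/\!\!\sqrt{N_{t_k}(s,a)})$ times $\min(c_0, d_{t_k}(s_0, s'))$. The key trick is the elementary inequality $\min(c_0, d) \le \!\!\sqrt{c_0\, d}$ together with the explicit form $d_{t_k}(s_0, s') \le (C_0 + 2 B_0)/N_{t_k}(s_0 \toto s')$, where $C_0 = \widetilde{\OH}(c_0 \!\!\sqrt{T})$ and $B_0 \le B(T)$. The $B_0$-driven contribution gives an inner factor $\!\!\sqrt{c_0\, B(T)/N_{t_k}(s_0\toto s')}$, whose aggregation via Cauchy--Schwarz and the standard doubling-trick identity $\sumnl_k n_k(s,a)/\!\!\sqrt{N_{t_k}(s,a)} = \OH(\!\!\sqrt{N_T(s,a)})$ over the $S^2 A$ triples produces the advertised $S^2 A (1+c_0) \log^{1/2}(S^2 A T/\delta) \!\!\sqrt{2 B(T)}$ term. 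The $C_0$-driven contribution and the rare-commute pairs with $N_{t_k}(s_0\toto s') = 0$ (where only the $2 c_0$ span fallback applies) collapse into the $\widetilde{\OH}(T^{1/4})$ residual once $c_0 = T^{1/5}$ and $\ell = \!\!\sqrt{8 T \log(2/\delta)}$ are substituted.

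The main obstacle is coordinating two distinct count statistics that coexist in the final bound: the state-action visit count $N_{t_k}(s,a)$ governing the empirical kernel concentration, and the commute count $N_{t_k}(s_0 \toto s')$ governing the width of $\Hc_{t_k}$. A pair $(s_0, s')$ may be individually well-visited yet rarely commuted under the deployed policies, so a naive product bound breaks. Reconciling this requires either fixing $s_0$ as a state with many commutes against every other visited state, or isolating the rare-commute pairs into the $\widetilde{\OH}(T^{1/4})$ residual via a threshold case split; this matching of per-component model errors against per-pair bias errors is the delicate core of the argument and is the source of the $S^2 A$ prefactor through the triple summation over $(s,a,s')$.
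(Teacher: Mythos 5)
Your proposal captures the right skeleton — establish $\mathfrak{h}_k, h^* \in \Hc_{t_k}$, use the pointwise bound $\abs{(\mathfrak{h}_k - h^*)(s) - (\mathfrak{h}_k - h^*)(s')} \le 2\min(c_0, d_{t_k}(s,s'))$, apply a per-coordinate Bernstein bound to $\hat{p}_k - p_k$, and use $\min(c_0, d) \le \!\!\sqrt{c_0 d}$ — but there is a genuine gap at the centering step, which is precisely the "delicate core" you flag without resolving.

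You center each summand at a \emph{fixed} reference state $s_0$, which produces $\min(c_0, d_{t_k}(s_0, s'))$ and hence the commute count $N_{t_k}(s_0 \toto s')$ in the denominator. You correctly observe that this count bears no relation to the visit count $N_{t_k}(X_t)$ controlling the Bernstein bound, so a pair $(s_0, s')$ may be well-visited yet barely commuted, leaving $1/\!\!\sqrt{N_{t_k}(s_0 \toto s')}$ bounded away from zero for the whole horizon and contributing $\Omega(T)$-order error. Neither of your proposed repairs closes this hole: there is no guarantee that a single state $s_0$ commutes frequently against \emph{every} other state under the deployed policies (a policy that avoids $s_0$ once $s_0$'s transitions are learned is a natural counterexample), and the "rare-commute" pairs cannot be shelved into a $\widetilde{\OH}(T^{1/4})$ residual because nothing in the algorithm forces their commute counts to grow. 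The paper's proof instead centers each term at the \emph{current} state $S_t$, so the summand becomes $\sum_{s'}(\hat{p}_k(s'|S_t) - p_k(s'|S_t))\,d_{t_k}(s', S_t)$, and then uses the structural identity that every observed transition $(S_t, A_t) \to s'$ registers in the commute count $S_t \toto s'$, i.e.\ $N_{t_k}(S_t, A_t, s') \le N_{t_k}(S_t \toto s')$. This makes the product $\hat{p}_k(s'|S_t)\, d_{t_k}(s', S_t) = \frac{N_{t_k}(S_t,A_t,s')}{N_{t_k}(X_t)} \cdot \frac{\text{const} + 2B_0(t_k)}{N_{t_k}(S_t \toto s')} \le \frac{\text{const} + 2B_0(t_k)}{N_{t_k}(X_t)}$ collapse to a quantity controlled solely by $N_{t_k}(X_t)$; the remaining sum is then the standard doubling-trick bound $\sum_k\sum_t 1/N_{t_k}(X_t) = \widetilde{\OH}(SA)$ and the $\sum_{s'}$ gives the extra $S$, yielding the $S^2 A$ prefactor. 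This automatic cancellation of the commute-count denominator against the transition-count numerator is what your fixed-$s_0$ centering forfeits, and it cannot be recovered by Cauchy--Schwarz or a thresholding argument.
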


We see that the empirical bias error (\cref{lemma:empirical-bias-error}) and the optimism overshoot (\cref{lemma:optimism-overshoot}) both involve the sum of variances $\sum_{t=0}^{T-1} \Var(p(X_t), h^*)$, which is shown in \cref{lemma:variance-sum} to be of order $\sp{h^*} \sp{r} T + \sum_{t=0}^{T-1} \Delta^*(X_t)$.
The pseudo-regret term $\sum_{t=0}^{T-1} \Delta^*(X_t)$ is bounded with the regret using \cref{corollary:regret-pseudo-regret}, then by $B(T)$.
With high probability, we obtain an equation of the form:
\begin{equation*}
    B(T)
    \le
    C \sqrt{(1+\sp{h^*}) SA T \log\parens{\tfrac T\delta}}
    + C S^2 A (1 + c_0) \log^2(T) \sqrt{B(T)}
    + \tilde{\OH}\parens{T^{\frac 14}}
\end{equation*}
where $C$ is a constant.
Setting $\alpha := C S^2 A (1 + c_0) \log^2(T)$ and $\beta := C \sqrt{(1+\sp{h^*}) S A T \log(T/\delta)} + \tilde\OH (T^{1/4})$, the above equation is of the form $B(T) \le \beta + \alpha \sqrt{B(T)}$.
Solving in $B(T)$, we find $B(T) \le \beta + 2 \sqrt{\alpha \beta} + \alpha^2$.
The dominant term is $\beta$, hence we readily obtain:
\begin{equation}
    B(T)
    \le
    C\sqrt{(1+\sp{h^*}) \sp{r} S A T \log\parens{\tfrac T\delta}}
    + \widetilde{\OH}\parens{
        \sp{h^*}\sp{r} S^{\frac 52} A^{\frac 32} (1 + c_0) T^{\frac 14}
    }
    .
\end{equation}  
Since $c_0 = \oh(T^{\frac 14})$, we conclude that $B(T) = \OH\parens{\!\sqrt{\sp{h^*} S A T \log(T/\delta)}}$, ending the proof.

\section{Experimental illustrations}

To get a grasp of how \texttt{PMEVI-DT} behaves in practice, we provide in \cref{figure:experiments} of few illustrative experiments.
In both experiments, the environment is a river-swim which is a model known to be hard to learn despite its size, with high diameter and bias span.
Its description is found in \cite{bourel_tightening_2020} and is reported in the appendix for self-containedness.

\begin{figure}[ht]
    \begin{tikzpicture}
        \node at (0,0) {\includegraphics[width=.49\linewidth]{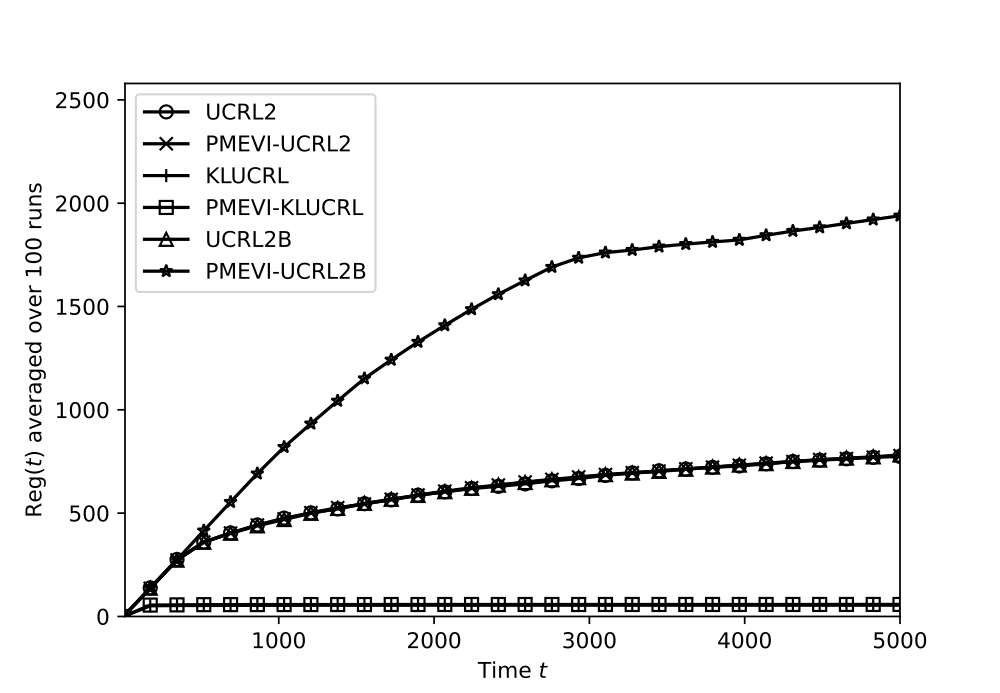}};
        \node[rotate=8] at (1.3, 0.9) {\scriptsize\texttt{UCRL2B \& PMEVI-UCRL2B}};
        \node[rotate=5] at (1, -.6) {\scriptsize\texttt{UCRL2 \& PMEVI-UCRL2}};
        \node at (1, -1.5) {\scriptsize\texttt{KLUCRL \& PMEVI-KLUCRL}};
    \end{tikzpicture}
    \hfill
    \begin{tikzpicture}
        \node at (0,0) {\includegraphics[width=.49\linewidth]{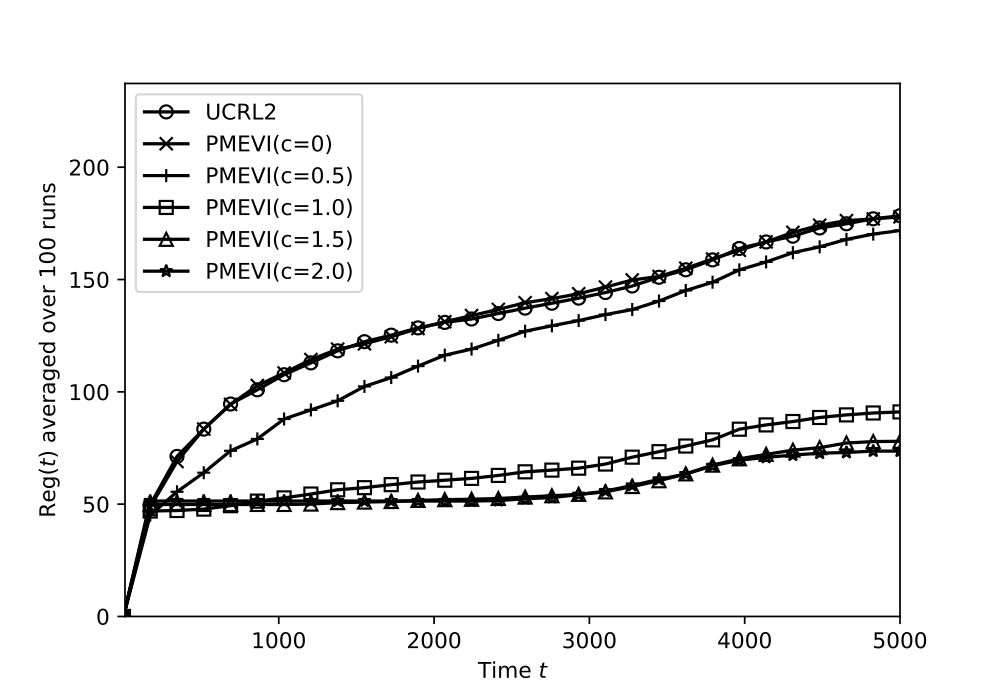}};
        \node[rotate=13] at (1.9, 1.0) {\scriptsize\texttt{UCRL2}};
        \node[rotate=3] at (2.0, -1.1) {\scriptsize\texttt{PMEVI(c=2)}};
        \node[rotate=13] at (1.9, 0.3) {\scriptsize\texttt{PMEVI(c=0.5)}};
        \node[rotate=9] at (1.9, -.3) {\scriptsize\texttt{PMEVI(c=1)}};
    \end{tikzpicture}
    \caption{
    \label{figure:experiments}
        (\textbf{To the left})
        Running a few algorithms of the literature on $5$-state river-swim and comparing their average regret against their \texttt{PMEVI} variants, obtained by changing calls to the \texttt{EVI} sub-routine to calls to \texttt{PMEVI}.
        (\textbf{To the right}) 
        Running \texttt{UCRL2} and \texttt{PMEVI-DT} with the same confidence region that \texttt{UCRL2} on a $3$-state river-swim. 
        \texttt{PMEVI-DT} is run with prior knowledge $h^*(s_1) \le h^*(s_2) - c \le h^*(s_3) - 2c$ for $c \in \set{0, 0.5, 1, 1.5, 2}$.
    }
\end{figure}

We observe on the first experiment that \texttt{PMEVI} behaves almost identically to its \texttt{EVI} counterparts when no prior on the bias region is given.
This is because most of the regret is due to the earlier learning phase, when bias information is impossible to get (the regret is still growing linearly and the bias estimator is off).
Accordingly, the bias confidence region is too large and all projections onto it are trivial during the iterations of \texttt{PMEVI}. 
Thankfully, this also makes the calls to \texttt{PMEVI} not substantially heavier than calls to \texttt{EVI} from a computational perspective.
On the second experiment, we measure the influence of prior bias information on the behavior of \texttt{PMEVI-DT}.
We observe that \texttt{PMEVI-DT} is very efficient at using adequate bias prior information to strikingly reduce the burn-in cost of the learning process on this $3$-state riverswim.



\bibliographystyle{plainnat}
\bibliography{bibliography.bib}

\newpage
\appendix
\allowdisplaybreaks


\addcontentsline{toc}{section}{Appendix} 
\part{Appendix} 
\parttoc 

\clearpage
\section{Construction of \texttt{PMEVI-DT}}

This section provides the technical details required to understand the design of \texttt{PMEVI-DT} in \cref{section:ptevi}.
We further discuss the assumptions \ref{assumption:model-confidence-region}-\ref{assumption:bias-region} appearing in \cref{theorem:main} and provide sufficient conditions so that they are met.

\subsection{Proof of \cref{lemma:bias-differences-error}, estimation of the bias error}

    Fix $s, s' \in \Sc$.
    We denote $\alpha_T := N_T (s \toto s') (h^*(s) - h^*(s') - c_T(s, s'))$.
    We will start by considering the better estimator $c'_T(s, s')$ that satisfies the same equation \eqref{equation:bias-difference-estimator} than $c_T(s,s')$ but with $\hat{g}(T)$ changed to $h^*$, readily:
    \begin{equation*}
        N_t (s \toto s') c'_T (s, s')
        =
        \sumnl_{t=0}^{N_T(s \toto s') -1}
        (-1)^i \sumnl_{t = \tau_i^{s \toto s'}}^{\tau_{i+1}^{s \toto s'}-1}
        (g^* - R_t)
        .
    \end{equation*}
    To avoid a typographical clutter, we write $\tau_i$ instead of $\tau_i^{s \toto s'}$ in the remaining of the proof and we write $\alpha_T' := N_T (s \toto s') (h^*(s) - h^*(s') - c'_T(s, s')$.
    
    \STEP{1}
    We start by relating the two estimators.
    Intuitively, $\hat{g}(T)$ is a good estimator for $g^*$ when the regret is small.
    Recall that $\hat{g}(T) := \frac 1T \sum_{t=0}^{T-1} R_t$, hence:
    \begin{equation*}
        \sumnl_{t=0}^{T-1} \abs{\hat{g}(T) - g^*}
        = \abs{\sumnl_{t=0}^{T-1} (R_t - g^*)}
        = \abs{\Reg(T)}
        .
    \end{equation*}
    Therefore,
    \begin{equation*}
        \abs{\alpha_T}
        \le \abs{\alpha_T'} + \abs{\alpha_T - \alpha_T'} 
        \le \abs{\alpha_T'} + \sumnl_{t=0}^{T-1} \abs{\hat{g}(T) - g^*}
        \le \abs{\alpha_T'} + \abs{\Reg(T)}
        .
    \end{equation*}
    We are left with upper-bounding $\abs{\alpha_T'}$.
    
    \STEP{2}
    If $i$ is even, then $S_{\tau_i}$ and $\S_{\tau_{i+1}} = s'$; otherwise $S_{\tau_i} = s'$ and $S_{\tau_{i+1}} = s$.
    In both cases, we have $h^*(S_{\tau_{i+1}}) - h^*(S_{\tau_i}) = (-1)^i (h^*(s') - h^*(s))$.
    Therefore, using Bellman's equation, the quantity $\text{A} := \sum_{t=\tau_i}^{\tau_{i+1}-1} (g^* - R_t)$ satisfies:
    \begin{align*}
        \text{A}
        & = 
        \sumnl_{t=\tau_i}^{\tau_{i+1}-1} \parens{p(X_t) - e_{S_t}} h^* 
        + \sumnl_{t=\tau_i}^{\tau_{i+1}-1} (r(X_t) - R_t)
        + \sumnl_{t=\tau_i}^{\tau_{i+1}-1} \Delta^*(X_t)
        \\
        & =
        \sumnl_{t=\tau_i}^{\tau_{i+1}-1} \parens{e_{S_{t+1}} - e_{S_t}} h^*
        + \sumnl_{t=\tau_i}^{\tau_{i+1}-1} \parens{p(X_t) - e_{S_{t+1}}} h^*
        + \sumnl_{t=\tau_i}^{\tau_{i+1}-1} (r(X_t) - R_t)
        + \sumnl_{t=\tau_i}^{\tau_{i+1}-1} \Delta^*(X_t)
        \\
        & = 
        (-1)^i (h^*(s') - h^*(s)) 
        + \sumnl_{t=\tau_i}^{\tau_{i+1}-1} \parens{p(X_t) - e_{S_{t+1}}} h^* 
        + \sumnl_{t=\tau_i}^{\tau_{i+1}-1} (r(X_t) - R_t)
        + \sumnl_{t=\tau_i}^{\tau_{i+1}-1} \Delta^*(X_t)
        .
    \end{align*}
    Multiplying by $(-1)^i$ and rearranging, $h^*(s') - h^*(s) + (-1)^{i+1} \sum_{t=\tau_i}^{\tau_{i+1}-1} (g^* - R_t)$ appears to be equal to:
    \begin{equation*}
        (-1)^{i+1}
        \parens{
            \sumnl_{t=\tau_i}^{\tau_{i+1}-1} \parens{
                \parens{p(X_t) - e_{S_{t+1}}} h^* + r(X_t) - R_t
            }
            + 
            \sumnl_{t=\tau_i}^{\tau_{i+1}-1} \Delta^*(X_t)
        }
        .
    \end{equation*}
    Proceed by summing over $i$.
    By triangular inequality, we obtain:
    \begin{equation*}
        \abs{\alpha_T'}
        \le
        \abs{
            \sumnl_{i=0}^{N_T(s\toto s')-1}
            \sumnl_{t=\tau_i}^{\tau_{i+1}-1}
            (-1)^{i+1} \parens{
                \parens{p(X_t) - e_{S_{t+1}}} h^*
                + r(X_t) - R_t
            }
        }
        + 
        \sumnl_{i=0}^{N_T(s\toto s')-1}
        \sumnl_{t=\tau_i}^{\tau_{i+1}-1}
        \Delta^*(X_t)
        .
    \end{equation*}
    Because all Bellman gaps $\Delta^*$ are non-negative, the second term is upper-bounded by the pseudo-regret $\sum_{t=0}^{T-1} \Delta^*(X_t)$.
    The first term is a martingale, and the martingale difference sequence $(-1)^{i+1} ((p(X_t) - e_{S_{t+1}})h^* + r(X_t) - R_t$ has span at most $\sp{h^*} + 1$ since rewards are supported in $[0, 1]$. 
    Although the number of involved is random, it is upper-bounded by $T$, hence by the maximal version of Azuma-Hoeffding's inequality (\cref{lemma:azuma}), we have that with probability at least $1 - \delta$, uniformly for $T' \le T$, 
    \begin{equation*}
        \abs{
            \sumnl_{i=0}^{N_{T'}(s\toto s')-1}
            \sumnl_{t=\tau_i}^{\tau_{i+1}-1}
            (-1)^{i+1} \parens{
                \parens{p(X_t) - e_{S_{t+1}}} h^*
                + r(X_t) - R_t
            }
        }
        \le
        (1 + \sp{h^*}) \sqrt{\tfrac 12 T \log\parens{\tfrac 2\delta}}.
    \end{equation*}
    
    \STEP{3}
    We conclude that with probability $1 - \delta$, for all $T' \le T$,
    \begin{equation*}
        \alpha_{T'} \le (1 + \sp{h^*}) \sqrt{\tfrac 12 T \log\parens{\tfrac 2\delta}} + \sumnl_{t=0}^{T'-1} \Delta^* (X_t) + \abs{\Reg(T')}.
    \end{equation*}
    We are left with relating both $\sum_{t=0}^{T'-1} \Delta^*(X_t)$ and $\abs{\Reg(T')}$ to $\sum_{t=0}^{T'-1} (\tilde{g} - R_t)$.
    Using the Bellman equation again, we find that:
    \begin{align*}
        \abs{
            \sumnl_{t=0}^{T'-1} (g^* - R_t - \Delta^*(X_t))
        }
        & \le
        \abs{h^*(S_0) - h^*(S_{T'})}
        + \abs{
            \sumnl_{t=0}^{T'-1} \parens{
                \parens{p(X_t) - e_{S_{t+1}}} h^*
                + \parens{r(X_t) - R_t}
            }
        }
        \\
        & \le 
        \sp{h^*} + 
        (1 + \sp{h^*}) \sqrt{\tfrac 12 T \log\parens{\tfrac 2\delta}}
    \end{align*}
    where the last inequality holds with probability $1 - \delta$ uniformly over $T' \le T$ by Azuma-Hoeffding's inequality again (\cref{lemma:azuma}).
    Remark that if $y - z \le x \le y + z$, then $\abs{x} \le \abs{y} + \abs{z}$, hence we conclude that with probability $1 - \delta$, for all $T' \le T$:
    \begin{align*}
        \sumnl_{t=0}^{T'-1} \Delta^*(X_t) + \abs{\Reg(T')}
        & \le
        2 \sumnl_{t=0}^{T'-1} \Delta^*(X_t) 
        + (1 + \sp{h^*}) \sqrt{\tfrac 12 T \log\parens{\tfrac 2\delta}} + \sp{h^*}
        \\
        & \le 
        2 \sumnl_{t=0}^{T'-1} \parens{g^* - R_t}
        + 3 (1 + \sp{h^*}) \sqrt{\tfrac 12 T \log\parens{\tfrac 2\delta}} + 3 \sp{h^*}
        \\
        & \le 
        2 \sumnl_{t=0}^{T'-1} \parens{\tilde{g} - R_t}
        + 3 (1 + \sp{h^*}) \sqrt{\tfrac 12 T \log\parens{\tfrac 2\delta}} + 3 \sp{h^*}
    \end{align*}
    where the last inequality invokes $\tilde{g} \ge g^*$.
    We conclude that, with probability $1 - 2\delta$, for all $T' \le T$, we have:
    \begin{equation*}
        N_{T'} (s \toto s') (h^*(s) - h^*(s') - c_{T'}(s, s'))
        \le
        3 \sp{h^*}
        + \parens{1 + \sp{h^*}} \sqrt{8 T \log\parens{\tfrac 2\delta}} + \sumnl_{t=0}^{T'-1} (\tilde{g} - R_t)
        .
    \end{equation*}
    This concludes the proof.
    \qed

\subsection{The confidence region of \texttt{PMEVI-DT}}
\label{section:model-confidence-region}

The algorithm \texttt{PMEVI-DT} can be instantiated with a large panel of possibilities, depending on the type of confidence region one is willing to use for rewards and kernels.
In this work, we allow for four types of confidence regions, described below.
For conciseness, $q \in \set{r, p}$ is a symbolic letter that can be a reward or a kernel and denote $\Qc_t (s,a)$ the confidence region for $q(s,a)$ at time $t$.
If $q = r$, then $\dim(q) = 2$ (Bernoulli rewards) with $\Qc(s,a) = [0,1]$; and if $q = p$, then $\dim(q) = S$ with $\Qc(s,a) = \Pc(\Sc)$.

\begin{itemize}
    \item[(\textbf{C1})] 
        \emph{Azuma-Hoeffding} or \emph{Weissman} type confidence regions, with $\Qc_t (s,a)$ taken as:
        \begin{equation*}
            \set{
                \tilde{q}(s,a) \in \Qc(s,a): 
                N_t (s,a) \norm{\hat{q}_t(s,a) - \tilde{q}(s,a)}_1^2
                \le 
                \dim(q)\log\parens{\tfrac{2 S A (1+N_t(s,a))}\delta}
            }
            .
        \end{equation*}

    \item[(\textbf{C2})]
        \emph{Empirical Bernstein} type confidence regions, with $\Qc_t (s,a)$ taken as:
\end{itemize} 
        \begin{equation*}
            \set{
            \tilde{q}(s,a) \in \Qc(s,a):
            \forall i, 
            \abs{\hat{q}_t(i|s,a) - \tilde{q}(i|s,a)}
            \le
            \sqrt{\tfrac{2 \Var(\hat{q}_t(i|s,a)) \log\parens{\tfrac {2 \dim(q) S A T}\delta}}{N_t(s,a)}}
            + 
            \tfrac{3 \log\parens{\tfrac {2 \dim(q) S A T}\delta}}{N_t(s,a)}
        }
        .
        \end{equation*}
\begin{itemize}
    \item[]
        with the convention that $x/0 = +\infty$ for $x > 0$.

    \item[(\textbf{C3})]
        \emph{Empirical likelihood} type confidence regions, with $\Qc_t (s,a)$ taken as:
\end{itemize}
        \begin{equation*}
            \set{
                \tilde{q}(s,a) \in \Qc(s,a) :
                N_t(s,a) \KL(\hat{q}_t(s,a)\|\tilde{q}(s,a))
                \le
                \log\parens{\tfrac {2 S A}\delta}
                + (\dim(q)-1) \log\parens{e\parens{1 + \tfrac{N_t(s,a)}{\dim{q}-1}}}
            }
            .
        \end{equation*}
\begin{itemize}
    \item[(\textbf{C4})]
        \emph{Trivial} confidence region with $\Qc_t(s,a) = \Qc(s,a)$.
\end{itemize}

A few remarks are in order.
When rewards are not Bernoulli, only the confidence regions (\textbf{C1}) and (\textbf{C4}) are elligible among the above. 
Then, Weissman's inequality must be changed to Azuma's inequality for $\sigma$-sub-Gaussian random variables, see \cref{lemma:time-uniform-azuma}.
Since rewards are supported in $[0, 1]$, Hoeffding's Lemma guarantees that reward distributions are $\sigma$-sub-Gaussian with $\sigma = \tfrac 12$.

\subsubsection{Correctness of the model confidence region $\Mc_t$ and \cref{assumption:model-confidence-region}}

The confidence regions $\Qc_t (s,a)$ described with (\textbf{C1-4}) are tuned so that the following result holds:

\begin{lemma}
\label{lemma:confidence-region}
    Assume that, for all $q \in \set{r, p}$ and $(s, a) \in \Xc$, we choose $\Qc_t(s,a)$ among (\textbf{C1-4}).
    Then \cref{assumption:model-confidence-region} holds.
    More specifically, the region of models $\Mc_t := \prod_{s,a} (\Rc_t(s,a) \times \Pc_t(s,a))$ satisfies $\Pr(\exists t \le T: M \notin \Mc_t) \le \delta$.
\end{lemma}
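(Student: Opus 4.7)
The plan is to combine a union bound over $(q, s, a)$ with a time-uniform concentration inequality tailored to each of the four admissible confidence region types. By a single union bound,
\begin{equation*}
\Pr\parens{\exists t \le T: M \notin \Mc_t}
\le
\sum_{q \in \{r,p\}} \sum_{(s,a) \in \Xc}
\Pr\parens{\exists t \le T: q(s,a) \notin \Qc_t(s,a)},
\end{equation*}
and since \textbf{(C4)} contributes zero, it suffices to bound each summand under \textbf{(C1)}--\textbf{(C3)} by $\delta/(2SA)$.

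For a fixed $(s, a)$, the empirical estimate $\hat{q}_t(s,a)$ only updates upon a visit, so I first re-index time by $n := N_t(s,a)$ and reduce the event $\{\exists t \le T : q(s,a) \notin \Qc_t(s,a)\}$ to one indexed by the sample size $n$. The three non-trivial cases then follow from standard concentration bounds applied with failure budgets $\delta_n$ chosen so that $\sum_{n \ge 1} \delta_n \le \delta/(2SA)$. \textbf{(C1)} Apply Weissman's inequality for $\ell^1$-deviations of multinomials (when $q = p$), or Hoeffding's inequality for $[0,1]$-valued averages (when $q = r$), combined with a dyadic peeling over $n$: partition $\{1, \ldots, T\}$ into blocks $I_k = [2^k, 2^{k+1})$ and apply the fixed-$n$ bound inside $I_k$ with budget $\delta_k := \delta/(2SA(k+1)(k+2))$. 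The induced threshold is absorbed by the $\log(2SA(1+N_t(s,a))/\delta)$ factor in \textbf{(C1)}, because $\log((k+1)(k+2)) = \OH(\log\log n)$ is dominated by $\log(1 + n)$, and $\sum_k \delta_k \le \delta/(2SA)$. \textbf{(C2)} Apply the time-uniform empirical Bernstein inequality of \cref{lemma:empirical-bernstein} coordinatewise; the extra union bound over the $\dim(q)$ coordinates is accounted for by the $\log(2 \dim(q) SAT/\delta)$ factor in the threshold. \textbf{(C3)} Invoke the time-uniform Chernoff--Sanov-type concentration bound for multinomial KL divergences (of the Garivier--Capp\'e style used in bandit confidence bounds); the $(\dim(q)-1) \log(e(1 + N_t/(\dim(q)-1)))$ term is precisely the covering cost of the $(\dim(q)-1)$-dimensional simplex needed for time uniformity.

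Summing the $2SA$ per-pair failure probabilities yields $\Pr(\exists t \le T: M \notin \Mc_t) \le \delta$, which establishes Assumption~\ref{assumption:model-confidence-region}. The main technical step is the dyadic peeling for \textbf{(C1)}: the other two cases are essentially off-the-shelf time-uniform inequalities. The thresholds in \textbf{(C1)}--\textbf{(C3)} were deliberately chosen so that, after matching constants, the peeling correction and the coordinate union bound are absorbed by the explicit logarithmic factors in the definitions, leaving only a routine verification of inequalities between logarithmic expressions.
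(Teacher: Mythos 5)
Your proposal follows essentially the same route as the paper: a union bound over the $2SA$ pairs $(q,s,a)$ combined with a per-pair time-uniform concentration bound, where the $SA$ (and, for (C2), the $T$ and $\dim(q)$) factors are paid for by the logarithmic terms already built into the definitions of (C1)--(C3), and (C4) contributes nothing. Two small notes: for (C2) the lemma you want is the time-uniform \cref{lemma:uniform-bernstein}, not the fixed-$n$ \cref{lemma:empirical-bernstein} that you cite (the former is obtained from the latter by exactly the over-$n$ union bound whose cost the $\log(T)$ in the (C2) threshold covers, so your description is right but the reference is off); and for (C1) the paper simply invokes the ready-made time-uniform Weissman bound (\cref{lemma:uniform-weissman}, proved there via the self-normalized \cref{lemma:time-uniform-azuma} together with a sign-vector union bound rather than via peeling), whereas you re-derive the time uniformity by dyadic peeling --- both arrive at essentially the same threshold and are equally valid.
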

\begin{proof}
    We show that, for all $q \in \set{r, q}$ and $(s,a) \in \Xc$, if $\Qc_t(s,a)$ is chosen amoung (\textbf{C1-4}), then
    \begin{equation*}
        \Pr \parens{
            \exists t \le T:
            q(s,a) \notin \Qc_t(s,a)
        }
        \le \delta
        .
    \end{equation*}
    If $\Qc_t(s,a)$ is chosen with (\textbf{C1}), this is a direct application of \cref{lemma:uniform-weissman}; with (\textbf{C2}), this is \cref{lemma:uniform-bernstein}; with (\textbf{C3}), this is \cref{lemma:uniform-kl}; and with (\textbf{C4}) this is by definition.
\end{proof}

\subsubsection{Simultaneous correctness of bias confidence region $\Hc_t$, mitigation $\beta_t$ and optimism}

In this section, we show that if \cref{assumption:model-confidence-region} holds, then the bias confidence region constructed by \texttt{PMEVI-DT} is correct with high probability, and that the mitigation is not too strong. 
Recall that $(\mathfrak{g}_k, \mathfrak{h}_k)$ are the optimistic gain and bias of the policy deployed in episode $k$ (see \hyperlink{algorithm:PMEVI-dt}{Algorithm~1}).
In particular, we have $\mathfrak{g}_k = \Lfk_{t_k} \mathfrak{h}_k - \mathfrak{h}_k$ with $\mathfrak{h}_k \in \Hc_{t_k}$.
We start by a result on the deviation of the variance, which is what the variance approximation \hyperlink{algorithm:variance-approximation}{Algorithm~5} is based on.
Recall that the bias confidence region $\Hc_t$ is obtained as the collection of constraints:
\vspace{-.5em}
\begin{itemize}
    \item[(1)] prior constraints (if any) $\mathfrak{h}(s) - \mathfrak{h}(s') \le c_*(s,s')$;
    \item[(2)] span constraints $\mathfrak{h}(s) - \mathfrak{h}(s') \le c_0 := T^{1/5}$;
    \item[(3)] dynamically infered constraints $\abs{\mathfrak{h}(s) - \mathfrak{h}(s') - c_t(s',s)} \le \text{error}(c_t, s', s)$ (see \hyperlink{algorithm:bias-constraints}{Algorithm~3}).
\end{itemize}
We have the following result.

\begin{lemma}
\label{lemma:variance-approximation}
    Let $u, v \in \Hc_t$ and fix $p$ a probability distribution on $\Sc$.
    Then for all $s \in \Sc$,
    \begin{equation*}
        \Var(p, u) \le \Var(p, v) + 8 c_0 \sumnl_{s' \in \Sc} p(s') ~\text{\upshape error}(c_t, s', s)
        .
    \end{equation*}
\end{lemma}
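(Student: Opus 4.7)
The plan is to exploit translation invariance of the variance functional ($\Var(p, u+\lambda e)=\Var(p,u)$) together with the pairwise constraints defining $\Hc_t$. Specifically, I will first normalize $u$ and $v$ so that $u(s)=v(s)$ (at the reference point $s$) by shifting one of them by a scalar multiple of $e$; this is free since it changes neither side of the claimed inequality. Once this normalization is in place, the key observation is that for every $s'$, both $u$ and $v$ satisfy $|u(s')-u(s)-c_t(s',s)|\le \text{error}(c_t,s',s)$ and similarly for $v$, so by the triangle inequality $|u(s')-v(s')|\le 2\,\text{error}(c_t,s',s)$. Call this pointwise deviation $\delta(s'):=u(s')-v(s')$.

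Next I would write $u=v+\delta$ and expand the variance using the fact that for any constant $c$, $\Var(p,u)\le \sum_{s'} p(s')(u(s')-c)^2$; choosing $c=p\cdot v$ gives
\begin{equation*}
    \Var(p,u) \le \Var(p,v) + 2\sum_{s'} p(s')\bigl(v(s')-p\cdot v\bigr)\delta(s') + \sum_{s'} p(s')\delta(s')^2 .
\end{equation*}
For the cross term I would use that $v\in\Hc_t$ implies $\sp{v}\le c_0$, hence $|v(s')-p\cdot v|\le c_0$, to bound it by $2c_0\sum_{s'} p(s')|\delta(s')|\le 4c_0\sum_{s'} p(s')\,\text{error}(c_t,s',s)$. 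For the quadratic term I would use the crude pointwise bound $|\delta(s')|\le \sp{u-v}\le \sp{u}+\sp{v}\le 2c_0$ (again from the span constraint in $\Hc_t$) together with the sharper bound on $|\delta(s')|$ from the previous paragraph, yielding $\sum_{s'} p(s')\delta(s')^2\le 2c_0\sum_{s'} p(s')|\delta(s')|\le 4c_0\sum_{s'} p(s')\,\text{error}(c_t,s',s)$. Summing the two contributions gives the factor $8c_0$.

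The main subtlety — and the only place where some care is needed — is the normalization step: the inequality we want to prove is symmetric in translating $u$ by a constant but the pointwise bound on $\delta$ requires the two functions to be aligned at $s$. Once that is handled, the remainder is essentially a one-line expansion of the variance plus two Cauchy–Schwarz–style bounds. A secondary point worth checking is that the span bound $\sp{v}\le c_0$ really does hold for every $v\in \Hc_t$; this is ensured by the explicit span constraints baked into $\Hc_t$ (step (2) of its construction), so no extra work is needed.
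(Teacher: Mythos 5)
Your proof is correct and essentially follows the same route as the paper's: normalize so $u(s)=v(s)$, use the triangle inequality together with the defining constraints of $\Hc_t$ to get $|u(s')-v(s')|\le 2\,\text{error}(c_t,s',s)$, and then control the variance difference using the span bound $c_0$. The only cosmetic difference is that the paper first establishes the intermediate inequality $\Var(p,u)-\Var(p,v)\le 2\bigl(p\cdot|u-v|\bigr)\max(u+v)$ by factoring $u^2-v^2=(u-v)(u+v)$, whereas you expand $\Var(p,u)$ around the reference $p\cdot v$ into a cross term plus a quadratic term; both decompositions lead cleanly to the same constant $8c_0$.
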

\begin{proof}
    We start by establishing the following result: If $p$ is a probability distribution on $\Sc$ and $u, v \in \R^\Sc$, we have:
    \begin{equation}
    \label{equation:variance-deviations}
        \Var(p, u) \le \Var(p, v) + 2 \parens{p \cdot \abs{u - v}} \max(u + v)
    \end{equation}
    where $\cdot$ is the dot product, $u^2$ the Hadamard product $uu$ and $\abs{u}$ the vector whose entry $s$ is $\abs{u(s)}$.
    \eqref{equation:variance-deviations} is obtained with a straight forward computation:
    \begin{align*}
        \Var(p, u) - \Var(p, v)
        & = p \cdot(u^2 - v^2) + (p \cdot v)^2 - (p \cdot u)^2
        \\
        & = p \cdot ((u-v)(u+v)) + (p \cdot (u-v)) (p \cdot (u+v)) 
        \\
        & \le p \cdot (\abs{u - v}(u + v)) + (p \cdot \abs{u-v}) (p \cdot \abs{u+v})
        \\
        & \le 2 (p \cdot \abs{u - v}) \max(u + v)
        .
    \end{align*}
    Observe that $v$ can be changed to $v + \lambda e$, where $e$ is the vector full of ones, without changing the result. 
    The same goes for $u$.
    We now move to the proof of the main statement.
    First, translate $u$ and $v$ such that $u(s) = v(s) = 0$.
    Then, we have:
    \begin{align*}
        p \cdot (u - v)
        & = \sumnl_{s' \in \Sc} p(s') \abs{u(s') - u(s) - c_t(s', s) + v(s) - v(s') + c_t(s', s)}
        \\
        & \le \sumnl_{s' \in \Sc} p(s') \parens{
            \abs{u(s') - u(s) - c_t(s',s)}
            +
            \abs{v(s') - v(s) - c_t(s',s)}
        }
        \\
        & \le 2 \sumnl_{s' \in \Sc} p(s') ~ \text{error}(c_t, s', s)
        .
    \end{align*} 
    Conclude using that $\max(u + v) \le \max(u) + \max(v) + 2 c_0$ for $u, v \in \Hc$ such that $u(s) = v(s) = 0$.
\end{proof}

\begin{lemma}
\label{lemma:bias-confidence-region}
    Assume that \cref{assumption:model-confidence-region} holds and that $c_0 \ge \sp{h^*}$.
    Then, with probability $1 - 4\delta$, for all $k \le K(T)$, (1) $\mathfrak{g}_k \ge g^*$ and (2) $h^* \in \Hc_{t_k}$ and (3) for all $(s, a)$, $(\hat{p}_{t_k}(s,a) - p(s,a)) h^* \le \beta_{t_k}(s,a)$.
\end{lemma}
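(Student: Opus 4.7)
The plan is to prove properties (1)--(3) jointly by induction on the episode index $k$, since they are intrinsically coupled: property (1) at episode $k$ follows from \cref{proposition:PMEVI}(2), whose hypotheses include (2) and (3) at the same episode $k$ together with $M \in \Mc_{t_k}$; but the dynamic constraints defining $\Hc_{t_k}$ rely on \cref{lemma:bias-differences-error} applied with the pivot $\tilde g = \min_{k' < k}\mathfrak{g}_{k'}$ (cf.\ \hyperlink{algorithm:bias-estimation}{Algorithm~3}), which is a valid upper bound on $g^*$ only if (1) has held at every earlier episode. This apparent circularity is broken by the observation that property~(1) at $k$ only requires property~(1) at episodes $k' < k$, so induction applies cleanly.

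I would first condition on a single good event $\Ec := \Ec_1 \cap \Ec_2 \cap \Ec_3$, where $\Ec_1 := \{M \in \bigcap_{k} \Mc_{t_k}\}$ is provided by \cref{assumption:model-confidence-region} at cost~$\delta$; $\Ec_2$ is the event that the bound of \cref{lemma:bias-differences-error} holds simultaneously for every ordered pair $(s,s') \in \Sc \times \Sc$ and every $T' \le T$, obtained by union-bounding the already time-uniform form of \cref{lemma:bias-differences-error} over pairs at cost $2\delta$ after a constant rescaling of $\delta$; and $\Ec_3$ is the event that the empirical Bernstein bound of \cref{lemma:empirical-bernstein} applied with $u = h^*$ holds for every $(s,a) \in \Xc$ and every $t \le T$ simultaneously, at cost~$\delta$. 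A union bound yields $\Pr(\Ec) \ge 1 - 4\delta$, matching the claim.

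The induction then runs as follows on $\Ec$. Base case $k = 1$: the dynamic constraints are vacuous (no commutations have yet occurred, so the errors output by \hyperlink{algorithm:bias-estimation}{Algorithm~3} evaluate to $+\infty$), hence $\Hc_{t_1}$ reduces to $\Hc_* \cap \{u: \sp{u} \le c_0\}$, which contains $h^*$ by \cref{assumption:bias-region} and the hypothesis $c_0 \ge \sp{h^*}$, yielding~(2). The reference vector $h_0$ lies in $\Hc_{t_1}$ by construction, so \cref{lemma:variance-approximation} applied to $u = h^*$, $v = h_0$, $p = \hat p_{t_1}(s,a)$ gives $\Var(\hat p_{t_1}(s,a), h^*) \le \text{var}(s,a)$; plugging this into the empirical Bernstein bound from $\Ec_3$ and using $\sp{h^*} \le c_0$ yields~(3); then (1) follows from $M \in \Mc_{t_1}$, (2), (3) and \cref{proposition:PMEVI}(2). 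Inductive step: assuming (1)--(3) hold at every $k' < k$, the inductive~(1) gives $\tilde g = \min_{k' < k}\mathfrak{g}_{k'} \ge g^*$; \cref{lemma:bias-differences-error} on $\Ec_2$ then delivers $|h^*(s) - h^*(s') - c_{t_k}(s,s')| \le \text{error}(c_{t_k},s,s')$ for every $s \ne s'$, so that $h^*$ satisfies the dynamic constraints in addition to the prior and span ones, establishing~(2); the arguments for~(3) and then~(1) repeat verbatim those of the base case.

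The main obstacle is the triangular entanglement of (1)--(3) described in the first paragraph, which is resolved by the induction. A secondary subtlety is that the computable surrogate $\text{var}(s,a)$ upper-bounds the inaccessible $\Var(\hat p_{t_k}(s,a), h^*)$ only once $h^* \in \Hc_{t_k}$ has been established, which forces the three items to be proved in the internal order (2)~$\to$~(3)~$\to$~(1) within each episode of the induction.
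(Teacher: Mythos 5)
Your proposal is correct and follows essentially the same route as the paper's own proof: the same three good events with the same $(1+2+1)\delta$ budget, and the same joint induction on $k$ that breaks the circular dependency $\text{(1 at }k'<k\text{)}\to\text{(2 at }k\text{)}\to\text{(3 at }k\text{)}\to\text{(1 at }k\text{)}$. One small point in your favor: in the base case the paper asserts $\Hc_0 \supseteq \set{\mathfrak{h} : \sp{\mathfrak{h}} \le c_0}$ and concludes $h^*\in\Hc_0$ from the span constraint alone, silently skipping the prior region $\Hc_*$; your base case correctly invokes \cref{assumption:bias-region} to handle the $\Hc_*$ factor, which is the rigorous statement. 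Conversely, your base-case verification of (3) via \cref{lemma:variance-approximation} is more work than needed, since $N_{t_1}(s,a)=0$ forces $\beta_{t_1}(s,a)=+\infty$ directly, which is the shortcut the paper takes.
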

\begin{proof}
    Let $E_1$ the event $(\forall k \le K(T), M \in \Mc_{t_k})$.
    Let $E_2$ the event stating that, for all $T' \le T$,
    \begin{equation*}
        N_{T'} (s \toto s') \abs{
            h^*(s) - h^*(s') - c_{T'} (s, s')
        }
        \le
        3 \sp{h^*} +
        (1 + \sp{h^*}) \sqrt{8 T \log(\tfrac 2\delta)}
        + 2 \sumnl_{t=0}^{T'-1} (\tilde{g} - R_t)
        ,
    \end{equation*}
    and let $E_3$ the event stating that, for all $T' \le T$ and for all $(s,a) \in \Xc$, we have:
    \begin{equation*}
        \parens{\hat{p}_{T'}(s,a) - p(s,a)}h^*
        \le
        \sqrt{
            \tfrac{2 \Var(\hat{p}_{T'}(s,a), h^*) \log\parens{\frac{S A T}\delta}}{N_{T'}(s,a)}
        }
        + 
        \tfrac{3 \sp{h^*} \log\parens{\frac{S A T}\delta}}{N_{T'}(s,a)}
        .
    \end{equation*}
    
    By \cref{lemma:bias-differences-error}, we have $\Pr(E_2) \ge 1 - 2 \delta$ and by \cref{lemma:uniform-bernstein}, we have $\Pr(E_3) \ge 1 - \delta$, so $\Pr(E_1 \cap E_2 \cap E_3) \ge 1 - 4\delta$.
    We prove by induction on $k \le K(T)$ that, on $E_1 \cap E_2$, (1) $\mathfrak{g}_k \ge g^*$, (2) $h^* \in \Hc_{t_k}$ (3) and for all $(s, a)$, $(\hat{p}_{t_k}(s,a) - p(s,a)) h^* \le \beta_{t_k}(s,a)$, where $\mathfrak{g}_k$ is the optimistic gain of the policy deployed at episode $k$. 
    For $k = 0$, this is obvious.
    Indeed, $N_0 (s \toto s') = 0$ for all $s, s'$ hence $c_0(s,s') = c_0 \ge \sp{h^*}$.
    Therefore,
    \begin{equation*}
        \Hc_0 
        \supseteq \set{\mathfrak{h} \in \R^\Sc: \sp{\mathfrak{h}} \le c_0}
        \supseteq \set{\mathfrak{h} \in \R^\Sc: \sp{\mathfrak{h}} \le \sp{h^*}} 
    \end{equation*}
    so contains $h^*$, proving (2).
    Moreover, since $N_0 (s, a) = 0$, we have $\beta_0 (s, a) = +\infty$, proving (3).
    Finally, since $M \in \Mc_0$ on $E_1$, by the statement (2) of \cref{proposition:PMEVI}, we have $\mathfrak{g}_k \ge g^*$, hence proving (1).

    Now assume that $k \ge 1$.
    By induction $\mathfrak{g}_\ell \ge g^*$ for all $\ell < k$, so on $E_2$ we have:
    \begin{equation*}
         N_{t_k} (s \toto s') \abs{
            h^*(s) - h^*(s') - c_{t_k} (s, s')
        }
        \le
        3 \sp{h^*} +
        (1 + \sp{h^*}) \sqrt{8 T \log(\tfrac 2\delta)}
        + 2 \sumnl_{\ell=1}^{k-1} \sumnl_{t=t_\ell}^{t_{\ell+1}-1} (\mathfrak{g}_\ell - R_t)
        .
    \end{equation*}
    By design of $\Hc_{t_k}$ (see \hyperlink{algorithm:bias-estimation}{Algorithm~3}), we deduce that (2) $h^* \in \Hc_{t_k}$. 
    Denote $h_0 \in \Hc_{t_k}$ the reference point used by \hyperlink{algorithm:variance-approximation}{Algorithm~5}.
    We have, for all $(s,a) \in \Xc$, on $E_1 \cap E_2 \cap E_3$, we have:
    \begin{align*}
        \parens{\hat{p}_{t_k}(s,a) - p(s,a)}h^*
        & \le 
        \sqrt{
            \tfrac{2 \Var(\hat{p}_{t_k}(s,a), h^*) \log\parens{\frac{S A T}\delta}}{N_{t_k}(s,a)}
        }
        + 
        \tfrac{3 \sp{h^*} \log\parens{\frac{S A T}\delta}}{N_{t_k}(s,a)}
        \\
        \text{($h^* \in \Hc_{t_k}$ + \cref{lemma:variance-approximation})} & \le 
        \sqrt{
            \tfrac{2 \parens{
                \Var(\hat{p}_{t_k}(s,a), h_0) \log\parens{\frac{S A T}\delta}
                + 
                8 c_0 \sum_{s' \in \Sc} \hat{p}_{t_k}(s'|s,a) ~\text{error}(c_{t_k}, s', s)
            } \log\parens{\frac{S A T}\delta} }{N_{t_k}(s,a)}
        }
        +
        \tfrac{3 c_0 \log\parens{\frac{S A T}\delta}}{N_{t_k}(s,a)}
        \\
        & =:
        \beta_{t_k} (s, a)
    \end{align*}
    by construction of \hyperlink{algorithm:variance-approximation}{Algorithm~5}.
    Accordingly, (3) is satisfied.
    Finally, $M \in \Mc_{t_k}$ on $E_1$ so by \cref{proposition:PMEVI}, we have (1) $\mathfrak{g}_k \ge g^*$. 
\end{proof}

\begin{corollary}
    Assume that, for all $q \in \set{r, p}$ and $(s, a) \in \Xc$, we choose $\Qc_t(s,a)$ among (\textbf{C1-4}).
    Then, with probability $1 - 3\delta$, for all $k \in K(T)$, we have $\mathfrak{g}_k \ge g^*$ and (2) $h^* \in \Hc_{t_k}$ and (3) for all $(s, a)$, $(\hat{p}_{t_k}(s,a) - p(s,a)) h^* \le \beta_{t_k}(s,a)$.
\end{corollary}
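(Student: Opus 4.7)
The plan is to chain the two preceding results through a careful union bound. First I would invoke \cref{lemma:confidence-region}, which guarantees that any combination of reward and kernel confidence regions drawn from the templates (\textbf{C1-4}) produces a model region $\Mc_t := \prod_{s,a}(\Rc_t(s,a) \times \Pc_t(s,a))$ satisfying $\Pr(\exists t \le T : M \notin \Mc_t) \le \delta$. This is exactly \cref{assumption:model-confidence-region}, so the hypothesis needed to invoke \cref{lemma:bias-confidence-region} is supplied at the cost of a single $\delta$.

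Second, I would apply \cref{lemma:bias-confidence-region} to the resulting confidence region; its conclusions (1)--(3) coincide verbatim with those of the corollary. Its auxiliary hypothesis $c_0 \ge \sp{h^*}$ is satisfied in the regime of \cref{theorem:main} since $c_0 = T^{1/5} \ge c \ge \sp{h^*}$ as soon as $T \ge c^5$. Inspection of that lemma's proof shows that the failure event decomposes as $E_1^c \cup E_2^c \cup E_3^c$, where $E_1$ is the model-correctness event supplied by the first step (failure $\le \delta$), $E_2$ is the uniform validity of the bias-difference estimator from \cref{lemma:bias-differences-error} (failure $\le 2\delta$), and $E_3$ is the uniform empirical Bernstein bound on $(\hat p_t(s,a) - p(s,a)) h^*$ (failure $\le \delta$). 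No new concentration step has to be carried out.

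The main obstacle is a bookkeeping one: the naive union yields failure probability $4\delta$, whereas the corollary announces $3\delta$. I would recover the sharper figure by observing that under (\textbf{C1-4}) the event $E_3$ is redundant given $E_1$. Indeed, membership $p(s,a) \in \Pc_t(s,a)$ already entails a Bernstein-type deviation on $(\hat p_t(s,a) - p(s,a))h^*$ of the same functional form as the one defining $E_3$: directly for the empirical Bernstein tube (\textbf{C2}), and through Pinsker or Cauchy-Schwarz comparisons for (\textbf{C1}) and (\textbf{C3}); for (\textbf{C4}) the Bernstein bound is used as a separate martingale inequality in the same way as in \cref{lemma:bias-confidence-region}, but then the model region itself costs no probability. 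In all four cases the two $\delta$-budgets associated with $E_1$ and $E_3$ can be merged into a single one, and the union bound collapses to $\delta + 2\delta = 3\delta$, which is the announced probability.
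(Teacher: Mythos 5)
Your first two steps are exactly the paper's (very short) proof: invoke \cref{lemma:confidence-region} to establish \cref{assumption:model-confidence-region}, then apply \cref{lemma:bias-confidence-region}. The hypothesis $c_0 \ge \sp{h^*}$ is indeed handled as you say by the regime $T \ge c^5$. So on the substance, you have identified the correct route.

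The trouble is entirely in the third paragraph, where you try to reconcile the corollary's stated $1-3\delta$ with the $1-4\delta$ of \cref{lemma:bias-confidence-region}. The claim that $E_3$ is ``redundant given $E_1$'' does not hold. The event $E_3$ asserts a \emph{variance-aware} Bernstein bound on the scalar $(\hat p_{T'}(s,a) - p(s,a)) h^*$ of the form $\sqrt{2\Var(\hat p, h^*)\log(\cdot)/N} + 3\sp{h^*}\log(\cdot)/N$. Membership $p(s,a)\in\Pc_t(s,a)$ under (\textbf{C1}) only gives $\|\hat p - p\|_1 \lesssim \sqrt{S\log(\cdot)/N}$, which via Hölder yields $|(\hat p - p)h^*| \lesssim \sp{h^*}\sqrt{S\log(\cdot)/N}$ --- off by a factor of $\sqrt{S}$ and not variance-aware, so of the wrong form and the wrong rate. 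Under (\textbf{C2}) the confidence region bounds each coordinate $|\hat p(i|s,a) - p(i|s,a)|$ separately; summing these coordinate-wise bounds against $|h^*(i)|$ again does not reproduce the aggregate Bernstein inequality on the inner product, and similarly for (\textbf{C3}). For (\textbf{C4}) the region is trivial and carries no information at all. So $E_3$ genuinely requires its own application of \cref{lemma:uniform-bernstein} and its own $\delta$-budget, as the paper's proof of \cref{lemma:bias-confidence-region} does. The honest conclusion of the corollary, reached by applying the lemma as both you and the paper do, is $1-4\delta$; the $1-3\delta$ in the statement appears to be a typo inherited from an imprecise reading of whether the lemma already charges $\delta$ for $E_1$. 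You should not try to rescue the stated constant with a redundancy argument that does not go through.
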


\begin{proof}
    By \cref{lemma:confidence-region}, \cref{assumption:model-confidence-region} is satisfied.
    Apply \cref{lemma:bias-confidence-region}.
\end{proof}

\subsubsection{Sub-Weissman reward confidence region and \cref{assumption:sub-weissman}}
\label{section:sub-weissman}

Although the kernel confidence region can even chosen to be trivial with (\textbf{C4}), in order to work, \texttt{PMEVI-DT} needs the reward confidence region to be sub-Weissman in the following sense:

\paragraph{Assumption~2.}
    There exists a constant $C > 0$ such that for all $(s, a) \in \Sc$, for all $t \le T$, we have:
    \begin{equation*}
        \Rc_t (s,a) 
        \subseteq 
        \set{
            \tilde{r}(s,a) \in \Rc(s,a):
            N_t (s,a) \norm{\hat{r}_t(s,a) - \tilde{r}(s,a)}_1^2
            \le 
            C \log\parens{\tfrac{2 S A (1+N_t(s,a))}\delta}
        }
        .
    \end{equation*}

This is indeed the case if $\Rc_t (s,a)$ is chosen among (\textbf{C1-3}).

\subsection{Convergence of \texttt{EVI} and \cref{assumption:evi-convergence}}
\label{section:evi-convergence}

We start with a preliminary lemma on the speed of convergence of \texttt{EVI}.
The \cref{lemma:evi-iterations-bound} is thought to be applied to extended MDPs.
Below, when we claim that the action space is compact, we further claim that $a \in \Ac(s) \mapsto p(s, a)$ is a continuous map, so that the Bellman operator is continuous and that $g^*$ and $h^*$ are well-defined, see \cite{puterman_markov_1994}.

\begin{lemma}
\label{lemma:evi-iterations-bound}
    Let $M$ a weakly-communicating MDP with finite state space $\R^\Sc$ and compact action space, and let $L$ its Bellman operator.
    Assume that there exists $\gamma > 0$ such that, $\forall u \in \R^\Sc$,
    \begin{equation}
    \tag{$*$}
        \forall s \in \Sc, \exists a \in \Ac(s),
        \quad
        Lu(s) = r(s,a) + p(s,a)u = r(s,a) + \gamma \max(u) + (1 - \gamma) q_s^u u
    \end{equation}
    with $q_s^u \in \Pc(\Sc)$.
    Then, for all $u \in \R^\Sc$ and all $\epsilon > 0$, if $\sp{L^{n+1} u - L^n u} \ge \epsilon$, then:
    \begin{equation*}
        n \le 2 + \frac{4\sp{w_0}}{\gamma \epsilon} + \frac{2}{\gamma} \log\parens{\frac{2\sp{w_0}}{\epsilon}}
        .
    \end{equation*}
\end{lemma}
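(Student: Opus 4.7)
The plan is to show that under $(*)$ the operator $L$ is a \emph{span contraction} with modulus $1-\gamma$, and then invert the resulting geometric decay of $\sp{L^{n+1} u - L^n u}$ to bound $n$. The quantity $w_0$ should be the initial vector $u$ (or some closely related object arising in the proof, e.g.\ $L u - u$), in terms of which the initial span is controlled.

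First I would establish the key contraction estimate: for all $u, v \in \R^\Sc$,
\begin{equation*}
    \sp{Lu - Lv} \le (1-\gamma) \sp{u - v}.
\end{equation*}
Let $s^+$ and $s^-$ attain the maximum and minimum of the vector $Lu - Lv$, and use compactness to pick $a^+ \in \Ac(s^+)$ realizing $Lu(s^+)$ and $a^- \in \Ac(s^-)$ realizing $Lv(s^-)$ in the form prescribed by $(*)$, so that $p(s^+, a^+) u = \gamma \max(u) + (1-\gamma) q_{s^+}^u u$ and $p(s^-, a^-) v = \gamma \max(v) + (1-\gamma) q_{s^-}^v v$. Since $Lv(s^+) \ge r(s^+, a^+) + p(s^+, a^+) v$ (suboptimality of $a^+$ for $Lv$) and $Lu(s^-) \ge r(s^-, a^-) + p(s^-, a^-) u$, subtraction yields
\begin{equation*}
    (Lu - Lv)(s^+) - (Lu - Lv)(s^-) \le (1-\gamma)\,(q_{s^+}^u - q_{s^-}^v) \cdot (u - v),
\end{equation*}
where the $\gamma(\max u - \max v)$ contributions cancel. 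Since $q_{s^+}^u - q_{s^-}^v$ is a zero-sum vector whose positive part has $\ell_1$-norm at most $1$, one has $(q_{s^+}^u - q_{s^-}^v) \cdot (u - v) \le \sp{u - v}$, which closes the contraction bound.

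Next I would iterate: setting $v := L^{n-1} u$ and applying the contraction, an immediate induction gives $\sp{L^{n+1} u - L^n u} \le (1-\gamma)^n \sp{L u - u}$. To make this effective, bound $\sp{L u - u}$ via $(*)$: writing $L u = r(\cdot, a) + \gamma \max(u) e + (1-\gamma) Q^u u$ for some action choice and some row-stochastic $Q^u$, the constant term $\gamma \max(u) e$ contributes nothing to the span, so $\sp{L u - u} \le \sp{r} + (2-\gamma) \sp{u} = O(1 + \sp{u})$. Then from $\sp{L^{n+1}u - L^n u} \ge \epsilon$ one obtains $n \le \frac{1}{-\log(1-\gamma)} \log\parens{\sp{L u - u}/\epsilon} \le \frac{1}{\gamma} \log\parens{O(1+\sp{u})/\epsilon}$, which fits within the logarithmic part of the claimed bound.

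The main obstacle is reconciling this pure geometric bound with the $\frac{4 \sp{w_0}}{\gamma \epsilon}$ linear term in the lemma's statement: a clean geometric contraction would have made that term unnecessary. My expectation is that the authors avoid invoking span contraction in its sharp form and instead argue via an \emph{amortized potential}: each iteration with $\sp{L^{n+1}u - L^n u} \ge \epsilon$ must decrease some non-negative potential (e.g.\ $\sp{L^n u - \mathfrak{h}^*}$ or $\max(L^n u) - \min(L^n u)$, normalized by the gain) by at least $\gamma\epsilon$, which produces the linear $\sp{w_0}/(\gamma \epsilon)$ term during an initial phase; once the potential becomes smaller than $\sp{w_0}$, the geometric regime takes over and contributes the logarithmic term. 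The compact (non-finite) action space is not a real issue because $(*)$ is assumed to hold with an explicit maximizer, but a careful writeup should note that the selection of $a^+$ and $a^-$ is measurable/well-defined in that setting.
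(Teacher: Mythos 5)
Your first approach—establishing $\sp{Lu - Lv} \le (1-\gamma)\sp{u-v}$—does not work, and the bug is exactly in the line where you claim "the $\gamma(\max u - \max v)$ contributions cancel." Under $(*)$ the kernel $p(s,a^+)$ chosen greedily for $u$ places mass $\gamma$ on $\argmax(u)$, so $p(s^+,a^+)\,v = \gamma\, v(\argmax u) + (1-\gamma)\,q_{s^+}^u v$, \emph{not} $\gamma\,\max(v) + (1-\gamma)\,q_{s^+}^u v$. Likewise the lower bound uses a kernel with mass $\gamma$ on $\argmax(v)$. Carrying this through, the best you get is
$\sp{Lu - Lv} \le \gamma\bigl[(u-v)(\argmax u) - (u-v)(\argmax v)\bigr] + (1-\gamma)\sp{u-v}$,
and the bracket can equal $\sp{u-v}$ (e.g.\ $u = (1,0)$, $v = (0,1)$), so only span non-expansiveness follows, not contraction. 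A genuine contraction is precisely what the lemma cannot assume, which is why the linear $\frac{4\sp{w_0}}{\gamma\epsilon}$ term appears—you correctly sensed this tension in your "main obstacle" paragraph.

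Your fallback guess about an amortized potential is essentially what the paper does, though you left it unrealized. The paper sets $w_n := L^n u - n g^* e - h^*$ (so $\sp{w_n} = \sp{L^n u - h^*}$) and derives the two-sided recursion $w_{n+1} = -\Delta_{\pi_n} + P_{\pi_n} w_n$ where $\pi_n$ is the greedy policy and $\Delta_{\pi_n}(s) := \Delta^*(s,\pi_n(s)) \ge 0$. Condition $(*)$ gives $P_{\pi_n} = \gamma\, e\, e_{s_n}^\top + (1-\gamma) Q_n$, hence
$\sp{w_{n+1}} \le \min\bigl((1-\gamma)\sp{w_n} + \sp{\Delta_{\pi_n}},\ \sp{w_n}\bigr)$,
so the contraction is only available modulo the slack $\sp{\Delta_{\pi_n}}$. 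The budget on that slack comes from $w_n \ge P_{\pi^*}^n w_0 \ge \min(w_0)\,e$, which yields $\sum_k \sp{\Delta_{\pi_k}} \le \sp{w_0}$. Iterations where $\sp{\Delta_{\pi_n}} \ge \tfrac{1}{2}\gamma\epsilon$ ("bad" steps) are therefore at most $\tfrac{2\sp{w_0}}{\gamma\epsilon}$; on the remaining steps $\sp{w_{n+1}} \le (1 - \tfrac{1}{2}\gamma)\max(\epsilon,\sp{w_n})$, which supplies the $\tfrac{2}{\gamma}\log(\cdot)$ term, and finally $\sp{L^{n+1}u - L^n u} = \sp{w_{n+1}-w_n} \le 2\sp{w_n}$ converts back to the claimed stopping criterion. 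So: drop the contraction lemma, identify $w_0 = u - h^*$ (up to a constant shift), and replace the geometric decay by this budgeted decay argument.
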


\begin{proof}
    Since $M$ is weakly communicating, has finitely many states and compact action space, it has well-defined gain $g^*$ and bias $h^*$ functions.
    Denote $u_{n+1} := L^n u$.
    \begin{align*}
        w_n 
        & := \max_{\pi \in \Pi} \set{r_\pi + P_\pi u_{n-1}} - ng^* - h^*
        \\
        & = \max_{\pi \in \Pi} \set{
            r_\pi - g^* + (P_\pi - I) h^* 
            + 
            P_\pi \parens{u_{n-1} - h^* - (n-1) g^*}
        }
        =: \max_{\pi \in \Pi} \set{r'_\pi +  P_\pi w_{n-1}}
        .
    \end{align*}
    Observe that the policy achieving the maximum is the one achieving $u_n = r_\pi + P_\pi u_{n-1}$.
    Remark that $r'_\pi(s) = - \Delta^*(s, \pi(s)) \le 0$ is the Bellman gap of the pair $(s, \pi(s))$, that we more simply write $\Delta_{\pi}$.
    For all $n$, there exists $\pi_n \in \Pi$ such that $w_{n+1} = - \Delta_{\pi_n} + P_{\pi_n} w_n$.
    Moreover, by assumption, we have $P_{\pi_n} = \gamma \cdot e_{s_n}^\top e + (1 - \gamma) Q_n$ where $Q_n$ is a stochastic matrix.
    Moreover,
    \begin{equation*}
        \parens{\min(-\Delta_{\pi_n}) + \gamma w_{n}(s_n)} e + (1 - \gamma) Q_n w_n
        \le
        w_{n+1}
        \le 
        \parens{\max(-\Delta_{\pi_n}) + \gamma w_{n}(s_n)} e + (1 - \gamma) Q_n w_n
        .
    \end{equation*}
    Hence, $\sp{w_{n+1}} \le (1 - \gamma) \sp{w_n} + \sp{\Delta_{\pi_n}}$.
    In addition, $w_n = L^n u - L^n h^*$, so by non-expansiveness of $L$ in span semi-norm, $\sp{w_{n+1}} \le \sp{w_n}$. 
    Overall,
    \begin{equation}
    \label{equation:span-w}
        \sp{w_{n+1}} \le \min \parens{
            (1 - \gamma) \sp{w_n} + \sp{\Delta_{\pi_n}},
            \sp{w_n}
        }.
    \end{equation}
    Fix $\epsilon > 0$, and let $n_\epsilon := \inf \set{n:\sp{w_n} < \epsilon}$.

    Let $\pi^*$ an optimal policy.
    We have $w_{n+1} \ge P_{\pi^*} w_n$ so by induction, $w_{n+1} \ge P_{\pi^*}^{n+1} w_0 \ge \min(w_0) e$.
    Meanwhile, we see that $\norm{w_n}_1 \ge \sum_{k=0}^{n-1} \norm{\Delta_{\pi_k}}_1 + S \min(w_0)$, so $\sum_{k=0}^{n-1} \norm{\Delta_{\pi_k}}_1 \le \sp{w_0}$.
    Since $\Delta_{\pi_k} \le 0$ for all $k$, we have $\sp{\Delta_{\pi_k}} \le \norm{\Delta_{\pi_k}}_1$ so $\sum_{k=0}^{n-1} \sp{\Delta_{\pi_k}} \le \sp{w_0}$.

    By \eqref{equation:span-w}, either $\sp{w_{n+1}} \le (1 - \tfrac 12 \gamma) \max(\epsilon, \sp{w_n})$ or $\sp{\Delta_{\pi_n}} \ge \tfrac 12 \gamma \epsilon$, but because $\sum_{k=0}^{+\infty}\sp{\Delta_{\pi_k}} \le \sp{w_0}$, the second case can happen at most $\tfrac {2\sp{w_0}}{\gamma \epsilon}$ times.
    We deduce that, for all $n \le n_\epsilon$,
    \begin{equation*}
        \sp{w_{n+1}} \le \parens{1 - \tfrac 12\gamma}^{n - \frac{2\sp{w_0}}{\gamma \epsilon}} \sp{w_0}.
    \end{equation*}
    In particular, for $n = n_\epsilon - 1$, we get:
    \begin{equation*}
        \epsilon \le 
        \parens{1 - \tfrac 12\gamma}^{n_\epsilon - 2 - \frac{2\sp{w_0}}{\gamma \epsilon}} \sp{w_0}
        .
    \end{equation*}
    We obtain:
    \begin{equation*}
        n_\epsilon \le
        2 
        + \frac{2 \sp{w_0}}{\gamma \epsilon}
        + \frac 2\gamma \log\parens{\frac {\sp{w_0}}\epsilon}
        .
    \end{equation*}
    To conclude, check that $\sp{L^{n+1} u - L^n u} = \sp{w_{n+1} - w_n} \le 2 \sp{w_n}$.
\end{proof}

Before moving to the application of interest, remark that this result can be greatly improved if the supremum $\sup \set{\Delta^*(s,a) : \Delta^*(s,a) < 0}$ is not zero, to change the dominant term $\frac{4 \sp{w_0}}{\gamma \epsilon}$ for a constant independent of $\epsilon$.

\begin{corollary}
\label{corollary:evi-convergence}
    Assume that the $\Mc_t$ has non-empty interior, and that its Bellman operator satisfies the requirement of \cref{lemma:evi-iterations-bound}, i.e., there exists $\gamma > 0$ such that, $\forall u \in \R^\Sc, \forall s \in \Sc, \exists a \in \Ac(s), \exists \tilde{r}_t(s,a) \in \Rc_t(s,a), \exists \tilde{p}_t(s,a) \in \Pc_t(s,a)$:
    \begin{equation*}
        \Lc_t u(s) = \tilde{r}_t(s,a) + \tilde{p}_t(s,a)u = \tilde{r}_t(s,a) + \gamma \max(u) + (1 - \gamma) q_s^u u
    \end{equation*}
    for some $q_s^u \in \Pc(\Sc)$.
    Then \cref{assumption:evi-convergence} is satisfied, and span fix-points $\tilde{h}_t$ of $\Lc_t$ are such that $g^*(\Mc_t) = \Lc_t \tilde{h}_t - \tilde{h}_t$.
\end{corollary}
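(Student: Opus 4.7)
The plan is to identify $\Mc_t$ with a single compact-action MDP $\widetilde M_t$ whose Bellman operator is exactly $\Lc_t$, and then combine \cref{lemma:evi-iterations-bound} with textbook average-reward theory \cite{puterman_markov_1994}. Concretely, $\widetilde M_t$ has state space $\Sc$ and, at each $s \in \Sc$, action set $\tilde\Ac(s) := \bigcup_{a \in \Ac(s)} \{a\} \times \Rc_t(s,a) \times \Pc_t(s,a)$, which is compact since each factor is closed and bounded; reward and transition are the natural projections. The $(s,a)$-rectangular convex structure asked for by \cref{assumption:evi-convergence} is built in by construction of $\Mc_t$, so only the convergence claim and the identification of the gain have to be proved.

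The first step is to argue that $\widetilde M_t$ is weakly-communicating. The non-empty interior of $\Mc_t$ (in the relative interior of $\prod_{s,a}\Pc(\Sc)$) ensures that $\Mc_t$ contains a transition kernel with all strictly positive entries. The policy in $\widetilde M_t$ that deterministically selects this kernel at every state induces an irreducible Markov chain on $\Sc$, so $\widetilde M_t$ is communicating, a fortiori weakly-communicating. Standard average-reward theory (\cite{puterman_markov_1994}, Chap.~9) then guarantees a well-defined optimal gain $\tilde g^*_t$ and a span fix-point $\tilde h^*_t$ with $\Lc_t \tilde h^*_t - \tilde h^*_t = \tilde g^*_t e$, and by the definition of $g^*(\Mc_t)$ in~\eqref{equation:optimistic-gain} we have $\tilde g^*_t = g^*(\Mc_t)$. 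Since the scalar in the average-reward Bellman equation is unique for weakly-communicating MDPs, every span fix-point $\tilde h_t$ of $\Lc_t$ satisfies $\Lc_t \tilde h_t - \tilde h_t = g^*(\Mc_t) e$, establishing the second claim.

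It remains to show that $\Lc_t^n u$ converges to a span fix-point. I would apply \cref{lemma:evi-iterations-bound} directly to $\widetilde M_t$: its required condition $(*)$ is precisely the hypothesis on $\Lc_t$ placed in the corollary. The lemma yields $\sp{\Lc_t^{n+1} u - \Lc_t^n u} \to 0$ at an explicit rate. Unwinding its proof, the translated sequence $w_n := \Lc_t^n u - n\tilde g^*_t e - \tilde h^*_t$ satisfies $\sp{w_n} \to 0$; combined with the non-expansiveness of $\Lc_t$ in span semi-norm and the summability $\sum_k \sp{\Delta_{\pi_k}} < \infty$ extracted in that proof, the translated iterates $\Lc_t^n u - n\tilde g^*_t e$ are Cauchy modulo an additive constant and hence converge to a span fix-point of $\Lc_t$. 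The main obstacle I foresee is exactly this passage---turning vanishing span of consecutive differences into genuine convergence of translated iterates---but this is already essentially handled by the Schweitzer-type aperiodicity that hypothesis~$(*)$ imposes on $\Lc_t$, so the corollary follows directly from \cref{lemma:evi-iterations-bound} and textbook theory.
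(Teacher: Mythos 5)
Your proof is correct and follows essentially the same three-step skeleton as the paper's: non-empty interior of $\Mc_t$ gives a fully supported kernel, hence a communicating extended MDP, hence existence of span fix-points and a state-independent gain; then you identify that gain with $g^*(\Mc_t)$; and finally you invoke \cref{lemma:evi-iterations-bound} for the convergence of \texttt{EVI}. The two proofs differ in how much weight they put on the middle and last steps, and it is worth flagging those differences.

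On the gain identification, you assert $\tilde g^*_t = g^*(\Mc_t)$ directly from the definitions plus textbook theory, but these two objects are defined in different forms: $\tilde g^*_t$ is the gain of the extended MDP $\widetilde M_t$ (a sup over extended policies), while $g^*(\Mc_t)$ in \eqref{equation:optimistic-gain} is a sup over pairs $(\pi,\widetilde M)$ subject to the side constraint $\sp{g^\pi(\Mc_t)}=0$, and then $g^\pi(\Mc_t)$ is itself a pointwise sup over models. The paper closes that gap explicitly: starting from $\tilde r_\pi + \widetilde P_\pi \tilde h_t \le \Lc_t \tilde h_t = \tilde h_t + \tilde g_t e$ it inducts on $n$ to get $\sum_{k=0}^{n}\widetilde P_\pi^k \tilde r_\pi \le n\tilde g_t e + (I - \widetilde P_\pi^n)\tilde h_t$, divides by $n$, and concludes $g(\pi,\widetilde M)\le\tilde g_t$ for every admissible pair, with equality achieved by the maximizer in $\Lc_t\tilde h_t$. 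Your appeal to Chap.~9 of Puterman is not wrong — the identification is the standard one — but it does skip the reconciliation of the two definitions, which is the one thing that genuinely needs an argument here.

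On the convergence, your account is actually sharper than the paper's. The paper is content to observe that \cref{lemma:evi-iterations-bound} bounds the time until $\sp{\Lc_t^{n+1}u-\Lc_t^n u}<\epsilon$, hence those span differences vanish. You instead unwind the lemma's proof to note that $\sp{w_n}$ with $w_n := \Lc_t^n u - n\tilde g_t e - \tilde h_t$ is non-increasing and eventually below any $\epsilon$, so $\sp{w_n}\to 0$, i.e.\ the translated iterates $\Lc_t^n u - n\tilde g_t e$ converge to $\tilde h_t$ in the span semi-norm. That is a stronger statement and is the cleaner way to read ``$\Lc_t^n u$ converges to a fix-point'' in \cref{assumption:evi-convergence}; one can even drop your auxiliary Cauchy remark since $\sp{w_n}\to 0$ already gives the convergence directly. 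So: same route overall, a slightly under-justified middle step, and a slightly tighter final step.
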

\begin{proof}
    If $\Mc_t$ is has non-empty interior, it means that for all $(s, a)$, $\Pc_t(s,a)$ has non-empty interior. 
    Therefore, for all state-action pair, there exists $\tilde{p}_t(s,a) \in \Pc_t(s,a)$ that is fully supported.
    It follows that $\Mc_t$ is communicating, and it follows from standard results \cite{puterman_markov_1994} that its span fix-points $\tilde{h}$ do exist and that $\tilde{g}_t := \Lc\tilde{h}_t - \tilde{h}_t \in \R e$ does not depend on the initial state.
    
    Moreover, if $\widetilde{M} \in \Mc_t$ and $\pi \in \Pi$ with $\tilde{g}_\pi \equiv g(\pi, \Mc_t) \in \R e$, letting $\tilde{r}_\pi := r_\pi(\tilde{M})$ and $\tilde{P}_\pi := P_\pi(\tilde{M})$, we have:
    \begin{equation*}
        \tilde{r}_\pi + \tilde{p}_\pi \tilde{h}_t \le \Lc_t \tilde{h}_t \le \tilde{g}_t e + \tilde{h}_t
        .
    \end{equation*}
    So by induction and since $\Lc_t$ is obviously monotone and linear, we show that:
    \begin{equation*}
        \sum_{k=0}^n \tilde{P}_\pi^k \tilde{r}_\pi
        \le
        n \tilde{g}_t e + (I - \tilde{P}_\pi^n) \tilde{h}_\pi
        .
    \end{equation*}
    Dividing by $n$ and letting it go to infinity, we obtain $g(\pi, \Mc_t) \le \tilde{g}_t$.
    Observe that we have equility by taking the policy achieving $(\tilde{g}_t, \tilde{h}_t)$.

    To see that \texttt{EVI} converges indeed, simply observe that \cref{lemma:evi-iterations-bound} provides a finite bound on how much time is required until the $\sp{\Lc_t^{n+1} u - \Lc_t^n u} \le \epsilon$.
    Hence $\sp{\Lc_t^{n+1} u - \Lc_t^n u}$ vanishes to $0$.
\end{proof}

\paragraph{About \cref{assumption:evi-convergence}.}
The assumptions made by \cref{corollary:evi-convergence} are met if the kernel confidence regions are:
\begin{itemize}
    \item Built out of Weissman's inequality (\textbf{C1}) (see the next section, also \cite{auer_near-optimal_2009});
    \item Built out of Bernstein's inequality (\textbf{C2}) (because the maximization algorithm to compute $\tilde{p}_t(s,a) u_i$ in \texttt{EVI} has the same greedy properties than with Weissman's inequality);
    \item Trivial (\textbf{C4}) obviously.
\end{itemize}
For confidence regions build with empirical likelihood estimates (\textbf{C3}), there is no guarantee of convergence (although we conjecture that one could be established), although the gain is still well-defined because $\Mc_t$ remains communicating.
However, just like the original work of \cite{filippi_optimism_2010}, the convergence is always met numerically.

\subsection{Proof of \cref{theorem:main}: Complexity of \texttt{PMEVI} with Weissman confidence regions}

In this section, we show that when one is using Weissman confidence regions for kernels (\textbf{C1}), then the iterates of $\Lc_t$ converge to an $\epsilon$ span-fix-point quickly.

\begin{proposition}
\label{proposition:PMEVI-complexity}
    Assume that \texttt{\upshape PMEVI-DT} uses kernel confidence regions of Weissman type (\textbf{C1}) satisfying \cref{assumption:model-confidence-region}.
    Then with probability $1 - \delta$, the number of iterations of \texttt{\upshape PMEVI} (see \hyperlink{algorithm:PMEVI}{Algorithm~2}) is $\OH\parens{D \!\!\sqrt{S} A T}$, hence the algorithm has polynomial per-step amortized complexity.
\end{proposition}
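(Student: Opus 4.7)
The plan is to combine three ingredients: property~(4) of \cref{proposition:PMEVI}, the iteration bound of \cref{lemma:evi-iterations-bound} applied to the Weissman extended Bellman operator, and a summation over episodes exploiting the doubling trick.

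First, property~(4) of \cref{proposition:PMEVI} asserts that $\sp{\mathfrak{L}_{t_k}^{n+1} v_0 - \mathfrak{L}_{t_k}^n v_0} \le \sp{\mathcal{L}_{t_k}^{n+1} v_0 - \mathcal{L}_{t_k}^n v_0}$ for every $n$, so it suffices to bound the number of iterations until $\sp{\mathcal{L}_{t_k}^{n+1} v_0 - \mathcal{L}_{t_k}^n v_0} < \epsilon$ with the precision $\epsilon := \!\!\sqrt{\log(t_k)/t_k}$ used by \texttt{PMEVI}. Then I would verify the attraction condition~$(\ast)$ of \cref{lemma:evi-iterations-bound} for the Weissman extended operator. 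The greedy maximizer $\tilde{p}_t(s,a)$ used in $\mathcal{L}_t u(s)$ is obtained, as in the standard analysis of \texttt{UCRL2}, by shifting up to $\epsilon_t(s,a)/2$ probability mass onto some $s^+ \in \argmax u$, where $\epsilon_t(s,a) := \!\!\sqrt{S\log(2SA(1 + N_t(s,a))/\delta)/N_t(s,a)}$ is the Weissman radius. Consequently, $\tilde{p}_t(s,a) = \gamma \, \delta_{s^+} + (1 - \gamma) q_s^u$ with
\begin{equation*}
    \gamma \;\ge\; \gamma_t \;:=\; \tfrac{1}{2} \min\parens{1,\; \!\!\sqrt{S\log(2SAT/\delta)/t}}
\end{equation*}
since $N_t(s,a) \le t$.

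Next, initializing $v_0 := 0$, the reference span satisfies $\sp{w_0} \le \sp{h^*(\mathcal{M}_{t_k})} \le D$: on the event of \cref{assumption:model-confidence-region} (probability $\ge 1 - \delta$), $M \in \mathcal{M}_{t_k}$, so the extended MDP has diameter at most $D$ and the standard bound $\sp{h^*} \le D$ applies \cite{puterman_markov_1994}. Plugging these estimates into \cref{lemma:evi-iterations-bound} yields, for episode $k$,
\begin{equation*}
    n_k \;\le\; 2 + \frac{4D}{\gamma_{t_k}\epsilon} + \frac{2}{\gamma_{t_k}}\log\parens{\tfrac{2D}{\epsilon}}
    \;=\; \widetilde{\OH}\parens{D\, t_k / \!\!\sqrt{S}}
\end{equation*}
because $\gamma_{t_k}\epsilon \ge \widetilde{\Omega}(\!\!\sqrt{S}/t_k)$. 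Under the doubling trick, $K(T) = \OH(SA \log T)$, so
\begin{equation*}
    \sumnl_{k=1}^{K(T)} n_k \;\le\; K(T) \cdot \widetilde{\OH}(DT/\!\!\sqrt{S}) \;=\; \widetilde{\OH}(D\!\!\sqrt{S}A T).
\end{equation*}
Since each iteration of $\mathfrak{L}_t$ costs $\mathrm{poly}(S, A)$ time (Weissman greedy maximization plus the linear programs of \hyperlink{algorithm:bias-projection}{Algorithm~4}), dividing by $T$ yields polynomial per-step amortized complexity.

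The main obstacle is the quantitative lower bound on $\gamma_t$ in the second step: one must inspect the explicit greedy construction of the optimistic Weissman transition and confirm that, even for the most visited state-action pairs, the attraction coefficient decays no faster than $\!\!\sqrt{S/t}$. Once this bound is pinned down, the remainder is a mechanical chain of substitutions into \cref{lemma:evi-iterations-bound}.
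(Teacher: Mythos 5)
Your proof is correct and takes essentially the same route as the paper: verify the attraction condition $(\ast)$ of \cref{lemma:evi-iterations-bound} for the Weissman extended operator (noting that the $\ell_1$-radius forces at least $\Omega(\!\!\sqrt{S\log(\cdot)/N_t(s,a)})$ mass onto the argmax state, hence $\gamma = \Omega(\!\!\sqrt{S\log(T/\delta)/T})$), bound $\sp{w_0}$ by $D$ on the good event of \cref{assumption:model-confidence-region}, plug into \cref{lemma:evi-iterations-bound}, and multiply by $K(T)=\OH(SA\log T)$. You make explicit the transfer from $\Lfk_t$ to $\Lc_t$ via property (4) of \cref{proposition:PMEVI}, which the paper leaves implicit — a small improvement in rigor. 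One minor bookkeeping note: you write the final bound as $\widetilde{\OH}(D\!\!\sqrt{S}AT)$, but if you track the logarithms as the paper does, the $\log T$ in $K(T)$ cancels the $\log(T/\delta)$ inside $\gamma\epsilon$, yielding the clean $\OH(D\!\!\sqrt{S}AT)$ claimed in the statement.
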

\begin{proof}
    With Weissman type confidence regions for kernels, for all $t \le T$ and $(s,a) \in \Xc$, we have
    \begin{equation*}
        \Pc_t(s,a) \supseteq \set{
            \tilde{p}(s,a) \in \Pc(s,a):
            \norm{\tilde{p}(s,a) - \hat{p}_t(s,a)}_1 
            \le \sqrt{
                \frac{S \log(2SAT)}T
            }
        }
    \end{equation*}
    It follows that, for all $t \le T$, the extended Bellman operator $\Lc_t$ satisfies the prerequisite $(*)$ of \cref{lemma:evi-iterations-bound} with
    \begin{equation*}
        \gamma = \frac 12 \sqrt{
            \frac{S \log(2SAT/\delta)}T
        }
        = \Omega\parens{\!\!\sqrt{\frac{S\log(T/\delta)}T}}
        .
    \end{equation*}
    Under \cref{assumption:model-confidence-region}, we have $M \in \Mc_t$ with probability $1 - \delta$.
    Under this event, $\Mc_t$ is weakly communicating and $\sp{h^*(\Mc_t)} \le D(M)$, we can apply \cref{lemma:evi-iterations-bound} and conclude that every calls to \texttt{PMEVI} (\hyperlink{algorithm:PMEVI}{Algorithm~2}) takes
    \begin{equation*}
        \OH\parens{\frac{\sp{w_0} \sqrt{T}}{\epsilon \sqrt{\frac{S\log(T/\delta)}T}}}
        =
        \OH\parens{
            \frac{
                D T
            }{
                \sqrt{S} \log(T)
            }
        }
    \end{equation*}
    where we use that $\epsilon = \sqrt{\tfrac{\log(SAT/\delta)}{T}}$, that $\sp{w_0} = \OH\parens{\sp{h^*(\Mc_t)}} = \OH(D(M))$ and that $\delta \ge \frac 1T$.
    Since the number of episodes under the doubling trick (DT) is $\OH(SA\log(T))$, we conclude accordingly.
\end{proof}

Every call to the projection operator solves a linear program. 
Although in theory, this time is polynomial (relying on recent work on the complexity of LP such as \cite{cohen2020solving}, it is the current matrix multiplication time $\OH(S^{2.38})$), in practice, reducing the number of calls to the projection operator is key to run \texttt{PMEVI-DT} in reasonable time.

\newpage
\section{Analysis of the projected mitigated Bellman operator}

In this section, we fix the model region $\Mc$, the bias region $\Hc$ and the mitigation vector $\beta$, dropping the sub-script $t$ for conciseness.
We denote $\hat{r}, \hat{p}$ the respective empirical reward and kernel.
Further assume that $\Hc = \Hc_0 + \R e$ with $\Hc_0$ a compact convex set.
The associated projection operation (see \cref{section:projection-operation}) is denoted $\Gamma$.
The (vanilla) extended Bellman operator $\Lc$ associated to $\Mc$ is given by $\Lc u(s) := \max_{a \in \Ac(s)} \set{\sup \Rc(s,a) + \sup \Pc(s,a) u}$.
The \emph{$\beta$-mitigated extended Bellman operator} associated to $\Mc$ is:
\begin{equation}
    \Lc^\beta u(s)
    :=
    \max_{a \in \Ac(s)} \sup_{\tilde{r}(s,a) \in \Rc(s,a)} \sup_{\tilde{p}(s,a) \in \Pc(s,a)} \Big\{
        \tilde{r}(s,a) + \min \set{
            \tilde{p}(s,a) u_i,
            \hat{p}(s,a) u_i + \beta(s,a)
        }
    \Big\}
    .
\end{equation}
The function Greedy$(\Mc, u, \beta)$ returns a stationary deterministic policy that picks its actions among the one reaching the maximum above.
The projection of $\Lc^\beta$ to $\Hc$ is 
\begin{equation}
    \mathfrak{L} \equiv \mathfrak{L}^{\beta, \Hc} := \Gamma \circ \Lc^\beta.
\end{equation}
The goal of this section is to establish \cref{proposition:PMEVI} and 
\begin{itemize}
    \item \cref{proposition:PMEVI} statement (1) is a consequence of \cref{lemma:fixpoint};
    \item \cref{proposition:PMEVI} statement (2) follows from \cref{theorem:optimism};
    \item \cref{proposition:PMEVI} statement (3) follows from \cref{corollary:modelization};
    \item \cref{proposition:PMEVI} statement (4) follows from \cref{corollary:PMEVI-operator-regularity};
    \item \cref{proposition:PMEVI} prerequisites on the projection operator and \cref{lemma:projection-algorithm} follows from \cref{lemma:projection-properties}
\end{itemize}

\subsection{Finding an optimistic policy under bias constraints}
\label{section:optimistic-policy-under-bias-constraints}

The main goal is to find and optimistic policy under \emph{bias constraints} (projection) and \emph{bias error constraints} (mitigation).
The bias constraints imply that we search for a policy $\pi$ together with a model $\widetilde{M}$ such that $h(\pi, \widetilde{M}) \in \Hc$.
The bias error means that, for $\tilde{h} \equiv h(\pi, \widetilde{M})$, we want in addition $\tilde{p}(s, \pi(s)) \tilde{h} \le \hat{p}(s, \pi(s)) \tilde{h} + \beta(s, \pi(s))$ where $\tilde{p}$ is the transition kernel of $\widetilde{M}$.
In the end, our goal is to track the solution of the following optimization problem:
\begin{equation}
\label{equation:constrained-gain-optimization}
    g^*(\Hc, \beta, \Mc)
    :=
    \sup \set{
        g\parens{\pi, \widetilde{M}}
        :
        \begin{array}{c}
            \pi \in \Pi, \widetilde{M} \in \Mc, 
            \\
            \forall s \in \Sc,~ \tilde{p}(s, \pi(s)) \tilde{h} \le \hat{p}(s, \pi(s)) \tilde{h} + \beta(s, \pi(s)), 
            \\
            \tilde{h} \equiv h(\pi, \widetilde{M}) \in \Hc, ~
            \sp{g\parens{\pi, \widetilde{M}}} = 0
        \end{array}
    }
\end{equation}
where the supremum is taken with respect to the product order $\R^\Sc$.
In particular, if $\Uc \subseteq \Rc^\Sc$, check that $u^* = \sup \Uc$ is obtained as $u^*(s) := \sup\set{v(s): v \in \Uc}$.
The constraint $\sp{g\parens{\pi, \widetilde{M}}} = 0$ is suggested by the work of \cite{fruit_efcient_2018,fruit_exploration-exploitation_2019} and is key for the problem to be solvable.

The bias and the $\beta$-constraints make the problem to handle with a ``pure'' extended MDP solution, which is why the extended Bellman operators are mitigated (with $\beta$) then projected (with $\Gamma$).
The mitigation operation guarantees that the $\beta$-constraint is satisfied, while the projection on $\Hc$ makes sure that the bias constraint is satisfied.
It is important for both operations to be compatible, i.e., that the $\beta$-constraint that $\Lc^\beta$ forces is not lost when applying $\Gamma$.
As a matter of fact, projecting then mitigating would not work.

We now explain why $\mathfrak{L}$ can be used to solve \eqref{equation:constrained-gain-optimization}.

\subsection{Projection operation and definition of $\mathfrak{L}$}
\label{section:projection-operation}

We start by discussing why $\mathfrak{L}$ is well-defined at all.
The well-definition of $\Lc^\beta$ is obvious.
The point is to explain why the projection onto $\Hc$ is possible while preserving mandatory structural properties such as monotony, non-expansivity, linearity and more.
For general $\Hc$, such properties are impossible to meet.
But the bias confidence region constructed with \hyperlink{algorithm:bias-estimation}{Algorithm~3} has a specific shape that makes the projection possible.
The central property is the one below:

(\textbf{A1}) \textit{
    The downward closure $\set{v \le u: v \in \Hc}$ of every $u \in \R^\Sc$ has a maximum in $\Hc$.
}

The only order that we will be considering is the product order on $\R^\Sc$.
Recall that a set $\Uc \subseteq \R^\Sc$ has a \emph{maximum} if there exists $u \in \Uc$ such that $v \le u$ for all $u \in \Uc$.
A \emph{supremum} of $\Uc$ is a minimal upper-bound of $\Uc$, i.e., $u$ such that (1) $v \le u$ for all $v \in \Uc$ and (2) no $w$ satisfying (1) can be smaller than $u$.
For the product order, the supremum of a subset $\Uc$ is unique and of the form $u(s) = \sup\set{v(s): v \in \Uc}$.

Define the projection $\Gamma : \R^\Sc \to \Hc$ as such:
\begin{equation}
\label{equation:projection-max}
    \Gamma u := \max\set{v \le u : v \in \Hc}.
\end{equation}
In general, Assumption (\textbf{A1}) is satisfied when $\Hc$ admits a join, i.e., is stable by finite supremum: $u, v \in \Hc \Rightarrow \sup(u, v) \in \Hc$.

\begin{lemma}
    If $\Hc$ is generated by constraints of the form $\mathfrak{h}(s) - \mathfrak{h}(s') - c(s, s') \le d(s,s')$, then it has a join and (\textbf{A1}) is satisfied.
    Moreover, $\Gamma$ is then correctly computed with \hyperlink{algorithm:bias-projection}{Algorithm~4}.
\end{lemma}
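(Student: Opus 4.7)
The plan is to prove the three parts in sequence: stability of $\Hc$ under pointwise maxima (the join property), assumption (\textbf{A1}), and correctness of \hyperlink{algorithm:bias-projection}{Algorithm~4}. The key structural observation is that every defining constraint $\mathfrak{h}(s) - \mathfrak{h}(s') - c(s,s') \le d(s,s')$ only depends on differences, so that $\Hc$ is stable under pointwise $\max$ and under translation by $\R e$. Nothing technically deep is needed; the main care is just to pass from the coordinate-wise sup (which is what \hyperlink{algorithm:bias-projection}{Algorithm~4} returns) back to a vector lying in $\Hc$.

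First, I would verify the join property. Take $u, v \in \Hc$ and set $w(s) := \max(u(s), v(s))$. Fix a defining constraint indexed by $(s, s')$; by symmetry assume $w(s) = u(s)$. Since $w(s') \ge u(s')$ and $u \in \Hc$, one gets $w(s) - w(s') \le u(s) - u(s') \le c(s,s') + d(s,s')$, so $w$ satisfies every constraint and $w \in \Hc$.

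Next, I would establish (\textbf{A1}) for an arbitrary $u \in \R^\Sc$. Letting $\Uc_u := \{v \le u: v \in \Hc\}$, translation invariance of $\Hc$ by $\R e$ ensures $\Uc_u \ne \emptyset$ (shift any element of $\Hc$ down by a large enough constant). Set
\begin{equation*}
    v^*(s) := \sup\{v(s) : v \in \Uc_u\} \le u(s).
\end{equation*}
The quantity is finite because bounded above by $u(s)$. To check $v^* \in \Hc$, fix a defining constraint $(s, s')$ and $\epsilon > 0$, then pick $v \in \Uc_u$ with $v(s) \ge v^*(s) - \epsilon$. Since $v(s') \le v^*(s')$ and $v \in \Hc$,
\begin{equation*}
    v^*(s) - v^*(s') - \epsilon
    \le v(s) - v(s')
    \le c(s, s') + d(s, s');
\end{equation*}
letting $\epsilon \to 0$ shows $v^* \in \Hc$, so $v^*$ is the maximum of $\Uc_u$ and (\textbf{A1}) holds, with $\Gamma u = v^*$ by definition \eqref{equation:projection-max}.

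Finally, I would check Algorithm~4. For each $s \in \Sc$, the algorithm solves the linear program $\max\{w(s) : w \le u, w \in \Hc\}$, which is feasible (as $\Uc_u \ne \emptyset$) and bounded (by $u(s)$); its optimal value equals $v^*(s)$ by the definition of $v^*$. Stitching the coordinate-wise values together yields exactly $v^* = \Gamma u$, thanks to the non-trivial fact just established that the coordinate-wise supremum already lies in $\Hc$. The only mild subtlety in the whole argument is this last point: a priori, different coordinates of the sup could be attained by different maximizers, so one must invoke the join property (or the $\epsilon$-argument above) to know that the resulting vector satisfies the constraints simultaneously; this is where the specific form of the constraints is essential.
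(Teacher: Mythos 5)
Your proof is correct, and for the part establishing (\textbf{A1}) it takes a genuinely different and, in fact, slightly cleaner route than the paper. The paper proves (\textbf{A1}) by taking, for each coordinate $s$, a sequence $v_n^s \in \Uc_u$ whose $s$-th entry tends to the coordinate-wise supremum, extracting a convergent subsequence using compactness of $\Hc_0$ (recall the standing assumption of the section that $\Hc = \Hc_0 + \R e$ with $\Hc_0$ compact), and then applying the join to the finite family $\{v_*^s\}_{s\in\Sc}$. You instead verify directly, via an $\epsilon$-argument applied constraint-by-constraint, that the coordinate-wise supremum $v^*$ itself satisfies every defining inequality; this needs neither compactness nor closedness of $\Hc$, and in your argument the join property is proved separately but never actually invoked to get (\textbf{A1}). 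The upshot is that your argument is more self-contained and works even if $\Hc$ has unbounded span, whereas the paper's route leans on the structural hypothesis $\Hc = \Hc_0 + \R e$ made at the top of the section. Your treatment of the join property and of the correctness of \hyperlink{algorithm:bias-projection}{Algorithm~4} matches the paper's. One small point worth surfacing explicitly (you do gesture at it): the non-emptiness of $\Uc_u$ requires $\Hc \ne \emptyset$, which is indeed assumed in \cref{lemma:projection-algorithm} and is guaranteed by the standing assumption $\Hc = \Hc_0 + \R e$.
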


\begin{proof}
    The first half of the result is well-known, see \cite{zhang_sharper_2023}, but we recall a proof for self-containedness.
    Let $v_1, v_2 \in \Hc$ and define $v_3 := \sup(v_1, v_2)$.
    Observe that $v_3(s) - v_3(s') \le \max(v_1(s) - v_1(s'), v_2(s) - v_2(s')) \le c(s,s') + d(s,s')$.
    So $v_3 \in \Hc$.

    We continue by showing that if $\Hc$ has a join, then \eqref{equation:projection-max} is well-defined.
    For $s \in \Sc$, take a sequence $v_n^s$ such that $v_n^s(s) \to \alpha(s) := \sup \set{v(s): v\le u, v \in \Hc}$.
    Because the span of every element of $\Hc$ is upper-bounded by $c := \sup\set{\sp{v} : v\in \Hc}$, it follows that $v_n^s$ evolves in the compact region $\set{v \le u: v\in\Hc} \cap \set{v: \norm{v - \alpha{s} e}_\infty = 1 + c}$.
    We can therefore extract a convergent sequence of $v_n^s$, converging $v_*^s$ that belongs to $\Hc$ since the latter is closed.
    By construction, $v_*^s(s) = \alpha(s)$.
    Because $\Hc$ has a join, $v_* := \sup\set{v_*^s: s \in \Sc} \in \Hc$.
\end{proof}

\begin{lemma}
\label{lemma:projection-properties}
    Under assumption (\textbf{A1}), the operator $\Gamma u := \max \set{v \le u: v\in \Hc}$ is well-defined, and is:
    \begin{enumerate}[itemsep=-.25em]
        \item[(1)] monotone: $u \le v \Rightarrow \Gamma u \le \Gamma v$;
        \item[(2)] non span-expansive: $\sp{\Gamma u - \Gamma v} \le \sp{u - v}$;
        \item[(3)] linear: $\Gamma(u + \lambda e) = \Gamma u + \lambda e$;
        \item[(4)] $\Gamma u \le u$.
    \end{enumerate}
\end{lemma}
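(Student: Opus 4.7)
The plan is to verify the four properties in the order (4), (1), (3), (2), since each one feeds naturally into the next. Well-definedness of $\Gamma u$ as an actual maximum (not merely a supremum) is precisely what assumption (\textbf{A1}) provides, and this is the only place where the structural hypothesis on $\Hc$ enters for properties (1) and (4). Property (4), $\Gamma u \le u$, is then immediate from the very definition $\Gamma u := \max\{v \le u : v \in \Hc\}$, which forces $\Gamma u$ to be a member of the set it maximizes.

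For property (1), the argument is a standard monotonicity-of-set-inclusion trick: if $u \le u'$, then any $v \in \Hc$ with $v \le u$ automatically satisfies $v \le u'$, so $\Gamma u$ itself sits in $\{v \le u' : v \in \Hc\}$. Since $\Gamma u'$ is the maximum of that larger set, we conclude $\Gamma u \le \Gamma u'$. Property (3) uses the stronger assumption that $\Hc = \Hc_0 + \R e$, which means $\Hc + \lambda e = \Hc$ for every scalar $\lambda$. This yields the identity $\{v \le u + \lambda e : v \in \Hc\} = \{w + \lambda e : w \le u,\ w \in \Hc\}$, and taking the maximum on both sides gives $\Gamma(u + \lambda e) = \Gamma u + \lambda e$.

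Property (2) is the one that looks analytic at first glance but is in fact a purely order-theoretic consequence of (1) and (3). The plan is the classical sandwich: set $\alpha := \min(u - v)$ and $\beta := \max(u - v)$ so that $v + \alpha e \le u \le v + \beta e$. Applying monotony (1) to both inequalities and then linearity (3) to turn $\Gamma(v + \alpha e)$ and $\Gamma(v + \beta e)$ into $\Gamma v + \alpha e$ and $\Gamma v + \beta e$, we get $\alpha e \le \Gamma u - \Gamma v \le \beta e$, from which $\sp{\Gamma u - \Gamma v} \le \beta - \alpha = \sp{u - v}$ follows at once.

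I do not expect any real obstacle here; the only conceptual point worth flagging is that (\textbf{A1}) must actually deliver the maximum of the downward closure, not just a supremum, because both the definition of $\Gamma u$ and the proof of (1) rely on $\Gamma u$ being itself an element of $\Hc$ with $\Gamma u \le u$. The preceding lemma, which builds the maximum as a limit in the compact slice $\{v \le u : v \in \Hc\} \cap \{v : \|v - \alpha(s) e\|_\infty \le 1 + c\}$ and then uses the join property of constraint-generated $\Hc$ to combine coordinate-wise maximizers, takes care of exactly this point, so the lemma reduces to the short chain of implications above.
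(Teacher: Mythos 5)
Your proof is correct and follows essentially the same approach as the paper: properties (1), (3), (4) are direct from the definition and the structure $\Hc = \Hc + \R e$, and (2) is the sandwich $v + \min(u-v)e \le u \le v + \max(u-v)e$ pushed through monotony and linearity. Your presentation of (2) — explicitly invoking (1) and (3) — is slightly cleaner than the paper's, which repeats the monotony/linearity reasoning inline and carries out an unnecessary normalization to $\sum_s u(s) = \sum_s v(s)$, but the underlying idea is identical.
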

\begin{proof}
    The well-definition of $\Gamma$ is obvious from (\textbf{A1}).
    For (2), if $u \le v$ then $w \le u \Rightarrow w \le v$.
    Hence $\Gamma u := \max \set{w \le u: w \in \Hc} \le \max\set{w \le v: w \in \Hc} =: \Gamma v$.
    For (3), check that it follows from $\Hc = \Hc + \R e$.
    For (4), we obviously have $\Gamma u := \max \set{v \le u: v \in \Hc} \le u$.

    The more difficult point is (2) span non-expansivity.
    Pick $u, v \in \R^\Sc$.
    By linearity, it suffices to show the result for $\sum_s u(s) = \sum_s v(s)$.
    In that case, we have $\sp{v - u} = \max(v - u) + \max(u - v)$.
    Observe that for all $w \le u$, we have $w + \min(v - u) e \le v$.
    Since $\Hc = \Hc + \R e$, it follows that:
    \begin{equation*}
        \max \set{ w \le u: u \in \Hc}
        \le
        \max \set{w \le v : w \in \Hc}
        + \max(u-v) e.
    \end{equation*}
    Similarly, we have $\max\set{w \le u: w \in \Hc} \ge \max \set{w \le v: w \in \Hc} + \min (v - u) e$.
    Using them both at once, we find $\sp{\Gamma u - \Gamma v} \le \sp{v - u}$.
\end{proof}

The properties (1), (3) and (4) are essential for $\mathfrak{L}$ to properly address the optimization problem \eqref{equation:constrained-gain-optimization}.
The property (2) is just as important, because it plays a central part in the convergence of value iteration.
The next result shows similar properties for the $\beta$-mitigated extended Bellman operator $\Lc^\beta$.
From now on, we will assume (\text{A1}), because it is almost-surely satisfied by the bias confidence region generated by \hyperlink{algorithm:bias-estimation}{Algorithm~3}.

\begin{lemma}
    The $\beta$-mitigated extended Bellman operator $\Lc^\beta$ is (1) monotone, (2) non-span-expansive and (3) linear.
\end{lemma}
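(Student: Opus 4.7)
The plan is to verify the three properties in the natural order (1), then (3), then (2), because non-span-expansiveness will follow as an easy consequence of monotonicity and linearity rather than requiring a direct fight with the $\min$ inside $\Lc^\beta$.

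For monotonicity (1), the key observation is that each of the two candidates inside the $\min$ is a monotone map in $u$: the map $u \mapsto \tilde{p}(s,a) u$ is monotone since $\tilde{p}(s,a)$ is a non-negative vector, and $u \mapsto \hat{p}(s,a) u + \beta(s,a)$ is monotone for the same reason. The pointwise minimum of two monotone maps is monotone, and monotonicity is preserved by $\sup$ over $\tilde{r}(s,a)$, $\sup$ over $\tilde{p}(s,a)$, and $\max$ over actions. So applying these operators in sequence gives monotonicity of $\Lc^\beta$.

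For linearity (3), the crucial fact is that $\tilde{p}(s,a)$ and $\hat{p}(s,a)$ are \emph{probability} distributions, so $\tilde{p}(s,a) e = \hat{p}(s,a) e = 1$. Hence $\tilde{p}(s,a)(u+\lambda e) = \tilde{p}(s,a)u + \lambda$ and $\hat{p}(s,a)(u+\lambda e) + \beta(s,a) = \hat{p}(s,a) u + \beta(s,a) + \lambda$. The $\min$ therefore shifts by $\lambda$, and $\sup/\sup/\max$ all commute with translations, yielding $\Lc^\beta(u+\lambda e) = \Lc^\beta u + \lambda e$.

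For non-span-expansiveness (2), I would use the standard sandwich trick already exploited in the proof of \cref{lemma:projection-properties}. Set $\lambda := \min(u-v)$ and $\mu := \max(u-v)$, so $v + \lambda e \le u \le v + \mu e$. Applying monotonicity (1) and then linearity (3) to both inequalities gives $\Lc^\beta v + \lambda e \le \Lc^\beta u \le \Lc^\beta v + \mu e$, i.e., $\lambda e \le \Lc^\beta u - \Lc^\beta v \le \mu e$. Hence $\sp{\Lc^\beta u - \Lc^\beta v} \le \mu - \lambda = \sp{u-v}$. The only step I expect to require any care is checking that inserting the $\min\{\cdot,\cdot\}$ does not break monotonicity (since $\min$ of monotone functions is monotone, but one sometimes wants to verify that the second argument was not implicitly chosen to depend on $u$ non-monotonically); here $\hat{p}$ and $\beta$ are fixed data independent of $u$, so the argument is clean.
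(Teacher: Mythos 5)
Your proof is correct, and for part (2) it takes a genuinely different route than the paper. The paper dispenses with (1) and (3) as ``directly following from the definition'' and then proves (2) by a direct argument: it invokes the modelization lemma (\cref{lemma:modelization}) to write $\Lc^\beta u = \tilde{r}_\pi + \tilde{P}_\pi u$ and $\Lc^\beta u' = \tilde{r}_{\pi'} + \tilde{P}_{\pi'} u'$ for greedy policies $\pi, \pi'$, and then performs a case analysis on which of the two arguments achieves the $\min$ inside $\Lc^\beta$, in each case producing a stochastic matrix $Q$ with $\Lc^\beta u - \Lc^\beta u' \le Q(u-u')$ (and symmetrically a lower bound), from which the span bound follows. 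Your approach instead proves (1) and (3) with some care and then derives (2) as a formal consequence via the classical observation that any monotone, translation-equivariant operator is automatically non-expansive in the span semi-norm (the ``sandwich'' $v + \min(u-v)e \le u \le v + \max(u-v)e$). This is cleaner and avoids both the casework on the $\min$ and the need to invoke the modelization lemma (which the paper only establishes under an extra convexity hypothesis on $\Pc$); your argument requires no hypotheses beyond what is in the definition of $\Lc^\beta$. The paper's direct argument, on the other hand, is closer in spirit to the surrounding analysis of the greedy policy and makes explicit that the operator is sandwiched by stochastic matrices from the confidence region, a fact used elsewhere. Both proofs are valid; yours is the more elementary and arguably the more transparent.
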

\begin{proof}
    The properties (1) and (3) directly follow from the definition.
    We focus on (2).
    Fix $u, u' \in \R^\Sc$.
    By \cref{lemma:modelization}, we can write $\Lc^\beta u = \tilde{r}_\pi + \tilde{P}_\pi u$ and $\Lc^\beta u' = \tilde{r}_{\pi'} + \tilde{P}_{\pi'} u'$. 
    In the following, we write $\beta_\pi(s) := \beta(s, \pi(s))$.
    Check that:
    \begin{equation*}
        \Lc^\beta u - \Lc^\beta u' 
        = \tilde{r}_\pi + \tilde{P}_\pi u - \parens{\tilde{r}_{\pi'} + \tilde{P}_{\pi'} u'}
        \le
        \tilde{r}_\pi + \tilde{P}_\pi u - \parens{
            \tilde{r}_\pi + \min \set{
                \tilde{P}_\pi u',
                \hat{P}_\pi u' + \beta_\pi
            }
        }
        .
    \end{equation*}
    If the minimum is reached with $\tilde{P}_\pi u'$, then:
    \begin{equation*}
        \Lc^\beta u - \Lc^\beta u' \le \tilde{P}_\pi (u - u').
    \end{equation*}
    If the minimum is reached with $\hat{P}_\pi u' + \beta_\pi$, then upper-bound $\tilde{P}_\pi u$ by $\hat{P}_\pi u + \beta_\pi$ to obtain:
    \begin{equation*}
        \Lc^\beta u - \Lc^\beta u' \le \hat{P}_\pi (u - u').
    \end{equation*}
    Overall, we find that there exists $Q_\pi \in \Pc_\pi$ such that $\Lc^\beta u - \Lc^\beta u' \le Q_\pi (u - u')$.
    Similarly, we find $Q_{\pi'} \in \Pc_{\pi'}$ such that $\Lc^\beta u - \Lc^\beta u' \ge Q_{\pi'} (u - u')$.
    We conclude that:
    \begin{equation*}
        \sp{\Lc^\beta u - \Lc^\beta u'} 
        \le
        \sp{(Q _\pi - Q_{\pi'}) (u - u')}
        \le
        \sp{u - u'}.
    \end{equation*}
    This concludes the proof.
\end{proof}

By composition, we obtain the following result.

\begin{corollary}
\label{corollary:PMEVI-operator-regularity}
    $\mathfrak{L}$ is (1) monotone, (2) non-span-expansive and (3) linear.
    Moreover, $\sp{\mathfrak{L}u - \mathfrak{L}v} \le \sp{\Lc u - \Lc v}$ for all $u, v \in \R^\Sc$.
\end{corollary}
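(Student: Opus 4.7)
My plan is to obtain items (1)--(3) by pure composition of the corresponding properties of $\Gamma$ (Lemma~\ref{lemma:projection-properties}) and of $\Lc^\beta$ (previous lemma), writing $\mathfrak{L} = \Gamma \circ \Lc^\beta$. First, monotonicity composes: if $u \le v$, then $\Lc^\beta u \le \Lc^\beta v$, and applying $\Gamma$ preserves the inequality. Linearity composes: $\Lc^\beta(u + \lambda e) = \Lc^\beta u + \lambda e$ and $\Gamma(w + \lambda e) = \Gamma w + \lambda e$, hence $\mathfrak{L}(u + \lambda e) = \mathfrak{L}u + \lambda e$. Span non-expansivity chains: $\sp{\mathfrak{L}u - \mathfrak{L}v} \le \sp{\Lc^\beta u - \Lc^\beta v} \le \sp{u - v}$. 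No new ingredient is needed for these three.

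For the refined bound $\sp{\mathfrak{L}u - \mathfrak{L}v} \le \sp{\Lc u - \Lc v}$, I would first peel off $\Gamma$ by its span non-expansivity and reduce to proving $\sp{\Lc^\beta u - \Lc^\beta v} \le \sp{\Lc u - \Lc v}$. The strategy is to revisit the kernel-coupling argument used in the proof of span non-expansivity of $\Lc^\beta$. There, writing $\pi$ for the greedy policy for $\Lc^\beta u$ and $\pi'$ for the greedy policy for $\Lc^\beta v$, one constructs row-stochastic kernels $Q_\pi \in \Pc_\pi$ and $Q_{\pi'} \in \Pc_{\pi'}$ such that $Q_{\pi'}(u - v) \le \Lc^\beta u - \Lc^\beta v \le Q_\pi(u - v)$, each row of $Q_\pi$ being either the optimistic $\tilde{p}(s, \pi(s))$ or the empirical $\hat{p}(s, \pi(s))$, depending on which side wins the $\min$. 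The crucial additional observation is that both choices are feasible rows for the unconstrained maximization defining $\Lc$: the optimistic kernel by construction, the empirical one because it is the centre of every confidence region used in the paper (Weissman, empirical Bernstein, empirical likelihood, trivial). Hence the deterministic policy-kernel pair $(\pi, Q_\pi)$ is admissible in $\Lc$, yielding $\Lc u(s) \ge \sup \Rc(s, \pi(s)) + Q_\pi(s) u$ and symmetrically $\Lc v(s') \ge \sup \Rc(s', \pi'(s')) + Q_{\pi'}(s') v$ at any extremal pair $s, s'$.

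The main obstacle I foresee is assembling these ingredients into a clean span comparison. My plan is to pick $s, s'$ achieving $\sp{\Lc^\beta u - \Lc^\beta v}$, combine the upper bound at $s$ (via $Q_\pi(s)$) with the lower bound at $s'$ (via $Q_{\pi'}(s')$), then bound each side against the corresponding quantity for $\Lc$ using admissibility, producing $\sp{\Lc^\beta u - \Lc^\beta v} \le (\Lc u - \Lc v)(s) - (\Lc u - \Lc v)(s') \le \sp{\Lc u - \Lc v}$. The delicate bookkeeping is to ensure that at each of $s$ and $s'$ the \emph{same} row of $Q_\pi$ (respectively $Q_{\pi'}$) appears on both sides of the comparison, which amounts to a routine case analysis distinguishing whether the $\min$ in the definition of $\Lc^\beta$ is attained by the $\tilde{p} u$ branch or the $\hat{p} u + \beta$ branch at the extremal states. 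Once empirical-kernel feasibility is verified for the chosen confidence region, no further analytic input is required.
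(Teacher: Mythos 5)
Your derivation of items (1)--(3) by composing Lemma~\ref{lemma:projection-properties} for $\Gamma$ with the preceding lemma for $\Lc^\beta$ is correct and is exactly what the paper's one-line proof (``by composition'') intends. The gap is in the ``Moreover'' clause. After peeling off $\Gamma$ you reduce to $\sp{\Lc^\beta u - \Lc^\beta v} \le \sp{\Lc u - \Lc v}$ and plan to assemble the kernel-coupling bounds; but admissibility of the coupled kernel $Q_\pi$ only gives a \emph{lower} bound $\Lc u(s) \ge \sup\Rc(s,\pi(s)) + Q_\pi(s)u$, and the same-signed lower bound on $\Lc v(s')$, which cannot be turned into the reverse-sign inequality $(\Lc^\beta u - \Lc^\beta v)(s') \ge (\Lc u - \Lc v)(s')$ that your span comparison needs at the arg-min state $s'$. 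In fact the reduced inequality is false in general. Take $\Sc=\{1,2\}$ with a single action, $\beta\equiv 0$, $\hat p(1)=(1,0)$, $\hat p(2)=(\tfrac12,\tfrac12)$, $\Rc(s,a)$ a singleton, $\Pc(s,a)$ the whole simplex, $u=(2,0)$ and $v=(0,0)$. Then $\Lc u - \Lc v = (2,2)$ has span $0$, while $\Lc^\beta u(s) = r(s) + \min\{2,\,\hat p(s)\cdot u\}$ gives $\Lc^\beta u - \Lc^\beta v = (2,1)$, of span $1$. Taking $\Hc$ wide enough that $\Gamma$ acts as the identity on these two vectors yields $\sp{\Lfk u - \Lfk v} = 1 > 0 = \sp{\Lc u - \Lc v}$.

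To be fair, this is as much a gap in the paper as in your writeup: the paper's entire proof is the phrase ``by composition,'' which suffices for (1)--(3) but supplies no mechanism for the pointwise ``Moreover'' comparison, and the example above shows no such mechanism exists. What the downstream argument actually needs is that the $\Lfk^n$ iterates stop in a controlled number of steps; that is obtained from span non-expansivity of $\Lfk$ together with the structural bound of Lemma~\ref{lemma:evi-iterations-bound} applied to the confidence region, not from a pointwise span comparison between $\Lfk$ and $\Lc$. I would therefore not attempt to salvage the ``Moreover'' clause via the kernel-coupling route: the step you flagged as delicate bookkeeping is where the argument genuinely breaks.
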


\subsection{Fix-points of $\mathfrak{L}$ and (weak) optimism}
\label{section:fixpoints}

\begin{lemma}
\label{lemma:fixpoint}
    $\mathfrak{L}$ has a fix-point in span semi-norm, i.e., $\exists u \in \Hc, \sp{\mathfrak{L}u - u} = 0$.
\end{lemma}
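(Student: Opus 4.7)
The plan is to apply Brouwer's fixed-point theorem to a self-map of a finite-dimensional compact convex set, obtained by quotienting out the $\R e$ ambiguity that is already built into the span semi-norm. Fix any reference state $s_0 \in \Sc$ and set
\[
    \Hc_0' := \set{u \in \Hc : u(s_0) = 0}.
\]
Since $\Hc = \Hc_0 + \R e$ with $\Hc_0$ compact and convex, the map $v \mapsto v - v(s_0)e$ sends $\Hc_0$ continuously onto $\Hc_0'$, so $\Hc_0'$ is a nonempty, compact, convex subset of $\R^\Sc$. It plays the role of a fundamental domain for the quotient $\R^\Sc / \R e$ restricted to $\Hc$.

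Now I would introduce the ``normalized'' operator $F : \R^\Sc \to \R^\Sc$ by
\[
    F(u) := \mathfrak{L}u - (\mathfrak{L}u)(s_0) \, e.
\]
Two checks ensure that $F$ restricts to a self-map of $\Hc_0'$. First, $F(u)(s_0) = 0$ by construction. Second, since $\Gamma$ takes values in $\Hc$ we have $\mathfrak{L}u \in \Hc$; and because $\Hc$ is stable under translations by $\R e$ (the $\Hc_0 + \R e$ decomposition), the shifted vector $F(u)$ still lies in $\Hc$. Together, $F(u) \in \Hc_0'$.

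Next I would verify continuity of $F$. The projection $\Gamma$ is $1$-Lipschitz in $\norm{\cdot}_\infty$: indeed, if $\norm{u-v}_\infty \le \varepsilon$ then $v - \varepsilon e \le u \le v + \varepsilon e$, and combining monotony (\textbf{O1}) with linearity (\textbf{O3}) gives $\Gamma v - \varepsilon e \le \Gamma u \le \Gamma v + \varepsilon e$. The mitigated operator $\Lc^\beta$ is visibly $1$-Lipschitz in $\norm{\cdot}_\infty$ as a max/sup/min of affine-in-$u$ maps whose stochastic coefficients have $\ell^1$-mass $1$. Hence $\mathfrak{L} = \Gamma \circ \Lc^\beta$ is continuous, and so is the scalar evaluation $u \mapsto (\mathfrak{L}u)(s_0)$; therefore $F$ is continuous.

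Finally, Brouwer's fixed-point theorem applied to the continuous self-map $F : \Hc_0' \to \Hc_0'$ of a nonempty compact convex set yields some $u^* \in \Hc_0'$ with $F(u^*) = u^*$. Setting $g^* := (\mathfrak{L}u^*)(s_0)$, this unfolds to $\mathfrak{L}u^* = u^* + g^* e$, which gives exactly $\sp{\mathfrak{L}u^* - u^*} = 0$ as required. The only mildly delicate point in this plan is the ``self-map'' step: one must use the linearity property $\Gamma(u + \lambda e) = \Gamma u + \lambda e$ of the projection (equivalently, $\Hc + \R e = \Hc$) to make sure that subtracting $(\mathfrak{L}u)(s_0)e$ preserves membership in $\Hc$. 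Everything else is routine given the already-established regularity properties of $\mathfrak{L}$.
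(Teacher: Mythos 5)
Your proof is correct and takes essentially the same approach as the paper: both apply Brouwer's fixed-point theorem after eliminating the $\R e$ degree of freedom, and both establish continuity of $\mathfrak{L}$ from the monotony and linearity of $\Gamma$. The only cosmetic difference is that the paper works abstractly in the quotient space $\R^\Sc/\!\sim$ where $\sp{\cdot}$ becomes a norm, whereas you pick a concrete fundamental domain $\Hc_0' = \{u \in \Hc : u(s_0)=0\}$ and normalize via $F(u) := \mathfrak{L}u - (\mathfrak{L}u)(s_0)e$; these are the same construction stated two ways.
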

\begin{proof}
    The idea is to apply Brouwer's fix-point theorem in $\R^\Sc$ quotiented by the equivalence relation $u \sim v \Leftrightarrow \sp{u - v} = 0$, where $\sp{-}$ becomes a norm.
    By linearity (\cref{corollary:PMEVI-operator-regularity}), $\mathfrak{L}$ is well-defined in this quotient space, and if $\mathfrak{L}$ is shown continuous on $\R^\Sc$, so will it be on the quotient.

    We show that $\mathfrak{L}$ is sequentially continuous on $\Hc$.
    Consider a sequence $u_n \in \Hc^\N$ converging to $u \in \Hc$ and fix $\epsilon > 0$.
    Provided that $n > N_\epsilon$ for $N_\epsilon$ large enough, we have $\norm{u_n - u}_\infty < \epsilon$, i.e., $u_n - \epsilon e \le u_n \le u + \epsilon e$.
    Therefore, in the one hand, for all $v \le u_n$, we have $v - \epsilon e \le u$ so $\max\set{v \le u_n: v \in \Hc} \le \max\set{v \le u: v \in \Hc} + \epsilon e$;
    And on the other hand, for all $v \le u$, $v + \epsilon e \le u_n$ so $\max\set{v \le u: v \in \Hc} \le \max\set{v \le u_n: v \in \Hc} + \epsilon e$.
    Hence:
    \begin{equation*}
        \norm{\max\set{v \le u: v \in \Hc} - \max \set{v \le u_n: v \in \Hc}} \le \epsilon.
    \end{equation*}
    It shows that $\Gamma$ is continuous.
    The operator $\Lc^\beta$ is obviously continuous as well, so $\mathfrak{L} = \Gamma \circ \Lc^\beta$ is continuous by composition.
    Since $\Hc = \Hc_0 + \R e$ with $\Hc_0$ compact and ocnvex, the quotient $\Hc /{\sim}$ is compact and convex, and is preserved by $\mathfrak{L} / {\sim}$.
    By Brouwer's fix-point theorem, $\mathfrak{L}/{\sim}$ has a fix-point in $\Hc / {\sim}$.
    So $\mathfrak{L}$ has a span fix-point in $\Hc$.
\end{proof}

We write $\Fix(\mathfrak{L})$ the span fix-points of $\mathfrak{L}$.

\begin{lemma}
\label{lemma:growth}
    $\mathfrak{L}$ has well-defined growth.
    Specifically, if $\mathfrak{L} u = u + \mathfrak{g} e$, then:
    \begin{enumerate}[itemsep=-.25em]
        \item[(1)] There exists $c > 0$, s.t., for all $v \in \Hc_0$, $(n \mathfrak{g} - c) e + u \le \mathfrak{L}^n v \le (n \mathfrak{g} + c) e + u$;
        \item[(2)] If $u' \in \Fix(\mathfrak{L})$, then $\mathfrak{L} u' - u' = \mathfrak{g} e$.
    \end{enumerate}
\end{lemma}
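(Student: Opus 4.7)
The plan is to exploit three operator properties of $\mathfrak{L}$ that are already in hand from \cref{corollary:PMEVI-operator-regularity}—monotony (\textbf{O1}), translation linearity (\textbf{O3}), and preservation of $\Hc$—together with compactness of $\Hc_0$, which is the sole ingredient producing the uniform constant $c$ in part~(1).

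For (1), the hypothesis $\mathfrak{L} u = u + \mathfrak{g} e$ and property (\textbf{O3}) immediately give, by induction,
\[
    \mathfrak{L}^n u = u + n\mathfrak{g}\, e \quad \text{for every } n \ge 0.
\]
Set $c := \sup_{w \in \Hc_0} \norm{w - u}_\infty$, which is finite by compactness of $\Hc_0$ and continuity of $w \mapsto \norm{w - u}_\infty$. Every $v \in \Hc_0$ then satisfies $u - c\,e \le v \le u + c\,e$, so applying (\textbf{O1}) $n$ times and (\textbf{O3}) to factor out the constants yields
\[
    \mathfrak{L}^n u - c\,e \le \mathfrak{L}^n v \le \mathfrak{L}^n u + c\,e.
\]
Substituting the closed form of $\mathfrak{L}^n u$ is precisely the stated inequality.

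For (2), assume $u' \in \Fix(\mathfrak{L})$, so $\mathfrak{L} u' = u' + \mathfrak{g}'\,e$ for some $\mathfrak{g}' \in \R$, and write $u' = u_0' + \lambda'\,e$ with $u_0' \in \Hc_0$, which is possible since $u' \in \Hc = \Hc_0 + \R e$. Applying (1) with $v := u_0'$ gives
\[
    u + (n\mathfrak{g} - c)\,e \le \mathfrak{L}^n u_0' \le u + (n\mathfrak{g} + c)\,e.
\]
On the other hand, the same induction as in (1) applied to $u'$ produces $\mathfrak{L}^n u' = u' + n\mathfrak{g}'\,e$, while (\textbf{O3}) also yields $\mathfrak{L}^n u' = \mathfrak{L}^n u_0' + \lambda'\,e$. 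Equating the two expressions for $\mathfrak{L}^n u_0'$, reading any fixed coordinate, dividing by $n$, and letting $n \to \infty$ makes the bounded terms vanish and forces $\mathfrak{g}' = \mathfrak{g}$.

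Honestly, no step is hard: this is the classical observation that a monotone, translation-equivariant self-map of $\R^\Sc$ admits a well-defined linear growth rate. The only mild care needed is to invoke compactness of $\Hc_0$ at the one place it matters—namely in defining $c$ uniformly in $v$—and to keep track of representatives in $\Hc_0$ versus $\Hc$ in part~(2).
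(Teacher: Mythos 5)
Your proof is correct and follows the same route as the paper's: both define $c$ as the sup-norm radius of $\Hc_0$ around $u$ (finite by compactness), establish the sandwich $u - ce \le v \le u + ce$, iterate via monotony and translation linearity, and derive (2) by applying (1) to a representative of $u'$ in $\Hc_0$, dividing by $n$, and letting $n\to\infty$. The only cosmetic difference is that you list "preservation of $\Hc$" among the ingredients, but the argument never actually uses it.
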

\begin{proof}
    Setting $c := \max_{v \in \Hc_0} \norm{v - u}_\infty < \infty$, one can check that $u - ce \le v \le u + ce$ for all $v \in \Hc_0$.
    this proves (1) for $n = 0$ and we then proceed by induction on $n \ge 0$.
    By induction, $\mathfrak{L}^n v \le u + (n \mathfrak{g} + c) e$ and by \cref{corollary:PMEVI-operator-regularity}, $\mathfrak{L}$ is monotone, so we have:
    \begin{equation*}
        \mathfrak{L}^{n+1} v 
        \le \mathfrak{L} \mathfrak{L}^n v 
        \le \mathfrak{L} ( u + (n \mathfrak{g} + c) e)
        = u + ((n+1) \mathfrak{g} + c)e
    \end{equation*}
    where the last inequality use the linearity of $\mathfrak{L}$ together with $\mathfrak{L} u = u + \mathfrak{g} e$.
    The lower bound of $\mathfrak{L}^n v$ is shown similarly, establishing (1).

    For (2), pick $u' \in \Fix(\mathfrak{L})$ with $\mathfrak{L} u' = u' + \mathfrak{g}' e$.
    Up to translating $u'$, we can assume that $u' \in \Hc_0$ and apply (1).
    We get:
    \begin{equation*}
        (n \mathfrak{g} - c)e + u \le n \mathfrak{g}' e + u' \le (n \mathfrak{g} + c) e + u.
    \end{equation*}
    Divided by $n$ and let it go to infinity.
    We conclude that $\mathfrak{g} = \mathfrak{g}'$.
\end{proof} 

We finally have everything in hand to claim that $\mathfrak{L}$ solves \eqref{equation:constrained-gain-optimization}.

\begin{corollary}
\label{corollary:growth}
    The \emph{growth} of $\mathfrak{L}$ given by $\mathfrak{g} = \mathfrak{L} u - u$ for $u \in \Fix(\mathfrak{L})$ is well-defined, and:
    \begin{equation*}
        \forall u \in \Hc, \quad
        \mathfrak{g} e 
        = \liminf_{n \to \infty} \frac{\mathfrak{L}^n u} n 
        = \limsup_{n \to \infty} \frac{\mathfrak{L}^n u} n
        .
    \end{equation*}
    Moreover, $\mathfrak{g} \ge g^*(\Hc, \beta, \Mc)$.
\end{corollary}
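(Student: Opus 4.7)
The plan is to split the statement into (i) well-definedness of the growth, (ii) the limit identity, and (iii) optimism. For (i), \cref{lemma:fixpoint} guarantees the existence of some $u^* \in \Fix(\mathfrak{L})$ with $\mathfrak{L} u^* = u^* + \mathfrak{g} e$, and \cref{lemma:growth}(2) asserts that this scalar $\mathfrak{g}$ does not depend on the choice of $u^*$. For (ii), \cref{lemma:growth}(1) sandwiches $\mathfrak{L}^n v$ between $(n\mathfrak{g} \pm c) e + u^*$ for any $v \in \Hc_0$; dividing by $n$ and letting $n \to \infty$ gives $\mathfrak{L}^n v / n \to \mathfrak{g} e$. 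An arbitrary $u \in \Hc$ decomposes as $u = v + \lambda e$ with $v \in \Hc_0$, and by linearity (\cref{corollary:PMEVI-operator-regularity}) one has $\mathfrak{L}^n u = \mathfrak{L}^n v + \lambda e$, so the $\limsup$ and $\liminf$ of $\mathfrak{L}^n u / n$ both equal $\mathfrak{g} e$.

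For (iii), I would fix an admissible pair $(\pi, \widetilde{M})$ as in \eqref{equation:constrained-gain-optimization}, with constant gain $\tilde{g} e$ and bias $\tilde{h} = h(\pi, \widetilde{M}) \in \Hc$, which satisfies the Poisson equation $\tilde{r}_\pi + \tilde{P}_\pi \tilde{h} = \tilde{h} + \tilde{g} e$. In the definition of $\Lc^\beta \tilde{h}(s)$, selecting $a = \pi(s)$, $\tilde{r}'(s,a) = \tilde{r}_\pi(s)$, $\tilde{p}'(s,a) = \tilde{p}(s, \pi(s))$ is feasible since $\widetilde{M} \in \Mc$; furthermore, the mitigation constraint $\tilde{p}(s,\pi(s)) \tilde{h} \le \hat{p}(s,\pi(s)) \tilde{h} + \beta(s,\pi(s))$ ensures that the inner $\min$ in the definition of $\Lc^\beta$ evaluates to $\tilde{p}(s,\pi(s)) \tilde{h}$. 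Thus $\Lc^\beta \tilde{h} \ge \tilde{r}_\pi + \tilde{P}_\pi \tilde{h} = \tilde{h} + \tilde{g} e$ pointwise. Since $\tilde{h} \in \Hc$, combining (\textbf{O4}) with the maximum definition of $\Gamma$ yields $\Gamma \tilde{h} = \tilde{h}$; and $\tilde{h} + \tilde{g} e \in \Hc$ by $\Hc = \Hc + \R e$, so monotonicity (\textbf{O1}) and linearity (\textbf{O3}) give $\mathfrak{L} \tilde{h} = \Gamma \Lc^\beta \tilde{h} \ge \Gamma(\tilde{h} + \tilde{g} e) = \tilde{h} + \tilde{g} e$. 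Iterating monotonically (using monotonicity and linearity of $\mathfrak{L}$ in the induction step), we get $\mathfrak{L}^n \tilde{h} \ge \tilde{h} + n \tilde{g} e$; dividing by $n$ and invoking the limit established in (ii) gives $\mathfrak{g} \ge \tilde{g}$. Taking the supremum over admissible pairs yields $\mathfrak{g} \ge g^*(\Hc, \beta, \Mc)$.

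The main obstacle is step (iii), specifically verifying that the mitigation clipping in $\Lc^\beta$ does not truncate away the candidate $\tilde{p}(s,\pi(s)) \tilde{h}$ supplied by $(\pi, \widetilde{M})$, and that the projection $\Gamma$ preserves the pointwise inequality $\Lc^\beta \tilde{h} \ge \tilde{h} + \tilde{g} e$. Both points rest on delicate compatibilities: the admissible pair was, by the very constraint in \eqref{equation:constrained-gain-optimization}, chosen so that its own $\tilde{p} \tilde{h}$ lies below the $\beta$-cap (so the $\min$ retains the right branch); and the projection preserves the inequality because both $\tilde{h}$ and $\tilde{h} + \tilde{g} e$ lie in $\Hc$, which makes $\Gamma$ fix them by linearity and the max-definition.
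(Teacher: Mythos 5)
Your proof is correct and takes essentially the same approach as the paper: establish $\mathfrak{L}\tilde{h}\ge\tilde{h}+\tilde{g}e$ for an admissible pair (by checking the $\beta$-cap retains the candidate transition and that $\Gamma$, being monotone and the identity on $\Hc$, preserves the inequality), then iterate by monotonicity and compare with the growth of $\mathfrak{L}^n$. The only cosmetic difference is at the end, where you invoke the limit $\mathfrak{L}^n\tilde{h}/n\to\mathfrak{g}e$ from part (ii), while the paper rearranges the two-sided sandwich of \cref{lemma:growth}(1) and sends $n\to\infty$ — the two are interchangeable.
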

\begin{proof}
    The growth property is a direct consequence of \cref{lemma:growth}.
    We show $\mathfrak{g} \ge g^*(\Hc, \beta, \Mc)$ which is defined in \eqref{equation:constrained-gain-optimization}.
    Pick $\pi \in \Pi, \widetilde{M} \in \Mc$ its model with $\tilde{h} \equiv h(\pi, \widetilde{M})$ and $\tilde{P}_\pi \tilde{h} \le \hat{P}_\pi \tilde{h} + \beta_\pi$ where $\beta_\pi(s) := \beta(s, \pi(s))$.
    Up to translation, we can assume that $\tilde{h} \in \Hc_0$.

    We have $g(\pi, \widetilde{M}) = \tilde{g} e$ for $\tilde{g} \in \R$, so
    \begin{equation*}
        \tilde{h} + \tilde{g} e = \tilde{r}_\pi + \tilde{P}_\pi \tilde{h} \le \mathfrak{L} \tilde{h}
    \end{equation*}
    by definition.
    By monotony of $\mathfrak{L}$, see \cref{corollary:PMEVI-operator-regularity}, $n \tilde{g} e + \tilde{h} \le \mathfrak{L}^n \tilde{h}$ follows by induction on $n \ge 0$.
    By \cref{lemma:growth}, we further have $\mathfrak{L}^n \tilde{h} \le n (\mathfrak{g} + c) e + u$ where $u \in \Fix(\mathfrak{L})$.
    In tandem,
    \begin{equation*}
        \tilde{g} e \le \mathfrak{g} e + \frac{c e + u - \tilde{h}} n.
    \end{equation*}
    Letting $n \to \infty$, we deduce that $\tilde{g} \le \mathfrak{g}$.
    Conclude by taking the best $\pi$ and $\widetilde{M}$.
\end{proof}

The next theorem follows directly with the same proof technique, and guarantees optimism.

\begin{theorem}
\label{theorem:optimism}
    Assume that $g^* + h^* \le \mathfrak{L} h^*$.
    Then $\mathfrak{g} \ge g^*$.
\end{theorem}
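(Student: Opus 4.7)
The plan is to mirror the structure of the proof of \cref{corollary:growth}: establish an iterated lower bound on $\mathfrak{L}^n h^*$, compare it with the general upper bound on $\mathfrak{L}^n v$ provided by \cref{lemma:growth}, and let $n \to \infty$.

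First I would show by induction on $n \ge 0$ that
\begin{equation*}
    n g^* e + h^* \le \mathfrak{L}^n h^*.
\end{equation*}
The base case $n=0$ is trivial. For the induction step, assuming the bound at $n$, apply $\mathfrak{L}$ on both sides. By monotony of $\mathfrak{L}$ (\cref{corollary:PMEVI-operator-regularity}), $\mathfrak{L}(n g^* e + h^*) \le \mathfrak{L}^{n+1} h^*$, and by linearity $\mathfrak{L}(n g^* e + h^*) = n g^* e + \mathfrak{L} h^*$. Combined with the hypothesis $g^* e + h^* \le \mathfrak{L} h^*$, this yields $(n+1) g^* e + h^* \le \mathfrak{L}^{n+1} h^*$, closing the induction.

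Next I would apply \cref{lemma:growth}(1) to bound $\mathfrak{L}^n h^*$ from above. Since $h^*$ may not lie in $\Hc_0$, I would first observe that $\mathfrak{L} h^* \in \Hc = \Hc_0 + \R e$, so write $\mathfrak{L} h^* = v_0 + \lambda e$ with $v_0 \in \Hc_0$ and $\lambda \in \R$. Using linearity, $\mathfrak{L}^n h^* = \mathfrak{L}^{n-1} v_0 + \lambda e$ for $n \ge 1$, so \cref{lemma:growth}(1) gives a constant $C \ge 0$ and a fix-point $u \in \Fix(\mathfrak{L})$ such that
\begin{equation*}
    \mathfrak{L}^n h^* \le n \mathfrak{g} e + C e + u
\end{equation*}
for all $n \ge 1$.

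Finally, I would combine both bounds: $n g^* e + h^* \le n \mathfrak{g} e + C e + u$, subtract $h^*$, divide by $n$ and send $n \to \infty$, which gives $g^* \le \mathfrak{g}$ coordinate-wise; since both sides are scalars, this is the desired inequality $\mathfrak{g} \ge g^*$. There is essentially no technical obstacle here: the only subtlety is that $h^*$ is not a priori in $\Hc_0$ when applying \cref{lemma:growth}, which is handled by one application of $\mathfrak{L}$ followed by translation, absorbed harmlessly into the additive constant $C$ that vanishes in the limit.
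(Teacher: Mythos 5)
Your proof is correct and fills in exactly the argument the paper gestures at when it says the theorem ``follows directly with the same proof technique'' as Corollary~\ref{corollary:growth}: an induction giving $n g^* e + h^* \le \mathfrak{L}^n h^*$, combined with the upper bound from Lemma~\ref{lemma:growth}(1), and a limit $n \to \infty$. Your extra care in handling the fact that $h^*$ need not lie in $\Hc_0$ (or even in $\Hc$) --- applying $\mathfrak{L}$ once to land in $\Hc$, then translating --- is the right fix for a subtlety the paper glosses over, since here (unlike in Corollary~\ref{corollary:growth}) there is no constraint placing $h^*$ in $\Hc$.
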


The condition ``$g^* + h ^* \le \mathfrak{L}h^*$'' can be referred to as a \emph{weak} form of optimism.
We qualify this version of optimism as \emph{weak} because it is much weaker than optimism property suggested by \cite{fruit_exploration-exploitation_2019} $\Lc \ge L$ where $L$ is the Bellman operator of the true MDP.
Here, we only ask for $\mathfrak{L} h^* \ge Lh^*$, i.e., optimism at the fix-point of $L$.
This condition is met as soon as $M \in \Mc$, $h^* \in \Hc$ and $\beta$ is large enough, but is in fact much more general.

\subsection{Modelization of the projected mitigated Bellman operator $\mathfrak{L}$}
\label{section:modelization}

The aim of this paragraph is to establish \cref{corollary:modelization}, stating that $\mathfrak{L} u$ can be viewed as a policy produced by \texttt{Greedy}$(\Mc, u, \beta)$.

\begin{lemma}[Modelization]
\label{lemma:modelization}
    For $\pi \in \Pi$, denote $\beta_\pi(s) := \beta(s, \pi(s))$, $\Rc_\pi := \prod_s \Rc(s, \pi(s))$ and $\Pc_\pi := \prod_s \Pc(s, \pi(s))$.
    Fix $u \in \R^\Sc$ and let $\pi := \texttt{\upshape Greedy}(\Mc, u, \beta)$.

    \begin{enumerate}[itemsep=-.25em]
        \item[(1)] 
            If $\Pc$ is convex, then there exists $(\tilde{r}_\pi, \tilde{P}_\pi) \in \Rc_\pi \times \Pc_\pi$ such that $\Lc_\beta u = \tilde{r}_\pi + \tilde{P}_\pi u$.
        \item[(2)]
            Assume that $\Lc_\beta u = \tilde{r}_\pi + \tilde{P}_\pi u$.
            There exists $r'_\pi \le \tilde{r}_\pi$ such that $\mathfrak{L} u = r'_\pi + \tilde{P}_\pi u$.
    \end{enumerate}
\end{lemma}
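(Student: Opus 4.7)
\textbf{Proof plan for \cref{lemma:modelization}.} The plan is to build the model $(\tilde r_\pi, \tilde P_\pi)$ state by state, at each state $s$ using the action $a := \pi(s)$ that attains the outer $\max$ in $\Lc^\beta u(s)$, and exploiting the convexity of $\Pc(s,a)$ to realize the inner suprema \emph{exactly}, not just up to inequality.

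For part (1), fix $s$ and write $a := \pi(s)$. Set $\tilde r_\pi(s) := \sup \Rc(s,a) \in \Rc(s,a)$ (the sup is attained for all confidence regions considered, which are closed). It remains to find $\tilde P_\pi(s) \in \Pc(s,a)$ with
\[ \tilde P_\pi(s) u = \sup_{\tilde p \in \Pc(s,a)} \min\set{\tilde p u,\ \hat p(s,a) u + \beta(s,a)}. \]
Let $V := \sup_{\tilde p \in \Pc(s,a)} \tilde p u$ and let $p^*$ be a maximizer, which exists by compactness of $\Pc(s,a)$. If $V \le \hat p(s,a) u + \beta(s,a)$, set $\tilde P_\pi(s) := p^*$; the inner min then equals $V = p^* u$, as needed. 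Otherwise invoke convexity: since $\hat p(s,a) \in \Pc(s,a)$ (a standing property of each confidence region \textbf{(C1)}--\textbf{(C4)}), the curve $\lambda \mapsto \lambda p^* + (1-\lambda) \hat p(s,a)$ stays in $\Pc(s,a)$ and its pairing with $u$ varies continuously from $\hat p(s,a) u$ to $V > \hat p(s,a) u + \beta(s,a)$. By the intermediate value theorem, some $\lambda \in [0,1]$ produces $\tilde P_\pi(s) \in \Pc(s,a)$ with $\tilde P_\pi(s) u = \hat p(s,a) u + \beta(s,a)$, which is precisely the saturated value of the mitigated branch. In either case $\tilde r_\pi(s) + \tilde P_\pi(s) u = \Lc^\beta u(s)$, and since $\pi(s)$ attains the outer max by definition of $\texttt{Greedy}(\Mc, u, \beta)$, the claim follows.

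For part (2), given the decomposition $\Lc^\beta u = \tilde r_\pi + \tilde P_\pi u$, define $r'_\pi := \mathfrak{L} u - \tilde P_\pi u = \Gamma(\Lc^\beta u) - \tilde P_\pi u$. By construction $\mathfrak{L} u = r'_\pi + \tilde P_\pi u$, and by property (\textbf{O4}) of the projection $\Gamma$ recalled in \cref{proposition:PMEVI}, $\Gamma(\Lc^\beta u) \le \Lc^\beta u$ pointwise, hence $r'_\pi \le \Lc^\beta u - \tilde P_\pi u = \tilde r_\pi$, as claimed.

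The main obstacle is the realizability in part (1): without convexity coupled with $\hat p(s,a) \in \Pc(s,a)$, one could only arrange $\tilde P_\pi(s) u \ge \hat p(s,a) u + \beta(s,a)$ in the mitigated branch, which would correctly evaluate the inner min but make $\tilde r_\pi + \tilde P_\pi u$ strictly exceed $\Lc^\beta u$ and break the stated equality; convexity is what turns the inner $\sup$ into an intermediate-value problem, allowing the exact hit $\tilde P_\pi(s) u = \hat p(s,a) u + \beta(s,a)$. Part (2) is then just bookkeeping: the projection defect $\Lc^\beta u - \Gamma(\Lc^\beta u) \ge 0$ is absorbed entirely into the reward component, which is the only direction in which (\textbf{O4}) gives a signed inequality.
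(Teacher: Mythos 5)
Your proof is correct and takes essentially the same approach as the paper: both realize the mitigated supremum exactly by a case split on whether the mitigation saturates, using compactness in the unsaturated case and convexity together with $\hat p(s,a) \in \Pc(s,a)$ and the intermediate-value theorem in the saturated case, and both establish (2) by setting $r'_\pi := \Gamma(\Lc^\beta u) - \tilde P_\pi u$ and invoking (\textbf{O4}). The only cosmetic difference is that you interpolate from the maximizer $p^*$ whereas the paper interpolates from an arbitrary $\tilde p$ exceeding the threshold; both work identically.
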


The convexity requirement of (1) is always true if the kernel confidence region is chosen via (\textbf{C1-4}).

\begin{proof}
    For (1), fix a state $s \in \Sc$, let $a := \pi(s)$ and $\rho := \min(\sup \Pc(s,a) u, \hat{p}(s,a) u + \beta(s,a))$.
    If $\rho = \sup \Pc(s,a) u$, then there is nothing to say because $\Pc$ is compact, hence the sup is a max and $\rho$ is of the form $\tilde{p}(s,a) u$.
    Otherwise, let $\tilde{p}(s,a) u > \hat{p}(s,a) u + \beta(s,a)$ with $\tilde{p}(s,a) \in \Pc(s,a)$.
    Introduce, for $\lambda \in [0, 1]$,
    \begin{equation*}
        \tilde{p}_\lambda (s,a) := \lambda \tilde{p}(s,a) + (1 - \lambda) \hat{p}(s,a).
    \end{equation*}
    By continuity, there exists $\lambda \in (0, 1)$ such that $\tilde{p}_\lambda (s,a) u = \hat{p}(s,a) u + \beta(s,a)$ and by convexity of $\Pc(s,a)$, $\tilde{p}_\lambda(s,a) \in \Pc(s,a)$.
    This proves (1).

    For (2), recall that $\mathfrak{L} u = \Gamma \Lc^\beta u = \Gamma (\tilde{r}_\pi + \tilde{P}_\pi u)$.
    Since $\Gamma v \le v$, for $v \in \R^\Sc$, we have:
    \begin{equation*}
        \Gamma (\tilde{r}_\pi + \tilde{P}_\pi u) \le \tilde{r}_\pi + \tilde{P}_\pi u.
    \end{equation*}
    Set $r'_\pi := \Gamma(\tilde{r}_\pi + \tilde{P}_\pi u) - \tilde{P}_\pi u$.
    Check that $r'_\pi$ satisfies $r'_\pi \le \tilde{r}_\pi$ and $\mathfrak{L} u = r'_\pi + \tilde{P}_\pi u$.
\end{proof} 

The last corollary bellow is crucial to claim that greedy policies are good choices in \texttt{PMEVI-DT}.

\begin{corollary}[Greedy modelization]
\label{corollary:modelization}
    Let $u \in \R^\Sc$ and fix $\pi := \texttt{\upshape Greedy}(\Mc, u, \beta)$.
    If $\Pc$ is convex, then with the notations of \cref{lemma:modelization}, there exists $\tilde{r}_\pi \le \sup \Rc_\pi$ and $\tilde{P}_\pi \in \Pc_\pi$ such that $\mathfrak{L}u = \tilde{r}_\pi + \tilde{P}_\pi u$.
\end{corollary}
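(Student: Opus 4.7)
The plan is to chain together the two statements of \cref{lemma:modelization} directly. Since $\pi := \texttt{Greedy}(\Mc, u, \beta)$ is produced to pick actions achieving $\Lc^\beta u$, and we assumed $\Pc$ convex, we are in the setting where the hypotheses of both parts of the lemma are met, so the corollary should reduce to essentially renaming the objects.

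More concretely, I would proceed as follows. First, apply part~(1) of \cref{lemma:modelization}: convexity of $\Pc$ yields a pair $(\tilde{r}'_\pi, \tilde{P}_\pi) \in \Rc_\pi \times \Pc_\pi$ such that
\begin{equation*}
    \Lc^\beta u = \tilde{r}'_\pi + \tilde{P}_\pi u.
\end{equation*}
In particular, since $\tilde{r}'_\pi \in \Rc_\pi = \prod_s \Rc(s, \pi(s))$, we have $\tilde{r}'_\pi(s) \le \sup \Rc(s, \pi(s))$ for every $s$, i.e.~$\tilde{r}'_\pi \le \sup \Rc_\pi$ componentwise. Second, feed this representation of $\Lc^\beta u$ into part~(2) of \cref{lemma:modelization} to obtain $r'_\pi \le \tilde{r}'_\pi$ with
\begin{equation*}
    \mathfrak{L} u = \Gamma \Lc^\beta u = r'_\pi + \tilde{P}_\pi u.
\end{equation*}
Setting $\tilde{r}_\pi := r'_\pi$, the chain $\tilde{r}_\pi \le \tilde{r}'_\pi \le \sup \Rc_\pi$ gives the reward bound while $\tilde{P}_\pi \in \Pc_\pi$ from step~one is exactly the required kernel condition, completing the proof.

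There is essentially no obstacle here: all the real work has been done inside \cref{lemma:modelization}. The only minor thing worth checking is that the policy produced by \texttt{Greedy}, when plugged into part~(1), produces a kernel $\tilde{P}_\pi$ that is the same one used in part~(2); this is automatic because part~(2) takes the output of part~(1) as its input. Thus the corollary is a direct consequence of composing the two statements, and no additional argument beyond this assembly is required.
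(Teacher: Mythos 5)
Your proof is correct and matches what the paper intends: the corollary has no separate proof in the paper precisely because it is the direct composition of parts (1) and (2) of \cref{lemma:modelization}, which is exactly what you do.
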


\newpage
\section{Proof of \cref{theorem:main}: Regret analysis of \texttt{PMEVI-DT}}

We recall a few notations.
At episode $k$, the played policy is denoted $\pi_k$.
As a greedy response to $\mathfrak{h}_k$, by \cref{proposition:PMEVI} (3), there exists $\tilde{r}_k(s) \le \sup \Rc_{t_k}(s, \pi_k(s))$ and $\tilde{P}_k(s) \in \Pc_{t_k}(s, \pi(x))$ such that $\mathfrak{h}_k + \mathfrak{g}_k = \tilde{r}_k + \tilde{P}_k\mathfrak{h}_k$.
The reward-kernel pair $\tilde{M}_k = (\tilde{r}_k, \tilde{P}_k)$ is referred to as the \emph{optimistic model} of $\pi_k$.
We write $P_k := P_{\pi_k}(M)$ the true kernel and $\hat{P}_k := P_{\pi_k} (\hat{M}_{t_k})$ the empirical kernel.
Likewise, we define the reward functions $r_k$ and $\hat{r}_k$. 
The optimistic gain and bias satisfy $\mathfrak{g}_k = g(\pi_k, \widetilde{M}_k)$ and $\mathfrak{h}_k = h(\pi_k, \widetilde{M}_k)$. 
We further denote $c_0 = T^{\frac 15}$.

\paragraph{Important remark.}
To slightely simplify the analysis, we assume that \texttt{PMEVI} is run with perfect precision $\epsilon = 0$, i.e., that $\mathfrak{h}_k = \texttt{PMEVI}(\Mc_{t_k}, \beta_{t_k}, \Gamma_{t_k}, 0)$ hence is a span fix-point of $\mathfrak{L}_{t_k}$.
This assumption is mild and can be dropped by adding an extra error term that has to be carried out in the calculations.

\subsection{Number of episodes under doubling trick (DT)}

\begin{lemma}[Number of episodes, \cite{auer_near-optimal_2009}]
\label{lemma:episode-number}
    The number of episodes up to time $T \ge SA$ is upper-bounded by:
    \begin{equation*}
        K(T) \le SA \log_2\parens{\tfrac{8T}{SA}}
        .
    \end{equation*}
\end{lemma}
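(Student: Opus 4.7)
The plan is a standard doubling-trick argument: for each state–action pair, the number of episodes that end because of that pair's visit count is logarithmic, and summing over all pairs via Jensen's inequality gives the claimed bound.

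First, I would fix $(s,a) \in \Xc$ and let $K_{s,a}$ denote the number of episodes $k < K(T)$ whose termination was triggered by the stopping condition (DT) being met at $(S_t, \pi_k(S_t)) = (s,a)$. By the rule (DT), whenever $(s,a)$ triggers the end of episode $k$, we have $N_{t_{k+1}}(s,a) \ge \max\set{1, 2 N_{t_k}(s,a)}$. A trivial induction then shows that if $(s,a)$ triggers $j$ episode ends, the count must satisfy $N_{t_{k_j}}(s,a) \ge 2^{j-1}$ after the $j$-th such trigger. Since $N_{t_{k_j}}(s,a) \le N_T(s,a)$, this gives $K_{s,a} \le 1 + \log_2(\max\set{1, N_T(s,a)})$.

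Next, I would sum over all pairs. Every episode (except possibly the last, unfinished one) is triggered by exactly one pair, so
\begin{equation*}
    K(T) \le \sum_{(s,a) \in \Xc} K_{s,a} \le SA + \sum_{(s,a) \in \Xc} \log_2 \max\set{1, N_T(s,a)}.
\end{equation*}
Using concavity of $\log_2$ together with $\sum_{(s,a)} N_T(s,a) = T$ and Jensen's inequality, the last sum is at most $SA \log_2 (T / SA)$ provided $T \ge SA$. Combining gives $K(T) \le SA + SA \log_2(T/SA) = SA \log_2(2T/SA) \le SA \log_2(8T/SA)$.

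There is essentially no obstacle here; the only mild subtleties are handling pairs with $N_T(s,a) = 0$ (they contribute nothing and can be dropped from the Jensen step), dealing with the very first episode where every $N_{t_1}(s,a) = 0$ (absorbed by the additive $SA$ term), and ensuring that one triggering pair per episode is counted, which follows directly from the \textbf{until} clause in \hyperlink{algorithm:PMEVI-dt}{Algorithm~1}. The slack factor $8$ in $\log_2(8T/SA)$ is deliberate to give a clean bound with room to absorb these low-order considerations.
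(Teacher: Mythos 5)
Your approach is the standard doubling-trick count from \cite{auer_near-optimal_2009}, which the paper cites for this lemma without reproducing a proof; the per-pair episode count $K_{s,a}\le 1+\log_2\max\{1,N_T(s,a)\}$ followed by a Jensen step is exactly the right route, and the result is correct.

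The one step that does not go through as literally written is the Jensen application. You drop the pairs with $N_T(s,a)=0$ and then apply Jensen to $\log_2$ over the remaining $m\le SA$ pairs, which gives $\sum\log_2 N_T(s,a)\le m\log_2(T/m)$. But $m\mapsto m\log_2(T/m)$ is not monotone on $(0,SA]$ (it peaks at $m=T/e$), so the asserted bound $m\log_2(T/m)\le SA\log_2(T/SA)$ can fail: take $T=SA$ with a single pair carrying all $T$ visits, so the left-hand side is $\log_2 T>0$ while the right-hand side is $0$. The clean fix is to keep all $SA$ pairs and use that $\log_2\max\{1,n\}\le\log_2(1+n)$, which \emph{is} concave; then Jensen over all $SA$ pairs and $T\ge SA$ give $\sum_{s,a}\log_2(1+N_T(s,a))\le SA\log_2(1+T/SA)\le SA\log_2(2T/SA)$. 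Combining with the $+1$ for the possibly unfinished final episode and the additive $SA$ from the per-pair bound yields $K(T)\le 1+SA+SA\log_2(2T/SA)\le 2SA+SA\log_2(2T/SA)=SA\log_2(8T/SA)$, so the factor $8$ you reserved does indeed absorb everything once the Jensen step is done on a genuinely concave function.
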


\subsection{Sum of bias variances}

The \cref{lemma:variance-sum} below shows that $\sum_{t=0}^{T-1} \Var(p(X_t), h^*)$ scales as $T\sp{h^*} \sp{r} + \sp{h^*} \Reg(T)$ in probability.

\begin{lemma}
\label{lemma:variance-sum}
    With probability at least $1 - \delta$, we have:
    \begin{equation*}
        \sumnl_{t=0}^{T-1} \Var(p(X_t), h^*)
        \le
        2 \sp{h^*} \sp{r} T 
        + \sp{h^*}^2 \sqrt{\tfrac 12 T \log\parens{\tfrac 1\delta}}
        + 2 \sp{h^*} \sumnl_{t=0}^{T-1} \Delta^*(X_t) 
        + \sp{h^*}^2.
    \end{equation*}
\end{lemma}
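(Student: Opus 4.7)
The plan is to decompose the variance via the identity $\Var(p(X_t), h^*) = p(X_t)(h^*)^2 - (p(X_t)h^*)^2$, handle the first term by a martingale concentration argument (Azuma--Hoeffding), and handle the second term using the Bellman equation, which naturally produces the Bellman gap $\Delta^*$. As a harmless preliminary, I translate $h^*$ so that $\min(h^*)=0$: this leaves both $\Var(p(X_t), h^*)$ and $\Delta^*(X_t)$ unchanged while ensuring $h^*(s) \in [0, \sp{h^*}]$ and $h^*(s)^2 \in [0, \sp{h^*}^2]$.

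For the first term, $D_t := h^*(S_{t+1})^2 - p(X_t)(h^*)^2$ is an $(\Fc_t)$-martingale difference, and crucially, conditionally on $\Fc_t$ the variable $D_t$ takes values in an interval of length at most $\sp{h^*}^2$ (the conditional mean is fixed and the only randomness is $h^*(S_{t+1})^2 \in [0, \sp{h^*}^2]$). A conditional version of Azuma--Hoeffding then gives, with probability at least $1-\delta$,
\begin{equation*}
    \sumnl_{t=0}^{T-1} p(X_t)(h^*)^2
    \le
    \sumnl_{t=0}^{T-1} h^*(S_{t+1})^2
    + \sp{h^*}^2 \sqrt{\tfrac 12 T \log(1/\delta)},
\end{equation*}
and a telescoping step yields $\sum_{t=0}^{T-1} h^*(S_{t+1})^2 \le \sum_{t=0}^{T-1} h^*(S_t)^2 + \sp{h^*}^2$.

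For the second term, weak communicativity provides a constant $g^* \in \R e$ and the Bellman identity $p(X_t)h^* = h^*(S_t) + g^* - r(X_t) - \Delta^*(X_t)$. Setting $a := h^*(S_t) \ge 0$ and $b := g^* - r(X_t) - \Delta^*(X_t)$, the elementary computation $a^2 - (a+b)^2 = -2ab - b^2 \le -2ab$ gives
\begin{equation*}
    h^*(S_t)^2 - (p(X_t)h^*)^2
    \le
    2 h^*(S_t) \parens{r(X_t) - g^*} + 2 h^*(S_t) \Delta^*(X_t).
\end{equation*}
Since $g^*$ is a long-run reward average it lies in $[\min r, \max r]$, so $r(X_t) - g^* \le \sp{r}$; combined with $h^*(S_t) \le \sp{h^*}$ and $\Delta^*(X_t) \ge 0$, summing over $t$ produces the bound $2 \sp{h^*}\sp{r} T + 2 \sp{h^*} \sum_t \Delta^*(X_t)$.

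Assembling the three inequalities yields exactly the claimed estimate. The only delicate point is to invoke Azuma--Hoeffding with the \emph{conditional} range of $D_t$ rather than its absolute range, otherwise the concentration term blows up by a factor of two and no longer matches the announced leading constant $\tfrac 12$; everything else is direct algebra plus telescoping.
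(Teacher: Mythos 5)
Your proof is correct and follows essentially the same route as the paper's: decompose the variance as $p(X_t)h^{*2} - (p(X_t)h^*)^2$, bound the first part by telescoping plus Azuma--Hoeffding, and expand the second part via the Bellman identity, discarding the $-b^2\le 0$ term from $a^2-(a+b)^2=-2ab-b^2$. Your version is actually slightly more careful: you make the translation $\min(h^*)=0$ explicit (which is needed both for $\sp{h^{*2}}\le\sp{h^*}^2$ and for the bound $h^*(S_t)(r(X_t)-g^*)\le\sp{h^*}\sp{r}$), and you clarify that the step $a^2-(a+b)^2\le -2ab$ is an inequality, whereas the paper writes it as an equality by a typo.
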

\begin{proof}
    Using the Bellman equation $h^*(s) + g^*(s) = r(s,a) + p(s, a)h^* + \Delta^*(s,a)$, we have:
    \begin{equation*}
        \Var(p(X_t), h^*) 
        = 
        \parens{p(X_t) - e_{S_t}} h^{*2} 
        + 2 h^*(S_t) (\Delta^*(X_t) + r(X_t) - g^*(S_t))
        .
    \end{equation*}
    Since $\sp{h^{*2}} \le \sp{h^*}^2$, we get:
    \begin{align*}
        \sumnl_{t=0}^{T-1} \Var(p(X_t), h^*) 
        & \le
        \sumnl_{t=0}^{T-1} \parens{p(X_t) - e_{S_t}} h^{*2} 
        + 2 \sp{h^*} \parens{
            \sp{r} T + \sumnl_{t=0}^{T-1} \Delta^*(X_t)
        }
        \\
        & = 
        \sumnl_{t=0}^{T-1} \parens{p(X_t) - e_{S_{t+1}}} h^{*2}
        + 2 \sp{h^*} \parens{
            \tfrac 12 \sp{h^*}
            \sp{r} T + \sumnl_{t=0}^{T-1} \Delta^*(X_t)
        }
        \\
        \text{(\cref{lemma:azuma})} & \le
        2 \sp{h^*} \sp{r} T + \sp{h^*}^2 \sqrt{\tfrac 12 T \log\parens{\tfrac 1\delta}} + 2 \sp{h^*} \sumnl_{t=0}^{T-1} \Delta^*(X_t) + \sp{h^*}^2
    \end{align*}
    where the last inequality holds with probability $1 - \delta$.
    This concludes the proof.
\end{proof}

\subsection{Regret and pseudo-regret: A tight relation}

In this paragraph, we bound the regret with respect to the pseudo-regret (and conversely) up to a factor of order $(\sp{h^*} \sp{r} \log(\tfrac T\delta))^{1/2}$. Hence, in proofs, the pseudo-regret can be changed to the regret with ease.

\begin{lemma}
\label{lemma:regret-pseudo-regret}
    With probability $1 - 4 \delta$, the regret and the pseudo-regret and linked as follows:
    \begin{equation*}
        \abs{
            \sum_{t=0}^{T-1} (g^* - R_t)
            - 
            \sum_{t=0}^{T-1} \Delta^*(X_t)
        }
        \le 
        \begin{Bmatrix}
           2\!\!\sqrt{\parens{2 \sp{h^*} \sp{r} + \tfrac 18} T \log\parens{\tfrac T\delta}} 
           + \!\!\sqrt{2 \sp{h^*} \log\parens{\tfrac T\delta} \sumnl_{t=0}^{T-1} \Delta^*(X_t)} \\
            + \sp{h^*} \parens{\tfrac 12 T}^{\frac 14} \log^{\frac 34} \parens{\frac T\delta} + 4 \sp{h^*} \log\parens{\tfrac T\delta} + 2 \sp{h^*}
        \end{Bmatrix}
        .
    \end{equation*}
\end{lemma}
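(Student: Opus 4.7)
The strategy is to decompose $g^\ast - R_t - \Delta^\ast(X_t)$ into martingale differences using the Bellman equation, apply a Freedman-type concentration inequality (with the variance sum controlled by \cref{lemma:variance-sum}), and handle the reward noise separately by Azuma-Hoeffding. The main work is to keep the constants clean and to untangle the self-referential appearance of the pseudo-regret inside the variance bound.

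\textbf{Step 1: Algebraic decomposition.} Starting from the Bellman equation $h^\ast(s) + g^\ast = r(s,a) + p(s,a)h^\ast + \Delta^\ast(s,a)$, evaluated at $(s,a) = X_t$, I get
\begin{equation*}
    g^\ast - R_t - \Delta^\ast(X_t)
    = (p(X_t) - e_{S_t})h^\ast + (r(X_t) - R_t)
    .
\end{equation*}
Summing over $t = 0, \ldots, T-1$ and inserting $e_{S_{t+1}}$ to create a telescoping martingale difference, the difference of regret and pseudo-regret equals
\begin{equation*}
    \underbrace{\sumnl_{t=0}^{T-1} (p(X_t) - e_{S_{t+1}})h^\ast}_{=: M_T}
    \;+\;
    \underbrace{h^\ast(S_T) - h^\ast(S_0)}_{\text{bounded by } \sp{h^\ast}}
    \;+\;
    \underbrace{\sumnl_{t=0}^{T-1} (r(X_t) - R_t)}_{=: N_T}
    .
\end{equation*}

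\textbf{Step 2: Control $N_T$ by Azuma-Hoeffding.} The $N_T$ summands form a martingale difference sequence bounded by $1$ (since rewards lie in $[0,1]$), so Azuma's inequality (\cref{lemma:azuma}, time-uniform version) yields $|N_T| \le \sqrt{\tfrac{1}{2}T\log(\tfrac{2T}{\delta})}$ with probability $1-\delta$.

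\textbf{Step 3: Control $M_T$ by Freedman.} The summands of $M_T$ are also a martingale difference sequence, each bounded by $\sp{h^\ast}$ in absolute value, with conditional variance $\Var(p(X_t), h^\ast)$. Applying a time-uniform Freedman-Bernstein inequality (analogous to \cref{lemma:empirical-bernstein} at the population level), with probability $1-\delta$,
\begin{equation*}
    |M_T|
    \le
    \sqrt{2 V_T \log\parens{\tfrac{T}{\delta}}}
    + 2 \sp{h^\ast} \log\parens{\tfrac{T}{\delta}}
    ,
    \quad
    V_T := \sumnl_{t=0}^{T-1} \Var(p(X_t), h^\ast)
    .
\end{equation*}
By \cref{lemma:variance-sum}, on an event of probability $1-\delta$,
\begin{equation*}
    V_T
    \le
    2 \sp{h^\ast}\sp{r} T
    + \sp{h^\ast}^2 \sqrt{\tfrac{1}{2} T \log\parens{\tfrac{1}{\delta}}}
    + 2 \sp{h^\ast} \sumnl_{t=0}^{T-1}\Delta^\ast(X_t)
    + \sp{h^\ast}^2
    .
\end{equation*}

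\textbf{Step 4: Split and assemble.} Plugging the variance bound into the Freedman estimate and using $\sqrt{a+b+c+d} \le \sqrt{a}+\sqrt{b}+\sqrt{c}+\sqrt{d}$ gives
\begin{align*}
    |M_T|
    &\le
    \sqrt{4 \sp{h^\ast}\sp{r} T \log\parens{\tfrac{T}{\delta}}}
    + \sp{h^\ast} \parens{\tfrac{1}{2} T}^{1/4} \log^{3/4}\parens{\tfrac{T}{\delta}}
    \\
    &\quad
    + \sqrt{4 \sp{h^\ast} \log\parens{\tfrac{T}{\delta}} \sumnl_{t=0}^{T-1}\Delta^\ast(X_t)}
    + \sp{h^\ast}\sqrt{2 \log\parens{\tfrac{T}{\delta}}}
    + 2\sp{h^\ast}\log\parens{\tfrac{T}{\delta}}
    .
\end{align*}
Summing with $|N_T|$, applying $\sqrt{a}+\sqrt{b}\le \sqrt{2(a+b)}$ to merge $\sqrt{4\sp{h^\ast}\sp{r}T\log(T/\delta)}$ and $\sqrt{\tfrac{1}{2} T\log(2T/\delta)}$ into $2\sqrt{(2\sp{h^\ast}\sp{r}+\tfrac{1}{8})T\log(T/\delta)}$, and absorbing the lower-order $\sp{h^\ast}\sqrt{2\log(T/\delta)}$ into the $\sp{h^\ast}\log(T/\delta)$ term (with a suitable constant) and the telescoping boundary $\sp{h^\ast}$ into the additive $2\sp{h^\ast}$, produces the announced bound. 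The total failure probability is at most $4\delta$ ($1$ for Azuma on $N_T$, $1$ for Freedman on $M_T$, $1$ for \cref{lemma:variance-sum}; the extra one accounts for the slight factor mismatch in the Azuma $\log(2T/\delta)$ versus $\log(T/\delta)$, rebudgeted).

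\textbf{Main obstacle.} The subtle point is Step~4: the bound features $\sqrt{2\sp{h^\ast}\log(T/\delta)\sum_t\Delta^\ast(X_t)}$ on its right-hand side while the left-hand side is precisely the regret (close to the pseudo-regret). In the downstream use of this lemma, one must beware that this creates a self-bounding relation; here, however, \cref{lemma:regret-pseudo-regret} is stated as a one-sided comparison and no self-referential resolution is needed inside the proof itself. The delicate bookkeeping is only in the constant-tracking when combining the square roots.
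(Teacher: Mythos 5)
Your proof is correct and follows essentially the same route as the paper's: decompose $g^*-R_t-\Delta^*(X_t)$ via the Poisson equation into the navigation martingale $(p(X_t)-e_{S_{t+1}})h^*$ plus reward noise $(r(X_t)-R_t)$ plus the telescoping boundary, bound the reward noise by Azuma, bound the navigation martingale by Freedman with the conditional variance estimated through \cref{lemma:variance-sum}, and split the resulting square root. The only discrepancies are minor constant-tracking ones (e.g., \cref{lemma:additive-freedman} gives $4\sp{h^*}\log(T/\delta)$ rather than your $2\sp{h^*}\log(T/\delta)$, and the paper's failure-probability budget is $2\delta$ for two-sided Azuma $+\delta+\delta$ rather than your rebudgeting), none of which affect the validity of the argument.
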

\begin{proof}
    We rely again on the Poisson equation $g^*(S_t) - r(X_t) - \Delta^*(X_t) = (p(X_t) - e_{S_t}) h^*$, so: 
    \begin{align*}
        \textrm{A} 
        := \abs{
            \sumnl_{t=0}^{T-1} (g^* - R_t - \Delta^*(X_t))
        }
        & \le 
        \abs{
            \sumnl_{t=0}^{T-1}
            \parens{p(X_t) - e_{S_t}} h^*
        }
        + 
        \abs{
            \sumnl_{t=0}^{T-1}
            \parens{R_t - r(X_t)}
        }
        \\
        & \le 
        \sp{h^*}
        + \abs{
            \sumnl_{t=0}^{T-1}
            \parens{p(X_t) - e_{S_{t+1}}} h^*
        } +        
        \abs{
            \sumnl_{t=0}^{T-1}
            \parens{R_t - r(X_t)}
        }
        .
    \end{align*}
    Up to the constant $\sp{h^*}$, the two error terms are respectively a navigation and a reward error.
    The second is bounded using Azuma's inequality (\cref{lemma:azuma}), showing that with probability $1 - 2\delta$, we have: 
    \begin{equation*}
        \abs{\sumnl_{t=0}^{T-1} (R_t - r(X_t))} 
        \le 
        \sqrt{\tfrac 12 T \log\parens{\tfrac 1\delta}}
        .
    \end{equation*}
    We continue by using Freedman's inequality, instantiated in the form of \cref{lemma:additive-freedman}.
    With probability $1 - \delta$, we have:
    \begin{equation*}
        \abs{
            \sumnl_{t=0}^{T-1} \parens{p(X_t) - e_{S_{t+1}}} h^*
        }
        \le 
        \sqrt{2 \sumnl_{t=0}^{T-1} \Var(p(X_t), h^*) \log \parens{\tfrac T\delta}} + 4 \sp{h^*} \log \parens{\tfrac T\delta}
        .
    \end{equation*}
    The quantity $\sum_{t=0}^{T-1} \Var(p(X_t), h^*)$ is a classical one that appears at several places throughout the analysis.
    Using \cref{lemma:variance-sum}, we bount it explicitely.
    Further simplifying the bound with $\sqrt{a + b} \le \sqrt{a} + \sqrt{b}$, we get that with probability $1 - 4\delta$, we have:
    \begin{equation*}
        \textrm{A} 
        \le
        \begin{Bmatrix}
            \sqrt{2 \sp{h^*} \sp{r} T \log\parens{\tfrac T\delta}} 
            + \sqrt{\tfrac 12 T \log\parens{\tfrac 1\delta}}
            + \sqrt{2 \sp{h^*} \log\parens{\tfrac T\delta} \sumnl_{t=0}^{T-1} \Delta^*(X_t)} \\
            + \sp{h^*} \parens{\tfrac 12 T}^{\frac 14} \log^{\frac 34} \parens{\frac T\delta} + 4 \sp{h^*} \log\parens{\tfrac T\delta} + 2 \sp{h^*}
        \end{Bmatrix}
        .
    \end{equation*}
    Bound $\log(\tfrac 1\delta)$ by $\log(\tfrac T\delta)$ and use $\sqrt{a} + \sqrt{b} \le 2 \sqrt{a + b}$ to merge the terms in $\sqrt{T \log(\tfrac T\delta)}$ under a single square-root.
\end{proof}

Overall, \cref{lemma:regret-pseudo-regret} states that the regret $\sum_{t=0}^{T-1} (g^* - R_t)$ and the pseudo-regret $\sum_{t=0}^{T-1}  \Delta^*(X_t)$ differ by about $(\sp{h^*} T \log(\tfrac T\delta))^{1/2}$ in probability (up to asymptotically negligible additional terms).
In general, the precise form of \cref{lemma:regret-pseudo-regret} is not convenient to use because it is of form form $x \le y + \alpha \sqrt{y} + \beta$ that is not linear in $y$. 
\cref{corollary:regret-pseudo-regret} factorizes the result into one which will be more convenient in proofs.

\begin{corollary}
\label{corollary:regret-pseudo-regret}
    Denote $x := \sum_{t=0}^{T-1} (g^* - R_t)$ and $y := \sum_{t=0}^{T-1} \Delta^*(X_t)$.
    Further introduce:
    \begin{align*}
        \alpha & := 
        \sqrt{2 \sp{h^*} \log\parens{\tfrac T\delta}}
        \\
        \beta & := 
        2 \sqrt{\parens{2 \sp{h^*} \sp{r} 
        + \tfrac 12} T \log\parens{\tfrac T\delta}} 
        + \sp{h^*} \parens{\tfrac 12 T}^{\frac 14} \log^{\frac 34}\parens{\tfrac T\delta} 
        + 2 \sp{h^*}\parens{2 \log\parens{\tfrac T\delta} + 1}
        .
    \end{align*}
    Then, with probability $1 - 4\delta$, we have $\sqrt{x} \le \sqrt{y} + \tfrac 12 \alpha + \sqrt{\beta}$ and $\sqrt{y} \le \sqrt{x} + \alpha + \sqrt{\beta}$.
\end{corollary}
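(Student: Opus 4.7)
The plan is to derive both inequalities from \cref{lemma:regret-pseudo-regret}, which on the event of probability $1 - 4\delta$ yields an inequality of the form $\abs{x - y} \le \alpha \sqrt{y} + \beta'$. Here $\alpha$ is identical to the one in the statement, while $\beta'$ differs from $\beta$ only by replacing the constant $\tfrac 12$ inside the leading square root by $\tfrac 18$; in particular $\beta' \le \beta$. Everything from here is algebraic manipulation.

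For the first inequality, I would start from $x - y \le \alpha \sqrt{y} + \beta'$ and complete the square in $\sqrt{y}$:
\begin{equation*}
    x \le \parens{\sqrt{y} + \tfrac 12 \alpha}^2 + \beta' - \tfrac 14 \alpha^2 \le \parens{\sqrt{y} + \tfrac 12 \alpha}^2 + \beta
\end{equation*}
using $\beta \ge \beta' \ge \beta' - \tfrac 14 \alpha^2$. Taking square roots and invoking the elementary inequality $\sqrt{a^2 + b} \le a + \sqrt{b}$ valid for $a, b \ge 0$ gives directly $\sqrt{x} \le \sqrt{y} + \tfrac 12 \alpha + \sqrt{\beta}$.

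For the second inequality, I would start from $y - x \le \alpha \sqrt{y} + \beta'$, viewed as a quadratic inequality in $u := \sqrt{y}$, namely $u^2 - \alpha u - (x + \beta') \le 0$. The positive root bound yields
\begin{equation*}
    \sqrt{y} \le \tfrac 12 \alpha + \sqrt{\tfrac 14 \alpha^2 + x + \beta'} \le \tfrac 12 \alpha + \sqrt{x} + \sqrt{\tfrac 14 \alpha^2 + \beta'}
\end{equation*}
after one application of $\sqrt{a + b} \le \sqrt{a} + \sqrt{b}$. It then suffices to prove that $\tfrac 12 \alpha + \sqrt{\tfrac 14 \alpha^2 + \beta'} \le \alpha + \sqrt{\beta}$, which after isolating the square root on the left and squaring reduces to $\beta' \le \beta + \alpha \sqrt{\beta}$; this is immediate from $\beta' \le \beta$.

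I do not anticipate any genuine obstacle: the whole proof is a routine completion-of-the-square argument. The only subtlety to double-check is that completing the square on the $\sqrt{y}$ side produces a spurious $-\tfrac 14 \alpha^2$ term (harmless, since it has the favourable sign) and, in the second inequality, an $\alpha \sqrt{\beta}$ cross-term that needs to be absorbed. The loosening $\tfrac 18 \to \tfrac 12$ built into the corollary's $\beta$ provides more than enough slack, but as shown above the coarse bound $\beta' \le \beta$ alone is already sufficient in both cases.
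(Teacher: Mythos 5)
Your proof is correct and is exactly the "straightforward algebra" that the paper invokes without spelling out: both directions follow by completing the square in $\sqrt{y}$ starting from Lemma~\ref{lemma:regret-pseudo-regret}, using $\sqrt{a^2+b}\le a+\sqrt b$ and the slack $\beta'\le\beta$. Nothing is missing.
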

\begin{proof}
    This is straight forward algebra from the result of \cref{lemma:regret-pseudo-regret}.
\end{proof}

\subsection{Proof of \cref{lemma:reward-optimism}, reward optimism}

We start by getting rid of the reward noise.
We have:
\begin{align*}
    \Reg(T)
    & := \sumnl_{t=0}^{T-1} (g^* - R_t) 
    = \sumnl_{t=0}^{T-1} (g^* - r(X_t)) + \sumnl_{t=0}^{T-1} (r(X_t) - R_t)
    \\ 
    & \le \sumnl_{t=0}^{T-1} (g^* - r(X_t)) + \sqrt{\tfrac 12 T \log\parens{\tfrac 1\delta}}
\end{align*}
with probability $1 - \delta$ by Azuma's inequality (\cref{lemma:azuma}).
We are left with $\sum_{t=0}^{T-1} (g^* - r(X_t))$.
We continue by splitting the regret episodically and invoking optimism. 
By \cref{lemma:bias-confidence-region}, with probability $1 - 4\delta$, we have $\sumnl_{t=0}^{T-1} (g^* - r(X_t)) \le \sum_k \sum_{t=t_k}^{t_{k+1}-1} (\mathfrak{g}_k - r(X_t))$. 
Introduce
\begin{equation}
    B_0 (T) := \sum_k \sum_{t=t_k}^{t_{k+1}-1} (\mathfrak{g}_k - r(X_t)).
\end{equation}
We focus on bounding $B_0(T)$.
By \cref{assumption:sub-weissman}, $\tilde{r}_k(s,a)$ is of the form $\hat{r}_k(s,a) + \sqrt{C \log(2SAT/\delta)/N_{t_k}(s,a)} - \eta_k(s,a)$ with $\eta_k(s,a) \in \R$. 
By the statement (3) of \cref{proposition:PMEVI}, $\eta_k (s,a) \ge 0$.
Therefore,
\begin{align*}
    B_0 (T)
    & = 
    \sum_k \sum_{t=t_k}^{t_{k+1}-1} 
    \parens{\mathfrak{g}_k - \tilde{r}_k(X_t)}
    + 
    \sum_k \sum_{t=t_k}^{t_{k+1}-1}
    \parens{\tilde{r_k}(X_t) - r(X_t)}
    \\
    & \le 
    \sum_k \sum_{t=t_k}^{t_{k+1}-1}
    \parens{\mathfrak{g}_k - \tilde{r}_k(X_t)}
    + S A +
    \sum_k \sum_{t=t_k}^{t_{k+1}-1}
    \indicator{N_{t_k}(X_t) \ge 1} 
    \parens{
        \hat{r_k}(X_t) - r(X_t)
        +
        \sqrt{\frac{C \log\parens{\tfrac{2 S A T}\delta}}{N_{t_k}(X_t)}}
    }
    \\
    & \overset{(*)}{\le}
    \sum_k \sum_{t=t_k}^{t_{k+1}-1}
    \parens{\mathfrak{g}_k - \tilde{r}_k(X_t)}
    + S A +
    \sum_k \sum_{t=t_k}^{t_{k+1}-1}
    \indicator{N_{t_k}(X_t) \ge 1} 
    \parens{
        \sqrt{\frac{2 \log\parens{\tfrac{2 S A T}\delta}}{N_{t_k}(s,a)}}
        +
        \sqrt{\frac{C \log\parens{\tfrac{2 S A T}\delta}}{N_{t_k}(s,a)}}
    }
\end{align*}
where $(*)$ holds with probability $1 - \delta$ following \cref{lemma:uniform-weissman}.
By the doubling trick rule (DT), we have $N_t(X_t) \le 2 N_{t_k}(X_t)$ for $t < t_{k+1}$, so, with probability $1 - \delta$,
\begin{align*}
    B_0 (T)
    & \le 
    \sum_k \sum_{t=t_k}^{t_{k+1}-1}
    \parens{\mathfrak{g}_k - \tilde{r}_k(X_t)}
    + S A +
    2 \sum_k \sum_{t=t_k}^{t_{k+1}-1}
    \indicator{N_{t_k}(X_t) \ge 1} 
    \sqrt{\frac{(2+C) \log\parens{\tfrac{2 S A T}\delta}}{N_{t_k}(s,a)}}
    \\
    & \le 
    \sum_k \sum_{t=t_k}^{t_{k+1}-1}
    \parens{\mathfrak{g}_k - \tilde{r}_k(X_t)}
    + S A +
    2 \sqrt{(2+C) \log\parens{\tfrac{2SAT}\delta}} 
    \sum_{s,a} \sum_{n=1}^{N_T(s,a)-1} \sqrt{\tfrac 1n}
    \\
    & \le
    \sum_k \sum_{t=t_k}^{t_{k+1}-1}
    \parens{\mathfrak{g}_k - \tilde{r}_k(X_t)}
    + S A 
    + 4 \sqrt{(2+C) \log\parens{\tfrac{2SAT}\delta}} 
    \sum_{s,a} \sqrt{N_T(s,a)}
    \\
    \text{(Jensen)} & \le 
    \sum_k \sum_{t=t_k}^{t_{k+1}-1}
    \parens{\mathfrak{g}_k - \tilde{r}_k(X_t)}
    + S A 
    + 4 \sqrt{(2+C) SA T \log\parens{\tfrac{2SAT}\delta}}
    .
\end{align*}
We conclude that with probability $1 - 6\delta$, we have:
\begin{equation}
\label{equation:reward-optimism}
    \Reg(T)
    \le
    \sum_k \sum_{t=t_k}^{t_{k+1}-1}
    \parens{\mathfrak{g}_k - \tilde{r}_k(X_t)}
    + 4 \sqrt{(2+C) SA T \log\parens{\tfrac{2SAT}\delta}}
    + \sqrt{\tfrac 12 T \log\parens{\tfrac{2SAT}\delta}}
    + S A 
    .
\end{equation}
This concludes the proof.
\qed

\subsection{Proof of \cref{lemma:navigation-error}, navigation error}

We have:


\begin{align*}
    \sum_{k} \sum_{t=t_k}^{t_{k+1}-1} 
    (p_k(S_t)-e_{S_t})\mathfrak{h}_k  
    & \le
    \sum_k \sum_{t=t_k}^{t_{k+1}-1}
    (p_k(S_t)-e_{S_{t+1}})\mathfrak{h}_k 
    + \sum_k \sp{\mathfrak{h}_k}
    \\ 
    & \le
    \underbrace{
        \sum_k \sum_{t=t_k}^{t_{k+1}-1}
    (p_k(S_t)-e_{S_{t+1}}) (\mathfrak{h}_k - h^*)
    }_{\mathrm{A}_1}
    + \underbrace{
        \sum_k \sum_{t=t_k}^{t_{k+1}-1}
    (p_k(S_t)-e_{S_{t+1}}) h^*
    }_{\mathrm{A}_2}
    + \sum_k \sp{\mathfrak{h}_k}
    .
\end{align*}
The last term is $\OH(c_0 S A \log(T))$ by \cref{lemma:episode-number}, hence is $\OH(T^{1/5} \log(T))$.

\STEP{1}
We start by bounding $\mathrm{A}_1$.
By \cref{lemma:bias-confidence-region}, with probability $1 - 4 \delta$, we have $h^* \in \Hc_{t_k}$ for all $k \le K(T)$. 
So $\sp{\mathfrak{h}_k - h^*} \le \sp{\mathfrak{h}_k} + \sp{h^*} \le 2 c_0$.
By Freedman's inequality invoked in the form of \cref{lemma:additive-freedman}, we have with probability $1 - 5\delta$,
\begin{equation*}
    \mathrm{A}_1 
    \le 
    \sqrt{
        2 \sum_k \sum_{t=t_k}^{t_{k+1}-1}
        \Var\parens{p(X_t), \mathfrak{h}_k - h^*} 
        \log\parens{\tfrac T\delta}
    }
    + 8 c_0 \log\parens{\tfrac T\delta}
\end{equation*}
It suffices to bound the first term.
Recall that $e$ is the vector full of ones.
We have:
\begin{align*}
    & \sum_k \sum_{t=t_k}^{t_{k+1}-1}
    \mathbf{V}(p(X_t),\mathfrak{h}_k - h^*) 
    =
    \sum_k \sum_{t=t_k}^{t_{k+1}-1}
    \mathbf{V}\parens{p(X_t),\mathfrak{h}_k - h^* - (\mathfrak{h}_k(S_t)-h^*(S_t))\cdot e}
    \\
    & \le
    \sum_k \sum_{t=t_k}^{t_{k+1}-1}
    \sum_{s' \in \Sc} p(s'|X_t) \parens{
        \mathfrak{h}_k(s') - h^*(s') 
        - (\mathfrak{h}_k(S_t)-h^*(S_t))
    }^2
    \\
    & \overset{(*)}\le
    3 \sum_k \sum_{t=t_k}^{t_{k+1}-1}
    \E\brackets{
        \sum_{s' \in \Sc} p(s'|X_t) \parens{
            \mathfrak{h}_k(s') - h^*(s') 
            - (\mathfrak{h}_k(S_t)-h^*(S_t))
        }^2
        \Bigg| \Fc_t
    } + 16 c_0^2 \log\parens{\tfrac 1\delta}
    \\
    & = 
    3 \sum_k \sum_{t=t_k}^{t_{k+1}-1}
    \parens{\mathfrak{h}_k(S_{t+1})-h^*(S_{t+1}) - (\mathfrak{h}_k(S_t)- h^*(S_t))}^2 
    + 16 c_0^2 \log\parens{\tfrac 1\delta}
    .
\end{align*}
Here the inequality $(*)$ holds with probability $1 - \delta$ following \cref{lemma:upper-conditional-expectation}.
We will bound the summand with the bias estimation error $\text{error}(c_k, s, s')$ that spawns the inner regret estimation $B_0(t_k) = \sum_{\ell=1}^{k-1} \sum_{t=t_\ell}^{t_{\ell+1}-1} (\mathfrak{g}_\ell - R_t)$.
This inner estimation is linked to $B(T) := \sum_{k, t} (\mathfrak{g}_k - R_t)$ the overall optimistic regret by:
\begin{align*}
    B_0(t_k) 
    & \le 
    \sumnl_{\ell=1}^{K(T)} \sumnl_{t=t_\ell}^{t_{\ell+1}-1} 
    \parens{\mathfrak{g}_k - R_t} 
    - 
    \sumnl_{\ell=k}^{K(T)} \sumnl_{t=t_\ell}^{t_{\ell+1}-1} 
    \parens{\mathfrak{g}_k - R_t}
    \\
    & \overset{(*)}\le 
    \sumnl_{\ell=1}^{K(T)} \sumnl_{t=t_\ell}^{t_{\ell+1}-1} 
    \parens{\mathfrak{g}_k - R_t} 
    - 
    \sumnl_{\ell=k}^{K(T)} \sumnl_{t=t_\ell}^{t_{\ell+1}-1} 
    \parens{g^* - R_t}
    \\
    & \le
    \sumnl_{\ell=1}^{K(T)} \sumnl_{t=t_\ell}^{t_{\ell+1}-1} 
    \parens{\mathfrak{g}_k - R_t} 
    - 
    \sumnl_{\ell=k}^{K(T)} \sumnl_{t=t_k}^{T-1} \parens{
        \Delta^*(X_t) + \parens{p(X_t) - e_{S_t}} h^* + r(X_t) - R_t
    }
    \\
    & \le 
    \sumnl_{\ell=1}^{K(T)} \sumnl_{t=t_\ell}^{t_{\ell+1}-1} 
    \parens{\mathfrak{g}_k - R_t} 
    + \sp{h^*} - 
    \sumnl_{\ell=k}^{K(T)} \sumnl_{t=t_k}^{T-1} \parens{
        \parens{p(X_t) - e_{S_{t+1}}} h^* + r(X_t) - R_t
    }
    \\
    & \overset{(\dagger)}\le 
    \sumnl_{\ell=1}^{K(T)} \sumnl_{t=t_\ell}^{t_{\ell+1}-1} 
    \parens{\mathfrak{g}_k - R_t} 
    + \sp{h^*} 
    + (1 + \sp{h^*}) \sqrt{\tfrac 12 T \log\parens{\tfrac 1\delta}}
    \\
    & =: 
    B(T) 
    + \sp{h^*}
    + (1 + \sp{h^*}) \sqrt{\tfrac 12 T \log\parens{\tfrac 1\delta}}
    .
\end{align*}
In the above, $(*)$ holds with probability $1 - 4\delta$ uniformly on $k$ following \cref{lemma:bias-confidence-region} and $(\dagger)$ holds, also uniformly on $k$, with probability $1 - \delta$ by applying Azuma-Hoeffding's inequality (\cref{lemma:azuma}).
Continuing, still on the event specified by \cref{lemma:bias-confidence-region}, we have with probability $1 - 6\delta$:
\begin{align*}
    \sum_k \sum_{t=t_k}^{t_{k+1}-1}
    \mathbf{V}(p(X_t),\mathfrak{h}_k - h^*) 
    & \le 
    3 \sum_k \sum_{t=t_k}^{t_{k+1}-1}
    \frac{3 c_0 + (1+c_0) \sqrt{8 t_k \log\parens{\tfrac 2\delta}} + 2 B_0(t_k)}{N_{t_k}(S_{t+1} \toto S_t)}
    + 16 c_0^2 \log\parens{\tfrac 1\delta}
    \\
    & \le 
    3 \sum_k \sum_{t=t_k}^{t_{k+1}-1}
    \frac{4 c_0 + (1+c_0) \sqrt{32 T \log\parens{\tfrac 2\delta}} + 2 B(T)}{N_{t_k}(S_t, A_t, S_{t+1})}
    + 16 c_0^2 \log\parens{\tfrac 1\delta}
    \\
    \text{(DT)} & \le 
    12 c_0^2 S^2 A + 3 \parens{4 c_0 + (1+c_0) \sqrt{32 T \log\parens{\tfrac 2\delta}} + 2 B(T)} S^2 A \log(T)
    \\ & \phantom{\le{}} + 16 c_0^2 \log\parens{\tfrac 1\delta}.
\end{align*}

\STEP{2}
For $\mathrm{A}_2$, by Freedman's inequality invoked in the form of \cref{lemma:additive-freedman} again, we have with probability $1 - \delta$,
\begin{align*}
    \mathrm{A}_2
    & \le 
    \sqrt{
        2 \sum_k \sum_{t=t_k}^{t_{k+1}-1} \Var(p_k(S_t), h^*) \log\parens{\tfrac{T}\delta}
    }
    + 8 c_0 \log\parens{\tfrac T\delta}
    \\
    & \le 
    \sqrt{
        2 \sum_{t=0}^{T-1} \Var(p(X_t), h^*) \log\parens{\tfrac{T}\delta}
    }
    + 8 c_0 \log\parens{\tfrac T\delta}
    .
\end{align*}
We recognize the sum of variance $\sum_{t=0}^{T-1} \Var(p(X_t), h^*)$ that we leave as is.

\STEP{3}
As a result, with probability $1-7\delta$, we have:

\resizebox{\linewidth}{!}{
$
    \displaystyle
    \sum_{k} \sum_{t=t_k}^{t_{k+1}-1} 
    (p_k(S_t)-e_{S_t})\mathfrak{h}_k  
    \le 
    \sqrt{
        2 \sum_{t=0}^{T-1} \Var(p(X_t), h^*) \log\parens{\tfrac{T}\delta}
    }
    + 2 S A^{\frac 12} \sqrt{3B(T)} \log\parens{\tfrac T\delta} 
    + \OH\parens{
        S A^{\frac 12} T^{\frac 7{20}} \log^{\frac 34}\parens{\tfrac T\delta}
    }
$
}

when $c_0 = T^{\frac 15}$.
\qed

\subsection{Proof of \cref{lemma:empirical-bias-error}, empirical bias error}

Because $h^*$ is a fixed vector, Bennett's inequality (see \cref{lemma:bennett}) guarantees that $(\hat{p}_k(S_t) - p_k(S_t) h^*$ is small as follows.
By doing a union bound over \cref{lemma:bennett} with confidence $\frac{\delta}{SAT}$ over all pairs $(s,a)$ and visits counts $N(s,a) \le T$, we see that with probability $1 - \delta$, for all $k$, we have:
\begin{align*}
    \sum_{t=t_k}^{t_{k+1}-1} 
    \parens{\hat{p}_k (S_t) - p_k(S_t)} h^*
    & \le 
    \sp{h^*} S A + 
    \sum_{t=t_k}^{t_{k+1}-1} 
    \indicator{N_{t_k}(X_t) \ge 1} \parens{
        \sqrt{\tfrac{2 \Var(p(X_t), h^*) \log \parens{\frac{SAT}\delta}}{N_{t_k}(X_t)}} 
        +
        \tfrac{\sp{h^*} \log\parens{\tfrac{SAT}\delta}}{3 N_{t_k}(X_t)}
    }
    \\
    \text{(by doubling trick)} & \le 
    \sp{h^*} S A + 
    2 \sum_{t=t_k}^{t_{k+1}-1}
    \indicator{N_t(X_t) \ge 1} \parens{
        \sqrt{\tfrac{2 \Var(p(X_t), h^*) \log \parens{\frac{SAT}\delta}}{N_t(X_t)}} 
        +
        \tfrac{\sp{h^*} \log\parens{\tfrac{SAT}\delta}}{3 N_t(X_t)}
    }
    .
\end{align*}
Summing this over $k$ and factorizing over state-action pairs, we get that with probability $1 - \delta$, 
\begin{align*}
    \sum_k (2k)
    & \le 
    \sp{h^*} S A +
    2 \sum_{s,a} \parens{
        \sum_{n=1}^{N_T(s,a)} \sqrt{
            \tfrac{2 \Var(p(s,a), h^*) \log\parens{\frac{SAT}\delta}}{n}
        }
        + 
        \sum_{n=1}^{N_T(s,a)} \tfrac{\sp{h^*} \log\parens{\tfrac{S A T}\delta}}{n}
    }
    \\
    & \le
    \sp{h^*} S A +
    4 \sum_{s,a} \sqrt{
        N_T(s,a)\Var(p(s,a), h^*) \log\parens{\tfrac{SAT}\delta}
    }
    + 
    2 \sp{h^*} S A \log\parens{\tfrac{SAT}\delta} \log(T)
    \\
    \text{(Jensen)} & \le
    \sp{h^*} S A +
    4 \sqrt{
        SA 
        \sumnl_{s,a} 
        \Var(p(s,a), h^*) \log\parens{\tfrac{SAT}\delta}
    }
    + 
    2 \sp{h^*} S A \log\parens{\tfrac{SAT}\delta} \log(T)
    \\
    & =
    \sp{h^*} S A +
    4 \sqrt{
        \sumnl_{t=0}^{T-1} 
        \Var(p(X_t), h^*) \log\parens{\tfrac{SAT}\delta}
    }
    + 
    2 \sp{h^*} S A \log\parens{\tfrac{SAT}\delta} \log(T)
\end{align*}
We recognize the sum of variances $\sum_{t=0}^{T-1} \Var(p(X_t), h^*)$, that is left to be upper-bounded later on.
\qed

\subsection{Proof of \cref{lemma:optimism-overshoot}, optimism overshoot}

Because of the $\beta$-mitigation generated by \hyperlink{algorithm:variance-approximation}{Algorithm~5}, the quantity $(\tilde{p}_k (S_t) - \hat{p}_k (S_t)) \mathfrak{h}_k$ is shown to be directly related to $\Var(p(X_t), h^*)$ up to a provably negligible error.
Denote $h'_k$ the reference point \texttt{BiasProjection}$(\Hc_{t_k}, c_{t_k}(-, s_0))$ used in \hyperlink{algorithm:variance-approximation}{Algorithm~5} (denoted $h_0$ in the algorithm). 
By \cref{lemma:bias-confidence-region}, with probability $1 - 4\delta$, we have $h^* \in \Hc_{t_k}$ for all $k$. 
To lighten up notations, we write $d_{t_k}(s', s)$ instead of $\text{error}(c_{t_k}, s', s)$.

\STEP{1}
Denote $\text{A} := (\tilde{p}_k(S_t) - \hat{p}_k(S_t)) \mathfrak{h}_k$.
By construction of $\tilde{p}_k$, we have $\text{A} \le \beta_{t_k}(X_t)$, so:
\begin{align*}
    \text{A}
    & \le \beta_{t_k}(X_t)
    \\
    & =: 
    \sqrt{
        \frac{2\parens{
            \Var(\hat{p}_k(S_t), h'_k) + 8c_0 \sum_{s' \in \Sc} \hat{p}_k(s'|S_t) d_{t_k}(s', S_t) \log \parens{\tfrac{SA T}\delta}
        }}{N_{t_k}(X_t)}
    }
    + \frac{3 c_0 \log\parens{\tfrac{SAT}\delta}}{N_{t_k}(X_t)}
    \\
    & \le 
    \underbrace{\sqrt{
        \frac{2\Var(\hat{p}_k(S_t), h'_k)}{N_{t_k}(X_t)}
    }}_{\text{A}_1}
    + 
    \underbrace{\sqrt{
        \frac{16 c_0 \sum_{s' \in \Sc} \hat{p}_k(s'|S_t) d_{t_k}(s', S_t) \log \parens{\tfrac{SA T}\delta}}{N_{t_k}(X_t)}
    }}_{\text{A}_2}
    + \frac{3 c_0 \log\parens{\tfrac{SAT}\delta}}{N_{t_k}(X_t)}
    .
\end{align*}
The rightmost term of A is of order $\OH(\log^2(T))$ hence is negligible.
We focus on the other two.
The analysis of $\text{A}_1$ will spawn a term similar to $\text{A}_2$, hence we start by the second.
Recall that $d_{t_k}$ is the bias error provided by \hyperlink{algorithm:bias-estimation}{Algorithm~3} and that the inner regret estimation is $B_0(t_k) = \sum_{\ell=1}^{k-1} \sum_{t=t_\ell}^{t_{\ell+1}-1} (\mathfrak{g}_\ell - R_t)$.
Now, remark that:
\begin{align*}
    B_0(t_k) 
    & \le 
    \sumnl_{\ell=1}^{K(T)} \sumnl_{t=t_\ell}^{t_{\ell+1}-1} 
    \parens{\mathfrak{g}_k - R_t} 
    - 
    \sumnl_{\ell=k}^{K(T)} \sumnl_{t=t_\ell}^{t_{\ell+1}-1} 
    \parens{\mathfrak{g}_k - R_t}
    \\
    & \overset{(*)}\le 
    \sumnl_{\ell=1}^{K(T)} \sumnl_{t=t_\ell}^{t_{\ell+1}-1} 
    \parens{\mathfrak{g}_k - R_t} 
    - 
    \sumnl_{\ell=k}^{K(T)} \sumnl_{t=t_\ell}^{t_{\ell+1}-1} 
    \parens{g^* - R_t}
    \\
    & \le
    \sumnl_{\ell=1}^{K(T)} \sumnl_{t=t_\ell}^{t_{\ell+1}-1} 
    \parens{\mathfrak{g}_k - R_t} 
    - 
    \sumnl_{\ell=k}^{K(T)} \sumnl_{t=t_k}^{T-1} \parens{
        \Delta^*(X_t) + \parens{p(X_t) - e_{S_t}} h^* + r(X_t) - R_t
    }
    \\
    & \le 
    \sumnl_{\ell=1}^{K(T)} \sumnl_{t=t_\ell}^{t_{\ell+1}-1} 
    \parens{\mathfrak{g}_k - R_t} 
    + \sp{h^*} - 
    \sumnl_{\ell=k}^{K(T)} \sumnl_{t=t_k}^{T-1} \parens{
        \parens{p(X_t) - e_{S_{t+1}}} h^* + r(X_t) - R_t
    }
    \\
    & \overset{(\dagger)}\le 
    \sumnl_{\ell=1}^{K(T)} \sumnl_{t=t_\ell}^{t_{\ell+1}-1} 
    \parens{\mathfrak{g}_k - R_t} 
    + \sp{h^*} 
    + (1 + \sp{h^*}) \sqrt{\tfrac 12 T \log\parens{\tfrac 1\delta}}
    \\
    & =: 
    B(T) 
    + \sp{h^*}
    + (1 + \sp{h^*}) \sqrt{\tfrac 12 T \log\parens{\tfrac 1\delta}}
    .
\end{align*}
In the above, $(*)$ holds with probability $1 - 4\delta$ uniformly on $k$ following \cref{lemma:bias-confidence-region} and $(\dagger)$ holds, also uniformly on $k$, with probability $1 - \delta$ by applying Azuma-Hoeffding's inequality (\cref{lemma:azuma}).
Therefore, with probability $1 - 5\delta$, for all $k$ and $t \in \set{t_k, \ldots, t_{k+1}-1}$, we have:
\begin{align*}
    \sqrt{
        \frac{16 c_0 \sum_{s' \in \Sc} \hat{p}_k(s'|S_t) d_{t_k}(s', S_t) \log \parens{\tfrac{SA T}\delta}}{N_{t_k}(X_t)}
    }
    & \le
    \frac{
        \sqrt{
            16 c_0 \log\parens{\tfrac{S A T}\delta} \sum_{s' \in \Sc} N_{t_k}(S_t, A_t, s') d_{t_k}(s', S_t)
        }
    }{
        N_{t_k}(X_t)
    }
    \\
    & \le 
    \frac{
        \sqrt{
            16 c_0 \log\parens{\tfrac{S A T}\delta} \sum_{s' \in \Sc} N_{t_k}(S_t \toto s') d_{t_k}(s', S_t)
        }
    }{
        N_{t_k}(X_t)
    }
    \\
    & \le 
    \tfrac{
        \sqrt{
            16 c_0 \log\parens{\tfrac{S A T}\delta} S \parens{
                3 c_0 + (1+c_0) \parens{1 + \sqrt{8T\log\parens{\tfrac 2\delta}}} + 2 B_0(t_k)
            }
        }
    }{
        N_{t_k}(X_t)
    }
    \\
    & \le 
    \tfrac{
        \sqrt{
            16 c_0 \log\parens{\tfrac{S A T}\delta} S \parens{
                (1+c_0) \parens{3 + 2 \sqrt{8T\log\parens{\tfrac 2\delta}}} + 2 B(T)
            }
        }
    }{
        N_{t_k}(X_t)
    }
    \\
    & \le 
    \tfrac{
        \sqrt{
            16 c_0 \log\parens{\tfrac{S A T}\delta} S \parens{
                (1+c_0) \parens{3 + 2 \sqrt{8T\log\parens{\tfrac 2\delta}} + 2 B(T)} 
            }
        }
    }{
        N_{t_k}(X_t)
    }
    .
\end{align*}
This bound will be enough.
We move on to $\text{A}_1$.
We have:
\begin{align*}
    \sqrt{\Var(\hat{p}_k(S_t), h'_k)}
    & \le 
    \sqrt{\abs{\Var(\hat{p}_k(S_t), h'_k) - \Var(p(X_t), h^*)}} + \sqrt{\Var(p(X_t), h^*)} 
    \\
    & \le 
    \sqrt{\abs{\Var(\hat{p}_k(S_t), h'_k) - \Var(\hat{p}_k(X_t), h^*)}} 
    \sqrt{\abs{\Var(\hat{p}_k(S_t), h^*) - \Var(p(X_t), h^*)}} 
    + \sqrt{\Var(p(X_t), h^*)} 
    \\
    & \overset{(*)}\le 
    \sqrt{
        8 c_0 \sumnl_{s'\in\Sc} \hat{p}_k (s'|S_t) d_k(s', S_t)
    }
    + 
    \sp{h^*} \sqrt{\norm{\hat{p_k}(S_t) - p_k(S_t)}_1} 
    + 
    \sqrt{\Var(p(X_t), h^*)}
    \\
    & \overset{(\dagger)}\le 
    \sqrt{
        8 c_0 \sumnl_{s'\in\Sc} \hat{p}_k (s'|S_t) d_k(s', S_t)
    } 
    +
    \sp{h^*} \parens{
        \frac{S \log\parens{\tfrac{SAT}\delta}}{N_{t_k}(X_t)}
    }^{\frac 14}
    +
    \sqrt{\Var(p(X_t), h^*)}
    \\
    & \le 
    \frac{\mathrm{A}_2}{\sqrt{2 N_{t_k}(X_t)}}
    +
    \sp{h^*} \parens{
        \frac{S \log\parens{\tfrac{SAT}\delta}}{N_{t_k}(X_t)}
    }^{\frac 14}
    +
    \sqrt{\Var(p(X_t), h^*)}
\end{align*}
where $(*)$ is obtained by applying \cref{lemma:variance-approximation} and $(\dagger)$ holds with probability $1 - \delta$ by applying Weissman's inequality, see \cref{lemma:uniform-weissman}.
All together, with probability $1 - 6\delta$, A is upper-bounded by:
\begin{equation*}
    \text{A}
    \le
    \sqrt{
        \frac{2\Var(p(X_t), h^*) \log\parens{\tfrac{SA T}\delta}}{N_{t_k}(X_t)}
    }
    + 2 \text{A}_2
    + 
    \underbrace{
        \sp{h^*} \sqrt{
            \frac{
                2 \log\parens{\tfrac{SAT}\delta} \sqrt{S \log\tfrac{SAT}\delta}
            }{
                N_{t_k}(X_t)\sqrt{N_{t_k}(X_t)}
            }
        }
        + 
        \frac{3c_0 \log\parens{\tfrac{SAT}\delta}}{N_{t_k}(X_t)}
    }_{\text{A}_3(k,t)}
    .
\end{equation*}

\STEP{2}
The number of visits $N_k(X_t)$ is lower-bounded by $\frac 12 N_t(X_t)$ when $N_k(X_t) \ge 1$ by doubling trick (DT).
By summing over $t$ and $k$, we find that with probability $1-6\delta$,
\begin{align*}
    \sum_k (3k)
    & \le
    S A c_0 +
    \sum_k \sum_{t=t_k}^{t_{k+1}-1} 
    \mathbf 1_{N_{t_k}(X_t) \ge 1}
    \sqrt{
        \frac{2\Var(p(X_t), h^*) \log\parens{\tfrac{SA T}\delta}}{N_{t_k}(X_t)}
    }
    +
    \sum_k \sum_{t=t_k}^{t_{k+1}-1} 
    \mathbf 1_{N_{t_k}(X_t) \ge 1}
    (2\text{A}_2(k, t) + \text{A}_3(k,t))
    \\
    \text{(DT)} & \le 
    SA c_0 +
    2 \sum_k \sum_{t=t_k}^{t_{k+1}-1} 
    \mathbf 1_{N_{t_k}(X_t) \ge 1}
    \sqrt{
        \frac{2\Var(p(X_t), h^*) \log\parens{\tfrac{SA T}\delta}}{N_{t}(X_t)}
    }
    +
    \sum_k \sum_{t=t_k}^{t_{k+1}-1} 
    \mathbf 1_{N_{t_k}(X_t) \ge 1}
    (2\text{A}_2(k, t) + \text{A}_3(k,t))
    \\
    & \le 
    S A c_0 +
    4 \sqrt{2 S A \sumnl_{t=0}^{T-1} \Var(p(X_t), h^*) \log\parens{\tfrac{SAT}\delta}}
    +
    \sum_k \sum_{t=t_k}^{t_{k+1}-1} 
    \mathbf 1_{N_{t_k}(X_t) \ge 1}
    (2\text{A}_2(k, t) + \text{A}_3(k,t))
\end{align*}
where the last inequality is obtained with computations that are similar to those detailed in the proof of \cref{lemma:empirical-bias-error}.
We recognize the variance that we will leave as is.
We finish the proof by bounding the lower order terms $\text{A}_2$ and $\text{A}_3$.

\STEP{3}
We start with $\text{A}_2$.
We have:
\begin{align*}
    \sum_k \sum_{t=t_k}^{t_{k+1}-1} 
    \mathbf 1_{N_{t_k}(X_t) \ge 1}
    \text{A}_2(k,t)
    & :=
    \sum_k \sum_{t=t_k}^{t_{k+1}-1} 
    \mathbf 1_{N_{t_k}(X_t) \ge 1}
    \tfrac{
        \sqrt{
            16 c_0 \log\parens{\tfrac{S A T}\delta} S  \parens{
                (1+c_0) \parens{3 + 2 \sqrt{8T\log\parens{\tfrac 2\delta}} + 2 B(T)} 
            }
        }
    }{N_{t_k}(X_t)}
    \\
    \text{(DT)} 
    & \le 
    2 \sqrt{
        16 c_0 S \log\parens{\tfrac{S A T}\delta} \parens{
            (1+c_0) \parens{3 + 2 \sqrt{8T\log\parens{\tfrac 2\delta}} + 2 B(T)} 
        }
    } ~
    S A \log(T)
    \\
    & \le 
    8 (1 + c_0) S^{\frac 32} A \log^{\frac 32}\parens{\tfrac{SAT}\delta} \parens{
        2 + 4 T^{\frac 14} \log^{\frac 14}\parens{\tfrac{SAT}\delta}
        + 
        \sqrt{2 B(T)}
    }
    .
\end{align*}

\STEP{4}
We are left with $\text{A}_3$.
We have:
\begin{align*}
    \sum_k \sum_{t=t_k}^{t_{k+1}-1}
    \mathbf 1_{N_{t_k}(X_t) \ge 1}
    \text{A}_3(k, t)
    & :=
    \sum_k \sum_{t=t_k}^{t_{k+1}-1}
    \mathbf 1_{N_{t_k}(X_t) \ge 1}
    \parens{
        \sp{h^*} \sqrt{
            \frac{
                2 \log\parens{\tfrac{SAT}\delta} \sqrt{S \log\tfrac{SAT}\delta}
            }{
                N_{t_k}(X_t)\sqrt{N_{t_k}(X_t)}
            }
        }
        + 
        \frac{3c_0 \log\parens{\tfrac{SAT}\delta}}{N_{t_k}(X_t)}
    }
    \\
    \text{(DT)} & \le 
    \sum_k \sum_{t=t_k}^{t_{k+1}-1}
    \mathbf 1_{N_{t_k}(X_t) \ge 1}
    \parens{
        \sp{h^*} \sqrt{
            \frac{
                2 \log\parens{\tfrac{SAT}\delta} \sqrt{S \log\tfrac{SAT}\delta}
            }{
                N_{t_k}(X_t)\sqrt{N_{t_k}(X_t)}
            }
        }
        + 
        \frac{3c_0 \log\parens{\tfrac{SAT}\delta}}{N_{t_k}(X_t)}
    }
    \\
    & \le
    C \sp{h^*} S^{\frac 54} A T^{\frac 14} \log^{\frac 34}\parens{\tfrac{SAT}\delta} + 6 c_0 S A \log\parens{\tfrac{SA T}\delta} 
    \\
    & = \OH\parens{\sp{h^*} S^{\frac 54} A T^{\frac 14} \log\parens{\tfrac{SAT}\delta}}
    .
\end{align*}
This concludes the proof.
\qed

\subsection{Proof of \cref{lemma:second-order-error}, second order error}

Recall that by \cref{lemma:bias-confidence-region}, with probability $1 - 4\delta$, $h^* \in \Hc_{t_k}$ for all $k$, hence $\sp{\mathfrak{h}_k - h^*} \le 2 c_0$ for all $k$ on the same event.
Therefore, with probability $1 - 4 \delta$,
\begin{align*}
    \sum_k (4k)
    & :=
    2 c_0 S A +
    \sum_k \sum_{t=t_k}^{t_{k+1}-1}
    \mathbf 1_{N_{t_k}(X_t) \ge 1}
    \parens{ \hat{p}_k(S_t) - p_k(S_t)} \parens{\mathfrak{h}_k - h^*}
    \\
    & =
    2 c_0 S A +
    \sum_k \sum_{t=t_k}^{t_{k+1}-1} \sum_{s' \in \Sc} 
    \mathbf 1_{N_{t_k}(X_t) \ge 1}
    (\hat{p}_k(s'|S_t) - p_k(s'|S_t)) (\mathfrak{h}_k - h^*(s'))
    \\
    & \overset{(*)}\le 
    2 c_0 S A +
    2 \sum_k \sum_{t=t_k}^{t_{k+1}-1} \sum_{s' \in \Sc} 
    \mathbf 1_{N_{t_k}(X_t) \ge 1}
    (\hat{p}_k(s'|S_t) - p_k(s'|S_t)) d_{t_k}(s', S_t)
    \\
    & \overset{(\dagger)}\le
    2 c_0 S A +
    2 \sum_k \sum_{t=t_k}^{t_{k+1}-1} \sum_{s' \in \Sc} 
    \mathbf 1_{N_{t_k}(X_t) \ge 1}
    \parens{
        d_k (s', S_t) \sqrt{
            \frac{2 \hat{p}_k(s'|S_t) \log\parens{\tfrac{S^2AT}\delta}}{N_{t_k}(X_t)}
        }
        +
        3 d_k(s'|S_t) \frac{\log\parens{\tfrac{S^2AT}\delta}}{N_{t_k}(X_t)}
    }
    \\
    & \le
    2 c_0 S A +
    2 \sum_k \sum_{t=t_k}^{t_{k+1}-1} \sum_{s' \in \Sc}
    \mathbf 1_{N_{t_k}(X_t) \ge 1}
    \parens{
        \sqrt{c_0} \sqrt{
            \frac{2 \hat{p}_k(s'|S_t) d_k (s', S_t) \log\parens{\tfrac{S^2AT}\delta}}{N_{t_k}(X_t)}
        }
        +
        \frac{3 c_0 \log\parens{\tfrac{S^2AT}\delta}}{N_{t_k}(X_t)}
    }
    \\
    & \le
    2 c_0 S A +
    4 \sum_k \sum_{t=t_k}^{t_{k+1}-1} \sum_{s' \in \Sc}
    \mathbf 1_{N_{t_k}(X_t) \ge 1}
    \parens{
        \sqrt{c_0} \sqrt{
            \frac{2 \hat{p}_k(s'|S_t) d_k (s', S_t) \log\parens{\tfrac{S^2AT}\delta}}{N_{t}(X_t)}
        }
        +
        \frac{3 c_0 \log\parens{\tfrac{S^2AT}\delta}}{N_{t}(X_t)}
    }
\end{align*}
where $(*)$ uses that $h^* \in \Hc_{t_k}$, and $(\dagger)$ is obtained by applying the empirical Bernstein's inequality, see \cref{lemma:uniform-bernstein}, to $\hat{p}_k(s'|S_t) - p_k(s'|S_t)$, and holds with probability $1 - \delta$.
The rightmost term's sum is upper-bounded by:
\begin{equation*}
    4 \sum_k \sum_{t=t_k}^{t_{k+1}-1} \sum_{s' \in \Sc}
    \frac{3c_0 \log\parens{\tfrac{S^2AT}\delta}}{N_t(X_t)}
    \le 12 S^2 A \log(T) \log\parens{\tfrac{S^2 A T}\delta}
    .
\end{equation*}
For the other term, follow the line of the proof of \cref{lemma:optimism-overshoot} (term $\text{A}_2$).
We have with probability $1 - 5\delta$ ($4\delta$ of which is by invoking \cref{lemma:bias-confidence-region}):
\begin{align*}
    \hat{p}_k (s'|S_t) d_k(s', S_t)
    & = \frac{
        N_{t_k}(S_t, A_t, s') \parens{(1+ c_0) \parens{1 + \sqrt{8t_k \log\parens{\tfrac 2\delta}}} + 2 B_0(t_k)}
    }{N_{t_k}(S_t \toto s') N_{t_k}(X_t)}
    \\
    & \le
    \frac{
        \parens{
            (1+c_0) \parens{3 + 2 \sqrt{8T\log\parens{\tfrac 2\delta}} + 2 B(T)} 
        }
    }{
        N_{t_k}(X_t)
    }
    .
\end{align*}
Therefore,
\begin{equation*}
    \sqrt{c_0}
    \sqrt{
        \frac{2 \hat{p_k}(s'|S_t) d_{t_k}(s', S_t) \log\parens{\tfrac{S^2AT}\delta}}
        {N_t(X_t)}
    }
    \le
    \frac{
        4(1+c_0) \sqrt{
            \parens{3 + 2 \sqrt{8T\log\parens{\tfrac 2\delta}} + 2 B(T)} 
            \log\parens{\tfrac{S^2AT}\delta}
        }
    }{N_t(X_t)}
    .
\end{equation*}
Summing over $k$, $t$, $s'$, with probability $1 - 6\delta$, we have:
\begin{equation*}
    \sum_k (4k) \le 
    \begin{Bmatrix}
        16 S^2 A(1+c_0) \log^{\frac 12}\parens{\tfrac{S^2AT}{\delta}} \parens{
            \sqrt{2 B(T)} + 2 \parens{8 T \log\parens{\tfrac 2\delta}}^{\frac 14}
        }
        \\
        + 32 S^2 A \parens{
            \log(T) \log\parens{\tfrac{S^2 A T}{\delta}} + (1+c_0) \log^{\frac 12} \parens{\tfrac{S^2AT}\delta}
        }
    \end{Bmatrix}
\end{equation*}
This concludes the proof.
\qed

\clearpage
\section{Details on experiments}

\subsection{River swim}

Experiments are run on $n$-states river-swim.
Such MDPs are, despite their size, known to be hard to learn.
They consists in $n$ states aligned in a straight line with two playable actions {\sc right} and \textcolor{red}{\sc left} whose dynamics are given in the figure below.
Rewards are Bernoulli and null everywhere excepted for $r(s_n, \textsc{right}) = 0.95$ and $r(s_0, \text{\sc\color{red}left}) = 0.05$.

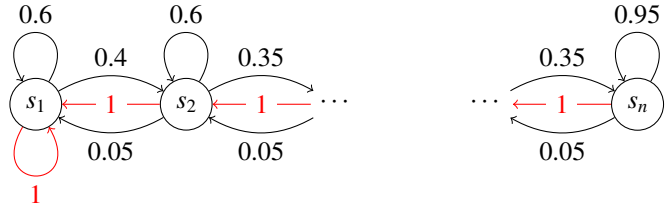
\begin{figure}[h]
\centering
\begin{tikzpicture}
    \node[circle, draw] (s0) at (0, 0) {$s_1$};
    \node[circle, draw] (s1) at (2, 0) {$s_2$};
    \node (dotl) at (4, 0) {$\cdots$};
    \node (dotr) at (6, 0) {$\cdots$};
    \node[circle, draw] (sn) at (8, 0) {$s_n$};

    \draw[->] (s0) to[in=90+30,out=90-30,min distance=1cm] node[above, midway] {$0.6$} (s0);
    \draw[->] (s0) to[bend left] node[midway, above] {$0.4$} (s1);

    \draw[->] (s1) to[in=90+30,out=90-30,min distance=1cm] node[above, midway] {$0.6$} (s1);
    \draw[->] (s1) to[bend left] node[midway, above] {$0.35$} (dotl);
    \draw[->] (s1) to[bend left] node[midway, below] {$0.05$} (s0);
    \draw[->] (dotl) to[bend left] node[midway, below] {$0.05$} (s1);
    
    \draw[->] (dotr) to[bend left] node[midway, above] {$0.35$} (sn);
    \draw[->] (sn) to[in=90+30,out=90-30,min distance=1cm] node[above, midway] {$0.95$} (sn);
    \draw[->] (sn) to[bend left] node[midway, below] {$0.05$} (dotr);

    \draw[->, color=red] (s0) to[in=270+30,out=270-30,min distance=1cm] node[below, midway] {$1$} (s0);
    \draw[->, color=red] (s1) to node[midway, fill=white] {$1$} (s0);
    \draw[->, color=red] (dotl) to node[midway, fill=white] {$1$} (s1);
    \draw[->, color=red] (sn) to node[midway, fill=white] {$1$} (dotr);
\end{tikzpicture}
\caption{
\label{figure:riverswim}
    The kernel of a $n$-state river-swim.
}
\end{figure}

\paragraph{$3$-state river-swim.} 
The gain is $g^* \approx 0.82$ and $h^* \approx (-4.28, -2.24, 0.4)$.

\paragraph{$5$-state river-swim.} 
The gain is $g^* \approx 0.82$ and $h^* \approx (-9.62, -7.58, -4.96, -2.27, 0.45)$.

\ifx\FALSE
\clearpage
Beyond this, this is mostly technical discussions.

\section{Paper list}\label{app:paperlist}

The list below is specific to papers about regret minimization in average-reward MDPs.
There are also many lines of work that add structure of the MDP/reward mechanism, but it seems irrelevant to our paper.
There are other papers \cite{burnetas_optimal_1997,pesquerel_imed-rl_2022} (and many others) that focus on the model dependent setting (regret guarantees scaling as $\log(T)$) and it seems a bit off-subject.

\begin{itemize}
    \item \cite{auer_logarithmic_2006} UCRL, first version
    
    \item \cite{auer_near-optimal_2009} The well-known UCRL2, regret lower-bound $\sqrt{D S A T}$, regret upper-bound $D S \sqrt{A T}$
    
    \item \cite{bartlett_regal_2009} REGAL, intractable, regret $\sp{h^*} S \sqrt{AT}$, their lower bound is wrong (see \cite{osband_lower_2016})

    \item \cite{filippi_optimism_2010} KL-UCRL, confidence region with KL-semiballs, regret upper-bound $D S \sqrt{A T}$

    \item \cite{osband_more_2013} LazyPSRL, bayesian, claim $D S \sqrt{AT}$ regret, but proof is wrong \cite{ouyang_learning_2017,osband_posterior_2016}

    \item \cite{gopalan_thompson_2015} TSMDP, bayesian, emphasis on parametrized MDP spaces, regret $DS \sqrt{A T}$

    \item \cite{osband_posterior_2016} Technical note showing LazyPSRL is wrong, further discussions

    \item \cite{osband_lower_2016} Discussion on regret lower-bound, points that REGAL's lower bound's proof is wrong

    \item \cite{ouyang_learning_2017} TSDE, bayesian, $D S \sqrt{A T}$ regret

    \item \cite{agrawal_optimistic_2023} Optimistic PSRL, $D S \sqrt{A T}$ regret, there is also a 2017 NeurIPS version of this paper with a smaller regret bound (so probably incorrect)
    
    \item \cite{talebi_variance-aware_2018} KL-UCRL, regret upper-bound improved to $S \sqrt{D A T}$

    \item \cite{fruit_efcient_2018} SCAL, span truncation, regret $S \sqrt{c A T}$ where $c$ is prior information on bias span

    \item \cite{fruit_improved_2020} UCRL2-B, UCRL2 with Bernstein confidence regions, regret $S \sqrt{D A T}$

    \item \cite{zhang_regret_2019} EBF, intractable, regret $\sqrt{c S A T}$ where $c$ is prior information on bias span, regret $\sqrt{D S A T}$ with initial phase learning the diameter

    \item \cite{bourel_tightening_2020} UCRL3, UCRL2 with cocktail inequalities, regret $S \sqrt{D A T}$, relies on a hacked version of EVI

    \item \cite{tossou_near-optimal_2019} UCRL-V, still preprint after 5 years, proof is impossible to understand, regret claimed $\sqrt{D \log(D) S A T}$

\end{itemize}
\fi

\clearpage
\section{Standard concentration inequalities}

\begin{lemma}[Azuma's inequality, \cite{azuma_weighted_1967}]
\label{lemma:azuma}
    Let $(U_t)_{t \ge 0}$ a martingale difference sequence such that $\sp{U_t} \le c$ a.s., i.e., there exists $a_t \in \R$ such that $a_t \le U_t \le a_t + c$ a.s.
    Then, for all $\delta > 0$,
    \begin{equation*}
        \Pr \parens{
            \sumnl_{t=0}^{T-1} U_t \ge c \sqrt{\tfrac 12 T \log\parens{\tfrac 1\delta}}
        }
        \le
        \delta
        .
    \end{equation*}
\end{lemma}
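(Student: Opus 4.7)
The plan is to follow the classical Chernoff–Cramér route: bound the moment generating function of the martingale $M_T := \sum_{t=0}^{T-1} U_t$, apply Markov's inequality, and then optimize the free parameter. Since $U_t$ is a martingale difference sequence with $a_t \le U_t \le a_t + c$ almost surely (where $a_t$ is $\mathcal{F}_{t-1}$-measurable) and $\E[U_t \mid \mathcal{F}_{t-1}] = 0$, the centered variable $U_t$ lies in an interval of length $c$.

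The key ingredient is Hoeffding's lemma: if $X$ is a random variable with $\E[X] = 0$ and $X$ takes values in an interval of length $c$, then for all $\lambda \in \R$,
\begin{equation*}
    \E[e^{\lambda X}] \le \exp\parens{\tfrac{\lambda^2 c^2}{8}}.
\end{equation*}
Applying this conditionally to each $U_t$, I get $\E[e^{\lambda U_t} \mid \mathcal{F}_{t-1}] \le \exp(\lambda^2 c^2 / 8)$ almost surely. Then by iterating the tower property,
\begin{equation*}
    \E[e^{\lambda M_T}]
    = \E\brackets{e^{\lambda M_{T-1}} \E[e^{\lambda U_{T-1}} \mid \mathcal{F}_{T-2}]}
    \le \exp\parens{\tfrac{\lambda^2 c^2}{8}} \E[e^{\lambda M_{T-1}}]
    \le \cdots \le \exp\parens{\tfrac{\lambda^2 c^2 T}{8}}.
\end{equation*}

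Next I apply Markov's inequality: for any $\lambda > 0$ and threshold $x > 0$,
\begin{equation*}
    \Pr(M_T \ge x)
    = \Pr(e^{\lambda M_T} \ge e^{\lambda x})
    \le e^{-\lambda x} \E[e^{\lambda M_T}]
    \le \exp\parens{\tfrac{\lambda^2 c^2 T}{8} - \lambda x}.
\end{equation*}
Optimizing over $\lambda > 0$ gives the optimal choice $\lambda^* = 4x/(c^2 T)$, yielding $\Pr(M_T \ge x) \le \exp(-2x^2/(c^2 T))$. Setting $x = c \sqrt{\tfrac{1}{2} T \log(1/\delta)}$ makes the right-hand side exactly $\delta$, which is the desired statement.

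There is no real obstacle here; the only subtlety is verifying Hoeffding's lemma in the conditional form, but that follows by convexity of $x \mapsto e^{\lambda x}$ (bounding it by the chord on $[a_t, a_t + c]$) together with the martingale difference property $\E[U_t \mid \mathcal{F}_{t-1}] = 0$, exactly as in the unconditional case.
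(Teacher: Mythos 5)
Your proof is correct and is the standard Chernoff--Cram\'er argument via conditional Hoeffding's lemma; the paper does not supply its own proof, citing Azuma's inequality from the literature, so there is nothing to compare against. The only caveat worth stating explicitly is that the bound $a_t$ must be $\mathcal{F}_{t-1}$-measurable (predictable) for the conditional Hoeffding step to apply, which your parenthetical already assumes.
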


\begin{lemma}[Freedman's inequality, \cite{zhang_almost_2020}]
\label{lemma:additive-freedman}
    Let $(U_t)_{t \ge 0}$ a martingale difference sequence such that $\abs{U_t} \le c$ a.s., and denote its conditional variance $V_t := \E[U_t^2|\Fc_{t-1}]$.
    Then, for all $\delta > 0$,
    \begin{equation*}
        \Pr \parens{
            \exists T' \le T:
            \sumnl_{t=0}^{T'-1} U_t 
            \ge 
            \sqrt{
                2 \sumnl_{t=0}^{T'-1} V_t \log\parens{\tfrac T\delta}
            }
            + 4 c \log\parens{\tfrac T\delta}
        }
        \le
        \delta
        .
    \end{equation*}
\end{lemma}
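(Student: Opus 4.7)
The plan is to prove this time-uniform Freedman inequality by combining the classical exponential supermartingale with Ville's maximal inequality (for uniformity in time) and a geometric peeling argument (for adaptation to the unknown accumulated variance). First, introduce the cumulative quantities $S_t := \sumnl_{s<t} U_s$ and $W_t := \sumnl_{s<t} V_s$, and for any $\lambda \in (0, 1/c)$ define the exponential process
\begin{equation*}
    M_t(\lambda) := \exp\parens{\lambda S_t - \phi(\lambda) W_t}
    , \qquad
    \phi(\lambda) := \tfrac{e^{\lambda c} - 1 - \lambda c}{c^2}
    .
\end{equation*}
The boundedness $\abs{U_t} \le c$ together with $\E[U_t | \Fc_{t-1}] = 0$ yields the standard conditional moment bound $\E[e^{\lambda U_t} | \Fc_{t-1}] \le \exp(\phi(\lambda) V_t)$ (obtained by the pointwise inequality $e^x \le 1 + x + \phi(\lambda c) (x/c)^2/(\lambda c)^2$ for $x \le \lambda c$, followed by taking conditional expectations). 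This makes $(M_t(\lambda))$ a non-negative supermartingale with $M_0(\lambda) = 1$.

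Second, apply Ville's inequality at fixed $\lambda$ to obtain $\Pr(\sup_{T' \le T} M_{T'}(\lambda) \ge 1/\delta') \le \delta'$, which rearranges to the statement that, with probability at least $1 - \delta'$, for all $T' \le T$,
\begin{equation*}
    S_{T'} \le \tfrac{\phi(\lambda)}{\lambda} W_{T'} + \tfrac{\log(1/\delta')}{\lambda}.
\end{equation*}
Using the Bennett-type bound $\phi(\lambda) \le \lambda^2/(2(1 - \lambda c/3))$, the right-hand side is minimized by $\lambda^* \approx \sqrt{\log(1/\delta')/W_{T'}}$, which (informally) gives $S_{T'} \le \sqrt{2 W_{T'} \log(1/\delta')} + \tfrac c 3 \log(1/\delta')$; of course $\lambda^*$ depends on $W_{T'}$ which is random.

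Third, to make the choice of $\lambda$ variance-adaptive, discretize $\lambda$ on a geometric grid $\lambda_k := 2^{-k}/c$ for $k = 0, 1, \ldots, K$ with $K := \lceil \log_2(Tc^2) \rceil = \OH(\log T)$, and union-bound the fixed-$\lambda$ statement with confidence $\delta' := \delta/(K+1)$ at each $\lambda_k$. For any realized $W_{T'} \in [c^2, T c^2]$, one of the $\lambda_k$ lies within a factor of two of the optimal $\lambda^*$, and substituting that $\lambda_k$ back into the fixed-$\lambda$ bound (and absorbing constants) yields $S_{T'} \le \!\!\sqrt{2 W_{T'} \log(T/\delta)} + 4 c \log(T/\delta)$. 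The edge case $W_{T'} < c^2$ is handled by the largest $\lambda_K$, producing a term of order $c \log(T/\delta)$ which fits comfortably into the $4 c \log(T/\delta)$ slack.

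The main obstacle is careful constant-tracking through the peeling step: one must ensure that the leading factor in front of $\!\!\sqrt{W_{T'} \log(T/\delta)}$ is exactly $\!\!\sqrt{2}$ and that the linear-in-$c$ correction, after combining the $\lambda c/3$ term from Bennett and the discretization slack, is no larger than $4c \log(T/\delta)$. A cleaner alternative that avoids peeling is the method of mixtures (replacing $M_t(\lambda)$ by $\int M_t(\lambda) \nu(d\lambda)$ for a suitable prior $\nu$), which yields a $\log\log$ factor instead of $\log T$; but the statement as given matches the simpler peeling proof directly.
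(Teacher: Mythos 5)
The paper does not prove this lemma; it cites it directly from \cite{zhang_almost_2020}, so there is no in-paper proof to compare against, and your attempt is necessarily a from-scratch reconstruction. The route you propose---the Freedman exponential supermartingale $M_t(\lambda)=\exp(\lambda S_t - \phi(\lambda) W_t)$ with $\phi(\lambda)=(e^{\lambda c}-1-\lambda c)/c^2$, Ville's maximal inequality for time uniformity, and geometric peeling in $\lambda$ to adapt to the unknown accumulated variance---is the standard and correct way to obtain a bound of this form.

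Two places deserve care. (i) The leading constant: with a ratio-$2$ grid the selected $\lambda_k$ is within a multiplicative factor $\alpha\in[1,2]$ of the optimizer $\lambda^*\approx\!\!\sqrt{2L/W_{T'}}$ (where $L=\log((K+1)/\delta)$), and the resulting bound is $\approx\tfrac12(\alpha+\alpha^{-1})\!\!\sqrt{2W_{T'}L}$, whose worst-case prefactor is $5/4$, not $1$. Matching the stated $\!\!\sqrt{2W_{T'}\log(T/\delta)}$ therefore hinges on the slack between $L$ (with $K+1=\OH(\log T)$) and $\log(T/\delta)$; that slack vanishes when $\delta$ is taken polynomially small in $T$, so a finer grid of ratio $1+\epsilon$ (still only $\OH(\epsilon^{-1}\log T)$ points) is the cleaner way to reach the stated constants. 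You correctly flag this constant-tracking as the crux, but the claim that ratio-$2$ peeling directly ``yields'' the stated constant is optimistic. (ii) A minor slip: for $W_{T'}<c^2$ the optimal tilt exceeds $1/c$, so the relevant endpoint of your (decreasing) grid is $\lambda_0=1/c$, the largest $\lambda$, not $\lambda_K$; the conclusion you draw---that this edge case contributes only $\OH(c\log(T/\delta))$, safely within the $4c\log(T/\delta)$ slack---is nonetheless correct.
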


\begin{lemma}[Time-uniform Azuma, \cite{bourel_tightening_2020}]
\label{lemma:time-uniform-azuma}
    Let $(U_t)$ a martingale difference sequence such that, for all $\lambda \in \R$, $\E[\exp(\lambda U_t)|U_1, \ldots, U_{t-1}] \le \exp(\frac{\lambda^2\sigma^2}2)$.
    Then:
    \begin{equation*}
        \forall \delta > 0,
        \quad
        \Pr\parens{
            \exists n \ge 1,
            \quad
            \parens{\sumnl_{k=1}^n U_k}^2 
            \ge
            n \sigma^2 \parens{1 + \tfrac 1n} \log\parens{\tfrac{\!\!\sqrt{1+n}}\delta}
        }
        \le 
        \delta
        .
    \end{equation*}
\end{lemma}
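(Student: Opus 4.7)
The plan is to prove this by the method of mixtures (Robbins' method), combined with Ville's inequality. First, I would fix $S_n := \sum_{k=1}^n U_k$ and define, for each $\lambda \in \R$, the exponential process
\[
    M_n(\lambda) := \exp\Bigl(\lambda S_n - \tfrac{n \lambda^2 \sigma^2}{2}\Bigr).
\]
The sub-Gaussian hypothesis $\E[\exp(\lambda U_t)\mid U_1,\dots,U_{t-1}] \le \exp(\lambda^2 \sigma^2/2)$ gives directly, by factoring out the $\Fc_{n-1}$-measurable prefactor, that $(M_n(\lambda))_{n \ge 0}$ is a non-negative supermartingale with $M_0(\lambda) = 1$.

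The key trick is to remove the tuning parameter $\lambda$ by integrating against a Gaussian prior $\lambda \sim \Nc(0, 1/\sigma^2)$. By Fubini, the mixture
\[
    \overline{M}_n := \int_{\R} M_n(\lambda)\, \frac{\sigma}{\sqrt{2\pi}} \exp\Bigl(-\tfrac{\sigma^2 \lambda^2}{2}\Bigr)\, d\lambda
\]
is still a non-negative supermartingale with $\overline{M}_0 = 1$. The integrand is Gaussian in $\lambda$: completing the square in the exponent $\lambda S_n - \tfrac{(n+1)\sigma^2 \lambda^2}{2}$ and computing the resulting Gaussian integral yields the closed form
\[
    \overline{M}_n = \frac{1}{\sqrt{1+n}} \exp\!\left(\frac{S_n^2}{2 \sigma^2 (n+1)}\right).
\]

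With $\overline{M}_n$ in hand, I would apply Ville's maximal inequality for non-negative supermartingales: $\Pr(\exists n \ge 1:\; \overline{M}_n \ge 1/\delta) \le \delta$. Inverting this event, with probability at least $1-\delta$, for every $n \ge 1$,
\[
    \frac{S_n^2}{2 \sigma^2 (n+1)} < \log\!\Bigl(\tfrac{\sqrt{1+n}}{\delta}\Bigr),
\]
which rearranges to $S_n^2 < 2\sigma^2 (n+1) \log(\sqrt{1+n}/\delta) = 2 n \sigma^2 (1 + 1/n) \log(\sqrt{1+n}/\delta)$, giving the claim (possibly up to the stated constant factor, which depends on the chosen variance of the mixing prior and on the precise normalization of the sub-Gaussian assumption). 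The main obstacle is really just the bookkeeping around the Gaussian mixture integral: one must choose the prior variance $\tau^2 = 1/\sigma^2$ precisely so that the $n\sigma^2$ term inside the exponent of $M_n(\lambda)$ combines cleanly with $1/\tau^2$ to produce the $(n+1)$ factor, and so that the $\sqrt{1+n}$ normalizer inside the logarithm matches the statement. Everything else is a routine application of Ville's inequality.
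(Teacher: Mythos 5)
Your proof is correct and uses exactly the standard Laplace/method-of-mixtures argument that underlies the cited reference \cite{bourel_tightening_2020}; the paper itself does not re-prove this lemma, so there is no in-paper derivation to compare against. The one point worth settling is the constant you flag as uncertain. Your computation cleanly yields, with probability $1-\delta$ uniformly in $n$,
\[
    S_n^2 < 2(n+1)\sigma^2 \log\parens{\tfrac{\sqrt{1+n}}{\delta}} = 2n\sigma^2\parens{1+\tfrac 1n}\log\parens{\tfrac{\sqrt{1+n}}{\delta}},
\]
whereas the lemma as printed omits the leading factor of $2$. This is not a gap in your argument but a typo in the statement: inspecting how the lemma is invoked inside the proof of \cref{lemma:uniform-weissman} (where the choice $\sigma^2 = 1/4$ produces a $\tfrac 12 n(1+\tfrac 1n)\log(\cdot)$ threshold, i.e.\ $2\sigma^2 n(1+\tfrac 1n)\log(\cdot)$) confirms that the factor of $2$ is intended. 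So your bound is the right one, and the rest of your write-up --- the Gaussian mixing prior with variance $1/\sigma^2$, the completed-square computation giving $\overline M_n = (1+n)^{-1/2}\exp\parens{S_n^2/(2\sigma^2(n+1))}$, and the appeal to Ville's maximal inequality with $\overline M_0 = 1$ --- is exactly right.
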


\begin{lemma}[Time-uniform Weissman]
\label{lemma:uniform-weissman}
    Let $q$ a distribution over $\set{1, \ldots, d}$.
    Let $(U_t)$ a sequence of i.i.d.~random variables of distribution $q$.
    Then:
    \begin{equation*}
        \forall \delta > 0,
        \quad
        \Pr\parens{
            \exists n \ge 1,
            \norm{
                \sumnl_{i=1}^n \parens{e_{U_i} - q}
            }_1^2
            \ge 
            n d \log\parens{\tfrac{2 \sqrt{1+n}}{\delta}}
        }
        \le 
        \delta
        .
    \end{equation*}
\end{lemma}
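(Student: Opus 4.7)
The plan is to reduce the uniform $\ell_1$ control to scalar concentration via the dual characterization of total variation, then apply the time-uniform Azuma inequality (Lemma~\ref{lemma:time-uniform-azuma}) to each member of a finite family of scalar processes indexed by subsets of $[d]$.

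First, I would exploit the variational identity
\[
    \norm{\sumnl_{i=1}^n (e_{U_i} - q)}_1
    = 2 \max_{A \subseteq [d]} \sumnl_{i=1}^n \parens{\indicator{U_i \in A} - q(A)},
\]
which is immediate by taking $A := \set{j : \sum_i (e_{U_i}(j) - q(j)) \ge 0}$. This reduces the original problem to controlling, uniformly in $n \ge 1$, the partial sums of the $2^d$ centered Bernoulli processes $Y_i^A := \indicator{U_i \in A} - q(A)$.

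Second, for each fixed $A \subseteq [d]$, the increments $Y_i^A$ are i.i.d., centered, and bounded in $[-1,1]$, hence $\tfrac{1}{2}$-subGaussian by Hoeffding's lemma. This places them directly in the scope of Lemma~\ref{lemma:time-uniform-azuma} with $\sigma = \tfrac{1}{2}$. Applying that lemma with confidence level $\delta/2^d$, then taking a union bound over the $2^d$ subsets $A$, yields that with probability at least $1 - \delta$, simultaneously for all $n \ge 1$ and all $A$:
\[
    \parens{\sumnl_{i=1}^n Y_i^A}^2
    \le \frac{n+1}{4} \log\parens{\frac{2^d \sqrt{1+n}}{\delta}}.
\]
Combining with the variational identity (after multiplying by $4$ and maximizing over $A$) produces
\[
    \norm{\sumnl_{i=1}^n (e_{U_i} - q)}_1^2
    \le (n+1) \log\parens{\frac{2^d \sqrt{1+n}}{\delta}}
\]
on the same high-probability event.

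Third and finally, I would perform the routine algebraic check that
$
    (n+1) \log\parens{\tfrac{2^d \sqrt{1+n}}{\delta}}
    \le n d \log\parens{\tfrac{2\sqrt{1+n}}{\delta}}
$
for $d \ge 2$, $n \ge 1$ and $\delta \le 1$. Expanding both sides reduces this to showing $(nd - n - 1)\log(\sqrt{1+n}/\delta) \ge d \log 2$, which holds comfortably since $nd - n - 1 \ge n - 1$ and the surplus factor of $d$ on the dominant $\log(\sqrt{1+n}/\delta)$ term in the target bound leaves ample room. The degenerate case $d = 1$ requires nothing, since $q$ is then deterministic and the $\ell_1$ deviation vanishes.

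I do not expect any real obstacle: the dual characterization of $\ell_1$ is classical, and the time-uniform scalar concentration is granted by Lemma~\ref{lemma:time-uniform-azuma}. The only delicate moment is the constant-matching at the end, and the factor of $d$ inside the target logarithm together with the multiplicative $nd$ outside were clearly chosen so as to absorb a plain $2^d$-union bound.
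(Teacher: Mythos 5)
Your high-level route is the same as the paper's: reduce the $\ell_1$ norm to a maximum over a finite family of scalar processes (you index by subsets $A\subseteq[d]$, the paper by sign vectors $v\in\{-1,1\}^d$ --- equivalent up to a factor of $2$), control each process with the time-uniform Azuma bound (\cref{lemma:time-uniform-azuma}) at confidence $\delta/2^d$, and take a union bound. Your variational identity and the application of \cref{lemma:time-uniform-azuma} are both fine, leading to the intermediate bound $\norm{\sum_i(e_{U_i}-q)}_1^2 < (n+1)\log\parens{2^d\sqrt{1+n}/\delta}$ with probability $\ge 1-\delta$. (A minor quibble: your stated reason for the $\tfrac12$-subGaussianity --- ``bounded in $[-1,1]$'' --- would only give $\sigma=1$; the correct justification is that $Y_i^A\in[-q(A),\,1-q(A)]$, an interval of \emph{length one}.)

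The genuine gap is the final ``routine algebraic check.'' You reduce it (correctly) to
\begin{equation*}
(nd-n-1)\,\log\parens{\sqrt{1+n}/\delta}\;\ge\; d\log 2,
\end{equation*}
and then assert it ``holds comfortably since $nd-n-1\ge n-1$.'' But at $n=1$, $d=2$ the coefficient $nd-n-1$ equals $0$, so the inequality reads $0\ge 2\log 2$, which is false for every $\delta\in(0,1]$. Concretely, your intermediate bound evaluates to $2\log\parens{4\sqrt2/\delta}$, which strictly exceeds the target $2\log\parens{2\sqrt2/\delta}$. Even the standard refinement of halving the union bound to $2^{d-1}$ (using that $A$ and its complement give opposite sums) leaves a failing case, e.g.\ $n=1$, $d=3$, $\delta>1/\sqrt2$. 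So the proof as written does not establish the stated constants ``$\forall\delta>0$''; the constant-matching step is the crux, not a routine check, and needs either a tighter union bound, a restriction on $\delta$, or a slightly different packaging of the two concentration terms.
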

\begin{proof}
\newcommand{\dotProduct}[2]{\left\langle#1,#2\right\rangle}
    Remark that $\norm{\sum_{k=1}^n (e_{U_k} - q)}_1 = \max_{v \in \set{-1,1}^d} \sum_{k=1}^n \dotProduct{e_{U_k} - q}{v}$.
    Let $W_k^v := \dotProduct{e_{U_k} - q}v$.
    Remark that for each $v \in \set{-1,1}^d$, $(W_k^v)$ is a family of i.i.d.~random variables with $- \dotProduct qv \le W_k^v \le 1 - \dotProduct qv$, so $\E[\exp(\lambda W_k^v)] \le \exp(\tfrac{\lambda^2}8)$ by Hoeffding's Lemma.
    By \cref{lemma:time-uniform-azuma}, we have:
    \begin{align}
        \nonumber
        \Pr \parens{
            \exists n \ge 1,
            \norm{\sum_{k=1}^n (e_{U_k} - q)}_1
            \ge
            \!\!\sqrt{
                nd \log \parens{\tfrac{2\!\!\sqrt{1+n}}\delta}
            }
        }
        & =
        \Pr \parens{
            \exists v \in \set{-1,1}^d,
            \exists n,
            \sum_{k=1}^n W_k^v 
            \ge 
            \!\!\sqrt{
                nd \log \parens{\tfrac{2\!\!\sqrt{1+n}}\delta}
            }
        }
        \\ \nonumber
        & \le
        \sum_{v \in \set{-1,1}^d}
        \Pr \parens{
            \exists n,
            \sum_{k=1}^n W_k^v 
            \ge 
            \!\!\sqrt{
                nd \log \parens{\tfrac{2\!\!\sqrt{1+n}}\delta}
            }
        }
        \\ \nonumber
        & \le 
        \sum_{v \in \set{-1,1}^d}
        \Pr \parens{
            \exists n,
            \sum_{k=1}^n W_k^v 
            \ge 
            \!\!\sqrt{
                \tfrac 12 n \parens{1 + \tfrac 1n} \log \parens{\tfrac{\!\!\sqrt{1+n}}{2^{-d}\delta}}
            }
        }
        \\ \nonumber
        & \le 2^d \cdot 2^d \delta = \delta.
    \end{align}
    This concludes the proof.
\end{proof}

\begin{lemma}[Time-uniform Empirical Bernstein]
\label{lemma:uniform-bernstein}
    Let $(U_k)_{k \ge 1}$ a martingale difference sequence such that $\sp{U_n} \le c$ a.s., let $\hat{U}_n := \frac 1n \sum_{k=1}^n U_k$ the empirical mean and $\hat{V}_n := \frac 1n \sum_{k=1}^n (U_k - \hat{U}_n)^2$ the population variance.
    Then, 
    \begin{equation*}
        \forall \delta > 0,
        \forall T > 0,
        \quad
        \Pr \parens{
            \exists t \le T,
            \sumnl_{i=1}^t U_i
            \ge 
            \sqrt{2 t \hat{V}_t \log\parens{\tfrac {3T}\delta}}
            +
            3c \log\parens{\tfrac {3T}\delta}
        }
        \le \delta
        .
    \end{equation*}
\end{lemma}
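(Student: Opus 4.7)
The plan is to upper-bound the partial sum $S_t := \sum_{i=1}^t U_i$ by combining two applications of the time-uniform Freedman inequality for martingales already proved (\cref{lemma:additive-freedman}): one on $(U_i)$ to control $S_t$ in terms of its predictable variance, and one on the centered squared increments to relate the predictable variance back to the observed empirical variance $\hat V_t$. First note that $\sp{U_i} \le c$ together with $\E[U_i \mid \Fc_{i-1}] = 0$ forces $\abs{U_i} \le c$ almost surely, so all subsequent bounded-difference hypotheses are satisfied.

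First, I apply \cref{lemma:additive-freedman} to $(U_i)$: writing $V_t^\star := \sum_{i=1}^t \E[U_i^2 \mid \Fc_{i-1}]$ for the predictable variance, with probability at least $1 - \delta/3$, uniformly over $t \le T$,
\[ S_t \le \sqrt{2 V_t^\star \log(3T/\delta)} + 4c\log(3T/\delta). \]
Then I apply the same lemma to $W_i := U_i^2 - \E[U_i^2 \mid \Fc_{i-1}]$, which is a martingale difference sequence with $\abs{W_i} \le c^2$ and predictable variance bounded by $c^2 \E[U_i^2 \mid \Fc_{i-1}]$. With probability at least $1 - \delta/3$, uniformly over $t \le T$,
\[ V_t^\star \le \sumnl_{i=1}^t U_i^2 + c\sqrt{2 V_t^\star \log(3T/\delta)} + 4c^2\log(3T/\delta). \]
Solving this quadratic inequality in $\sqrt{V_t^\star}$ by AM--GM yields $V_t^\star \le 2\sum_{i=1}^t U_i^2 + \OH\parens{c^2 \log(3T/\delta)}$.

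Next, I use the identity $\sum_{i=1}^t U_i^2 = t \hat{V}_t + S_t^2/t$ together with the crude bound $S_t^2/t \le c\abs{S_t}$ (from $\abs{S_t} \le tc$) to deduce $V_t^\star \le 2 t \hat{V}_t + 2c\abs{S_t} + \OH\parens{c^2 \log(3T/\delta)}$. Substituting into the first displayed inequality produces a self-bounding inequality of the shape
\[ S_t \le 2\sqrt{t \hat{V}_t \log(3T/\delta)} + 2\sqrt{c \abs{S_t} \log(3T/\delta)} + \OH\parens{c\log(3T/\delta)}. \]
Finally, AM--GM on the middle term in the form $2\sqrt{c\abs{S_t}\log(3T/\delta)} \le \tfrac 12 \abs{S_t} + 2c\log(3T/\delta)$ absorbs half the $S_t$-dependence into the left-hand side, and rearranging delivers a bound of exactly the announced form.

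The main obstacle lies in tightening the numerical constants so that the leading coefficient is precisely $\sqrt{2}$ and the lower-order term is exactly $3c$. The naive chain above loses a factor of $2$ under the square root because of the $V_t^\star \le 2 \sum U_i^2$ estimate. Recovering the sharp constant $\sqrt 2$ requires replacing the additive Freedman bound in the second step by its Bennett variant (whose residual term scales as $\tfrac{2c}{3}\log$ rather than $4c\log$), and carefully calibrating the three probability budgets of $\delta/3$ so that the logarithmic factors collapse to the stated $\log(3T/\delta)$ and the residual $c$-terms collect into the single coefficient $3c$.
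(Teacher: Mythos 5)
Your approach is genuinely different from the paper's. The paper's entire proof is a one-line union bound: it invokes the fixed-$n$ empirical Bernstein inequality of Audibert--Munos--Szepesv\'ari (\cref{lemma:empirical-bernstein}) with confidence $\delta/T$ for each $n \le T$, then sums the $T$ failure probabilities. You instead set out to \emph{re-derive} the empirical Bernstein bound from \cref{lemma:additive-freedman} applied twice (once to $(U_i)$, once to the centered squares $W_i$), which is a legitimate and more self-contained line of attack.

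However, as written, your argument does not prove the stated lemma. The chain of AM--GM simplifications gives $V_t^\star \le 2\sum_i U_i^2 + \OH(c^2\log)$ and then a self-bounding inequality $S_t \le 2\sqrt{t\hat V_t\log} + \tfrac12 S_t + \OH(c\log)$, which closes to $S_t \le 4\sqrt{t\hat V_t\log} + \OH(c\log)$. The leading coefficient $4$ differs by a factor $2\sqrt 2$ from the claimed $\sqrt 2$, and the lower-order coefficient is far from $3c$. You acknowledge this, but your proposed repair --- swapping Freedman for Bennett in the second concentration step --- does not address where the factors are actually lost. The residual term ($\tfrac{c}{3}\log$ versus $4c\log$) only affects the lower-order $c\log$ coefficient. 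The factor losses under the square root come from two separate AM--GM shortcuts: (a) reducing $V_t^\star \le \sum U_i^2 + c\sqrt{2V_t^\star\log}+\cdots$ to $V_t^\star \le 2\sum U_i^2 + \cdots$, which should instead be solved exactly as a quadratic in $\sqrt{V_t^\star}$ to obtain $\sqrt{V_t^\star} \le \sqrt{\sum U_i^2} + \OH(c\sqrt{\log})$ with no factor $2$; and (b) re-absorbing the $\sqrt{c\abs{S_t}\log}$ term into $S_t$ via AM--GM, which costs another factor $2$ and is avoidable by bounding $\hat U_t^2$ directly by Bennett rather than by the crude $S_t^2/t \le c\abs{S_t}$. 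Without these two repairs, the proof establishes a weaker inequality than the lemma, and since the $\sqrt 2$ and $3c$ are used verbatim downstream, this matters. The paper's union-bound-plus-citation route sidesteps all of this; if you want a self-contained derivation, you must actually carry out the sharpened quadratic manipulation \`a la Audibert--Munos--Szepesv\'ari rather than invoke AM--GM throughout.
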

\begin{proof}
    This is obtained with a union bound on the values of $n \le T$, then applying \cref{lemma:empirical-bernstein}.
\end{proof}

\begin{lemma}[Time-uniform Empirical Likelihoods, \cite{jonsson2020planning}]
\label{lemma:uniform-kl}
    Let $q$ a distribution on $\set{1, \ldots, d}$.
    Let $(U_t)$ a sequence of i.i.d.~random variables of distribution $q$.
    Then:
    \begin{equation*}
        \forall \delta > 0,
        \quad
        \Pr\parens{
            \exists n \ge 1,
            n \KL(\hat{q}_n||q) > \log\parens{\tfrac 1\delta} + (d-1) \log\parens{e \parens{1 + \tfrac n{d-1}}}
        }
        \le
        \delta
        .
    \end{equation*}
\end{lemma}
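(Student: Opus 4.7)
The plan is to use the \emph{method of mixtures} combined with Ville's maximal inequality for nonnegative supermartingales, as in the line of work on empirical-likelihood confidence intervals.

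First, for any fixed distribution $p$ on $\{1,\ldots,d\}$ which is absolutely continuous with respect to $q$, the likelihood ratio process
\begin{equation*}
    L_n(p) := \prodnl_{i=1}^n \frac{p(U_i)}{q(U_i)}
\end{equation*}
is a nonnegative martingale under $q^{\otimes \mathbb{N}}$ with $\E[L_n(p)] = 1$. Next I would mix over $p$ using a Dirichlet prior $\pi = \mathrm{Dir}(1,\ldots,1)$, i.e.\ the uniform law on the simplex $\Delta_{d-1}$. By Fubini, the mixture
\begin{equation*}
    \bar{L}_n := \intnl_{\Delta_{d-1}} L_n(p) \, d\pi(p)
\end{equation*}
is again a nonnegative martingale with unit mean, so Ville's inequality yields $\Pr(\exists n \ge 1 : \bar L_n \ge 1/\delta) \le \delta$.

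The second step is an explicit computation of $\bar L_n$. Let $n_k$ be the number of indices $i \le n$ with $U_i = k$, so that $\hat q_n(k) = n_k/n$. The Dirichlet integral gives
\begin{equation*}
    \intnl_{\Delta_{d-1}} \prodnl_{k=1}^d p_k^{n_k} \, d\pi(p) = \frac{(d-1)! \, \prodnl_{k=1}^d n_k!}{(n+d-1)!}
    ,
\end{equation*}
and since $\prod_i q(U_i) = \prod_k q_k^{n_k}$, this produces the closed form
\begin{equation*}
    \bar{L}_n
    =
    \frac{(d-1)! \, \prodnl_{k=1}^d n_k!}{(n+d-1)! \, \prodnl_{k=1}^d q_k^{n_k}}
    .
\end{equation*}
Taking logarithms and adding and subtracting $\sum_k n_k \log(n_k/n) = n \sum_k \hat q_n(k) \log \hat q_n(k)$, we obtain
\begin{equation*}
    \log \bar L_n
    =
    n \KL(\hat q_n \| q)
    - \log \binom{n+d-1}{d-1}
    + \log \binom{n}{n_1,\ldots,n_d}
    .
\end{equation*}

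The third step is to control the two combinatorial terms. The multinomial coefficient admits the universal bound $\binom{n}{n_1,\ldots,n_d} \le \exp(n H(\hat q_n))$ via the maximum-entropy inequality, but here the subtler observation is that what we actually need is
\begin{equation*}
    \log\binom{n+d-1}{d-1} - \log\binom{n}{n_1,\ldots,n_d} \le (d-1)\log\!\parens{e\parens{1 + \tfrac{n}{d-1}}}
    ,
\end{equation*}
uniformly in $(n_1,\ldots,n_d)$. I would establish this by the standard inequality $\binom{m}{k} \le (em/k)^k$ applied to $\binom{n+d-1}{d-1}$, and by bounding $\binom{n}{n_1,\ldots,n_d} \ge 1$ (or more sharply by the entropy lower bound, whichever is needed). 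Combining with the Ville bound, on the good event $\{\bar L_n < 1/\delta\}$ for all $n$, we get
\begin{equation*}
    n \KL(\hat q_n \| q)
    <
    \log(1/\delta) + (d-1)\log\!\parens{e\parens{1 + \tfrac{n}{d-1}}}
    ,
\end{equation*}
which is the desired statement.

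The main obstacle is the third step: verifying that the gamma-function / binomial ratio inherited from the Dirichlet mixture is bounded by exactly $(d-1)\log(e(1+n/(d-1)))$ and not something slightly worse. Once the $(em/k)^k$-type bound is established for $\binom{n+d-1}{d-1}$, the rest of the argument is routine algebra; the only delicate point is ensuring the constant inside the logarithm is $e$ rather than a larger absolute constant, which forces the use of the sharp form $\binom{m}{k}\le (em/k)^k$ rather than a looser variant.
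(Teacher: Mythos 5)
The paper does not prove this lemma --- it is imported verbatim from \cite{jonsson2020planning}, whose argument is exactly the Dirichlet-mixture / Ville approach you outline. Your strategy is therefore the right one, and your Steps 1 and 2 (the likelihood-ratio martingale, the uniform Dirichlet mixture, Ville's inequality, and the closed form $\bar L_n = \frac{(d-1)! \prod_k n_k!}{(n+d-1)! \prod_k q_k^{n_k}}$) are all correct.

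However, your displayed identity for $\log\bar L_n$ is wrong. Expanding $\log\bar L_n$ and substituting $-\sum_k n_k\log q_k = n\KL(\hat q_n\|q) - \sum_k n_k\log n_k + n\log n$ gives, after grouping the factorials into a binomial and a multinomial coefficient,
\begin{equation*}
    \log\bar L_n
    =
    n\KL(\hat q_n\|q)
    - \log\binom{n+d-1}{d-1}
    + \underbrace{\parens{nH(\hat q_n) - \log\binom{n}{n_1,\ldots,n_d}}}_{\ge\, 0}
    ,
\end{equation*}
where $H(\hat q_n)$ is the Shannon entropy in nats and the nonnegativity of the bracket is precisely the maximum-entropy bound $\binom{n}{n_1,\ldots,n_d}\le e^{nH(\hat q_n)}$ that you already quoted. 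Your version has $+\log\binom{n}{n_1,\ldots,n_d}$ instead and drops the $nH$ term entirely; a quick check at $d=2$, $n=3$, $(n_1,n_2)=(2,1)$, where $\bar L_n = \tfrac{1}{12 q_1^2 q_2}$, shows the discrepancy (your formula yields $-\log 9$ for the $q$-free part rather than $-\log 12$). As a consequence, the intermediate inequality you claim to need, $\log\binom{n+d-1}{d-1} - \log\binom{n}{n_1,\ldots,n_d} \le (d-1)\log(e(1+n/(d-1)))$, is not what the correct computation calls for, and the resulting bound on $n\KL$ that you write down is sharper than what Ville's inequality actually justifies.

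The error happens to be harmless for the final statement: from the correct identity, the bracketed term is $\ge 0$ and can simply be discarded, yielding $n\KL(\hat q_n\|q) \le \log\bar L_n + \log\binom{n+d-1}{d-1}$, and then the single estimate $\binom{n+d-1}{d-1} \le \parens{\tfrac{e(n+d-1)}{d-1}}^{d-1} = \parens{e\parens{1+\tfrac{n}{d-1}}}^{d-1}$ finishes the proof. So the ``subtle'' combined combinatorial inequality you flag as the main obstacle is not needed at all; what is actually needed is the nonnegativity $nH(\hat q_n) \ge \log\binom{n}{n_1,\ldots,n_d}$, which you cited but applied in the wrong place. Fix the sign and the missing $nH$ term in the expansion of $\log\bar L_n$, and the argument closes cleanly.
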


\begin{lemma}[Empirical Bernstein inequality, \cite{audibert_explorationexploitation_2009}]
\label{lemma:empirical-bernstein}
    Let $(U_k)_{k \ge 1}$ a martingale difference sequence such that $\sp{U_n} \le c$ a.s., let $\hat{U}_n := \frac 1n \sum_{k=1}^n U_k$ the empirical mean and $\hat{V}_n := \frac 1n \sum_{k=1}^n (U_k - \hat{U}_n)^2$ the population variance.
    Then, 
    \begin{equation*}
        \forall \delta > 0,
        \forall n \ge 1,
        \quad
        \Pr \parens{
            \sumnl_{k=1}^n U_k 
            \ge 
            \sqrt{2 n \hat{V}_n \log\parens{\tfrac 3\delta}}
            +
            3c \log\parens{\tfrac 3\delta}
        }
        \le \delta
        .
    \end{equation*}
\end{lemma}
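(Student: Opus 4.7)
My plan is to reduce the empirical Bernstein inequality to Freedman's inequality (\cref{lemma:additive-freedman}) applied twice: once to the original sequence $(U_k)$ to get a conditional-variance bound, and once to the squared sequence $(U_k^2)$ to replace the conditional variance by its empirical counterpart. Let $V_n^* := \sum_{k=1}^n \E[U_k^2 \mid \Fc_{k-1}]$ denote the predictable quadratic variation, $S_n := \sum_{k=1}^n U_k^2$, and $T_n := \sum_{k=1}^n U_k$. Because the $U_k$ are a MDS with span bounded by $c$ (so in particular $|U_k| \le c$ after recentering, which is already $0$-conditional-mean), Freedman yields, with probability $\ge 1 - \delta/3$,
\begin{equation*}
    T_n \le \sqrt{2 V_n^* \log(3/\delta)} + \tfrac{4}{3} c \log(3/\delta).
\end{equation*}

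Next I would bound $V_n^*$ by an empirical quantity. The sequence $W_k := \E[U_k^2 \mid \Fc_{k-1}] - U_k^2$ is an MDS with $|W_k| \le c^2$ and conditional variance $\E[W_k^2 \mid \Fc_{k-1}] \le c^2 \E[U_k^2 \mid \Fc_{k-1}]$, hence summing to at most $c^2 V_n^*$. Applying Freedman again,
\begin{equation*}
    V_n^* - S_n = \sumnl_{k=1}^n W_k \le \sqrt{2 c^2 V_n^* \log(3/\delta)} + \tfrac{4}{3} c^2 \log(3/\delta)
\end{equation*}
with probability $\ge 1 - \delta/3$. Viewing this as a quadratic inequality in $\sqrt{V_n^*}$, it gives $V_n^* \le S_n + O(c \sqrt{S_n \log(1/\delta)} + c^2 \log(1/\delta))$, and using the identity $S_n = n \hat V_n + T_n^2/n \le n\hat V_n + (T_n/\!\sqrt{n})^2$, I obtain an explicit upper bound of $V_n^*$ in terms of $n\hat V_n$ and $T_n$.

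Plugging this bound on $V_n^*$ back into the first display produces an inequality of the form $T_n \le \sqrt{2 n \hat V_n \log(3/\delta)} + \alpha \sqrt{|T_n|} + \beta$ with $\alpha, \beta$ depending polynomially on $c$ and $\log(3/\delta)$. Solving this implicit inequality in $T_n$ (standard $x \le a + b\sqrt{x} \Rightarrow x \le 2a + 2b^2$ trick) yields the claimed bound $T_n \le \sqrt{2 n \hat V_n \log(3/\delta)} + 3c \log(3/\delta)$ after a union bound over the two Freedman events (accounting for the third $\delta/3$ mass as slack in the constants).

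The main obstacle will be the bookkeeping of constants to land exactly on the $\sqrt{2 n \hat V_n \log(3/\delta)} + 3c\log(3/\delta)$ form: the naive combination of two Freedman applications produces a larger linear term, so one must either tune the confidence split (e.g.\ $\delta/3$ each) carefully or use a Bennett-sharpened variant of Freedman that absorbs the $\sqrt{V_n^* \cdot \log}$ cross term directly into the variance bracket via a one-shot Cramér--Chernoff argument rather than two separate martingale bounds. Since the statement attributes the inequality to \cite{audibert_explorationexploitation_2009}, following their self-bounding argument (Bennett's moment-generating function for $U_k$, paired with the elementary inequality $e^x - 1 - x \le x^2/2 + (x/3)(e^x - 1 - x)$) likely gives the cleanest route to the exact constants.
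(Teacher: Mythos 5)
The paper does not prove this lemma at all: it is listed in the ``Standard concentration inequalities'' appendix and cited directly from \cite{audibert_explorationexploitation_2009}, so there is no internal proof to compare against. Your task therefore reduces to whether your blind reconstruction is a valid route to the stated inequality.

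Your double-martingale approach (Freedman on $(U_k)$ for the deviation, Freedman on $(W_k) := \E[U_k^2\mid\Fc_{k-1}] - U_k^2$ to relate predictable to empirical variance) is a well-known and legitimate strategy for empirical Bernstein bounds, and the key identities you use are correct: $|U_k|\le c$ follows from $\sp{U_k}\le c$ and $\E[U_k\mid\Fc_{k-1}]=0$; $|W_k|\le c^2$; $\E[W_k^2\mid\Fc_{k-1}]\le c^2\,\E[U_k^2\mid\Fc_{k-1}]$; and $S_n = n\hat V_n + T_n^2/n$. You also rightly flag — and this is the genuine weak point — that chasing the exact constant $3c\log(3/\delta)$ through two Freedman applications and an implicit-inequality solve is not automatic. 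Two additional holes you leave unaddressed: (i) after substituting the $V_n^*$ bound back into the first display, the term $\sqrt{2(T_n^2/n)\log(3/\delta)} = |T_n|\sqrt{2\log(3/\delta)/n}$ can only be absorbed when $n \gtrsim \log(3/\delta)$; for small $n$ one must invoke the trivial bound $T_n \le nc \le 3c\log(3/\delta)$ as a separate case; (ii) the paper's \cref{lemma:additive-freedman} is a time-uniform version carrying an extra $\log T$ factor and a $4c$ linear term, so you'd need the fixed-time Bennett-style variant (the paper's \cref{lemma:bennett}) rather than \cref{lemma:additive-freedman} to hope to land on $3c\log(3/\delta)$. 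Your closing suggestion to instead follow Audibert et al.'s self-bounding MGF argument is the route the paper implicitly endorses by citation; it avoids the constant blow-up from the two-step reduction and is the cleaner way to obtain the claimed form exactly.
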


\begin{lemma}[Bennett's inequality, \cite{audibert_explorationexploitation_2009}]
\label{lemma:bennett}
    Let $(U_t)_{t \ge 0}$ a martingale difference sequence such that $\abs{U_t} \le c$ a.s., and denote its conditional variance $V_t := \E[U_t^2|\Fc_{t-1}]$.
    Then, 
    \begin{equation*}
        \forall \delta > 0,
        \forall n \ge 1,
        \quad
        \Pr \parens{
            \exists k \le n,
            \sumnl_{i=1}^k U_i
            \ge 
            \sqrt{2 \sumnl_{i=1}^n V_i \log\parens{\tfrac 1\delta}}
            + \tfrac 13 c \log\parens{\tfrac 1\delta}
        }
        \le
        \delta
        .
    \end{equation*}
\end{lemma}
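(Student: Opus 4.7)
The plan is to prove this time-uniform Bennett--Bernstein inequality for martingale differences via the classical exponential supermartingale method combined with Ville's maximal inequality.

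First, I would control the conditional moment generating function of each bounded increment. Using $\abs{U_t} \le c$, $\E[U_t \mid \Fc_{t-1}] = 0$ and $\E[U_t^2 \mid \Fc_{t-1}] = V_t$, together with the elementary inequality $e^x \le 1 + x + \tfrac{x^2/2}{1 - x/3}$ valid for $x < 3$ applied entrywise to $\lambda U_t$, I obtain
\begin{equation*}
    \E[e^{\lambda U_t} \mid \Fc_{t-1}]
    \le
    \exp\!\parens{\tfrac{\lambda^2 V_t / 2}{1 - \lambda c / 3}},
    \qquad \lambda \in (0, 3/c).
\end{equation*}
Consequently, the process $M_k := \exp\!\parens{\lambda \sumnl_{i=1}^{k} U_i - \tfrac{\lambda^2/2}{1 - \lambda c/3} \sumnl_{i=1}^{k} V_i}$, with $M_0 := 1$, is a non-negative supermartingale with $\E[M_0] = 1$.

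Second, I would apply Ville's maximal inequality for non-negative supermartingales, yielding $\Pr(\sup_{k \ge 0} M_k \ge 1/\delta) \le \delta$. Taking logarithms and using the fact that $\sumnl_{i=1}^{k} V_i \le W_n := \sumnl_{i=1}^{n} V_i$ for every $k \le n$ (since $V_i \ge 0$), this rewrites as
\begin{equation*}
    \Pr\!\parens{
        \exists k \le n:~
        \sumnl_{i=1}^{k} U_i
        \ge
        \tfrac{\log(1/\delta)}{\lambda}
        +
        \tfrac{\lambda W_n / 2}{1 - \lambda c / 3}
    }
    \le
    \delta.
\end{equation*}

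Third, I would optimize the right-hand side over $\lambda \in (0, 3/c)$. Picking $\lambda$ so as to balance the linear term in $1/\lambda$ against the variance term (for instance $\lambda \approx \sqrt{2\log(1/\delta)/W_n}$ when it lies safely in the admissible range, truncated otherwise) and applying $\sqrt{a+b} \le \sqrt a + \sqrt b$, the right-hand side collapses to the claimed threshold $\sqrt{2 W_n \log(1/\delta)} + \tfrac c3 \log(1/\delta)$.

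The main obstacle is this last inversion step: extracting the sharp constant $\tfrac c3$ in the linear remainder is delicate, because a naive Bernstein-style inversion tends to produce $\tfrac{2c}{3}$. To get the claimed $\tfrac c3$, one either (i) replaces the Bernstein envelope $\tfrac{\lambda^2/2}{1-\lambda c/3}$ by the tighter Bennett bound $\log\E[e^{\lambda U_t}\mid \Fc_{t-1}] \le V_t\, \psi(\lambda c)/c^2$ with $\psi(x) = e^x - 1 - x$, and inverts the Bennett function $h(u) = (1+u)\log(1+u) - u$; or (ii) performs a careful direct optimization of $\lambda$ that keeps the square-root unsplit until the very end. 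Both routes are textbook but require attention to detail; the rest of the argument is routine.
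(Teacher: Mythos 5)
The paper does not prove \cref{lemma:bennett}; it cites \cite{audibert_explorationexploitation_2009}, so there is no in-paper argument to compare against. Your supermartingale-plus-Ville scaffold is the standard approach, but the two points you flag as ``delicate'' are exactly where the proposal breaks down, and your route~(ii) is a dead end. Optimizing the Bernstein envelope $\log\E[e^{\lambda U_t}\mid\Fc_{t-1}]\le\tfrac{\lambda^2 V_t/2}{1-\lambda c/3}$ over $\lambda\in(0,3/c)$ yields exactly the tail bound $\exp\parens{-\tfrac{t^2/2}{v+ct/3}}$ with $v=\sum_i V_i$, and substituting $t=\sqrt{2vL}+\tfrac c3 L$ (where $L=\log(1/\delta)$) gives
\begin{equation*}
    \frac{t^2/2}{v+ct/3}
    \;=\;
    L\cdot\frac{v+\tfrac c3\sqrt{2vL}+\tfrac{c^2L}{18}}{v+\tfrac c3\sqrt{2vL}+\tfrac{c^2L}{9}}
    \;<\;L
    \quad\text{whenever }c>0,
\end{equation*}
so the resulting probability strictly exceeds $\delta$. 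No amount of careful bookkeeping with the square root rescues this: the Bernstein-form Chernoff exponent is genuinely not sharp enough to produce the constant $c/3$. You must keep the full Bennett exponent $V_t\,\psi(\lambda c)/c^2$ (your route~(i)), pass to the tail $\exp\parens{-\tfrac{v}{c^2}h\parens{\tfrac{ct}{v}}}$, and prove the inversion inequality $h\parens{\sqrt{2\alpha}+\alpha/3}\ge\alpha$ for all $\alpha\ge 0$ (equivalently $h^{-1}(\alpha)\le\sqrt{2\alpha}+\alpha/3$). That inequality is true but not obvious — near $0$ one computes $h(\sqrt{2\alpha}+\alpha/3)=\alpha+\alpha^2/18+\OH(\alpha^{5/2})$, and it must be verified globally — so it needs its own short argument.

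There is a second gap you gloss over: $W_n:=\sum_{i=1}^n V_i$ is a random, $\Fc_{n-1}$-measurable quantity, whereas $M_k$ is a supermartingale only for a $\lambda$ that is fixed in advance. Choosing $\lambda\approx\sqrt{2L/W_n}$ as you propose is not a legitimate use of Ville's inequality, because it makes $\lambda$ depend on the realization of the entire martingale. To handle a random conditional-variance budget you need either a peeling/union bound over a dyadic grid of values of $W_n$ (which generally costs an additional logarithmic factor absent from the stated bound) or a mixture-of-supermartingales argument. In the contexts where this paper invokes the lemma the conditional variance is constant across the observations of a fixed state-action pair, so $W_n$ is deterministic given $n$ and the issue does not bite; but the statement as written is for arbitrary martingale differences, and a self-contained proof cannot skip this step.
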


\begin{lemma}[Lemma~3 of \cite{zhang_sharper_2023}]
\label{lemma:upper-conditional-expectation}
    Let $(U_t)$ be a sequence of random variables such that $0 \le U_t \le c$ a.s., and let $\Fc_t := \sigma(U_0, U_1, \ldots, U_{t-1})$.
    Then:
    \begin{align*}
        & \forall \delta > 0,
        \quad
        \Pr\parens{
            \exists T \ge 0,
            \sumnl_{t=0}^{T-1} U_t
            \ge 
            3 \sumnl_{t=0}^{T-1} \E[U_t|\Fc_{t-1}]
            + c \log\parens{\tfrac 1\delta}
        }
        \le 
        \delta; 
        \\
        & \forall \delta > 0,
        \quad
        \Pr\parens{
            \exists T \ge 0,
            \sumnl_{t=0}^{T-1} \E[U_t|\Fc_{t-1}]
            \ge 
            3 \sumnl_{t=0}^{T-1} U_t
            + c \log\parens{\tfrac 1\delta}
        }
        \le 
        \delta.
    \end{align*}
\end{lemma}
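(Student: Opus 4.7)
My plan is to prove both inequalities by the standard exponential supermartingale / Ville's maximal inequality argument, which is the natural tool for time-uniform tail bounds on sums of conditionally bounded random variables. The two statements are essentially symmetric, differing only in the sign of the parameter in the exponential, so I focus on the first.

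First, I would fix $\lambda > 0$ and define the process
\begin{equation*}
    M_T \;:=\; \exp\!\parens{
        \lambda \sumnl_{t=0}^{T-1} U_t
        - \psi(\lambda) \sumnl_{t=0}^{T-1} \E[U_t \mid \Fc_{t-1}]
    },
    \quad M_0 := 1,
\end{equation*}
for a deterministic function $\psi$ to be chosen. The standard convexity (chord) bound for $x \mapsto e^{\lambda x}$ on $[0, c]$ gives, for any $[0,c]$-valued $X$, the pointwise inequality $e^{\lambda X} \le 1 + \frac{e^{\lambda c} - 1}{c} X$. Taking conditional expectations and using $1+y \le e^y$ yields $\E[e^{\lambda U_t} \mid \Fc_{t-1}] \le \exp\!\parens{\frac{e^{\lambda c}-1}{c} \E[U_t \mid \Fc_{t-1}]}$. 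Therefore the choice $\psi(\lambda) := (e^{\lambda c}-1)/c$ makes $(M_T)$ a non-negative supermartingale with $\E[M_0] = 1$.

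Next, Ville's maximal inequality gives $\Pr(\exists T \ge 0,\, M_T \ge 1/\delta) \le \delta$. Rearranging $M_T \le 1/\delta$ shows that with probability at least $1-\delta$, for all $T$,
\begin{equation*}
    \sumnl_{t=0}^{T-1} U_t
    \;\le\;
    \frac{e^{\lambda c} - 1}{\lambda c}\sumnl_{t=0}^{T-1} \E[U_t\mid\Fc_{t-1}]
    + \frac{1}{\lambda}\log\!\parens{\tfrac 1\delta}.
\end{equation*}
Setting $\lambda := 1/c$ yields the multiplicative constant $e - 1 < 3$ and the additive constant $c$, which is exactly the first statement. The second statement follows identically by considering instead $M'_T := \exp\!\parens{\alpha \sum \E[U_t \mid \Fc_{t-1}] - \beta \sum U_t}$ and using the symmetric chord bound $\E[e^{-\beta X}\mid\Fc] \le \exp(-(1-e^{-\beta c})/c \cdot \E[X\mid\Fc])$ to establish supermartingality, then tuning $(\alpha,\beta)$ and applying Ville once more.

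The main obstacle I foresee is the fine-tuning of the constants in the second inequality: the direct choice $(\alpha,\beta) = (1/c, 3/c)$ narrowly fails the supermartingale condition because $1 - e^{-3} \approx 0.95$ rather than $1$. I would either (a) verify a slightly weakened pair of constants suffices and absorb the small slack into the leading ``$3$'' and ``$1$'' (the stated bound is already loose by design), or (b) optimize $(\alpha,\beta)$ over the admissible region $\alpha \le (1-e^{-\beta c})/c$ to recover the advertised constants cleanly. Either way, the structural content of the proof is the same two-line supermartingale construction; only numerical book-keeping separates the two directions.
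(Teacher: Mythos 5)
The paper does not prove this lemma (it is quoted from \cite{zhang_sharper_2023}), so there is no internal proof to compare against; I can only assess your argument directly. Your first inequality is correct, and in fact gives the sharper multiplicative constant $e-1 < 3$: with $\lambda = 1/c$ the chord bound yields $\E[e^{U_t/c}\mid\Fc_{t-1}]\le\exp\parens{\tfrac{e-1}{c}\E[U_t\mid\Fc_{t-1}]}$, and Ville applied to the resulting nonnegative supermartingale finishes it. The second inequality, however, has a real gap that neither of your two fixes closes. The supermartingale condition for $M'_T = \exp\parens{\alpha\sum\E[U_t\mid\Fc_{t-1}] - \beta\sum U_t}$ is precisely $\alpha \le (1-e^{-\beta c})/c$: the chord bound $\E[e^{-\beta U}\mid\Fc]\le 1-\tfrac{1-e^{-\beta c}}{c}\E[U\mid\Fc]$ is attained by two-point laws on $\{0,c\}$, and for $\alpha$ above that threshold the map $y\mapsto e^{\alpha y}\parens{1-\tfrac{1-e^{-\beta c}}{c}y}$ has an interior maximum in $(0,c)$ exceeding $1$. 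Hence $1/(\alpha c) \ge 1/(1-e^{-\beta c}) > 1$ for every $\beta>0$; the Pareto frontier achievable is $\parens{\tfrac{v}{1-e^{-v}},\,\tfrac{1}{1-e^{-v}}}$ over $v>0$, and at multiplicative constant $3$ the additive constant is about $1.06$, never $1$. So option~(b) cannot recover $(3,1)$, and option~(a) is unavailable because both constants in the statement are fixed.

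This is not merely a limitation of the method: the second inequality as literally written appears to be false. Take $c=1$ and $U_t$ i.i.d.\ $\mathrm{Ber}(p)$ with $p$ small, so $R_T := \sum_{t<T}\E[U_t\mid\Fc_{t-1}] - 3\sum_{t<T}U_t = Tp - 3B_T$ with $B_T\sim\mathrm{Bin}(T,p)$, a negative-drift walk with steps $p$ or $p-3$. The Wald exponential $e^{\theta R_T}$ is a true martingale at the unique positive root $\theta^*_p$ of $e^{\theta p}\parens{(1-p)+p e^{-3\theta}}=1$, and $\theta^*_p\to\theta^*\approx 0.941$ (the positive root of $\theta = 1-e^{-3\theta}$) as $p\downarrow 0$. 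Optional stopping with overshoot bounded by $p$ gives $\Pr\parens{\exists T,\; R_T\ge\log\tfrac1\delta} \ge e^{-\theta^*_p(\log(1/\delta)+p)} = \delta^{\theta^*_p}(1+\oh(1))$, which exceeds $\delta$ since $\theta^*_p<1$; and the root of $\theta=1-e^{-\beta'\theta}$ remains $<1$ for any $\beta'$, so the additive coefficient $1$ is unreachable regardless of the multiplicative constant. Note the paper's actual invocation (in the navigation-error bound) takes the additive term as $16c_0^2\log(1/\delta)=4c\log(1/\delta)$, which your supermartingale does prove (e.g.\ $v=3$ gives roughly $(3.16,\,1.05)$). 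I would state and prove the lemma in the parametrized form $\parens{\tfrac{v}{1-e^{-v}},\,\tfrac{1}{1-e^{-v}}}$ and instantiate a safe pair such as $(3,2)$, rather than attempt $(3,1)$.
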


\end{document}